\pgfplotsset{compat=1.18}
\let\OldStatex\Statex
\renewcommand{\Statex}[1][3]{%
	\setlength\@tempdima{\algorithmicindent}%
	\OldStatex\hskip\dimexpr#1\@tempdima\relax}
\newacronym{AA}{AA}{Applied Analysis}
\newacronym{AMI}{AMI}{Adjusted Mutual Information}
\newacronym{BMC}{BMC}{Block Markov Chain}
\newacronym[\glslongpluralkey={Block Markov Decision Processes}]{BMDP}{BMDP}{Block Markov Decision Process}
\newacronym{CASA}{CASA}{Centre for Analysis, Scientific Computing and Applications}
\newacronym{CAIC}{CAIC}{Consistent Akaike Information Criterion}
\newacronym{CPSRL}{CPSRL}{Clustered Posterior Sampling for Reinforcement Learning}
\newacronym{CQT}{CQT}{Coherence \& Quantum Technology}
\newacronym{CWI}{CWI}{National Research Institute for Mathematics and Computer Science}
\newacronym{CUDA}{CUDA}{Compute Unified Device Architecture}
\newacronym{DNA}{DNA}{Deoxyribonucleic acid}
\newacronym{DIAM}{DIAM}{Applied Mathematics}
\newacronym{ERRG}{ERRG}{Erd\"{o}s--R\'{e}nyi Random Graph}
\newacronym{EEMCS}{EEMCS}{Electrical Engineering, Mathematics \&{} Computer Science}
\newacronym{EURANDOM}{EURANDOM}{European Institute for Statistics, Probability, Stochastic Operations Research and their Applications}
\newacronym{GPU}{GPU}{Graphics Processing Unit}
\newacronym{HMM}{HMM}{Hidden Markov Model}
\newacronym{INSY}{INSY}{Intelligent Systems}
\newacronym{LMAB}{LMAB}{Latent Multi-Armed Bandit}
\newacronym{LSBM}{LSBM}{Labeled Stochastic Block Model}
\newacronym{MAB}{MAB}{Multi-Armed Bandit}
\newacronym{MC}{MC}{Markov Chain}
\newacronym{MCS}{M\&{}CS}{Mathematics and Computer Science}
\newacronym[\glslongpluralkey={Markov Decision Processes}]{MDP}{MDP}{Markov Decision Process}
\newacronym{MI}{MI}{Mutual Information}
\newacronym{NAS}{NAS}{Network Architectures and Services}
\newacronym{OFU}{OFU}{Optimism in the Face of Uncertainty}
\newacronym{PAC}{PAC}{Probably Approximately Correct}
\newacronym{PSRL}{PSRL}{Posterior Sampling for Reinforcement Learning}
\newacronym{PROB}{PROB}{Probability}
\newacronym[\glslongpluralkey={Partially Observable Markov Decision Processes}]{POMDP}{POMDP}{Partially Observable Markov Decision Process}
\newacronym{PyPI}{PyPI}{Python Package Index}
\newacronym[\glslongpluralkey={Rich Observation Markov Decision Processes}]{ROMDP}{ROMDP}{Rich Observation Markov Decision Process}
\newacronym{RL}{RL}{Reinforcement Learning}
\newacronym{SBM}{SBM}{Stochastic Block Model}
\newacronym{SPOR}{SPOR}{Statistics, Probability, and Operations Research}
\newacronym{SOR}{SOR}{Stochastic Operations Research}
\newacronym{ST}{ST}{Software Technology}
\newacronym{STO}{STO}{Stochastics}
\newacronym{STAT}{STAT}{Statistics}
\newacronym{SVD}{SVD}{Singular Value Decomposition}
\newacronym{TS}{TS}{Thompson Sampling}
\newacronym{TUDelft}{TU Delft}{Delft University of Technology}
\newacronym{TUe}{TU/e}{Eindhoven University of Technology}
\newacronym{UCB}{UCB}{Upper Confidence Bound}
\newacronym{UCBVI}{UCBVI}{Upper Confidence Bound Value Iteration}
\newacronym{UTQ}{UTQ}{University Teaching Qualification}
\newacronym{QCE}{Q\&{}CE}{Quantum \& Computer Engineering}
\newacronym{QED}{QED}{Quality--and--Efficiency--Driven}
\newacronym{WiCoS}{WiCoS}{Wireless Communication and Sensing}
\newacronym[longplural={Spatialized Random Graphs}]{SRG}{SRG}{Spatialized Random Graph}
\newacronym[longplural={Random Sequential Adsorption}]{RSA}{RSA}{Random Sequential Adsorption}
\newcommand{\defeq}{=\vcentcolon}
\newcommand{\eqdef}{\vcentcolon=}
\newcommand{\mbb}[1]{\mathbbm{#1}}
\newcommand{\mc}[1]{\mathcal{#1}}
\newcommand{\mf}[1]{\mathfrak{#1}}
\newcommand{\mbf}[1]{\mathbf{#1}}
\newcommand{\mcA}{\mathcal{A}}
\newcommand{\mcC}{\mathcal{C}}
\newcommand{\mcD}{\mathcal{D}}
\newcommand{\mcE}{\mathcal{E}}
\newcommand{\mcF}{\mathcal{F}}
\newcommand{\mcG}{\mathcal{G}}
\newcommand{\mcH}{\mathcal{H}}
\newcommand{\mcI}{\mathcal{I}}
\newcommand{\mcJ}{\mathcal{J}}
\newcommand{\mcL}{\mathcal{L}}
\newcommand{\mcM}{\mathcal{M}}
\newcommand{\mcP}{\mathcal{P}}
\newcommand{\mcR}{\mathcal{R}}
\newcommand{\mcS}{\mathcal{S}}
\newcommand{\mcT}{\mathcal{T}}
\newcommand{\mcV}{\mathcal{V}}
\newcommand{\mcX}{\mathcal{X}}
\newcommand{\mcY}{\mathcal{Y}}
\newcommand{\lb}{\left(}
\newcommand{\rb}{\right)}
\newcommand{\mbbP}{\mathbb{P}}
\newcommand{\mbbE}{\mathbb{E}}
\newcommand{\mbbN}{\mathbb{N}}
\newcommand{\mbbR}{\mathbb{R}}
\newcommand{\norm}[1]{\left\lVert#1\,\right\rVert}
\newcommand{\kl}{\textnormal{kl}}
\newcommand{\KL}{\textnormal{KL}}
\newcommand{\clust}{\textnormal{clust}}
\newcommand{\alg}{\textnormal{alg}}
\newcommand{\reg}{\textnormal{Reg}}
\newcommand{\gap}{\textnormal{gap}}
\newcommand{\bkt}[1]{\left[#1\right]}
\DeclareMathOperator*{\argmax}{arg\,max}
\DeclareMathOperator*{\argmin}{arg\,min}
\DeclarePairedDelimiter\ceil{\lceil}{\rceil}
\DeclarePairedDelimiter\floor{\lfloor}{\rfloor}
\newtheorem{theorem}{Theorem}[section]
\newtheorem{corollary}[theorem]{Corollary}
\newtheorem{lemma}[theorem]{Lemma}
\newtheorem{proposition}[theorem]{Proposition}
\newtheorem{definition}[theorem]{Definition}
\theoremstyle{definition}
\theoremstyle{definition}
\Crefname{algocf}{Algorithm}{Algorithms}
\crefname{theorem}{Theorem}{Theorems}
\crefname{lemma}{Lemma}{Lemmas}
\crefname{corollary}{Corollary}{Corollaries}
\crefname{proposition}{Proposition}{Propositions}
\crefname{conjecture}{Conjecture}{Conjectures}
\crefname{definition}{Definition}{Definitions}
\Crefname{example}{Example}{Examples}
\crefname{assumption}{Assumption}{Assumptions}
\crefname{remark}{Remark}{Remarks}
\newlist{assumptionenum}{enumerate}{1} %! this subenvironment should only occur inside the relevant env
\setlist[assumptionenum]{label=(\roman*), ref=\theassumption(\roman*)}
\crefname{assumptionenumi}{assumption}{assumptions}
\newlist{definitionenum}{enumerate}{1} %! this subenvironment should only occur inside the relevant env
\setlist[definitionenum]{label=(\roman*), ref=\thetheorem(\roman*)}
\crefname{definitionenumi}{definition}{definitions}
\newlist{lemmaenum}{enumerate}{1} %! this subenvironment should only occur inside the relevant env
\setlist[lemmaenum]{label=(\roman*), ref=\thetheorem(\roman*)}
\crefname{lemmaenumi}{lemma}{lemmas}
\newlist{propenum}{enumerate}{1} %! this subenvironment should only occur inside the relevant env
\setlist[propenum]{label=(\roman*), ref=\thetheorem(\roman*)}
\crefname{propenumi}{proposition}{propositions}
\newlist{corollaryenum}{enumerate}{1} %! this subenvironment should only occur inside the relevant env
\setlist[corollaryenum]{label=(\roman*), ref=\thetheorem(\roman*)}
\crefname{corollaryenumi}{corollary}{corollaries}
\newlist{todolist}{itemize}{1}
\setlist[todolist]{label=$\square$}
  \newcommand*{\Description}[1]{}
\newcommand{\papertitle}{Asymptotically optimal reinforcement learning in Block Markov Decision Processes}
\title{\papertitle}
\author[1]{Thomas van Vuren}
\author[1]{Fiona Sloothaak}
\author[2]{Maarten G. Wolf}
\author[1]{Jaron Sanders}
\affil[1]{Eindhoven University of Technology, The Netherlands}
\affil[2]{Koninklijke KPN N.V., The Netherlands}
\begin{document}

\maketitle

\begin{abstract}
	The curse of dimensionality renders \gls{RL} impractical in many real-world settings with exponentially large state and action spaces.
Yet, many environments exhibit exploitable structure that can accelerate learning.
To formalize this idea, we study \gls{RL} in \glspl{BMDP}.
\glspl{BMDP} model problems with large observation spaces, but where transition dynamics are fully determined by latent states.
Recent advances in clustering methods have enabled the efficient recovery of this latent structure.
However, a regret analysis that exploits these techniques to determine their impact on learning performance remained open.
We are now addressing this gap by providing a regret analysis that explicitly leverages clustering, demonstrating that accurate latent state estimation can indeed effectively speed up learning.

Concretely, this paper analyzes a two-phase \gls{RL} algorithm for \glspl{BMDP} that first learns the latent structure through random exploration and then switches to an optimism-guided strategy adapted to the uncovered structure.
This algorithm achieves a regret that is $O(\sqrt{T}+n)$ on a large class of \glspl{BMDP} susceptible to clustering.
Here, $T$ denotes the number of time steps, $n$ is the cardinality of the observation space, and the Landau notation $O(\cdot)$ holds up to constants and polylogarithmic factors.
This improves the best prior bound, $O(\sqrt{T}+n^2)$, especially when $n$ is large.
Moreover, we prove that no algorithm can achieve lower regret uniformly on this same class of \glspl{BMDP}.
This establishes that, on this class, the algorithm achieves asymptotic optimality.

\end{abstract}

\glsresetall

\section{Introduction}

\subsection{Motivation}

The field of machine learning, and artificial intelligence more generally, has taken society by storm.
One of its key techniques, \gls{RL} \cite{sutton2020reinforcement}, is applied everywhere.
It is, for example, applied to problems in finance, games, healthcare, industrial control, and robotics.

However, despite its promise, \gls{RL} still faces numerous practical challenges.
One key difficulty is the curse of dimensionality \cite{sutton2020reinforcement}, which can make learning in problems with large state spaces intractable.
This challenge is particularly nasty because the state space size is often exponential in the number of system components.

These issues can be unavoidable if no additional structure is present in a system.
There namely exist \glspl{MDP} with $n$ states on which the expected regret after $T$ time steps is at least $\Omega(\sqrt{nT})$ \cite{osband2016lower}.
When $n$ is exponentially large, such regret can render \gls{RL} impractical.

Fortunately, many real-world environments exhibit exploitable structure.
For instance, if different states share similar transition probabilities, then we could first learn a clustering of the state space.
This reduces the effective size of the state space, mitigating the impact of the curse of dimensionality.

\subsection{State-space reduction in Block MDPs}
\label{sec:this_paper}

In this paper, we conduct a regret analysis for an episodic learning strategy that exploits a state-space reduction technique tailored to \glspl{BMDP}.

\glspl{BMDP} \cite{NIPS2016_2387337b} are \glspl{MDP} in which observations are drawn from a large set of contexts.
The transitions dynamics, however, are determined fully by just a few unobserved latent states.
Specifically, each context $x$ is associated to a latent state $s = f(x)$, and the system moves from latent state $s$ to latent state $s'$ according to a transition kernel $p(s' \mid s,a)$.
The agent then observes only a context $y \in f^{-1}(s')$ drawn from the emission distribution $q( \cdot \mid s')$.
The map $f$ is called the latent state decoding function, and the presumed challenge is that it is hidden from the agent.

The structural assumptions just discussed are what make \glspl{BMDP} interesting and different from generic \glspl{MDP}.
The remaining parameters, including the reward and episode structure, are summarized in \Cref{fig:BMDP_illustration}.
Please see \Cref{sec:models} for details.

\begin{figure}[hbtp]
	\centering
	\includegraphics[width=0.95\linewidth]{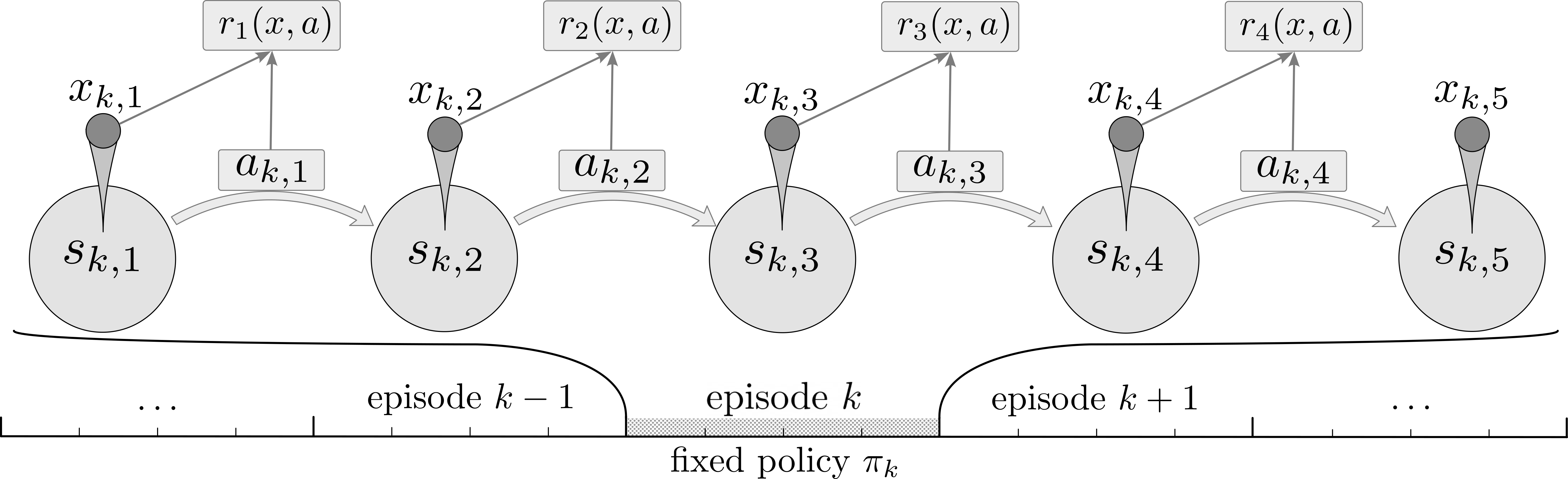}
	\caption{%
		Illustration of an episodic \gls{BMDP}.
		Shown from bottom to top are the hidden latent state sequence $s_{k,h}$, the action sequence $a_{k,h}$, the observed context sequence $x_{k,h}$, and the reward sequence $r_h(x,a)$.
		After $H$ transitions, the environment resets, and the next episode starts from a random context drawn from $\mu$.
		Each episode, the agent gets to choose and fix a policy.
	}
	\Description{
		The figure depicts a sequence of latent states connected by transitions.
		These transitions depend on the chosen action.
		Each time step, a context belonging to that latent state is observed.
		A reward is generated that depends on the context, the chosen action, and the time step.
		After H transitions, the environment resets, and the next episode starts from a random context drawn from a fixed distribution.
		Each episode, the agent gets to choose and fix a policy.
	}
	\label{fig:BMDP_illustration}
\end{figure}

Recently, a clustering technique developed for \glspl{BMC} \cite{Sanders:2020}, inspired by techniques for (labeled) \glspl{SBM} \cite{abbe2018community,yun2016optimal}, was generalized to \glspl{BMDP} \cite{Jedra:2022}.
Specifically, \cite{Jedra:2022} mathematically studies a technique that estimates the latent state decoding function efficiently based on data generated under a fixed policy.
This technique was then used in \cite{Jedra:2022} to study the sample complexity of a reward-free \gls{RL} algorithm.
A regret analysis when exploiting these techniques, however, remained open.
This gap is addressed in this manuscript.

In typical regret analyses, one studies not only specific algorithms that minimize regret but also the least amount of regret that \emph{every} algorithm will acquire \cite{bubeck2012regret}.
The fact that some regret will always be accumulated is, ultimately, because exploration must be balanced with exploitation.

Most methods tackle this so-called exploration--exploitation trade-off using the optimism--in--the--face--of--uncertainty principle \cite{sutton2020reinforcement}.
It is implemented in \glspl{MDP} by computing \glspl{UCB} on the estimated long-term reward following a visit to a state--action pair \cite{UCRLVI,dann2017unifying,pmlr-v97-zanette19a}.
\glspl{UCB} are added to the immediate reward during policy updates as an exploration bonus, encouraging visits to state--action pairs with uncertain value.
This reduces uncertainty, leading to smaller bonuses and less exploration overall.

Now, in \glspl{BMDP}, we can leverage the block structure by aggregating related transitions to reduce the value estimates' variances.
This yields tighter \glspl{UCB} and smaller exploration bonuses, ultimately reducing regret.
But, the quality of those value estimates would then depend on how well the block structure is estimated.
Misclassified contexts induce biases and therefore complicate efficient exploration.

To avoid this source of approximation error, we adopt a two--phase approach by building on \cite{Jedra:2022}.
Under their structural assumptions, clustering techniques efficiently recover the latent structure through random exploration.
This enables a sharp regret analysis, which we present in this paper.

\subsection{Summary of results}
\label{sec:summary_of_main_results}

\subsubsection{Structural properties}

We analyze the following subclasses of \glspl{BMDP}:

\begin{definition}[$\eta$-Reachability]
	\label{def:reachability}

	Let $\eta>1$ be independent of $n$.
	A \gls{BMDP} $\Phi$ is $\eta$-reachable if it has the following properties:
	\begin{definitionenum}
		\item \label{def:reachability:p}
		$
			\max_{a}
			\max_{s_1,s_2,s_3}
			\{ p(s_2\mid s_1,a) / p(s_3\mid s_1,a), p(s_1\mid s_2,a) / p(s_1\mid s_3,a) \}
			\leq
			\eta
		$;
		\item \label{def:reachability:q}
		$
			\max_{s}
			\max_{x,y\in f^{-1}(s)}
			q(x\mid s)/q(y\mid s)
			\leq
			\eta
		$; and
		\item \label{def:reachability:f}
		for all sufficiently large $n$,
		$
			\max_{s_1,s_2}
			|f^{-1}(s_1)|/|f^{-1}(s_2)|
			\leq
			\eta
		$
		.
	\end{definitionenum}
\end{definition}

\begin{definition}[$\mcI$-Identifiability]
	\label{def:identifiability}

	Let $\mcI>0$ be independent of $n$.
	An $\eta$-reachable \gls{BMDP} $\Phi$ is $\mcI$-identifiable if, for all sufficiently large $n$, $\min_{x}I_{\eta}(x;\Phi) \allowbreak \geq \allowbreak \mcI$; see \eqref{eqn:I_eta_def} for $I_\eta$'s definition.
\end{definition}

A brief explanation of \Cref{def:reachability,def:identifiability} is as follows.
The reachability property ensures that random exploration achieves coverage of the observation space.
The identifiability property ensures that it is information-theoretically possible to detect the latent states.
The parameter $I_{\eta}$ can be interpreted as a measure of separation between latent states.
For more detailed discussion on \Cref{def:reachability,def:identifiability}, we refer to \Cref{sec:on-the-structural-properties}.

\subsubsection{Regret lower bound}

Our first result is a regret lower bound for $\eta$-reachable and $\mcI$-identifiable \glspl{BMDP}.
To state it, let $n$ denote the number of contexts, $S$ the number of latent states, $A$ the number of actions, $H$ the episode horizon, $K$ the number of episodes, and $T_K=KH$ the total number of time steps.
We denote by $\reg_K(\mcL;\Phi)$ the expected regret after $K$ episodes of a learning algorithm $\mcL$ on the \gls{BMDP} $\Phi$; see \eqref{eqn:regret_def} for the precise definition.

\begin{theorem}
	\label{thm:regret_lower_bound}

	Assume that the following hold:
	\begin{equation}
		\label{eqn:parameter_assumptions}
		S = o(n)
		,
		\quad
		S/4\in \mathbb{N}_{\geq 2}
		,
		\quad
		H\geq 2
		,
		\quad
		\textnormal{and}
		\quad
		A\geq S/2
		.
	\end{equation}
	There exists a constant $\mf{d}>0$ independent of $n$ such that the following holds:
	for any $\eta>1$ independent of $n$
	and any learning algorithm $\mcL$,
	there exists a \gls{BMDP} $\Phi$ with initial context distribution $\mu=\mathrm{Unif}([n])$
	that is
	$\eta$-reachable
	and $I_\eta := \mf{d}\eta^{-1}(\eta-1)^2/(\eta+1)^3$-identifiable.
	If moreover $T_K=\Omega(n\vee A)$, then
	\begin{equation}
		\label{eqn:regret_lower_bound}
		\reg_K(\mcL;\Phi)
		=
		\Omega(\sqrt{AT_K}+n)
		.
	\end{equation}
\end{theorem}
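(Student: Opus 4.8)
The plan is to prove the two terms in the lower bound $\Omega(\sqrt{AT_K}+n)$ separately, since they originate from structurally different obstructions: the $\sqrt{AT_K}$ term is the usual exploration--exploitation cost already present in generic \glspl{MDP}, while the additive $n$ term reflects the unavoidable price of having to visit each of the $n$ contexts at least once before the latent structure can be used. For the first term, I would construct a family of \glspl{BMDP} on which the standard $\Omega(\sqrt{\#\text{states}\cdot\#\text{actions}\cdot\text{horizon}\cdot\text{episodes}})$ lower bound of \cite{osband2016lower} applies, but where the ``states'' that matter are the $S$ latent states. The construction is a BMDP-lift of the hard instance for tabular RL: take a hard $S$-latent-state, $A$-action, $H$-horizon MDP achieving regret $\Omega(\sqrt{SAHK})$, then blow up each latent state $s$ into a block $f^{-1}(s)$ of roughly $n/S$ contexts with a near-uniform emission distribution $q(\cdot\mid s)$ and near-uniform transition ratios, so that \Cref{def:reachability} holds with the given $\eta$ and \Cref{def:identifiability} holds with $I_\eta = \mf{d}\eta^{-1}(\eta-1)^2/(\eta+1)^3$ for an appropriate absolute constant $\mf{d}$. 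Since $A \ge S/2$, we have $SAHK = \Theta(A\cdot S H K) = \Theta(A T_K)$ up to the $S$-vs-$A$ discrepancy, which only costs a constant; hence $\reg_K(\mcL;\Phi) = \Omega(\sqrt{AT_K})$ on this family. One must verify that the lift does not change the regret of the best policy by more than lower-order terms — this is where near-uniformity of $q$ within a block is used, so that observing a context reveals essentially no information beyond its latent state.

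For the additive $n$ term, the argument is information-theoretic and is the genuinely BMDP-specific part. The idea is that, regardless of the algorithm, during any run on a $\eta$-reachable $\mcI$-identifiable BMDP the learner cannot behave near-optimally on a context $x$ it has never observed, because the latent assignment $f(x)$ is — by the $\mcI$-identifiability construction — not determined by the data until $x$ is seen. Concretely, I would build two BMDPs $\Phi_0,\Phi_1$ that are identical except that a single ``ambiguous'' context $x^\star$ (or a small designated pool of them) is assigned to different latent states with different optimal actions, chosen so that mislabeling $x^\star$ incurs $\Omega(1)$ per-step regret whenever $x^\star$ is the current context. Since $\mu = \mathrm{Unif}([n])$, the context $x^\star$ appears as the episode's starting context (or at an early step, using $H\ge 2$) with probability $\Theta(1/n)$ per episode, so in expectation it is visited $\Theta(K/n) = \Theta(T_K/(nH))$ times; but until its \emph{first} visit the learner has zero information distinguishing $\Phi_0$ from $\Phi_1$, and a standard two-point / Le Cam argument forces regret $\Omega(1)$ on that first visit under at least one of the two instances. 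Summing a constant loss over the $\Theta(n)$ distinct contexts that are ``new'' on their first appearance (by a coupon-collector-type accounting, or more cleanly by making each of $\Theta(n)$ contexts individually ambiguous and averaging the $\Omega(1)$ first-visit loss over instances) yields the $\Omega(n)$ contribution. The hypothesis $T_K = \Omega(n\vee A)$ guarantees there are enough episodes for all $\Theta(n)$ contexts to actually be encountered, so the bound is not vacuous; combining with the first part via $\max\{\sqrt{AT_K},n\} = \Theta(\sqrt{AT_K}+n)$ gives \eqref{eqn:regret_lower_bound}.

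The main obstacle I anticipate is making the two constructions \emph{simultaneously} satisfy $\eta$-reachability and $\mcI$-identifiability with the \emph{prescribed} identifiability constant $I_\eta = \mf{d}\eta^{-1}(\eta-1)^2/(\eta+1)^3$, rather than just ``some'' constant. This forces a careful choice of the emission and transition perturbations: the emission distributions must be spread out enough (ratios $\le \eta$, block sizes balanced within $\eta$) to make random exploration cover $[n]$, yet the latent states must remain separated by exactly the mutual-information-type quantity $I_\eta$ appearing in \Cref{def:identifiability}. I expect the right move is to design a one-parameter family of emission kernels interpolating between ``all blocks identical'' (not identifiable) and ``blocks maximally separated'', compute $I_\eta(x;\Phi)$ for this family in closed form (or bound it from below), and tune the parameter so that the minimum over $x$ hits $\mf{d}\eta^{-1}(\eta-1)^2/(\eta+1)^3$; the functional form of the target constant strongly suggests it arises from a second-order Taylor expansion of a KL divergence between two emission distributions whose ratio is pinned near $\eta$, so the constant $\mf{d}$ should fall out of that expansion. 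The remaining bookkeeping — verifying that the optimal value is essentially unchanged under the lift, that the ambiguous-context perturbation keeps the instance inside the class, and that the Le Cam computation gives a clean $\Omega(1)$ — is routine once the parametrized family is in hand.
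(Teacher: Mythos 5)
There are two genuine gaps, one in each half of your argument.

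For the $\Omega(\sqrt{AT_K})$ term, you cannot invoke the tabular lower bound of \cite{osband2016lower} (or \cite{domingues2021episodic}) as a black box on the induced latent-state \gls{MDP}: the hard instances in those constructions use transition probabilities that are very far from uniform (near-deterministic or absorbing transitions), which violates \cref{def:reachability:p}, and this is precisely why the paper does \emph{not} obtain an $\Omega(\sqrt{HSAT_K})$ bound but only $\Omega(\sqrt{AT_K})$. Once you flatten the transitions to have ratios bounded by $\eta$ (as you propose), the cited theorems no longer apply and you must redo the adversarial analysis from scratch under that constraint --- this is the content of the paper's change-of-measure argument (\Cref{lem:change_of_measure_argument,lem:sum_over_optimal_actions,prop:transition_regret_lower_bound}), where the perturbation is taken of size $\varepsilon_0=\Theta(\sqrt{A/T_K})$ and the visitation rate of the unrewarded macro-state is controlled via \Cref{lem:visitation_rate_bound}. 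Your arithmetic step ``$\sqrt{SAHK}=\Theta(\sqrt{AT_K})$ up to a constant'' is also off by a factor $\sqrt{S}$ (harmless for the direction you need, but symptomatic of treating the reduction as routine when it is the main work).

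For the $\Omega(n)$ term, the premise ``until its first visit the learner has zero information distinguishing $\Phi_0$ from $\Phi_1$'' is false. In a \gls{BMDP}, reassigning a single context $x^\star$ to a different latent state changes the probability of transitions \emph{into} $x^\star$ (through $p(f(x^\star)\mid\cdot,\cdot)\,q(x^\star\mid f(x^\star))$) and, because emissions are normalized over blocks, it also perturbs the emission probabilities of every other context in the two affected blocks; the data collected before the first visit therefore already carries KL divergence between the two hypotheses that is not $o(1)$ (absence-of-arrival statistics alone contribute order $T_k/n$, which is $\Omega(1)$ by the time the first visit typically occurs). A Le Cam argument can still be salvaged, but only after quantitatively bounding this accumulated log-likelihood ratio, which requires the confusing-instance construction handling the emission renormalization (the paper's \Cref{lem:local_clustering_error_lower_bound}, built on \cite[Proposition 8]{Jedra:2022}) rather than a ``zero information'' shortcut. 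In addition, summing $\Omega(1)$ losses over $\Theta(n)$ individually ambiguous contexts needs a mechanism to make the per-context bounds add up for a \emph{single} algorithm under a common averaging measure, and to rule out the label-permutation ambiguity in what counts as a misclassification; this is exactly the permutation/counting argument of \Cref{lem:Px_is_a_permutation,prop:lower_bound_misclassification_rate} (following \cite{zhang2016minimax}), which your sketch gestures at but does not supply. Finally, a minor point: in the paper's hard class the emissions within a block are exactly uniform, and the prescribed constant $\mf{d}\eta^{-1}(\eta-1)^2/(\eta+1)^3$ arises from a packing of near-uniform \emph{latent transition} kernels via Pinsker (\Cref{lem:bound_on_information_quantity}), not from a Taylor expansion of an emission KL as you anticipate.
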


\Cref{thm:regret_lower_bound} identifies the two asymptotically leading regret contributions.
Firstly, the $\Omega(\sqrt{AT_K})$ term captures the cost of learning the latent state transition probabilities.
Secondly, the $\Omega(n)$ term encapsulates the cost of learning the latent structure.

\subsubsection{Two-phase algorithm and regret upper bound}
\label{sec:main_results:regret_upper_bound}

Our second result establishes that an explicit two--phase algorithm achieves near-optimal worst-case regret in $\eta$-reachable and $\mcI$-identifiable \glspl{BMDP}.
\Cref{alg:BUCBVIouter,alg:BUCBVI,alg:UpdateLatentStates} in \Cref{sec:algorithm} contain the pseudocode.

The first phase consists of exploration happening uniformly at random.
Its duration is set by a threshold $\Theta^{\clust}$.
Based on the observations made, the latent structure is estimated using \cite[Algorithms 1 and 2]{Jedra:2022}.
The second phase uses an optimism--based strategy to balance exploration with exploitation while refining its transition probability estimates.
It adapts \cite[Algorithms 1 and 2]{UCRLVI}, designed for \glspl{MDP} without latent structure, by incorporating improved value estimates and smaller exploration bonuses that take advantage of the learned structure.

\begin{theorem}
	\label{thm:BUCBbd}
	Assume that $T_K= \Omega(H^4S^3A)$, $A = n^{O(1)}$, and $SH=O(n)$.
	Assume, moreover, that the \gls{BMDP} $\Phi$ is $\eta$-reachable and $\mcI$-identifiable, and that $\mu=\mathrm{Unif}([n])$.
	If \cref{alg:BUCBVIouter} is run with threshold $\Theta^{\clust} = \omega(nA(S^3 \vee \ln n))$, then for any $c>0$,
	\begin{equation}
		\label{eqn:regret_bound}
		\textnormal{Reg}_K(\mcL;\Phi)
		=
		\tilde{O}\bigl(
		H\sqrt{SAT_K}
		+
		nSH^3
		+
		T_K/n^c
		+
		\Theta^{\clust}
		\bigr)
		.
	\end{equation}
	Here, the notation $\tilde{O}(\cdot)$ suppresses polylogarithmic factors.
\end{theorem}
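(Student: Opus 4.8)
The plan is to decompose the regret into the contributions of the two phases and to control each separately, following the scaffold of the lower bound: a clustering cost, a transition-learning cost, and a misclassification-induced bias.

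\textbf{Phase I (exploration and clustering).} During the first $\Theta^{\clust}/H$ episodes the agent plays the uniform random policy, so its per-step regret is at most the reward range, and this phase contributes at most $O(\Theta^{\clust})$ to the regret. The essential input from \cite{Jedra:2022} is that, on an $\eta$-reachable and $\mcI$-identifiable \gls{BMDP}, once $\Theta^{\clust} = \omega(nA(S^3 \vee \ln n))$ samples have been gathered under uniform exploration, their Algorithms~1--2 output an estimated decoding function $\hat f$ that, up to a relabeling of the latent states, misclassifies at most $n/n^{c'}$ contexts with probability at least $1 - n^{-c'}$, for a constant $c'$ that can be taken as large as desired by inflating the constants. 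I would first state this as a lemma (citing \cite{Jedra:2022} and using $\eta$-reachability to guarantee that uniform exploration visits every context $\Omega(\Theta^{\clust}/n)$ times with high probability), and then condition on the good clustering event $\mcG^{\clust}$ for the remainder of the argument, absorbing its complementary probability into the $T_K/n^c$ term.

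\textbf{Phase II (optimistic planning with aggregated estimates).} Here I would adapt the \gls{UCBVI} regret analysis of \cite{UCRLVI} but with the state space replaced by the $S$ estimated latent classes. On $\mcG^{\clust}$, the aggregated transition counts over a class $\hat f^{-1}(s)$ behave, after removing the few misclassified contexts, like counts from a genuine $S$-state \gls{MDP}; the standard Azuma/Bernstein and Weissman bounds then give exploration bonuses of order $\sqrt{S/N}$ rather than $\sqrt{n/N}$. Feeding these into the value-iteration regret telescoping of \cite{UCRLVI} yields the dominant $\tilde O(H\sqrt{SAT_K})$ term together with lower-order $\tilde O(H^2 S^2 A)$ terms, which are absorbed since $T_K = \Omega(H^4 S^3 A)$. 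The extra ingredient, not present in vanilla \gls{UCBVI}, is the bias caused by the $\le n/n^{c'}$ misclassified contexts: each such context can corrupt the empirical transition kernel and the empirical reward of its (wrong) class. I would bound this bias by noting that a misclassified context is hit at most $O(T_K/n)$ times in expectation under $\mu = \mathrm{Unif}([n])$ and any policy (using $\eta$-reachability to keep occupancy measures balanced), so the total bias over all misclassified contexts and all episodes is $\tilde O(H \cdot (n/n^{c'}) \cdot T_K/n) = \tilde O(T_K/n^{c})$ after renaming $c = c'-1$; this is exactly the $T_K/n^c$ term in \eqref{eqn:regret_bound}. The remaining $nSH^3$ term comes from the one-time cost, at the start of Phase II, of having never-before-visited (class, action, step) triples and from a crude bound on the first visit to each of the $n$ contexts under the learned policy — this is the analogue of the $\Omega(n)$ lower-bound term and is where $SH = O(n)$ is used to keep it subdominant to $\Theta^{\clust}$.

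\textbf{Assembly and main obstacle.} Summing the Phase-I bound $O(\Theta^{\clust})$, the Phase-II optimism bound $\tilde O(H\sqrt{SAT_K} + H^2S^2A)$, the misclassification bias $\tilde O(T_K/n^c)$, the start-up cost $\tilde O(nSH^3)$, and the clustering-failure term $\tilde O(T_K/n^c)$, and discarding lower-order terms using $T_K = \Omega(H^4S^3A)$ and $A = n^{O(1)}$, gives \eqref{eqn:regret_bound}. I expect the main obstacle to be the bias analysis in Phase~II: unlike in a clean \gls{MDP}, the aggregated empirical kernel $\hat p(\cdot \mid s,a)$ is not an unbiased estimator of any fixed kernel because the aggregation set $\hat f^{-1}(s)$ is itself random and slightly wrong, so the optimism property ($\hat V \ge V^\star$ with high probability) must be re-established by widening the bonus to simultaneously cover statistical fluctuation and the $\le n/n^{c'}$-fraction classification error, and one must check that this widened bonus still sums to $\tilde O(H\sqrt{SAT_K})$ rather than reintroducing an $n$-dependence. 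Handling the dependence between $\hat f$ (built from Phase-I data) and the Phase-II trajectory is clean here precisely because the two phases use disjoint data, which is the reason the two-phase design is adopted in the first place.
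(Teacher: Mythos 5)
There are genuine gaps, on both sides of your two-phase split. First, the clustering input you treat as off-the-shelf is not available in the form you need: \cite{Jedra:2022} gives exact (not merely approximate) recovery, but only with probability $1-o(1)$, and only when $S$, $A$ and $p$ are fixed and under a slightly different identifiability condition ($I(\Phi)>0$, defined through $\lim_{\zeta\to\infty}I_\zeta$, rather than $I_\eta$). Your claim that the failure probability can be pushed to $n^{-c'}$ for arbitrary $c'$ ``by inflating the constants'' is precisely the refinement that has to be proved; in the paper this requires re-opening the proofs of \cite{Jedra:2022} (sharpening the discrepancy argument behind their spectral bound, tracking all the events through the iterative improvement step, and establishing the equivalence of the identifiability notions as in \Cref{lem:equivalence-of-assumptions}) to obtain $\mbbP[\hat f^{\alg}=f]=1-O(1/n^c)$ with $S,A,p$ allowed to depend on $n$. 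Without that, the $T_K/n^c$ term in \eqref{eqn:regret_bound} is not justified. Moreover, because exact recovery does hold under $\Theta^{\clust}=\omega(nA(S^3\vee\ln n))$, the paper's Phase II has no misclassification bias at all; your plan to carry a residual $n^{1-c'}$-context bias and absorb it into widened bonuses is both harder and shakier: at a misclassified context the estimated kernel is centred on the wrong latent state, so the per-visit value error is $\Theta(H)$, optimism fails there, and the effect propagates through backward induction — exactly the situation in which bonuses indexed by latent state–action pairs are known not to suffice (the point made after \eqref{eqn:bonus_def}).

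Second, your Phase II sketch treats the problem as UCBVI on a clean $S$-state \gls{MDP}, but rewards depend on contexts, so the value functions live on $[n]$ and the kernel that enters value iteration is the product $\hat P_k(y\mid x,a)=\hat p_k(\hat f(y)\mid \hat f(x),a)\,\hat q_k(y\mid \hat f(y))$: the emission kernel must be estimated too, and the central technical difficulty is the statistical dependence between $(\hat p_k,\hat q_k)$ and the optimistic value $\bar V^{\pi_k}_{h+1}$, handled in the paper by decoupling the two factors (\Cref{lem:estimation_error_decomposition}, \Cref{prop:concentration_bound_nonnegative_vector}) before a pigeonhole argument. This is also where the $nSH^3$ term actually comes from — the Bernstein-type correction $\propto nH^2L/\hat N^{\mathrm{to}}_k(f^{-1}(s'))$ for the emission estimate, summed over time — not from a ``one-time cost of unvisited triples / first visits to each context,'' which, if your $S$-state reduction were valid, would contribute no $n$-dependence at all. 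So as written the proposal neither produces the $nSH^3$ term for the right reason nor confronts the product-kernel/optimism interaction that the proof must resolve.
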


To facilitate comparison with \Cref{thm:regret_lower_bound}, consider that \eqref{eqn:regret_bound} implies for $\Theta^{\clust} = nS^3A \ln^2 n$ and $T_K=n^{O(1)}$ that
\begin{equation}
	\label{eqn:simplified_regret_upper_bound}
	\reg_K(\mcL;\Phi)
	=
	\tilde{O}
	\bigl(
	H\sqrt{SAT_K}+n(S^3A+SH^3)
	\bigr)
	.
\end{equation}
This result is optimal in the sense that it matches \Cref{thm:regret_lower_bound}'s dependence on $n$ and $T_K$ up to polylogarithmic factors.

It is also instructive to compare \Cref{thm:regret_lower_bound} to other regret bounds for \glspl{BMDP} when clustering.
The one paper closest in spirit that we found is \cite{azizzadenesheli2016reinforcement}.
There, the authors prove a regret bound for an online algorithm that also uses spectral methods.
When specialized to $\eta$-reachable and $\mcI$-identifiable \glspl{BMDP}, their bound is only $\tilde{O}(DS\sqrt{AT}\allowbreak+\allowbreak n^2SA)$.
Here, $D$ bounds the time required to move between two contexts, replacing $H$ in their nonepisodic setting.
Their result, however, does apply also to other \glspl{BMDP}, not just $\eta$-reachable and $\mcI$-identifiable ones.

Our result improves the regret's dependence on the number of contexts from $\tilde{O}(n^2)$ to $\tilde{O}(n)$, at the cost of a greater dependence on $S$, $A$, and $H$.
This leads to a better bound for sufficiently large observation spaces, specifically when $n=\omega(S^2+H^3/A)$.
We achieve this by performing dedicated exploration, and incorporating the efficient latent state estimation algorithm from \cite{Jedra:2022}.

In addition, \eqref{eqn:simplified_regret_upper_bound} improves the leading-order regret for large $T_K$ by $\sqrt{S}$.
This is significant because \cite{azizzadenesheli2016reinforcement} assumes rewards depend solely on the latent state.
In this case it is unnecessary to learn the emission kernel, simplifying the analysis considerably.
Our improvement stems from leveraging the algorithms and proof techniques in \cite{UCRLVI}, rather than the earlier \cite{JMLR:v11:jaksch10a}, upon which \cite{azizzadenesheli2016reinforcement} builds.

\subsection{Summary of proof techniques}

\subsubsection{Proof of the regret lower bound}

\Cref{thm:regret_lower_bound}'s proof is presented in \Cref{sec:regret_lower_bound}.
We define a class of hard--to--learn \glspl{BMDP}, adapted from \cite{Jedra:2022}, and show that it includes instances inducing $\Omega(\sqrt{AT_K}+n)$ regret.

To establish the $\Omega(n)$ contribution, we prove a new lower bound on the worst-case number of misclassifications of \textit{any} latent state estimator on \glspl{BMDP} in our class (see \Cref{prop:lower_bound_misclassification_rate}).
Existing lower bounds, such as \cite[Theorem 1]{Jedra:2022} and the closely related \cite[Theorem 1]{Sanders:2020}, do bound the number of misclassifications on any given \gls{BMDP}, but only for algorithms satisfying a consistency condition.
Instead, our class is such that any algorithm violating this condition must perform poorly on at least one instance.
The construction is based on a construction used for random graph clustering \cite{zhang2016minimax}.

The proof for the $\Omega(\sqrt{AT_K})$ contribution leverages techniques developed for tabular \glspl{MDP} \cite{domingues2021episodic,NEURIPS2019_10a5ab2d}
These were in turn adapted from the literature on bandit learning \cite{bubeck2012regret}.
A direct application of \cite{domingues2021episodic}'s construction to the induced \gls{MDP} on the latent states would lead to an $\Omega(\sqrt{HSAT_K})$ lower bound for general \glspl{BMDP}.
However, the resulting instances would not be $\eta$-reachable or $\mcI$-identifiable.
We therefore modify those constructions to comply with \Cref{def:reachability,def:identifiability}.
This leads to a slightly weaker but still $\Omega(\sqrt{T_K})$-type regret.

\subsubsection{Proof of the regret upper bound}

\Cref{thm:BUCBbd}'s proof is presented in \Cref{sec:regret_analysis}.
We establish there that our algorithm's first phase successfully recovers the latent structure.
We subsequently bound the regret incurred during the algorithm's second phase.

For the first phase, we refine \cite{Jedra:2022}'s analysis of \cite[Algorithms 1 and 2]{Jedra:2022} in two ways.
First, whereas \cite{Jedra:2022} establishes exact latent state recovery with high probability as $n\rightarrow\infty$, we provide an explicit probability bound.
This is required for bounding the expected regret.
Second, we relax their assumptions, allowing $S$, $A$ and $p$ to depend on $n$.
Moreover, \Cref{def:identifiability}'s condition is slightly weaker than its analogue in \cite{Jedra:2022}; see \Cref{sec:on-the-structural-properties} for a comparison.
These relaxations are necessary, because we could only prove \Cref{thm:regret_lower_bound} under these weaker assumptions.

For the second phase, we leverage that \Cref{alg:BUCBVIouter} has already recovered the latent structure exactly.
In this case, the exploration bonuses need not account for uncertainty about this structure.
This allows us to broadly follow the analysis of \cite{UCRLVI}.
The main new difficulty arises from the structure of the \gls{BMDP} kernel, involving both a latent state transition kernel and emission probabilities.
Their estimators are statistically dependent, and a key part of our analysis is a decomposition that disentangles their contributions to the regret.

\subsection{Related literature}

Our work connects to the following three research directions.

\subsubsection{Function approximation for block and low-rank \texorpdfstring{\glspl{MDP}}{MDPs}}
\label{sec:related-literature:function-approximation}

\gls{RL} in \glspl{BMDP} often uses function approximation.
This assumes that the decoding- or value function belongs to a known, well-behaved function class $\mcF$.
Prior work has used function approximation for \gls{PAC} learning \cite{NIPS2016_2387337b,pmlr-v70-jiang17c,NEURIPS2018_5f0f5e5f,pmlr-v97-du19b,pmlr-v162-zhang22aa}, which bounds the time required to find a near-optimal policy; reward-free learning \cite{pmlr-v162-zhang22aa, pmlr-v119-misra20a}, where the agent explores without observing rewards; and regret minimization \cite{foster2020instancedependent}, which is the focus of this paper.
These methods have also been applied to more general models such as low-rank \glspl{MDP} \cite{agarwal2020flambe,DBLP:journals/corr/abs-2110-04652, modi2022modelfree, mhammedi2023efficient}, which include \glspl{BMDP} as special cases.

The use of function approximation is typically motivated by settings with infinite or continuous context spaces.
Function approximation is then necessary to generalize to unseen contexts.
By contrast, we consider \glspl{BMDP} with finite context spaces and arbitrary decoding functions.
In this case, \cite{Jedra:2022} notes that function approximation achieves the same sample complexity scaling in $n$ as tabular methods.
In other words, these methods do not exploit the inherent block structure, but instead benefit from the restrictions imposed by the class $\mcF$.
We therefore also do not rely on function approximation or oracles for latent state estimation.

\subsubsection{Latent state estimation and clustering in block models}

Another line of research develops explicit algorithms for learning the latent structure from observations.
This includes \cite{azizzadenesheli2016reinforcement}, which estimates the latent structure using a spectral method based on \cite{JMLR:v15:anandkumar14b}.
However, as explained in \Cref{sec:main_results:regret_upper_bound}, this leads to suboptimal regret in the worst case.
Instead, we build on \cite{Jedra:2022}, whose algorithm uses a two-step approach: an initial spectral estimate is refined using a likelihood-based improvement step.
This approach was inspired by similar techniques developed for \glspl{BMC} \cite{Sanders:2020} (but see also \cite{10.5555/3454287.3454690,8918022,zhu2022learning,van2024estimating}), which, in turn, build on techniques for \glspl{SBM} \cite{yun2014community,yun2016optimal}.

\cite{Jedra:2022} also shows that their algorithm is nearly optimal by deriving an information-theoretic lower bound.
This is done using change-of-measure arguments, likewise inspired by \cite{Sanders:2020,yun2014community,yun2016optimal}.
As detailed in \Cref{sec:summary_of_main_results}, our work adds to these developments in latent state estimation by proving a new lower bound on the number of misclassifications, and refining \cite{Jedra:2022}'s analysis of their algorithm.

\subsubsection{Regret minimization in \texorpdfstring{\glspl{MDP}}{MDPs} with known structure}

Recall that, after learning the latent structure, we adapt the optimism-based approach of \cite{UCRLVI} to minimize the regret while refining our transition probability estimates.
We use specialized kernel estimates and bonus terms that exploit the latent structure, leading to improved regret.
This approach has also been successfully applied to other types of structured \glspl{MDP}.
For instance, \cite{NEURIPS2020_e61eaa38} designs exploration bonuses for factored \glspl{MDP}, \cite{gast2022reinforcement} for Markovian bandits, and \cite{pmlr-v125-jin20a} for linear \glspl{MDP}.
Similar two--phase approaches have also been considered in the context of bandit learning \cite{pmlr-v32-gentile14,gopalan2016lowrank,jedra2024low}.

Linear \glspl{MDP} in particular include \glspl{BMDP} (with known latent structure) as a special case.
One can therefore also use algorithms designed for linear \glspl{MDP} after the initial structure-learning phase.
However, because linear \glspl{MDP} are more general than \glspl{BMDP}, these algorithms do not fully exploit the discrete structure of \glspl{BMDP}.
They therefore do not achieve optimal regret on \glspl{BMDP}, with the best regret bounds being $\tilde{O}(SA\sqrt{HT_K})$ \cite{he2023nearly, agarwal2023vo,hu2022nearly}.
We exploit the additional structure of \glspl{BMDP} to improve this by a factor of $\sqrt{SA}$ in \Cref{thm:BUCBbd}.

\section{Model definitions}
\label{sec:models}

\subsection*{Elementary notation}

For $m\in \mathbb{N}_+$, let $[m] \eqdef \{1,2,\ldots, m\}$.
Functions like $v:[m]\rightarrow\mbbR$ are identified with vectors in $\mbbR^m$, with components $v(x)$ for $x\in[m]$ relative to the standard basis.
For $v,w\in\mbbR^m$, let $\langle v,w\rangle$ denote the standard inner product.
Finally, let $\mathrm{Unif}(\mcX)$ refer to the uniform distribution on $\mcX\subset[m]$.

\subsection{Block Markov Decision Processes}
\label{sec:Block-MDPs}

\glspl{BMDP} are described by tuples $\Phi = (n, S, A, H, \allowbreak f, \allowbreak p, \allowbreak q, \mu, r)$.
Here, $n, S, A, H \in \mbbN_+$ denote the number of contexts, latent states, actions, and episode horizon, respectively.
Unique to a Block \gls{MDP} is that there is a \emph{decoding function} $f:[n]\rightarrow [S]$ that assigns contexts to latent states, and that the latent states determine the dynamics.
Specifically, \glspl{BMDP} have a \textit{latent state transition kernel} $p:[S]\times[A]\times[S]\rightarrow [0,1]$ and an \textit{emission kernel} $q:[S]\times[n]\rightarrow[0,1]$.
Here, $p(s'\mid s,a)$ is the probability of transitioning to state $s'$ given that the system is in state $s$ when choosing action $a$, and $q(x\mid s)$ the probability of observing context $x$ given that the system is in latent state $s$.

The triple $(f,p,q)$ specifies the \gls{MDP}'s transition kernel $P:[n]\times[A]\times[n]\rightarrow [0,1]$ completely.
Specifically,
\begin{equation}
	P(y \mid x, a)
	\eqdef
	p( f(y) \mid f(x), a )
	q( y \mid f(y) )
	\label{def:BMDP-transition-kernel}
\end{equation}
is the probability of transitioning to context $y$ given that the current context is $x$ when choosing action $a$.
Note that the probability of moving to context $y$ depends only on the current latent state and not the specific context.

Finally, $r = \{r_h\}_{h\in[H]}$ with $r_h:[n]\times[A]\rightarrow [0,1]$ denotes a bounded, nonstationary and deterministic reward function that may depend on the context that is observed.

\subsection{Policies}

A \textit{memoryless policy} $\pi$ is a collection $\pi=\{\pi_h\}_{h\in [H]}$ of mappings $\pi_h:[n]\times[A]\rightarrow [0,1]$ where $\pi_h(a\mid x)$ denotes the probability of selecting action $a$ when observing context $x$ at round $h$.
Memoryless policies do not depend on the sequence of contexts prior to the current context.
Because we may only consider memoryless policies, we will simply refer to them as policies.

Together, a \gls{BMDP} $\Phi$ and a policy $\pi$ define a probability measure $\mbbP_{\Phi,\pi}$ and a random variable $(x_1,a_1,\ldots,x_H,a_H,x_{H+1})$ representing the sequence of contexts and actions satisfying $\mbbP_{\Phi,\pi}[x_1=x]=\mu(x)$, $\mbbP_{\Phi,\pi}[a_h=a\mid x_h=x]=\pi_h(a\mid x)$ and $\mbbP_{\Phi,\pi}[x_{h+1}=y\mid x_h=x,a_h=a]=P(y\mid x,a)$.

\subsection{Value functions}

For $h\in[H]$, define the Q-function $Q^{\pi}_h$ and value function $V^{\pi}_h$ of a policy $\pi$ element-wise as
\begin{equation}
	\label{eqn:valuefuncdef}
	Q^{\pi}_h(x,a)
	\eqdef
	\sum_{h'=h}^H
	\mbbE_{\Phi,\pi}[
	r_h(x_{h'},a_{h'}) \mid x_h=x, a_h = a
	]
	\quad
	\mathrm{and}
	\quad
	V^{\pi}_h(x)
	\eqdef
	\mbbE_{\Phi,\pi}[Q_h^{\pi}(x,a_h)]
\end{equation}
for $x\in[n]$, $a\in[A]$.
These satisfy the following Bellman recursion:
\begin{equation}
	\label{eqn:bellman_equation}
	Q^{\pi}_h(x,a) = r_h(x,a) + \sum_{y}P(y\mid x,a)V_{h+1}^{\pi}(y)
	,
	\quad
	V_h^{\pi}(x) = \sum_{a}\pi_h(a\mid x)Q_h^{\pi}(x,a)
	,
	\quad
	V_{H+1}^{\pi}(x) \eqdef 0
	.
\end{equation}

When $n$, $S$, $A$ and $H$ are all finite, there always exists a policy $\pi^*=\{\pi_h^*\}_{h\in[H]}$ that satisfies $V_h^{\pi^*}(x)=\sup_{\pi\in\Pi}V^{\pi}_h(x)$ for all $x\in[n]$ and $h\in[H]$.
Such a policy can be explicitly constructed through backward value iteration using \eqref{eqn:bellman_equation}.
Namely, starting from the initial condition $V_{H+1}^{\pi^*}=0$, we can obtain an optimal policy and value function at round $h\in[H]$ by setting
\begin{equation}
	\label{eqn:backward_value_iteration}
	\pi_h^*(\,\cdot\mid x)
	=
	\mathrm{Unif}(\argmax_{a}Q_h^{\pi^*}(x,a))
	\quad
	\textnormal{and}
	\quad
	V_h^{\pi^*}(x)
	=
	\max_{a}Q_{h}^{\pi^*}(x,a)
	.
\end{equation}

\subsection{Episodic learning algorithms and regret}
\label{sec:preliminaries-episodes}

An \textit{episodic learning algorithm} selects a policy for each episode based on its previous observations and the known reward function.
Formally, such an algorithm is a collection $\mcL=\{\mcL_k\}_{k\in\mbbN_+}$ of maps where for any $D_k\in(([n]\times[A])^H\times[n])^{k-1}$ and reward function $r$, $\mcL_k(D_k,r)$ is a policy.

A \gls{BMDP} and learning algorithm give rise to a probability measure $\mbbP_{\Phi,\mcL}$ along with a sequence of random variables $\mcD_k$ for $k\geq 2$ representing the contexts and actions during the first $k-1$ episodes:
\begin{equation}
	\label{eqn:historydef}
	\mcD_{k}
	\eqdef
	\otimes_{k'=1}^{k-1}\mcD_{(k')}
	\quad \mathrm{where}
	\quad
	\mcD_{(k')}
	\eqdef
	(x_{k',1},a_{k',1},\ldots,x_{k',H},a_{k',H},x_{k',H+1})
	.
\end{equation}
Then $\mbbP_{\Phi,\mcL}[x_{k,1}=x]=\mu(x)$, $\mbbP_{\Phi,\mcL}[a_{k,h}=a \mid x_h=x,\mcD_k]=\pi_{k,h}(a\mid x)$, and $\mbbP_{\Phi,\mcL}[x_{k,h+1}=y\mid x_{k,h}=x,a_{k,h}=a]=P(y\mid x,a)$, where $\pi_{k}=\mcL_k(\mcD_k,r)$ denotes the policy selected by $\mcL$ for episode $k$.

We define the \textit{regret} of a learning algorithm $\mcL$ on a \gls{BMDP} $\Phi$ after $K$ episodes by
\begin{equation}
	\label{eqn:regret_def}
	\reg_K(\mcL;\Phi)
	\eqdef
	\sum_{k=1}^K
	\mbbE_{\Phi,\mcL}
	\bigl[V_{1}^{\pi^*}(x_{k,1})-V_{1}^{\pi_k}(x_{k,1})\bigr]
	.
\end{equation}

\subsection{Latent state estimation}
\label{sec:preliminaries_latent_state_estimation}

We also consider algorithms that estimate the latent structure based on a sequence of transitions of the \gls{BMDP}.
Formally, these are defined as a collection of maps $\mcC=\{\mcC_k\}_{k\in\mbbN_+}$ where $\mcC_k(D_k,r)$ is a decoding function for any $D_k\in\mcH^{k-1}$ and reward function $r$.

Given an estimated decoding function $\hat{f}$, we define the set of misclassified contexts in $\mcS\subset[S]$ as
\begin{equation}
	\label{eqn:misclassdef}
	\mcE(\hat{f};f,\mcS)
	\eqdef
	\cup_{s\in\mcS} (\hat{f}^{-1}(\hat{\gamma}^*_f(s))\setminus f^{-1}(s))
	\quad
	\mathrm{with}
	\quad
	\hat{\gamma}^*_f
	\in
	\argmin_{\gamma\in\mathrm{Perm}(\mcS)}
	|\cup_{s\in\mcS} (\hat{f}^{-1}(\gamma(s))\setminus f^{-1}(s))|
	.
\end{equation}
For convenience, we also introduce the shorthand notation $\mcE(\hat{f};f)\eqdef \mcE(\hat{f};f,[S])$.

We next recall an information-theoretic quantity from \cite{Jedra:2022}.
It controls the optimal misclassification count of certain well-behaved estimation algorithms; see \cite[Theorem 1]{Jedra:2022}.
It is analogous to similar quantities controlling the optimal rate of cluster recovery in \glspl{BMC} \cite{Sanders:2020} and \glspl{SBM} \cite{zhang2016minimax, yun2016optimal}.

Let $\Phi$ have decoding function $f$, and define for any $\pi$, $s$, and $a$,
\begin{equation}
	\label{eqn:definition_of_visitation_probabilities}
	\omega_{\Phi,\pi}(s,a)
	\eqdef
	\frac{1}{H}\sum_{h=1}^H\mbbP_{\Phi,\pi}[f(x_h)=s,a_{h}=a]
	.
\end{equation}
Given a \gls{BMDP} $\Phi$, context $x$, latent state $\tilde{s}\neq f(x)$, and $c>0$, let $\Psi$ be the \gls{BMDP} obtained from $\Phi$ by moving context $x$ to latent state $\tilde{s}$, with $c$ parametrizing the emission probability of $x$ in $\tilde{s}$; see \cite[Appendix D]{Jedra:2022} for the detailed construction of $\Psi$.
Using the representation in \cite[Eqn. (35)]{Jedra:2022}, the information-theoretic quantity is now given by
\begin{align}
	I_{\tilde{s}}(x;c,\Phi,\pi)
	\eqdef
	 &
	n
	\KL(
	p^{\mathrm{in}}_{x,\tilde{s},c,\omega_{\Psi,\pi}}
	,
	p^{\mathrm{in}}_{x,f(x),1,\omega_{\Psi,\pi}}
	)
	\nonumber
	\\
	 &
	\phantom{=}
	+
	n
	c
	q(x\mid f(x))
	\sum_{a}
	\omega_{\Psi,\pi}(f(x),a)
	\KL(
	p(\,\cdot\mid \tilde{s},a)
	,
	p(\,\cdot\mid f(x),a)
	)
	,
	\label{eqn:Ijxcdef}
\end{align}
with $p^{\mathrm{in}}_{x,\tilde{s},c,\omega}:[2]\times[S]\times[A]\rightarrow[0,1]$ defined for any $x$, $\tilde{s}$, $c>0$, and $\omega:[S]\times [A]\rightarrow [0,1]$ satisfying $\sum_{s,a}\omega(s,a)=1$, as
\begin{equation}
	\label{eqn:p_in_def}
	p^{\mathrm{in}}_{x,\tilde{s},c,\omega}(i,s,a)
	\eqdef
	\begin{cases}
		\omega(s,a)p(\tilde{s} \mid s,a)cq(x \mid f(x)),
		 &
		\textnormal{if } i = 1,
		\\
		\omega(s,a)(1-p(\tilde{s} \mid s,a)cq(x \mid f(x)))
		 &
		\textnormal{if } i = 2.
		\\
	\end{cases}
\end{equation}

Let $\pi_U$ denote the uniform policy given by $\pi_{U,h}(\,\cdot\mid x)\eqdef \mathrm{Unif}([A])$ for all $x$ and $h$.
The parameter $I_{\eta}$ appearing in \Cref{def:identifiability} is now defined for $\eta>1$ as
\begin{equation}
	\label{eqn:I_eta_def}
	I_{\eta}(x;\Phi)
	\eqdef
	\min_{\tilde{s}\neq f(x)}\inf_{c\in[1/\eta,\eta]}I_{\tilde{s}}(x;c,\Phi,\pi_U)
	.
\end{equation}

\section{Regret lower bound}
\label{sec:regret_lower_bound}

This section proves \Cref{thm:regret_lower_bound}.
\Cref{sec:proof_outline_lower_bound} outlines the proof, and \Cref{sec:proof_of_regret_lower_bound} contains the details.

\subsection{Proof outline}
\label{sec:proof_outline_lower_bound}

To prove \Cref{thm:regret_lower_bound}, we construct a hard--to--learn subclass $\Lambda^*$ consisting of $\eta$-regular and $\mcI_{\eta}$-identifiable \glspl{BMDP}.
We then lower bound the maximum regret over \glspl{BMDP} in $\Lambda^*$.

The proof takes inspiration from \cite[Theorem 5]{Jedra:2022}.
It is, however, modified to
(i) comply with \Cref{def:reachability,def:identifiability} and
(ii) obtain a worst-case regret lower bound.
We distinguish three parts:

\begin{enumerate}
	\item[Part 1.]
	      We define $\Lambda^*$ and show that all \glspl{BMDP} in $\Lambda^*$ are $\eta$-regular and $\mcI_{\eta}$-identifiable.

	      Intuitively, $\Lambda^*$ is constructed so that rewards are received only upon visiting a specific subset of latent states, $\mcS_1 = \{ \floor{S/2}+1,\ldots,S \}$, say.
	      In each latent state $s$, there is one action $a_s^*$ that is $\varepsilon$ more likely to transition to $\mcS_1$ relative to the other actions.
	      Even if a context $x$ is revealed to belong to $s$, the agent must still identify the optimal action $a_s^*$.
	      Conversely, even if $a_s^*$ is known for each $s$, the agent must still infer which latent state $x$ corresponds to.
	      By allowing $\varepsilon$ to depend on whether $s$ belongs to $\mcS_1$ or $\mcS_1^{\mathrm{c}}$, and tuning its value separately for each, we control which of these two learning challenges dominates.
	      This gives rise to the two terms $\Omega(\sqrt{AT_K})$ and $\Omega(n)$ in \eqref{eqn:regret_lower_bound}.

	\item[Part 2.]
	      The $\Omega(\sqrt{AT_K})$ term in \eqref{eqn:regret_lower_bound} arises from having to learn the transition probabilities \emph{even if} the latent structure is revealed.

	      To establish this, we analyze the induced \gls{MDP} on the latent states.
	      Following \cite{domingues2021episodic, doi:10.1287/moor.2017.0928}, we use a change--of--measure argument to first lower bound the number of times an algorithm selects suboptimal actions while visiting $\mcS_1^{\mathrm{c}}$.
	      We then use the transition structure of the \glspl{BMDP} in $\Lambda^*$ to bound how often $\mcS_1^{\mathrm{c}}$ is visited.
	      Specifying $\varepsilon = \Theta(\sqrt{A/T_K})$ yields the $\Omega(\sqrt{AT_K})$ contribution in \eqref{eqn:regret_lower_bound}.

	\item[Part 3.]
	      The $\Omega(n)$ term in \eqref{eqn:regret_lower_bound} arises from having to learn the latent structure \emph{even if} the optimal actions are revealed.

	      To formalize this, we fix a set of optimal actions $\{a_s^*\}_{s}$ and consider the basic estimator that assign contexts $x$ to latent states $s$ whose action $a_s^*$ are chosen most.
	      We then show that in \glspl{BMDP} where $\{a_s^*\}_{s}$ are indeed optimal, the regret is bounded from below by the number of contexts misclassified by this estimator.
	      Applying \Cref{prop:lower_bound_misclassification_rate}, which lower bounds the maximum number of misclassifications over \glspl{BMDP} in $\Lambda^*$ of \emph{any} latent-state estimator, yields the $\Omega(n)$ contribution when specifying $\varepsilon = \Theta(1)$.
\end{enumerate}

\subsection{Proof of \texorpdfstring{\Cref{thm:regret_lower_bound}}{the regret lower bound}}
\label{sec:proof_of_regret_lower_bound}

Presume throughout this section that $S$, $A$, and $H$ satisfy \eqref{eqn:parameter_assumptions}.

Let $\mcS_1 = \{ S/2+1,\ldots,S \}$ and $\mcS_0 = \mcS_1^{\mathrm{c}}$, and define $\iota : [S] \rightarrow \{0,1\}$ as the map that assigns
\begin{equation}
	s
	\mapsto
	\iota(s)
	\eqdef
	\arg
	\{
	i \in \{ 0, 1 \}
	:
	s \in \mathcal{S}_i
	\}
	.
	\label{def:iota}
\end{equation}

\subsubsection*{Part 1: Construction of the hard--to--learn subclass}

The hard--to--learn class will contain a subset of the following \glspl{BMDP} that are described by parameters $\varepsilon = (\varepsilon_0, \varepsilon_1)$, functions $\tilde{p} = (\tilde{p}_0, \tilde{p}_1)$, and actions $a^* = \{a_s^*\}_{s}$.

\begin{definition}
	[Candidate hard--to--learn \glspl{BMDP}]
	\label{def:hard-to-learn-BMDP-instance}

	For parameters $\varepsilon_0,\varepsilon_1\in [0,1/2]$, actions $a_s^*\in [A]$ for $s\in[S]$, a decoding function $f:[n]\rightarrow [S]$,
	and any two functions $\tilde{p}_0(\cdot \mid s)$ and $\tilde{p}_1(\cdot \mid s)$ that are probability distributions on $\mcS_0$ and $\mcS_1$ for every $s \in S$, respectively,\footnote{That is, for $i \in \{ 0, 1 \}$, $s\in[S]$, $\sum_{s'\in\mcS_i}\tilde{p}_i(s' \mid s) = 1$.}
	let
	\begin{equation}
		\Phi( \varepsilon, \tilde{p}, f, a^*)
	\end{equation}
	be a \gls{BMDP} with:
	\begin{definitionenum}
		\item \label{def:hard-BMDP--latent-state-transition-probabilities}
		latent state transition probabilities
		\begin{equation}
			p(s'\mid s,a)
			=
			\begin{cases}
				\frac{1}{2}
				\tilde{p}_{\iota(s')}(s' \mid s)
				 &
				\text{if }
				a\neq a^*_s,
				\\
				\frac{1}{2}
				(1-2\varepsilon_{\iota(s)})\tilde{p}_{0}(s' \mid s)
				 &
				\text{if }
				a=a^*_s
				,
				s'\in\mcS_0
				,
				\\
				\frac{1}{2}
				(1+2\varepsilon_{\iota(s)})\tilde{p}_{1}(s' \mid s)
				 &
				\text{if }
				a=a^*_s
				,
				s'\in\mcS_1
				;
			\end{cases}
			\label{eqn:latent_state_transition_kernel_macro_def}
		\end{equation}

		\item \label{def:hard-BMDP--uniform_emissions}
		uniform emission probabilities:
		for $s\in[S]$ and $y\in f^{-1}(s)$,
		$q(y\mid s)=1/|f^{-1}(s)|$;

		\item \label{def:hard-BMDP--unit-rewards}
		unit rewards only on $\mathcal{S}_1$: for $h\in[H]$ and $a\in[A]$,
		$
			r_h(x,a)
			=
			\mathbbm{1}\{x \in f^{-1}(\mcS_1)\}
		$;

		\item
		decoding function $f$;

		\item \label{def:hard-BMDP--uniform_initial_distribution}
		uniform initial context distribution: for $x \in[n]$, $\mu(x)=1/n$.
	\end{definitionenum}
\end{definition}

\Cref{fig:hard_to_learn_illustration} depicts the transition graph of such \glspl{BMDP} schematically.
Note that for every state $s$, the specific action $a_s^*$ results in the highest chance to transition to $\mcS_1$.
Because the only rewarding states are in $\mcS_1$, this makes $a_s^*$ the unique optimal action for latent state $s$.
Ultimately, it is this structure that allows us to lower bound the regret (see \Cref{lem:regret_decomposition_lower_bound}).

\begin{figure}[hbtp]
	\centering
	\includegraphics[width=0.75\linewidth]{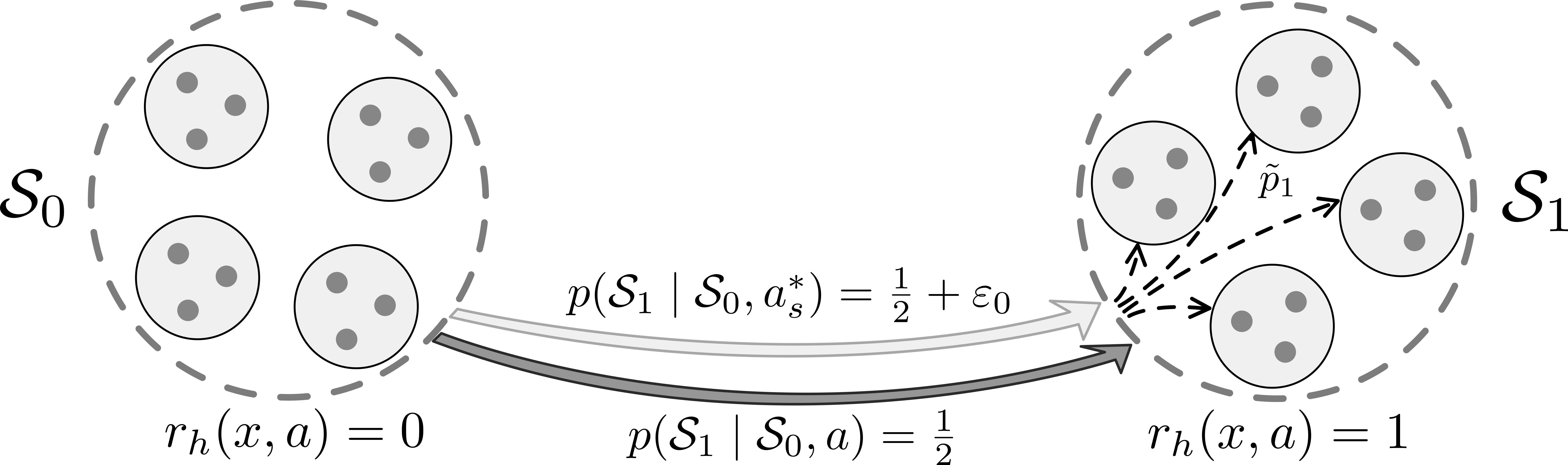}
	\caption{
		A candidate hard--to--learn \gls{BMDP} $\Phi( \varepsilon, \tilde{p}, f, \{a_s^*\}_{s} )$.
		The thick arrows represent the probabilities of transitioning to $\mcS_1$ for different actions.
		Conditional on observing a transition from $s$ to $\mcS_i$, we transition to $s'\in\mcS_i$ with probability $\tilde{p}_i(s'\mid s)$, represented by the thin arrows.
	}
	\Description{
		The figure shows two sets of latent states.
		On the first, the reward is always zero, and on the second it is always one.
		Two arrow representing transitions from the first to the second subset are shown, highlighting the different transition probabilities for the different actions.
		Thin arrows depict the distribution with which the different latent states in the second subset are visited.
	}
	\label{fig:hard_to_learn_illustration}
\end{figure}

To prove \eqref{eqn:regret_lower_bound}, we will consider hard--to--learn \glspl{BMDP} that differ in both their optimal actions and decoding functions.
This variation prevents algorithms from achieving low regret by tailoring its strategy to any single instance.
We next define specific sets of optimal action assignments and decoding functions that capture the necessary structural diversity.

We consider decoding functions for which all latent states contain roughly the same number of contexts.
The following lemma ensures a decomposition with the right properties is possible.
Its proof is based on the paragraphs surrounding \cite[(5.2)]{zhang2016minimax} and is given in \Cref{sec:proof_of_latent_state_subset_sizes}.
\begin{lemma}
    \label{lem:latent_state_subset_sizes}
    Let $n_0 = \floor{n/2}$ and $n_1 = n - n_0$.
    Then, for $i\in\{0,1\}$, there exist nonnegative integers $\mf{s}^{-1}_i,\mf{s}^0_i,\mf{s}^{+1}_i$ satisfying $\mf{s}^{+1}_i \geq S/36$ and $\mf{s}_i^{0}>0$ such that $\sum_{\sigma\in\{-1,0,+1\}}\allowbreak\mf{s}^{\sigma}_i(\floor{n_i/(S/2)}+\sigma)=n_i$.
\end{lemma}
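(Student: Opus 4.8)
The plan is to prove \Cref{lem:latent_state_subset_sizes} by an explicit base-$(\floor{n_i/(S/2)})$-style counting argument, following the paragraphs around \cite[(5.2)]{zhang2016minimax}. Fix $i\in\{0,1\}$ and write $m_i = S/2$ (a positive integer by \eqref{eqn:parameter_assumptions}, since $S/4\in\mbbN_{\geq 2}$) and $q_i = \floor{n_i/m_i}$. We seek nonnegative integers $\mf{s}_i^{-1},\mf{s}_i^0,\mf{s}_i^{+1}$ with $\mf{s}_i^{-1}+\mf{s}_i^0+\mf{s}_i^{+1} = m_i$ (so that these counts partition the $m_i$ latent states in the relevant half) and $\mf{s}_i^{-1}(q_i-1) + \mf{s}_i^0 q_i + \mf{s}_i^{+1}(q_i+1) = n_i$, i.e.\ $\mf{s}_i^{+1} - \mf{s}_i^{-1} = n_i - m_i q_i =: \rho_i$, where $\rho_i \in \{0,1,\ldots,m_i-1\}$ is the remainder of $n_i$ modulo $m_i$. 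The first step is therefore purely arithmetic: solve the linear system $\mf{s}_i^{+1}-\mf{s}_i^{-1}=\rho_i$, $\mf{s}_i^{-1}+\mf{s}_i^0+\mf{s}_i^{+1}=m_i$ subject to nonnegativity and the extra constraints $\mf{s}_i^{+1}\geq S/36$ and $\mf{s}_i^0>0$.

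Concretely, I would try the ansatz $\mf{s}_i^{-1} = t$, $\mf{s}_i^{+1} = t+\rho_i$, $\mf{s}_i^0 = m_i - 2t - \rho_i$ for a free integer $t\geq 0$, and then choose $t$ to satisfy the two side constraints. For $\mf{s}_i^{+1}\geq S/36 = m_i/18$ it suffices that $t+\rho_i \geq m_i/18$; for $\mf{s}_i^0>0$ we need $2t+\rho_i < m_i$, i.e.\ $t < (m_i-\rho_i)/2$. Since $\rho_i \leq m_i-1$, the interval of admissible $t$ is $[\max(0,\lceil m_i/18\rceil-\rho_i),\ \lceil(m_i-\rho_i)/2\rceil-1]$; I would check this interval is nonempty. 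If $\rho_i \geq m_i/18$ one can take $t=0$ and then $\mf{s}_i^0 = m_i-\rho_i \geq 1$ provided $\rho_i\leq m_i-1$, which holds; the constraint $\mf{s}_i^{+1}=\rho_i\geq m_i/18$ is exactly the assumed case. If $\rho_i < m_i/18$, take $t = \lceil m_i/18\rceil - \rho_i \leq m_i/18$, so $2t+\rho_i \leq m_i/9 + \rho_i < m_i/9 + m_i/18 = m_i/6 < m_i$, giving $\mf{s}_i^0 > 0$, and $\mf{s}_i^{+1}=t+\rho_i = \lceil m_i/18\rceil \geq S/36$. Either way all three quantities are nonnegative integers, proving the claim. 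One must also use that $S/4\in\mbbN_{\geq 2}$ guarantees $m_i = S/2$ is an even integer $\geq 4$, so $S/36$ and $m_i/18$ make sense as the thresholds they are compared against and $\lceil m_i/18\rceil \leq m_i/6$ holds comfortably for the small-$m_i$ edge cases.

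The only mild subtlety — and the step I expect to need the most care — is handling the rounding interactions between $\floor{n/2}$, $\floor{n_i/(S/2)}$, and $S/36$ uniformly in $n$, rather than for a single $n$: the lemma is stated for all $n$ (with $S$ possibly growing), so I should make sure the admissible-$t$ interval above is genuinely nonempty for every valid $(n,S)$, not just asymptotically, and in particular when $n_i < S/2$ (forcing $q_i = 0$, $\rho_i = n_i$) — but $\eqref{eqn:parameter_assumptions}$'s hypothesis $S=o(n)$ rules this degenerate regime out for large $n$, and for the remaining finitely many small $n$ the statement is invoked only "for all sufficiently large $n$" elsewhere, so it is harmless. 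I would state the construction cleanly in the two cases $\rho_i \geq \lceil m_i/18\rceil$ and $\rho_i < \lceil m_i/18\rceil$ and verify nonnegativity of $\mf{s}_i^0$, $\mf{s}_i^{\pm 1}$ by the inequalities above, which completes the proof.
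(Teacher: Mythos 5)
Your construction is correct and essentially the paper's proof: the same two-case split on whether the remainder $\rho_i$ exceeds roughly $S/36$, with your first case identical to the paper's and your second case differing only in the size of the shift away from $\mf{s}_i^0$ (you use $t=\ceil{m_i/18}-\rho_i$, the paper uses $\floor{S/6}$), both following the argument around \cite[(5.2)]{zhang2016minimax}. One small caveat: your intermediate claims $t\leq m_i/18$ (false when $\rho_i=0$ and $18\nmid m_i$) and $\ceil{m_i/18}\leq m_i/6$ (false at the smallest admissible value $m_i=S/2=4$) are not literally correct, but this is harmless because the fact you actually need, $2\ceil{m_i/18}+\rho_i<m_i$ in the case $\rho_i<m_i/18$, does hold for every even $m_i\geq 4$, so $\mf{s}_i^0>0$ and the construction goes through.
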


\begin{definition}
	[Candidate hard--to--learn \gls{BMDP} classes]
	\label{def:hard_to_learn_class}

    With $\mf{s}^{\sigma}_i$ as in \Cref{lem:latent_state_subset_sizes}, let $\mcF\subset[S]^{[n]}$ denote the set of decoding functions $f$ that satisfy:
    \begin{definitionenum}
        \item \label{def:hard_to_learn_class:context_split}
            $f^{-1}(\mcS_0)=\{1,\ldots,\floor{n/2}\}$ and $f^{-1}(\mcS_1)=\{\floor{n/2}+1,\ldots,n\}$,

        \item \label{def:hard_to_learn_class:size_distribution_of_latent_states}
          for $i\in\{0,1\}$,
          \begin{align}
              \label{eqn:size_distribution_of_latent_states}
               &
              \mcS_i
              =
              \mcS_i^{-1}(f) \cup \mcS_i^{0}(f) \cup \mcS_i^{+1}(f)
              \\
               &
              \quad
              \mathrm{with}
              \quad
              \mcS_i^{\sigma}(f)
              \eqdef
              \biggl\{
                s\in\mcS_i
                :
                |f^{-1}(s)| = \biggl\lfloor \frac{|f^{-1}(\mcS_i)|}{|\mcS_i|} \biggr\rfloor + \sigma
              \biggr\}
              \quad
              \mathrm{for}
              \quad
              \sigma\in \{-1,0,+1\}
              ,
              \nonumber
          \end{align}
        \item \label{def:hard_to_learn_class:sizes_of_subsets}
              for $i\in\{0,1\}$ and $\sigma \in \{-1,0,+1\}$, $|\mcS_i^{\sigma}(f)|=\mf{s}^{\sigma}_i$.
    \end{definitionenum}
    Furthermore, let $\mcA \subset [A]^S$ be the set of all action assignments $a^*=\{a_s^*\}_{s}\in [A]^S$ that satisfy $a^*_{s_1}\neq a^*_{s_2}$ for any two distinct $s_1, s_2 \in \mcS_1$.

	Then, for $\varepsilon_0, \varepsilon_1 \in [0,1/2]$
	and any two functions $\tilde{p}_0(\cdot \mid s)$ and $\tilde{p}_1(\cdot \mid s)$ that are probability distributions on $\mcS_0$ and $\mcS_1$ for every $s \in [S]$, respectively,
	let
	\begin{equation}
		\Lambda( \varepsilon, \tilde{p} )
		\eqdef
		\bigl\{
		\Phi( \varepsilon, \tilde{p}, f, a^* )
		:
		f\in\mcF,
		a^* \in \mcA
		\bigr\}
		.
	\end{equation}
\end{definition}

Observe that for $S$ and $A$ satisfying \eqref{eqn:parameter_assumptions}, \Cref{lem:latent_state_subset_sizes} ensures that $\Lambda( \varepsilon, \tilde{p} )$ is nonempty.
However, for arbitrary $\varepsilon, \tilde{p}$, it will not necessarily be that every \glspl{BMDP} in $\Lambda( \varepsilon, \tilde{p} )$ satisfies \Cref{def:reachability,def:identifiability}.
The following proposition, proved in \Cref{sec:proof_of_kernel_allows_identifiability} using a packing lemma \cite{jin2020reward}, characterizes the values of $\varepsilon, \tilde{p}$ for which this \emph{is} the case.

\begin{proposition}
	\label{prop:kernel_allows_identifiability}
	For every $\eta>1$ independent of $n$,
	there exists a constant $0 < \epsilon < (1/2)(\eta-1)/(\eta+1)$,
	also independent of $n$,
	such that
	for every $(\varepsilon_0, \varepsilon_1)$ satisfying $\epsilon_{\max} = \varepsilon_0 \vee \varepsilon_1 < \epsilon$
	there is a pair $\tilde{p}^*=(\tilde{p}^*_0,\tilde{p}^*_1)$ satisfying
	\begin{equation}
		\label{eqn:tilde_p_distance_from_uniform}
		\bigl| \tilde{p}^*_0(s' \mid s) - 2/S \bigr|
		\vee
		\bigl| \tilde{p}^*_1(s' \mid s) - 2/S \bigr|
		\leq
		\frac{2\kappa}{S}
		\quad
		\mathrm{with}
		\quad
		\kappa
		\eqdef
		\frac{ \eta(1-2\epsilon_{\max})-(1+2\epsilon_{\max}) }{\eta(1-2\epsilon_{\max})+(1+2\epsilon_{\max})}
	\end{equation}
	such that:
	\begin{propenum}
		\item \label{prop:kernel_allows_identifiability--reachability}
		      every $\Phi \in \Lambda( \varepsilon, \tilde{p}^* )$ is $\eta$-reachable;

		\item \label{prop:kernel_allows_identifiability--identifiability}
		  	  there exists a constant $\mf{d}>0$ independent of $n$ and $\eta$ such that every $\Phi \in \Lambda( \varepsilon, \tilde{p}^* )$ is $\mcI_{\eta}$-reachable with $\mathcal{I}_{\eta} = \mf{d}\eta^{-1}(\eta-1)^2/(\eta+1)^3$.
	\end{propenum}
\end{proposition}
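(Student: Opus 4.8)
The plan is to verify \ref{prop:kernel_allows_identifiability--reachability} and \ref{prop:kernel_allows_identifiability--identifiability} separately, after first exhibiting a suitable pair $\tilde p^* = (\tilde p_0^*, \tilde p_1^*)$. For the construction of $\tilde p^*$, I would invoke the packing lemma of \cite{jin2020reward}: inside each of $\mcS_0$ and $\mcS_1$ I want a family of distributions that are mutually $\Theta(1/S)$-separated in $\ell_\infty$ while staying within $2\kappa/S$ of the uniform distribution $2/S$ on a set of size $S/2$. The separation guarantees that the ``wrong'' latent state $\tilde s$ genuinely induces a different transition law, which is what feeds identifiability, while the closeness to uniform keeps all the ratios in \Cref{def:reachability:p} bounded. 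The exact constant $\epsilon$ is extracted at the end from the requirement that $\kappa$ (which is an increasing function of $\epsilon_{\max}$ that vanishes as $\epsilon_{\max}\to 0$ and equals $(\eta-1)/(\eta+1)$ at $\epsilon_{\max} = 0$) be small enough; one checks that $\epsilon < (1/2)(\eta-1)/(\eta+1)$ suffices to make $\kappa < (\eta-1)/(\eta+1)$, which is the slack we need.

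For \ref{prop:kernel_allows_identifiability--reachability}, I would check the three conditions of \Cref{def:reachability} directly. Condition \Cref{def:reachability:q} is immediate: emissions are uniform by \Cref{def:hard-BMDP--uniform_emissions}, so the ratio is $1 \le \eta$. Condition \Cref{def:reachability:f} follows from \Cref{def:hard_to_learn_class:size_distribution_of_latent_states}: every latent state in $\mcS_i$ has $\lfloor |f^{-1}(\mcS_i)|/|\mcS_i|\rfloor + \sigma$ contexts with $\sigma\in\{-1,0,+1\}$ and $|f^{-1}(\mcS_i)|/|\mcS_i| = \Theta(n/S) \to \infty$, so the worst-case size ratio tends to $1$ and is $\le\eta$ for large $n$ (comparing across $\mcS_0$ and $\mcS_1$ is fine since both have $\lfloor n/2\rfloor$-ish many contexts split among $S/2$ states). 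The substantive check is \Cref{def:reachability:p}: using \eqref{eqn:latent_state_transition_kernel_macro_def}, every transition probability $p(s'\mid s,a)$ is of the form $\tfrac12 (1\pm 2\varepsilon_{\iota(s)})\tilde p_{\iota(s')}^*(s'\mid s)$ or $\tfrac12 \tilde p_{\iota(s')}^*(s'\mid s)$, and by \eqref{eqn:tilde_p_distance_from_uniform} we have $\tilde p_i^*(s'\mid s)\in[\tfrac{2}{S}(1-\kappa), \tfrac{2}{S}(1+\kappa)]$. Hence any ratio $p(s_2\mid s_1,a)/p(s_3\mid s_1,a)$ is bounded by $\tfrac{(1+2\epsilon_{\max})(1+\kappa)}{(1-2\epsilon_{\max})(1-\kappa)}$, and one verifies by plugging in the definition of $\kappa$ that this equals exactly $\eta$ (this is precisely how $\kappa$ was chosen — it solves $\tfrac{(1+2\epsilon_{\max})(1+\kappa)}{(1-2\epsilon_{\max})(1-\kappa)} = \eta$). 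The reverse ratios $p(s_1\mid s_2,a)/p(s_1\mid s_3,a)$ are handled identically. So \Cref{def:reachability:p} holds with constant exactly $\eta$.

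For \ref{prop:kernel_allows_identifiability--identifiability}, I would lower-bound $I_\eta(x;\Phi)$ from \eqref{eqn:I_eta_def}, i.e.\ $\min_{\tilde s\neq f(x)}\inf_{c\in[1/\eta,\eta]} I_{\tilde s}(x;c,\Phi,\pi_U)$, uniformly over $x$ and over $\Phi\in\Lambda(\varepsilon,\tilde p^*)$. Both terms in \eqref{eqn:Ijxcdef} are nonnegative, so it suffices to bound the second term, $n\,c\,q(x\mid f(x))\sum_a \omega_{\Psi,\pi_U}(f(x),a)\KL(p(\cdot\mid\tilde s,a), p(\cdot\mid f(x),a))$, from below. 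Here $q(x\mid f(x)) = 1/|f^{-1}(f(x))| = \Theta(S/n)$ by reachability, $c\ge 1/\eta$, and $\omega_{\Psi,\pi_U}$ is the visitation frequency under the uniform policy — one must argue this is $\Omega(1/S)$ summed appropriately, which uses that the chain is well-mixed because all transition probabilities are within a constant factor of $2/S$ (again reachability), so every latent state is visited with frequency $\Theta(1/S)$. The KL divergence $\KL(p(\cdot\mid\tilde s,a), p(\cdot\mid f(x),a))$ must be bounded below by $\Omega(\kappa^2/S)\cdot S = \Omega(\kappa^2)$ — wait, more carefully: the two transition rows differ in each coordinate by $\Theta(\kappa/S)$ over $S/2$ coordinates each of size $\Theta(1/S)$, so by Pinsker-type / reverse-Pinsker estimates $\KL \asymp \sum_{s'} \tfrac{(\Delta_{s'})^2}{p(s'\mid f(x),a)} \asymp (S/2)\cdot\tfrac{(\kappa/S)^2}{1/S} = \Theta(\kappa^2)$. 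Assembling: $I_{\tilde s}(x;c,\Phi,\pi_U) \gtrsim n \cdot \tfrac{1}{\eta}\cdot\tfrac{S}{n}\cdot\tfrac{1}{S}\cdot\kappa^2 \cdot(\text{number of actions }a\text{ contributing}) $; being careful with the action sum (each action contributes $\omega_{\Psi,\pi_U}(f(x),a) = \Theta(1/(SA))$ and there are $A$ of them, giving $\Theta(1/S)$ total weight on $f(x)$), one lands on $I_\eta(x;\Phi) \gtrsim \eta^{-1}\kappa^2$. Finally substitute $\kappa = \tfrac{\eta(1-2\epsilon_{\max}) - (1+2\epsilon_{\max})}{\eta(1-2\epsilon_{\max})+(1+2\epsilon_{\max})}$; since $\epsilon_{\max}<\epsilon$ is bounded, $\kappa \gtrsim (\eta-1)/(\eta+1)$ up to a constant, so $\kappa^2 \gtrsim (\eta-1)^2/(\eta+1)^2$, and hence $I_\eta(x;\Phi) \gtrsim \eta^{-1}(\eta-1)^2/(\eta+1)^2 \gtrsim \eta^{-1}(\eta-1)^2/(\eta+1)^3$, which is the claimed $\mcI_\eta = \mf d \,\eta^{-1}(\eta-1)^2/(\eta+1)^3$ for a suitable absolute constant $\mf d$.

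The main obstacle I anticipate is the identifiability bound: getting a clean lower bound on the visitation frequencies $\omega_{\Psi,\pi_U}(f(x),a)$ requires controlling the stationary behaviour of the modified chain $\Psi$ (not just $\Phi$), and one must ensure the perturbation involved in constructing $\Psi$ from \cite[Appendix D]{Jedra:2022} does not destroy the uniform-mixing property that makes $\omega \asymp 1/(SA)$. Tracking the exact power of $(\eta+1)$ in the denominator — i.e.\ why $(\eta+1)^3$ rather than $(\eta+1)^2$ — likely comes from a crude bound somewhere (e.g.\ bounding $1/|f^{-1}(f(x))|$ or a mixing-time factor by something that costs an extra $(\eta+1)$), and I would not try to optimise it, just absorb it into $\mf d$. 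The reachability part, by contrast, should be essentially a bookkeeping exercise once $\kappa$ is defined to solve the ratio equation exactly.
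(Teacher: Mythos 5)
Your route coincides with the paper's: construct $\tilde{p}^*$ as a $\kappa$-perturbation of the uniform distribution $2/S$ using the packing lemma of \cite{jin2020reward}, choose $\kappa$ so that $(1+2\epsilon_{\max})(1+\kappa)/\bigl((1-2\epsilon_{\max})(1-\kappa)\bigr)=\eta$ (which settles \Cref{def:reachability:p} with constant exactly $\eta$, while uniform emissions and balanced cluster sizes give the other two reachability conditions), and for identifiability drop the first KL term in \eqref{eqn:Ijxcdef} and lower bound the second through the emission probability, the visitation rate, and a Pinsker-type bound $\KL\gtrsim\kappa^2$. Two caveats on this part. The packing lemma only yields the required nearly orthogonal sign vectors when $S$ is large (the paper needs $S/4\geq 12$); for $2\leq S/4<12$, which \eqref{eqn:parameter_assumptions} allows, the probabilistic argument fails and the paper substitutes an explicit construction in $[-1,1]^{S/2}$ — your proposal covers only the large-$S$ regime. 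Also, your coordinatewise KL estimate implicitly treats every action as if the rows differ only through $\tilde{p}^*$; for $a\in\{a^*_{\tilde{s}},a^*_{f(x)}\}$ the rows carry additional $(1\pm2\varepsilon_{\iota(\cdot)})$ factors, and one must check (as the paper does in a four-case computation) that these only add nonnegative Bernoulli-KL contributions rather than cancel the $\tilde{p}^*$ separation.

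The genuine gaps are in how you handle $\epsilon$ and the visitation rate. Your description of $\kappa$ is internally inconsistent: $\kappa$ is \emph{decreasing} in $\epsilon_{\max}$, equals $(\eta-1)/(\eta+1)$ at $\epsilon_{\max}=0$, and vanishes at $\epsilon_{\max}=(1/2)(\eta-1)/(\eta+1)$; the constraint $\epsilon<(1/2)(\eta-1)/(\eta+1)$ is what keeps $\kappa>0$, not what makes it small. What you actually need — and later invoke — is a lower bound $\kappa\gtrsim(\eta-1)/(\eta+1)$, and ``$\epsilon_{\max}<\epsilon$ is bounded'' does not give this if $\epsilon$ sits near the endpoint; $\epsilon$ must be chosen strictly inside the interval (the paper does this via a continuity/concavity argument for the function $Z(\epsilon_{\max})$). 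Second, your claim that every latent state is visited with frequency $\Theta(1/(SA))$ with an $\eta$-independent constant is not justified: the uniform bound available is the one-step \Cref{lem:visitation_rate_bound} (which, incidentally, removes your worry about the stationary behaviour of $\Psi$ — no mixing argument is needed), and it only gives $\min_{s,a}p(\tilde{s}\mid s,a)\geq(1-2\epsilon_{\max})(1-\kappa)/S\approx 2/(S(\eta+1))$, since for a fixed target state all incoming probabilities may sit near the minimum. This extra $1/(\eta+1)$ is precisely the origin of the third power of $(\eta+1)$ in $\mcI_\eta$, and it cannot be ``absorbed into $\mf{d}$'' as you suggest, because $\mf{d}$ must be independent of $\eta$. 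Fortunately the claimed constant already contains $(\eta+1)^{-3}$, so carrying out the visitation bound honestly still lands on the statement.
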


\begin{definition}[Identifiable, hard--to--learn \gls{BMDP} class]
	\label{def:identifiable_hard_to_learn_class}

	For $\eta > 1$ independent of $n$, let $\epsilon$ be as in \Cref{prop:kernel_allows_identifiability}.
	For $0 \leq \varepsilon_0 \wedge \varepsilon_1 \leq \varepsilon_0 \vee \varepsilon_1 < \epsilon$, let $\tilde{p}^*$ be given by \Cref{prop:kernel_allows_identifiability}.
	Then, let
	\begin{equation}
		\label{eqn:hard_to_learn_class_def}
		\Lambda^*( \varepsilon )
		\eqdef
		\Lambda( \varepsilon, \tilde{p}^* )
		.
	\end{equation}
\end{definition}

To prove \Cref{thm:regret_lower_bound}, it suffices to establish \eqref{eqn:regret_lower_bound} for some $\Phi \in \Lambda^*( \varepsilon )$ for appropriate values of $\varepsilon$.
We next provide a regret decomposition for \glspl{BMDP} in $\Lambda^*( \varepsilon )$ that will allow us to do so.

The transition probabilities in \eqref{eqn:latent_state_transition_kernel_macro_def} are chosen such that failing to select action $a_s^*$ when in latent state $s$ leads to an $\varepsilon_{\iota(s)}$-reduced probability of transitioning to $\mcS_1$, where it is possible to obtain a reward.
Exploiting this, \Cref{sec:proof_of_regret_decomposition_lower_bound} proves the following:

\begin{lemma}
	\label{lem:regret_decomposition_lower_bound}

	Write $s_{k,h}=f(x_{k,h})$ and let $\Phi \in \Lambda^*( \varepsilon )$.
	If $\varepsilon_0 \leq \varepsilon_1$, then
	\begin{equation}
		\label{eqn:regret_decomposition_lower_bound}
		\reg_K(\mcL;\Phi)
		\geq
		\sum_{i \in \{0, 1 \}}
		\reg_{K,i}(\mcL;\Phi)
		,
	\end{equation}
	where for $i\in\{0,1\}$,
	\begin{equation}
		\reg_{K,i}(\mcL;\Phi)
		\eqdef
		\varepsilon_{i}
		\sum_{s\in\mcS_i}
		\sum_{k=1}^K
		\sum_{h=1}^{H-1}
		\Bigl(
		\mbbE_{\Phi,\mcL}
		\bigl[
			\mathbbm{1}\{s_{k,h}=s\}
			\bigr]
		-
		\mbbE_{\Phi,\mcL}
		\bigl[
			\mathbbm{1}\{s_{k,h}=s, a_{k,h}= a_s^*\}
			\bigr]
		\Bigr)
		.
	\end{equation}
\end{lemma}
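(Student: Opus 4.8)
The plan is to turn the regret into a difference of expected cumulative rewards and then read the claimed quantity off the transition structure \eqref{eqn:latent_state_transition_kernel_macro_def}. Write $g_{k,h}\eqdef\mathbbm{1}\{s_{k,h}\in\mcS_1\}$, so that by \Cref{def:hard-BMDP--unit-rewards} the reward collected at round $h$ of episode $k$ equals $g_{k,h}$. Fix an episode $k$. Conditionally on $\mcD_k$, episode $k$ is generated by $\pi_k=\mcL_k(\mcD_k,r)$ from $x_{k,1}\sim\mu$, so the tower rule together with \eqref{eqn:valuefuncdef} gives $\mbbE_{\Phi,\mcL}[V_1^{\pi_k}(x_{k,1})]=\mbbE_{\Phi,\mcL}[\sum_{h=1}^{H}g_{k,h}]$; introducing the memoryless policy $\tilde\pi$ that plays $a^*_{f(x)}$ in every context $x$, optimality of $\pi^*$ (cf.\ \eqref{eqn:backward_value_iteration}) gives $\mbbE_{\Phi,\mcL}[V_1^{\pi^*}(x_{k,1})]=\mbbE_{x\sim\mu}[V_1^{\pi^*}(x)]\geq\mbbE_{x\sim\mu}[V_1^{\tilde\pi}(x)]=\mbbE_{\Phi,\tilde\pi}[\sum_{h=1}^{H}g_{k,h}]$ (the last expectation being the per-episode reward of the fixed policy $\tilde\pi$ from $\mu$). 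Hence it suffices to lower bound $\mbbE_{\Phi,\tilde\pi}[\sum_h g_{k,h}]-\mbbE_{\Phi,\mcL}[\sum_h g_{k,h}]$ and sum over $k$.

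The crux is a one-step reward identity. Conditioning on $(x_{k,h},a_{k,h})$ and using the uniform emissions of \Cref{def:hard-BMDP--uniform_emissions}, the kernel \eqref{eqn:latent_state_transition_kernel_macro_def}, and the fact (footnote of \Cref{def:hard-to-learn-BMDP-instance}) that $\tilde p_1(\cdot\mid s)$ is a probability distribution on $\mcS_1$ — so that marginalizing over $\mcS_1$ removes all dependence on $\tilde p$ — I would show that for $h\geq1$,
\begin{equation}
	\mbbP_{\Phi,\mcL}[s_{k,h+1}\in\mcS_1\mid x_{k,h},a_{k,h}]
	=
	\tfrac12+\varepsilon_{\iota(s_{k,h})}\mathbbm{1}\{a_{k,h}=a^*_{s_{k,h}}\}.
\end{equation}
Taking expectations, summing over $h\in\{1,\dots,H-1\}$, and noting $\mbbE_{\Phi,\mcL}[g_{k,1}]=|f^{-1}(\mcS_1)|/n$ is policy-independent, this yields for any algorithm
\begin{equation}
	\mbbE_{\Phi,\mcL}\Bigl[\sum_{h=1}^{H}g_{k,h}\Bigr]
	=
	\frac{|f^{-1}(\mcS_1)|}{n}+\frac{H-1}{2}+\sum_{i\in\{0,1\}}\varepsilon_i\sum_{s\in\mcS_i}\sum_{h=1}^{H-1}\mbbP_{\Phi,\mcL}[s_{k,h}=s,a_{k,h}=a^*_s],
\end{equation}
and the analogous identity for $\tilde\pi$, in which $\mathbbm{1}\{a_{k,h}=a^*_{s_{k,h}}\}\equiv1$ turns the last probability into $\mbbP_{\Phi,\tilde\pi}[s_{k,h}=s]$. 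Subtracting, the episode-$k$ regret is at least $\sum_i\varepsilon_i\sum_{s\in\mcS_i}\sum_{h=1}^{H-1}\bigl(\mbbP_{\Phi,\tilde\pi}[s_{k,h}=s]-\mbbP_{\Phi,\mcL}[s_{k,h}=s,a_{k,h}=a^*_s]\bigr)$.

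It then remains to replace $\mbbP_{\Phi,\tilde\pi}[s_{k,h}=s]$ by $\mbbP_{\Phi,\mcL}[s_{k,h}=s]$ in the first term. Since $\sum_i\varepsilon_i\sum_{s\in\mcS_i}\mbbP[s_{k,h}=s]=\varepsilon_0+(\varepsilon_1-\varepsilon_0)\mbbP[s_{k,h}\in\mcS_1]$ under any policy, and $\varepsilon_1\geq\varepsilon_0$, it is enough to show $\mbbP_{\Phi,\tilde\pi}[s_{k,h}\in\mcS_1]\geq\mbbP_{\Phi,\mcL}[s_{k,h}\in\mcS_1]$ for every $h$. I would prove this by induction on $h$: equality holds at $h=1$, and the one-step identity gives $\mbbP_{\Phi,\tilde\pi}[s_{k,h+1}\in\mcS_1]=\tfrac12+\varepsilon_0+(\varepsilon_1-\varepsilon_0)\mbbP_{\Phi,\tilde\pi}[s_{k,h}\in\mcS_1]$, while the same identity together with $\mathbbm{1}\{\cdot\}\leq1$ gives $\mbbP_{\Phi,\mcL}[s_{k,h+1}\in\mcS_1]\leq\tfrac12+\varepsilon_0+(\varepsilon_1-\varepsilon_0)\mbbP_{\Phi,\mcL}[s_{k,h}\in\mcS_1]$, so $\varepsilon_1\geq\varepsilon_0$ propagates the inequality. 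Plugging this in and summing over $k$ gives \eqref{eqn:regret_decomposition_lower_bound}.

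All of this is elementary. The hypothesis $\varepsilon_0\leq\varepsilon_1$ enters only twice — in making the myopic policy $\tilde\pi$ a legitimate comparator for $\pi^*$, and in the monotonicity step above (when $\varepsilon_0>\varepsilon_1$, "chasing" $\mcS_1$ need no longer be optimal). The part I expect to require the most care is the conditioning bookkeeping behind the one-step identity: verifying that $\mbbP_{\Phi,\mcL}[s_{k,h+1}\in\mcS_1\mid x_{k,h},a_{k,h}]$ depends on $(x_{k,h},a_{k,h})$ only through $\iota(s_{k,h})$ and $\mathbbm{1}\{a_{k,h}=a^*_{s_{k,h}}\}$, which is exactly where the uniform-emission and per-$s$ probability-distribution structure of \Cref{def:hard-to-learn-BMDP-instance} is used.
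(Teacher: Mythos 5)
Your proposal is correct, but it takes a genuinely different route from the paper. The paper first invokes a standard gap-based regret decomposition (citing an external result to write the regret as a sum over $(x,a,h)$ of visitation probabilities times suboptimality gaps $\gap_h(x,a)=V^{\pi^*}_h(x)-Q^{\pi^*}_h(x,a)$), and then lower-bounds the gaps by a backward induction on the optimal value function: it shows $V^{\pi^*}_h$ depends on $x$ only through $\iota(f(x))$ and that the value advantage of $\mcS_1$ over $\mcS_0$ satisfies $v_h(1)-v_h(0)\geq 1$, which is exactly where $\varepsilon_0\leq\varepsilon_1$ enters and yields $\gap_h(x,a)\geq\varepsilon_{\iota(f(x))}$ for suboptimal $a$ and $h\leq H-1$. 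You instead never analyze $V^{\pi^*}$ or $Q^{\pi^*}$: you account for rewards directly, compare against the explicit policy $\tilde\pi$ that always plays $a^*_{f(x)}$ (a valid comparator since $V^{\pi^*}\geq V^{\tilde\pi}$ unconditionally), derive the one-step identity $\mbbP[s_{k,h+1}\in\mcS_1\mid x_{k,h},a_{k,h}]=\tfrac12+\varepsilon_{\iota(s_{k,h})}\mathbbm{1}\{a_{k,h}=a^*_{s_{k,h}}\}$ from \eqref{eqn:latent_state_transition_kernel_macro_def}, and close the argument with a forward induction showing $\mcS_1$-occupancy under $\tilde\pi$ dominates that under any algorithm, which is where $\varepsilon_0\leq\varepsilon_1$ is really needed. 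Your approach is self-contained (no appeal to the external regret-decomposition proposition) and replaces the paper's value-function induction by an occupancy-monotonicity induction of comparable length; the paper's approach, in exchange, produces the sharper per-$(x,a,h)$ statement that every suboptimal action costs at least $\varepsilon_{\iota(f(x))}$ in value, which is conceptually closer to how such bounds are usually assembled. One small inaccuracy in your commentary (not in the proof): the hypothesis $\varepsilon_0\leq\varepsilon_1$ is not needed to make $\tilde\pi$ a legitimate comparator—any policy is dominated by $\pi^*$—it is used only in the monotonicity step.
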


Because $\mcA$ and $\mcF$ are countable and finite, so too is $\Lambda^*(\varepsilon)$.
The arithmetic mean regret over $\Lambda^*(\varepsilon)$ therefore lower bounds the maximum regret.
With \Cref{lem:regret_decomposition_lower_bound}, this implies that for $\varepsilon_0 \leq \varepsilon_1$,
\begin{equation}
	\label{eqn:bound_supremum_by_average}
	\sup_{\Phi \in \Lambda^*(\varepsilon)}
	\reg_K(\mcL;\Phi)
	\geq
	\sum_{i\in\{0,1\}}
	R_i
	\quad
	\mathrm{with}
	\quad
	R_i
	\eqdef
	\frac{1}{|\Lambda^*(\varepsilon)|}
	\sum_{\Phi \in \Lambda^*(\varepsilon)}
	\reg_{K,i}(\mcL;\Phi)
	.
\end{equation}
We conclude \Cref{thm:regret_lower_bound}'s proof by showing that there exist $0<\varepsilon_0 \leq \varepsilon_1<\epsilon$ such that $R_0=\Omega(\sqrt{AT_K})$ and $R_1=\Omega(n)$ whenever $T_K=\Omega(n\vee A)$.
This is done next in Parts 2 and 3, respectively.

\subsubsection*{Part 2: Bounding the regret associated with learning the transition kernel}

To bound $R_0$ from below, we first upper bound the expected number of times the optimal action $a_s^*$ is played in $\mcS_0$.
\Cref{lem:change_of_measure_argument} relates the behavior of any algorithm on \glspl{BMDP} $\Phi \in \Lambda^*(\varepsilon)$ to its behavior on \glspl{BMDP} $\Phi^{(0)}$ for which all actions have the same probability of transitioning to $\mcS_1$.
It is proven in \Cref{sec:proof_of_change_of_measure_argument}, and relies on a change--of--measure argument adapted from \cite{NEURIPS2019_10a5ab2d}.

\begin{lemma}
	\label{lem:change_of_measure_argument}

	Let $\Phi \in \Lambda^*(\varepsilon)$ have optimal actions $\{a_s^*\}_{s}\in\mcA$.
	Let $\Phi^{(0)}$ be the \gls{BMDP} obtained from $\Phi$ by setting $\varepsilon_0$ to zero.
	If $\varepsilon_0\leq 1/8$, then
	\begin{align}
		 &
		\Bigl|
		\sum_{s\in\mcS_0}
		\sum_{k=1}^K
		\sum_{h=1}^{H-1}
		\Bigl(
		\mbbE_{\Phi,\mcL}
		\bigl[
			\mathbbm{1}\{s_{k,h}=s,a_{k,h}=a_s^*\}
			\bigr]
		-
		\mbbE_{\Phi^{(0)},\mcL}
		\bigl[
			\mathbbm{1}\{s_{k,h}=s,a_{k,h}=a_s^*\}
			\bigr]
		\Bigr)
		\Bigr|
		\nonumber
		\\
		 &
		\leq
		2 \varepsilon_0
		(T_K-K)
		\Bigl(
		\sum_{s\in\mcS_0}
		\sum_{k=1}^K
		\sum_{h=1}^H
		\mbbE_{\Phi^{(0)},\mcL}
		\bigl[
			\mathbbm{1}\{s_{k,h}=s,a_{k,h}=a_s^*\}
			\bigr]
		\Bigr)^{ \frac{1}{2} }
		.
		\label{eqn:change_of_measure_argument}
	\end{align}
\end{lemma}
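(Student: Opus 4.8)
\textit{Proof proposal.}
The plan is to run a change--of--measure argument: bound the left-hand side of \eqref{eqn:change_of_measure_argument} by a total variation distance, control that distance via Pinsker's inequality, and evaluate the resulting Kullback--Leibler divergence with the divergence decomposition lemma, exploiting that $\Phi$ and $\Phi^{(0)}$ differ only in a very restricted way. Concretely, write
\[
N_0
\eqdef
\sum_{s\in\mcS_0}\sum_{k=1}^K\sum_{h=1}^{H-1}\mathbbm{1}\{s_{k,h}=s,a_{k,h}=a_s^*\}
\quad\textnormal{and}\quad
M_0
\eqdef
\sum_{s\in\mcS_0}\sum_{k=1}^K\sum_{h=1}^{H}\mathbbm{1}\{s_{k,h}=s,a_{k,h}=a_s^*\},
\]
so that, by linearity of expectation, the left-hand side of \eqref{eqn:change_of_measure_argument} equals $|\mbbE_{\Phi,\mcL}[N_0]-\mbbE_{\Phi^{(0)},\mcL}[N_0]|$ and its right-hand side equals $2\varepsilon_0(T_K-K)(\mbbE_{\Phi^{(0)},\mcL}[M_0])^{1/2}$. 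For each fixed $(k,h)$ at most one summand $\mathbbm{1}\{s_{k,h}=s,a_{k,h}=a_s^*\}$ with $s\in\mcS_0$ is nonzero, so $0\le N_0\le K(H-1)=T_K-K$. By \Cref{def:hard-to-learn-BMDP-instance} and the definition of $\Phi^{(0)}$, the two \glspl{BMDP} share $n,S,A,H$, the decoding function $f$, the emission kernel $q$, the initial distribution $\mu$, and the rewards $r$; inspecting \eqref{eqn:latent_state_transition_kernel_macro_def} with $\iota(s)=0$ for $s\in\mcS_0$ (and unchanged $\varepsilon_1$), their latent transition kernels differ \emph{only} at the state--action pairs $(s,a_s^*)$ with $s\in\mcS_0$, where $\Phi$ carries the factors $1-2\varepsilon_0$ (mass towards $\mcS_0$) and $1+2\varepsilon_0$ (mass towards $\mcS_1$) whereas $\Phi^{(0)}$ carries the factor $1$. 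Since $N_0$ takes values in $[0,T_K-K]$, the variational characterization of total variation together with Pinsker's inequality gives
\[
\bigl|\mbbE_{\Phi,\mcL}[N_0]-\mbbE_{\Phi^{(0)},\mcL}[N_0]\bigr]\bigr|
\le
(T_K-K)\,\norm{\mbbP_{\Phi,\mcL}-\mbbP_{\Phi^{(0)},\mcL}}_{\textnormal{TV}}
\le
(T_K-K)\sqrt{\tfrac12\KL(\mbbP_{\Phi^{(0)},\mcL}\bigdbl\mbbP_{\Phi,\mcL})}.
\]

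Next I would compute the divergence. Because $\mu$ is common and the selected policy $\pi_k=\mcL_k(\mcD_k,r)$ depends only on the observable history $\mcD_k$ and on $r$ (which is identical under both \glspl{BMDP}), it drops out of the trajectory log-likelihood ratio, which telescopes over transitions; the divergence decomposition lemma (adapted from \cite{NEURIPS2019_10a5ab2d}, cf.\ also \cite{bubeck2012regret,domingues2021episodic}) then yields
\[
\KL(\mbbP_{\Phi^{(0)},\mcL}\bigdbl\mbbP_{\Phi,\mcL})
=
\sum_{k=1}^K\sum_{h=1}^H
\mbbE_{\Phi^{(0)},\mcL}\bigl[\KL\bigl(P_{\Phi^{(0)}}(\cdot\mid x_{k,h},a_{k,h})\bigdbl P_{\Phi}(\cdot\mid x_{k,h},a_{k,h})\bigr)\bigr].
\]
Since $q$ is shared, \eqref{def:BMDP-transition-kernel} shows the emission factors cancel in the ratio, so $\KL(P_{\Phi^{(0)}}(\cdot\mid x,a)\bigdbl P_{\Phi}(\cdot\mid x,a))=\KL(p^{(0)}(\cdot\mid f(x),a)\bigdbl p(\cdot\mid f(x),a))$, which vanishes unless $(f(x),a)=(s,a_s^*)$ for some $s\in\mcS_0$. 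On such a pair, using that $\tilde p_0(\cdot\mid s)$ and $\tilde p_1(\cdot\mid s)$ are probability distributions on $\mcS_0$ and $\mcS_1$ (so the $\tilde p$-factors sum to $1$), one gets $\KL(p^{(0)}(\cdot\mid s,a_s^*)\bigdbl p(\cdot\mid s,a_s^*))=\tfrac12\ln\tfrac{1}{1-2\varepsilon_0}+\tfrac12\ln\tfrac{1}{1+2\varepsilon_0}=\tfrac12\ln\tfrac{1}{1-4\varepsilon_0^2}$, independently of $s$. Summing over $(k,h)$ therefore gives exactly $\KL(\mbbP_{\Phi^{(0)},\mcL}\bigdbl\mbbP_{\Phi,\mcL})=\tfrac12\ln\tfrac{1}{1-4\varepsilon_0^2}\,\mbbE_{\Phi^{(0)},\mcL}[M_0]$.

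It then remains to substitute this into the previous display and invoke the elementary inequality $\ln\tfrac{1}{1-4\varepsilon_0^2}\le\tfrac{4\varepsilon_0^2}{1-4\varepsilon_0^2}\le 16\varepsilon_0^2$, valid whenever $\varepsilon_0\le 1/8$ (so that $4\varepsilon_0^2\le 1/16$), which yields $\tfrac14\ln\tfrac{1}{1-4\varepsilon_0^2}\le 4\varepsilon_0^2$ and hence $|\mbbE_{\Phi,\mcL}[N_0]-\mbbE_{\Phi^{(0)},\mcL}[N_0]|\le 2\varepsilon_0(T_K-K)(\mbbE_{\Phi^{(0)},\mcL}[M_0])^{1/2}$, which is \eqref{eqn:change_of_measure_argument}. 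The step I expect to require the most care is the divergence decomposition: one must set up the trajectory likelihood precisely enough to argue that the learner's (possibly randomized) policy choices, being measurable with respect to the observable history and independent of the latent dynamics, contribute nothing to the relative entropy, and that the context-level transition divergence collapses to the latent-level divergence because $\Phi$ and $\Phi^{(0)}$ share the emission kernel (mutual absolute continuity holds since $\varepsilon_0<1/2$). The bounds on $\|N_0\|_\infty$, the per-step divergence computation, and the logarithm estimate are all routine.
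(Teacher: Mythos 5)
Your proposal is correct and follows essentially the same route as the paper: a change of measure in which the trajectory-level KL divergence is decomposed over transitions, the per-step divergence is nonzero only at pairs $(s,a_s^*)$ with $s\in\mcS_0$ where it equals $\tfrac12\ln\tfrac{1}{1-4\varepsilon_0^2}=O(\varepsilon_0^2)$ for $\varepsilon_0\leq 1/8$, and Pinsker's inequality converts this into the stated bound. The only cosmetic difference is that you control $|\mbbE_{\Phi,\mcL}[N_0]-\mbbE_{\Phi^{(0)},\mcL}[N_0]|$ by $(T_K-K)$ times the total-variation distance between the trajectory laws, whereas the paper applies \cite[Lemma H.1]{NEURIPS2019_10a5ab2d} to the normalized count $Z=N_0/(T_K-K)$ and then Pinsker for the binary $\kl$; the two variants give identical constants.
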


Remark that the $\Phi^{(0)}$ in \Cref{lem:change_of_measure_argument} do not depend on the specific set of optimal actions $\{a^*_s\}_{s\in\mcS_0}$ at $\mcS_0$ of $\Phi$.
This is useful for bounding $R_0$, because it means we can move the summation over $a^*\in\mcA$ implied by the arithmetic mean over $\Phi \in \Lambda^*(\varepsilon)$ inside the expectation $\mbbE_{\Phi^{(0)},\mcL}$.
This idea is used in \Cref{sec:proof_of_sum_over_optimal_actions} to establish the following:

\begin{lemma}
	\label{lem:sum_over_optimal_actions}

	If $\varepsilon_0\leq 1/8$, then
	\begin{align}
		\frac{1}{|\Lambda^*(\varepsilon)|}
		\sum_{\Phi \in \Lambda^*(\varepsilon)}
		\sum_{s\in\mcS_0}
		\sum_{k=1}^K
		\sum_{h=1}^{H-1}
		\mbbE_{\Phi,\mcL}[\mathbbm{1}\{s_{k,h}=s,a_{k,h}=a_s^*\}]
		\leq
		\frac{T_K-K}{2A}
		+
		2\varepsilon_0(T_K-K)\sqrt{\frac{T_K}{2A}}
		.
		\label{eqn:sum_over_optimal_actions}
	\end{align}
\end{lemma}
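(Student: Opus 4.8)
The plan is to average the bound from \Cref{lem:change_of_measure_argument} over $\Phi\in\Lambda^*(\varepsilon)$, exploiting the remark that the auxiliary \gls{BMDP} $\Phi^{(0)}$ — and hence the law $\mbbP_{\Phi^{(0)},\mcL}$ — does not depend on the optimal actions assigned to the states in $\mcS_0$. Fix $\Phi = \Phi(\varepsilon,\tilde{p}^*,f,a^*)\in\Lambda^*(\varepsilon)$ and abbreviate, for $m\in\{H-1,H\}$,
\begin{equation*}
	\Xi_m(\Phi)
	\eqdef
	\sum_{s\in\mcS_0}\sum_{k=1}^K\sum_{h=1}^m
	\mbbE_{\Phi^{(0)},\mcL}\bigl[\mathbbm{1}\{s_{k,h}=s,a_{k,h}=a_s^*\}\bigr]
	.
\end{equation*}
Since $\varepsilon_0\leq 1/8$, \Cref{lem:change_of_measure_argument} bounds $\sum_{s\in\mcS_0}\sum_{k=1}^K\sum_{h=1}^{H-1}\mbbE_{\Phi,\mcL}[\mathbbm{1}\{s_{k,h}=s,a_{k,h}=a_s^*\}]$ by $\Xi_{H-1}(\Phi) + 2\varepsilon_0(T_K-K)\sqrt{\Xi_H(\Phi)}$. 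Averaging over $\Phi\in\Lambda^*(\varepsilon)$, applying Jensen's inequality (concavity of the square root) to the second term, and recalling that $K(H-1)=T_K-K$ and $KH=T_K$, it suffices to prove $\tfrac{1}{|\Lambda^*(\varepsilon)|}\sum_{\Phi}\Xi_m(\Phi)\leq Km/(2A)$ for $m\in\{H-1,H\}$; substituting $m=H-1$ into the first term and $m=H$ into the Jensen-reduced second term then gives \eqref{eqn:sum_over_optimal_actions}.

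To bound $\tfrac{1}{|\Lambda^*(\varepsilon)|}\sum_{\Phi}\Xi_m(\Phi)$, I would reparametrize $\Lambda^*(\varepsilon)$: an instance is determined by a decoding function $f\in\mcF$ together with $a^*\in\mcA$, and $a^*$ in turn is the pair $(b,b')$ of its restrictions $b=a^*|_{\mcS_0}\in[A]^{\mcS_0}$ — ranging over all of $[A]^{\mcS_0}$ — and $b'=a^*|_{\mcS_1}$, ranging over the injections $\mcS_1\to[A]$ (a nonempty set since $A\geq S/2$ by \eqref{eqn:parameter_assumptions}). Because $\Phi^{(0)}$ depends on $a^*$ only through $b'$, the law $\mbbP_{\Phi^{(0)},\mcL}$ is the same for all $b$, so I may pull the sum over $b$ inside $\mbbE_{\Phi^{(0)},\mcL}$ and use the elementary identity, valid in every realization,
\begin{equation*}
	\sum_{b\in[A]^{\mcS_0}}\sum_{s\in\mcS_0}
	\mathbbm{1}\{s_{k,h}=s,\,a_{k,h}=b_s\}
	=
	A^{|\mcS_0|-1}\,\mathbbm{1}\{s_{k,h}\in\mcS_0\}
	,
\end{equation*}
since for a given $s$ exactly an $A^{-1}$ fraction of the $A^{|\mcS_0|}$ vectors $b$ satisfy $b_s=a_{k,h}$. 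Dividing by $|\Lambda^*(\varepsilon)| = |\mcF|\cdot A^{|\mcS_0|}\cdot\#\{\text{injections }\mcS_1\to[A]\}$, the surviving combinatorial factor is $A^{|\mcS_0|-1}/A^{|\mcS_0|}=1/A$, so $\tfrac{1}{|\Lambda^*(\varepsilon)|}\sum_{\Phi}\Xi_m(\Phi)$ equals $\tfrac1A$ times the uniform average over $(f,b')$ of $\sum_{k=1}^K\sum_{h=1}^m\mbbP_{\Phi^{(0)},\mcL}[s_{k,h}\in\mcS_0]$.

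It then remains to show the occupancy bound $\mbbP_{\Phi^{(0)},\mcL}[s_{k,h}\in\mcS_0]\leq 1/2$ for every $h\in[H]$, which I would prove by induction on $h$. The base case $h=1$ holds because $\mu=\mathrm{Unif}([n])$ and $|f^{-1}(\mcS_0)|=\lfloor n/2\rfloor\leq n/2$ by \Cref{def:hard_to_learn_class:context_split}. For the induction step, observe that in $\Phi^{(0)}$ (where $\varepsilon_0=0$) the probability of transitioning into $\mcS_0$ from any state under any action is at most $1/2$: by \eqref{eqn:latent_state_transition_kernel_macro_def} it equals $1/2$ from any $s\in\mcS_0$, equals $1/2$ from $s\in\mcS_1$ under $a\neq a_s^*$, and equals $\tfrac12(1-2\varepsilon_1)\leq 1/2$ from $s\in\mcS_1$ under $a=a_s^*$ (using that $\tilde{p}^*_0(\,\cdot\mid s)$ and $\tilde{p}^*_1(\,\cdot\mid s)$ are probability distributions on $\mcS_0$ and $\mcS_1$). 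Conditioning on $(s_{k,h-1},a_{k,h-1})$ and summing therefore yields $\mbbP_{\Phi^{(0)},\mcL}[s_{k,h}\in\mcS_0]\leq 1/2$, completing the induction. Plugging this in bounds $\tfrac{1}{|\Lambda^*(\varepsilon)|}\sum_{\Phi}\Xi_m(\Phi)$ by $\tfrac1A\cdot\tfrac{m}{2}\cdot K = Km/(2A)$, as required.

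The computation is largely bookkeeping; the one step needing care is the interchange of the sum over $b=a^*|_{\mcS_0}$ with $\mbbE_{\Phi^{(0)},\mcL}$ — legitimate precisely because $\Phi^{(0)}$ forgets the $\mcS_0$-optimal actions — together with the accompanying tracking of the factor $1/A$. The exact constants in \eqref{eqn:sum_over_optimal_actions} stem from the $\leq 1/2$ occupancy bound, which uses the balanced context split of \Cref{def:hard_to_learn_class} and $\varepsilon_1\geq 0$; beyond these the only hypothesis invoked is $\varepsilon_0\leq 1/8$, needed to apply \Cref{lem:change_of_measure_argument}.
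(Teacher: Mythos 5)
Your proposal is correct and follows essentially the same route as the paper's proof: apply \Cref{lem:change_of_measure_argument}, exploit that $\Phi^{(0)}$ is independent of the $\mcS_0$-optimal actions to average over $a^*|_{\mcS_0}$ and extract the $1/A$ factor, bound the occupancy of $\mcS_0$ by $1/2$, and handle the square-root term by concavity. The only (cosmetic) differences are that you prove the $1/2$ occupancy bound directly by one-step conditioning rather than invoking \Cref{lem:visitation_rate_bound}, and you apply Jensen to the full average where the paper uses Cauchy--Schwarz over the $\mcS_0$-action average for fixed $f$.
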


In \Cref{sec:proof_of_transition_regret_lower_bound}, we combine \Cref{lem:regret_decomposition_lower_bound,lem:sum_over_optimal_actions} with a lower bound on the expected number of visits to $\mcS_0$ to bound $R_0$.
We then show that this implies $R_0=\Omega(\sqrt{AT_K})$ for suitable $\varepsilon_0$ and $\varepsilon_1$:
\begin{proposition}
	\label{prop:transition_regret_lower_bound}

	Let $\varepsilon_0\leq \varepsilon_1 \leq 1/8$.
	Then
	\begin{equation}
		R_0
		\geq
		(T_K-K)\varepsilon_0
		\biggl(
		\frac{1}{2}(1-2\varepsilon_1)
		-
		\frac{1}{2A}
		-
		2\varepsilon_0\sqrt{ \frac{T_K}{2A}}
		\biggr)
		.
		\label{eqn:transition_regret_lower_bound}
	\end{equation}
	In particular, this bound holds for any $0<\varepsilon_1\leq \epsilon\wedge 1/8$, with $\epsilon$ as in \Cref{prop:kernel_allows_identifiability}, when $\varepsilon_0 = \varepsilon_1 \wedge \sqrt{A/T_K}(1-2\varepsilon_1-1/A)/\sqrt{32}$.
	Under these choices, if $T_K=\Omega(A)$ and $\varepsilon_1 \not\rightarrow 0$, then $R_0=\Omega(\sqrt{AT_K})$.
\end{proposition}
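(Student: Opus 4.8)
The plan is to prove \Cref{prop:transition_regret_lower_bound} by combining the regret decomposition from \Cref{lem:regret_decomposition_lower_bound} with the visitation bound from \Cref{lem:sum_over_optimal_actions}, after first establishing a uniform lower bound on how often $\mcS_0$ is visited. The key observation is that the transition structure \eqref{eqn:latent_state_transition_kernel_macro_def} forces roughly half of all transitions to land in $\mcS_0$: regardless of the action chosen in a state $s$, the probability of moving to $\mcS_0$ is $\tfrac12$ when $a \neq a_s^*$ and $\tfrac12(1-2\varepsilon_{\iota(s)}) \geq \tfrac12(1-2\varepsilon_1)$ when $a = a_s^*$ (using $\varepsilon_0 \leq \varepsilon_1$). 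Hence for every $\Phi \in \Lambda^*(\varepsilon)$, every $k$, and every $h \in \{1,\dots,H-1\}$, we have $\mbbP_{\Phi,\mcL}[s_{k,h+1} \in \mcS_0] \geq \tfrac12(1-2\varepsilon_1)$, and summing over $h$ and $k$ gives $\sum_{s\in\mcS_0}\sum_k\sum_{h=1}^{H-1}\mbbE_{\Phi,\mcL}[\mathbbm{1}\{s_{k,h}=s\}] \geq \tfrac12(1-2\varepsilon_1)(T_K - K)$, since $T_K - K = K(H-1)$ counts exactly the relevant $(k,h)$ pairs.

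Next I would plug these two ingredients into the definition of $R_0$ from \eqref{eqn:bound_supremum_by_average}. By \Cref{lem:regret_decomposition_lower_bound}, $R_0 \geq \varepsilon_0 \cdot \tfrac{1}{|\Lambda^*(\varepsilon)|}\sum_{\Phi}\bigl(\sum_{s\in\mcS_0}\sum_k\sum_{h=1}^{H-1}(\mbbE_{\Phi,\mcL}[\mathbbm{1}\{s_{k,h}=s\}] - \mbbE_{\Phi,\mcL}[\mathbbm{1}\{s_{k,h}=s,a_{k,h}=a_s^*\}])\bigr)$. The visitation lower bound handles the first term uniformly over $\Phi$, contributing $\varepsilon_0 \cdot \tfrac12(1-2\varepsilon_1)(T_K-K)$. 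The second term, averaged over $\Lambda^*(\varepsilon)$, is controlled by \Cref{lem:sum_over_optimal_actions}: it is at most $\varepsilon_0\bigl(\tfrac{T_K-K}{2A} + 2\varepsilon_0(T_K-K)\sqrt{T_K/(2A)}\bigr)$. Subtracting, and factoring out $(T_K-K)\varepsilon_0$, yields exactly \eqref{eqn:transition_regret_lower_bound}.

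For the second claim, I substitute $\varepsilon_0 = \varepsilon_1 \wedge \sqrt{A/T_K}(1-2\varepsilon_1-1/A)/\sqrt{32}$ into \eqref{eqn:transition_regret_lower_bound}. One should check this is a legitimate choice, i.e. $\varepsilon_0 \leq \varepsilon_1 \leq \epsilon \wedge 1/8$, which holds by construction of the minimum. Writing $\beta \eqdef \tfrac12(1-2\varepsilon_1) - \tfrac{1}{2A} = \tfrac12(1-2\varepsilon_1-1/A)$, the bracketed factor in \eqref{eqn:transition_regret_lower_bound} becomes $\beta - 2\varepsilon_0\sqrt{T_K/(2A)}$. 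When $\varepsilon_0 = \sqrt{A/T_K}\beta/\sqrt{8}$ (the case where the minimum is not attained at $\varepsilon_1$), we get $2\varepsilon_0\sqrt{T_K/(2A)} = 2\sqrt{A/T_K}\sqrt{T_K/(2A)}\beta/\sqrt{8} = \beta/2$, so the bracket equals $\beta/2$ and $R_0 \geq (T_K-K)\varepsilon_0\beta/2 = (T_K-K)\sqrt{A/T_K}\beta^2/(2\sqrt{8}) = \Omega(\sqrt{AT_K})$, using $T_K - K \geq T_K/2$ (as $H \geq 2$) and that $\beta$ is bounded away from $0$ when $\varepsilon_1 \not\to 0$ and $T_K = \Omega(A)$ (so $1/A = o(1)$, hence $\beta \to \tfrac12(1-2\varepsilon_1) > 0$ along a subsequence, or more carefully $\liminf \beta > 0$). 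In the complementary case $\varepsilon_0 = \varepsilon_1$, one has $2\varepsilon_0\sqrt{T_K/(2A)} \leq \beta/2$ by the definition of the minimum, so again the bracket is $\geq \beta/2$ and $R_0 \geq (T_K-K)\varepsilon_1\beta/2 = \Omega(T_K) = \Omega(\sqrt{AT_K})$.

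The main obstacle is being careful with the regime conditions. The hypothesis $\varepsilon_1 \not\to 0$ is what keeps $\beta$ bounded away from $0$, but one must also ensure $\beta > 0$ in the first place — this needs $1 - 2\varepsilon_1 - 1/A > 0$, i.e. $\varepsilon_1 < \tfrac12(1 - 1/A)$, which holds for large $A$ since $\varepsilon_1 \leq \epsilon < \tfrac12(\eta-1)/(\eta+1) < \tfrac12$. A minor subtlety is that $\varepsilon_1$ may itself depend on $n$; the phrase ``$\varepsilon_1 \not\to 0$'' should be read as $\liminf_n \varepsilon_1 > 0$, and all the $\Omega(\cdot)$ statements are then in $n$ (equivalently in $T_K$, via $T_K = n^{O(1)}$ later, though here only $T_K = \Omega(A)$ is invoked). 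I would also double-check the edge bookkeeping: the sums in \Cref{lem:regret_decomposition_lower_bound} and \Cref{lem:sum_over_optimal_actions} both run $h$ from $1$ to $H-1$, matching the $(T_K - K) = K(H-1)$ count, so there is no off-by-one mismatch between the positive and negative contributions.
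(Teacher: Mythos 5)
Your proposal is correct and follows essentially the same route as the paper: a uniform lower bound on visits to $\mcS_0$ (the paper packages your one-step transition argument as \Cref{lem:visitation_rate_bound}), combined with \Cref{lem:regret_decomposition_lower_bound} and \Cref{lem:sum_over_optimal_actions} after averaging over $\Lambda^*(\varepsilon)$, followed by the same two-case analysis of the prescribed $\varepsilon_0$, and your constants match the paper's. One small remark: the fact that $\beta=\tfrac12(1-2\varepsilon_1-1/A)$ is bounded away from zero needs neither $\varepsilon_1\not\rightarrow 0$ nor $1/A=o(1)$ (which $T_K=\Omega(A)$ does not imply) — it follows directly from $\varepsilon_1\leq 1/8$ and $A\geq 2$; the hypothesis $\varepsilon_1\not\rightarrow 0$ is only needed in the case $\varepsilon_0=\varepsilon_1$ to make $(T_K-K)\varepsilon_1\beta/2=\Omega(T_K)$.
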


\subsubsection*{Part 3: Bounding the regret associated with learning the latent structure}

We first relate $R_1$ to the number of misclassified contexts of an estimator based on $\mcL$'s selected actions.

Fix a set of optimal actions $a^*\in\mcA$, and let $\pi_k=\mcL_k(\mcD_k,r)$ denote the policy selected by a learning algorithm $\mcL$ in episode $k$.
Define an estimate decoding function $\hat{f}_{\pi_k}$ as follows:
\begin{equation}
	\label{eqn:learning_alg_to_clustering_alg}
	\hat{f}_{\pi_k}(x)
	\eqdef
	\begin{cases}
		\min
		\Bigl\{
		\argmax_{s \in \mcS_1}
		\frac{1}{H-1}
		\sum_{h\in[H-1]}
		\pi_{k,h}(a_s^*\mid x)
		\Bigr\}
		 &
		x \in f^{-1}(\mcS_1) = [n] \setminus [\floor{n/2}]
		,
		\\
		1
		 &
		x \in f^{-1}(\mcS_0) = [\floor{n/2}]
		.
	\end{cases}
\end{equation}
For $x \in f^{-1}(\mcS_1)$, \eqref{eqn:learning_alg_to_clustering_alg} assigns $x$ to the latent state whose optimal action is selected most frequently by $\pi_k$.
Because our analysis only depends on $\hat{f}_{\pi_k}|_{x \in f^{-1}(\mcS_1)}$, we arbitrarily set $\hat{f}_{\pi_k}$ to $1$ on $x \in f^{-1}(\mcS_0)$.

Recall from \Cref{def:hard_to_learn_class} that each $s\in\mcS_1$ has a distinct $a_s^*$.
Consequently, if $\hat{f}_{\pi_k}$ misclassifies a context $x$, then $\pi_k$ selects $a_{f(x)}^*$ no more than half of the time by construction.
\Cref{sec:proof_of_relation_between_regret_and_clustering_error} uses this observation to bound $\reg_{K,1}(\mcL;\Phi)$ in \eqref{eqn:regret_decomposition_lower_bound} by the expected number of misclassifications:
\begin{lemma}
	\label{lem:relation_between_regret_and_clustering_error}

	Let $\kappa$ be as in \eqref{eqn:tilde_p_distance_from_uniform} and $\tilde{p}^*$ be given by \Cref{prop:kernel_allows_identifiability}.
	Fix $a^*\in\mcA$ and let $\hat{f}_{\pi_k}$ be as in \eqref{eqn:learning_alg_to_clustering_alg}.
	Then, for $f\in\mcF$ and $\Phi=\Phi(\varepsilon,\tilde{p}^*,f,a^*)$,
	\begin{equation}
		\reg_{K,1}(\mcL;\Phi)
		\geq
		\frac{\varepsilon_1(1-\kappa)(H-1)}{2(n+S+2)}
		\sum_{k=1}^K
		\mbbE_{\Phi,\mcL}[|\mcE(\hat{f}_{\pi_k}; f, \mcS_1)|]
		.
	\end{equation}
\end{lemma}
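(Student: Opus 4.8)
The plan is to fix $\Phi = \Phi(\varepsilon, \tilde p^*, f, a^*)$ with $f \in \mcF$ and $a^* \in \mcA$, and to lower bound $\reg_{K,1}(\mcL;\Phi)$ by relating the per-episode ``action-selection deficit'' appearing in \Cref{lem:regret_decomposition_lower_bound} to the misclassification count of the estimator $\hat f_{\pi_k}$. Writing $s_{k,h} = f(x_{k,h})$, the key quantity for $i=1$ is, for each $s\in\mcS_1$,
\[
\sum_{k=1}^K\sum_{h=1}^{H-1}\Bigl(\mbbE_{\Phi,\mcL}[\mathbbm{1}\{s_{k,h}=s\}] - \mbbE_{\Phi,\mcL}[\mathbbm{1}\{s_{k,h}=s,\,a_{k,h}=a_s^*\}]\Bigr).
\]
First I would condition on $\mcD_k$ and rewrite the $h$-th summand using the tower property: given the context $x_{k,h}=x$, the conditional probability of selecting $a_s^*$ is exactly $\pi_{k,h}(a_s^*\mid x)$, so the summand over $h$ becomes $\sum_{h=1}^{H-1}\sum_{x\in f^{-1}(s)}\mbbP_{\Phi,\mcL}[x_{k,h}=x\mid \mcD_k]\bigl(1-\pi_{k,h}(a_s^*\mid x)\bigr)$. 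Summing over $s\in\mcS_1$ and reindexing by $x$, the deficit is $\sum_{x\in f^{-1}(\mcS_1)}\sum_{h=1}^{H-1}\mbbP_{\Phi,\mcL}[x_{k,h}=x\mid\mcD_k]\bigl(1-\pi_{k,h}(a_{f(x)}^*\mid x)\bigr)$.

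Next I would use the defining property of $\hat f_{\pi_k}$. If $x$ is misclassified, i.e.\ $x\in\mcE(\hat f_{\pi_k};f,\mcS_1)$, then $\hat f_{\pi_k}(x)\ne f(x)$ (modulo the optimal permutation, but since $a^*\in\mcA$ assigns distinct actions to distinct states in $\mcS_1$, and the estimator picks the argmax over $\mcS_1$, one checks the identity permutation is optimal here — this deserves a line). Because $\hat f_{\pi_k}(x)$ maximizes $\frac1{H-1}\sum_{h\in[H-1]}\pi_{k,h}(a_s^*\mid x)$ over $s\in\mcS_1$ and the actions $\{a_s^*\}_{s\in\mcS_1}$ are distinct, the sets $\{a\in[A]: a = a_s^* \text{ for some } s\in\mcS_1\}$ partition the relevant mass, so $\frac1{H-1}\sum_h\pi_{k,h}(a_{f(x)}^*\mid x)\le \frac1{H-1}\sum_h\pi_{k,h}(a_{\hat f_{\pi_k}(x)}^*\mid x)$, and since these two terms correspond to distinct actions their sum is at most $1$; hence $\frac1{H-1}\sum_{h\in[H-1]}\pi_{k,h}(a_{f(x)}^*\mid x)\le \tfrac12$, i.e.\ $\sum_{h=1}^{H-1}\bigl(1-\pi_{k,h}(a_{f(x)}^*\mid x)\bigr)\ge (H-1)/2$. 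Plugging this into the deficit, restricting the sum over $x$ to misclassified contexts, and then lower bounding $\mbbP_{\Phi,\mcL}[x_{k,h}=x\mid\mcD_k]$ uniformly over $x\in f^{-1}(\mcS_1)$ and $h\in[H-1]$ by some $\rho>0$, I would get the deficit $\ge \rho\,(H-1)/2\,\cdot|\mcE(\hat f_{\pi_k};f,\mcS_1)|$ (after summing the trivial $h$-bound; actually one only needs the visitation probability summed over $h$, which I'll handle as $\sum_{h=1}^{H-1}\mbbP[x_{k,h}=x\mid\mcD_k]\ge$ something — see next paragraph). Multiplying by $\varepsilon_1$, summing over $k$, and taking expectations over $\mcD_k$ yields the claimed bound provided $\rho \ge (1-\kappa)/(n+S+2)$ in the appropriate averaged sense.

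The main obstacle — and the step I would spend the most care on — is the uniform lower bound on the visitation probability of each context $x\in f^{-1}(\mcS_1)$ during rounds $h\in[H-1]$, under \emph{any} algorithm-induced measure. This is where the constant $(1-\kappa)/(n+S+2)$ and the $(H-1)$ factor must come from. The idea: at $h=1$ the initial distribution is $\mu=\mathrm{Unif}([n])$, so $\mbbP_{\Phi,\mcL}[x_{k,1}=x]=1/n$ for every $x$; for $h\ge 2$, since every latent state is reached from every state with probability at least (roughly) $\tfrac12\cdot\tfrac{2}{S}(1-\kappa)\cdot\frac{1}{|f^{-1}(s')|}$ per context by \eqref{eqn:latent_state_transition_kernel_macro_def} together with \eqref{eqn:tilde_p_distance_from_uniform} and uniform emissions, one gets $\mbbP_{\Phi,\mcL}[x_{k,h}=x\mid\mcD_k]\ge \frac{1-\kappa}{S}\cdot\frac{1}{|f^{-1}(f(x))|}\ge \frac{1-\kappa}{S}\cdot\frac{1}{\lfloor n_1/(S/2)\rfloor+1}$, and bounding $|f^{-1}(s)|\le \lceil n/(S/2)\rceil \cdot$ (a constant) $\le$ something comparable to $(n+S+2)/S$ using $\mcF$'s near-balancedness from \Cref{lem:latent_state_subset_sizes} and \Cref{def:hard_to_learn_class}\ref{def:hard_to_learn_class:size_distribution_of_latent_states}. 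Summing this lower bound over the $H-1$ rounds (each round $h\ge 2$ contributes; $h=1$ contributes $1/n$) and being slightly careful with the arithmetic to land exactly on $1/(n+S+2)$ after the $S$ cancels against $|\mcS_1|=S/2$ somewhere, gives the stated constant. I would present the visitation lower bound as a short standalone computation and then assemble the pieces; the reindexing and the ``distinct actions $\Rightarrow$ deficit $\ge (H-1)/2$ on misclassified contexts'' step are routine once set up correctly.
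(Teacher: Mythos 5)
Your proposal is correct and follows essentially the same route as the paper: a uniform per-context visitation lower bound of $(1-\kappa)/(n+S+2)$ (the paper's \Cref{lem:visitation_rate_bound} specialized to $\mcX=\{x\}$, valid conditionally on $\mcD_k$ since $\mu=\mathrm{Unif}([n])$), the observation that distinct optimal actions on $\mcS_1$ force $\frac{1}{H-1}\sum_{h}\pi_{k,h}(a^*_{f(x)}\mid x)\leq 1/2$ on misclassified contexts, and the bound $\sum_{x\in f^{-1}(\mcS_1)}\mathbbm{1}\{\hat f_{\pi_k}(x)\neq f(x)\}\geq|\mcE(\hat f_{\pi_k};f,\mcS_1)|$, which needs no optimality of the identity permutation since $|\mcE|$ is a minimum over permutations. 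The only cosmetic differences are that you condition on $\mcD_k$ before bounding the visitation probability (the paper works with unconditional visitation probabilities and conditional action probabilities), and in the arithmetic the $S$ from the transition bound cancels against the cluster-size bound $|f^{-1}(s)|\leq (n+S+2)/S$, not against $|\mcS_1|=S/2$.
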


To bound $R_1$ in \eqref{eqn:bound_supremum_by_average} using \Cref{lem:relation_between_regret_and_clustering_error}, we will bound the expected number of misclassifications any estimate decoding function must make, averaged over the \glspl{BMDP} in the hard--to--learn class.
This is the content of \Cref{prop:lower_bound_misclassification_rate} below, proven in the remainder of this section.

Recall $\mcC$'s definition in \Cref{sec:preliminaries_latent_state_estimation}.
Let $\hat{f}_{\mcC,k}\eqdef \mcC_{k}(\mcD_{k},r)$ denote the output of $\mcC$ after $k-1$ episodes.
Following \cite{Jedra:2022,Sanders:2020}, we start by considering pairs of \glspl{BMDP} that differ only at a single context $x$.
We then show that the probability of misclassifying $x$ cannot be small in both \glspl{BMDP} under the same algorithm $\mcC$.
This is done in \Cref{sec:proof_of_local_clustering_error_lower_bound}, where we use a change--of--measure argument \cite{NEURIPS2019_10a5ab2d}.
Specifically, we relate the misclassification probabilities in the two \glspl{BMDP} to the expected log-likelihood ratio of the observed sequence of contexts.
The latter is then bounded using results from \cite{Jedra:2022}, yielding the following:
\begin{lemma}
	\label{lem:local_clustering_error_lower_bound}

	There exists a constant $C>0$ depending only on $\eta$ such that
	for all sufficiently large $n$ the following holds:
	for any \glspl{BMDP} $\Phi^{(1)},\Phi^{(2)} \in \Lambda^*(\varepsilon)$ whose decoding functions $f^{(1)}$ and $f^{(2)}$ differ at only a single $x \in[n]$,
	any latent state estimation algorithm $\mcC$,
	and any number of episodes $k\in\mbbN_+$,
	\begin{align}
		\mbbP_{\Phi^{(1)},\mcL}[\hat{f}_{\mcC,k}(x)\neq f^{(1)}(x)]
		+
		\mbbP_{\Phi^{(2)},\mcL}[\hat{f}_{\mcC,k}(x) \neq f^{(2)}(x)]
		\geq
		\frac{1}{2}
		\exp
		\Bigl(
		-
		C\frac{T_{k}}{n}
		\Bigr)
		.
		\label{eqn:local_clustering_error_lower_bound}
	\end{align}
\end{lemma}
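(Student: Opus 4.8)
The plan is to obtain \eqref{eqn:local_clustering_error_lower_bound} from a Bretagnolle--Huber-type change-of-measure inequality, after which everything reduces to showing that the observation laws under $\Phi^{(1)}$ and $\Phi^{(2)}$ are close in Kullback--Leibler divergence. Write $\mbbP^{(i)}$ for the law of $\mcD_k$ (the trajectories of the first $k-1$ episodes) under $\mbbP_{\Phi^{(i)},\mcL}$, and consider the $\mcD_k$-measurable event $\mcG=\{\hat f_{\mcC,k}(x)=f^{(1)}(x)\}$. Since $f^{(1)}(x)\neq f^{(2)}(x)$, we have $\mcG\subseteq\{\hat f_{\mcC,k}(x)\neq f^{(2)}(x)\}$, so the left-hand side of \eqref{eqn:local_clustering_error_lower_bound} is at least $\mbbP^{(1)}[\mcG^{\mathrm c}]+\mbbP^{(2)}[\mcG]$. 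Applying the Bretagnolle--Huber inequality (as in \cite{NEURIPS2019_10a5ab2d}) to the event $\mcG^{\mathrm c}$ gives $\mbbP^{(1)}[\mcG^{\mathrm c}]+\mbbP^{(2)}[\mcG]\geq\tfrac12\exp(-\KL(\mbbP^{(1)}\dbl\mbbP^{(2)}))$, so it suffices to prove $\KL(\mbbP^{(1)}\dbl\mbbP^{(2)})\leq CT_k/n$ for some $C=C(\eta)$ and all large $n$.

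To bound this divergence I would first note that $\Phi^{(1)}$ and $\Phi^{(2)}$ share the initial distribution $\mu=\mathrm{Unif}([n])$ and the reward function --- indeed every $f\in\mcF$ satisfies $f^{-1}(\mcS_1)=\{\floor{n/2}+1,\ldots,n\}$, so $r_h$ is common to all of $\Lambda^*(\varepsilon)$ --- and therefore $\mcL$ selects identical (random) policies on identical histories under the two models. The chain rule for KL divergence along a trajectory then kills the initial-context term and every action term, leaving only the transition steps:
\begin{equation*}
	\KL(\mbbP^{(1)}\dbl\mbbP^{(2)})
	=
	\sum_{k'=1}^{k-1}\sum_{h=1}^{H}
	\mbbE_{\mbbP^{(1)}}\!\bigl[
	\KL\bigl(P^{(1)}(\,\cdot\mid x_{k',h},a_{k',h})\dbl P^{(2)}(\,\cdot\mid x_{k',h},a_{k',h})\bigr)
	\bigr].
\end{equation*}
By \eqref{def:BMDP-transition-kernel}, $P^{(1)}$ and $P^{(2)}$ differ only through the reassignment of $x$: writing $s_i=f^{(i)}(x)$, for $z\neq x$ the distributions $P^{(1)}(\,\cdot\mid z,a)$ and $P^{(2)}(\,\cdot\mid z,a)$ differ only on the atom $y=x$ (which has probability $\Theta(1/n)$ under each, by $\eta$-reachability and the near-uniform kernel $\tilde p^*$ of $\Lambda^*(\varepsilon)$) and on the atoms of $f^{-1}(s_1)\cup f^{-1}(s_2)$ (whose emission probabilities change only by a factor $1+O(S/n)$, since these two latent states change size by one), which a short $\chi^2$-type estimate turns into $\KL(P^{(1)}(\,\cdot\mid z,a)\dbl P^{(2)}(\,\cdot\mid z,a))=O(1/n)$ uniformly in such $(z,a)$; whereas for $z=x$ every entry of $P^{(1)}(\,\cdot\mid x,a)$ and $P^{(2)}(\,\cdot\mid x,a)$ lies within a constant $\eta$-dependent factor of the other (again by $\eta$-reachability), so that $\KL(P^{(1)}(\,\cdot\mid x,a)\dbl P^{(2)}(\,\cdot\mid x,a))=O(1)$. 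Finally, $\eta$-reachability also forces $P^{(1)}(x\mid z,a)=O(1/n)$ uniformly in $(z,a)$ --- this is where it is essential that reachability constrains the kernel rather than the policy --- so $\mbbP^{(1)}[x_{k',h}=x]=O(1/n)$ for every $(k',h)$. Substituting these three estimates gives $\KL(\mbbP^{(1)}\dbl\mbbP^{(2)})\leq (k-1)H\cdot O(1/n)+(k-1)H\cdot O(1)\cdot O(1/n)=O(T_k/n)$, as needed; the per-step divergence and visitation estimates here are exactly the ingredients behind the representation \eqref{eqn:Ijxcdef}--\eqref{eqn:I_eta_def} of $I_\eta$ and can be imported from \cite{Jedra:2022}.

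I expect the main obstacle to be the bookkeeping in the last step: one must track \emph{all} the discrepancies between $P^{(1)}$ and $P^{(2)}$ --- transitions out of $x$, transitions into $x$, and the $O(S/n)$ distortion of the emission laws of the two latent states whose cardinalities shift --- and confirm that each class of terms sums to $O(T_k/n)$. The uniform-over-policies bound $\mbbP^{(1)}[x_{k',h}=x]=O(1/n)$, which controls the number of visits to $x$ no matter how adversarially $\mcL$ behaves, is the crucial structural input, and it is precisely where $\eta$-reachability and the choice of $\tilde p^*$ in \Cref{def:identifiable_hard_to_learn_class} enter.
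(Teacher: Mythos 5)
Your proposal is correct, and its skeleton is the same as the paper's: a change-of-measure step that reduces \eqref{eqn:local_clustering_error_lower_bound} to showing that the divergence between the two trajectory laws is at most $C T_k/n$, followed by a per-step decomposition of that divergence (your chain-rule display is exactly the paper's \eqref{eqn:log_likelihood_expression}). Where you genuinely diverge is in the two technical ingredients. First, you invoke Bretagnolle--Huber directly on the trajectory laws, which yields the $\tfrac12 e^{-C T_k/n}$ form in one stroke; the paper instead applies \cite[Lemma H.1]{NEURIPS2019_10a5ab2d} to the indicator $Z=\mathbbm{1}\{\hat f_{\mcC,k}(x)=f^{(1)}(x)\}$, lower bounds the resulting Bernoulli $\kl$ via \cite[(11)]{doi:10.1287/moor.2017.0928}, and finishes with the elementary inequality $p+\tfrac12 e^{-x/(1-p)}\geq\tfrac12 e^{-x}$ (also handling the degenerate case $\mbbP_{\Phi^{(2)}}[\cdot]=1$ separately), so your route is cleaner at this step. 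Second, you bound the per-step kernel KLs by hand from the explicit structure of $\Lambda^*(\varepsilon)$ (uniform emissions, near-uniform $\tilde p^*$, $\eta$-bounded ratios) together with the policy-uniform visitation bound $\mbbP^{(1)}[x_{k',h}=x]=O(1/n)$, which is exactly what the paper's \Cref{lem:visitation_rate_bound} provides; the paper instead routes the bound through the information quantity of \cite{Jedra:2022}, invoking their Proposition 8 per episode and extending their Propositions 11--12 from $\pi_U$ to arbitrary policies, plus the bound $c^*\in[\eta^{-2},\eta^2]$ from $\eta$-reachability. Your direct computation is more elementary and self-contained for this specific hard class, at the cost of the atom-by-atom bookkeeping you flag (transitions into $x$, out of $x$, and the $1+O(S/n)$ emission distortion of the two resized clusters, with the usual care that KL summands can be negative so one bounds mass times maximal log-ratio on the discrepancy set); the paper's detour buys reuse of machinery it needs elsewhere and keeps the link to $I_\eta$ explicit, and both yield a constant depending only on $\eta$.
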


To prepare for the application of \Cref{lem:local_clustering_error_lower_bound} in bounding the arithmetic mean of the expected misclassification count, we express the latter in terms of individual misclassification probabilities.
Fix $\varepsilon$, $\tilde{p}^*$, $a^*$ and $\mcL$, and adopt the short-hand notations
\begin{equation}
	\label{eqn:short-hand_notation_for_measures}
	\mbbP_{f}
	\eqdef
	\mbbP_{\Phi(\varepsilon,\tilde{p}^*,f,a^*),\mcL}
	\quad
	\mathrm{and}
	\quad
	\mbbE_f
	\eqdef
	\mbbE_{\Phi(\varepsilon,\tilde{p}^*,f,a^*),\mcL}
	.
\end{equation}
Let $\hat{\gamma}^*_f$ be a permutation as in \eqref{eqn:misclassdef}.
Observe then that
\begin{equation}
    \begin{split}
         &
        \frac{1}{|\mcF|}
        \sum_{f\in\mcF}
        \mbbE_{f}[|\mcE(\hat{f}_{\mcC,k};f,\mcS_1)|]
        =
        \frac{1}{|\mcF|}
        \sum_{x \in[n]\setminus[\floor{n/2}]}
        \sum_{f\in\mcF}
        \mbbP_{f}[
            \hat{f}_{\mcC,k}(x) \neq \hat{\gamma}^*_f \circ f(x)
        ]
        .
    \end{split}
    \label{eqn:decomposition_of_ex_nr_of_misclass}
\end{equation}
We now introduce constructions that allow us to reorganize the right-hand side of \eqref{eqn:decomposition_of_ex_nr_of_misclass} in a form suitable for applying \Cref{lem:local_clustering_error_lower_bound} to.

Recall \eqref{eqn:size_distribution_of_latent_states}.
Observe that $\mcS_1^0(f)$ is nonempty by \Cref{lem:latent_state_subset_sizes}.
Let $x \in [n]\setminus [\floor{n/2}]$ be a context in $f^{-1}(\mcS_1)$.
Following the proof of \cite[Lemma 5.1]{zhang2016minimax}, we define $\mcP_x:\mcF\rightarrow [S]^{[n]}$ by
\begin{equation}
	\mcP_x f(y)
	=
	\begin{cases}
		f(y)
		 &
		\textnormal{if } x \neq y,                          \\
		\min
		\{ s\in \mcS_1^0(f): f(x)<s\}
		 &
		\textnormal{if } x = y, f(x) \leq \sup \mcS_1^0(f), \\
		\min
		\mcS_1^0(f)
		 &
		\textnormal{if } x = y, f(x) > \sup \mcS_1^0(f).    \\
	\end{cases}
	\label{eqn:def_of_Px}
\end{equation}
In other words, $\mcP_x f$ is obtained from $f$ by moving the context $x$ to a new latent state in $\mcS_1^0(f)$ according to the rule in \eqref{eqn:def_of_Px}.
Also, let
\begin{equation}
	\label{eqn:def_of_mcFx}
	\mcF_x
	\eqdef
	\{f\in\mcF: f(x)\in \mcS_1^{+1}(f)\}
	.
\end{equation}
\Cref{lem:Px_is_a_permutation} establishes the required properties of $\mcF_x$ and $\mcP_x$.
It is proven in \Cref{sec:proof_of_Px_is_a_permutation}, following \cite{zhang2016minimax}.

\begin{lemma}
	\label{lem:Px_is_a_permutation}

	$|\mcF_x|/|\mcF|\geq 1/18$.
	Moreover, the map $\mcP_x$, restricted to $\mcF_x$, is a permutation of $\mcF_x$ satisfying $\mcP_xf\neq f$ for all $f\in\mcF_x$.
\end{lemma}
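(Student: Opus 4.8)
The plan is to establish the two claims separately, following the construction of \cite{zhang2016minimax} adapted to our setting. Recall that $\mcF$ consists of decoding functions $f$ with a fixed context split ($f^{-1}(\mcS_0)=[\floor{n/2}]$, $f^{-1}(\mcS_1)=[n]\setminus[\floor{n/2}]$) and with prescribed cluster-size profile: exactly $\mf{s}_i^\sigma$ latent states in $\mcS_i$ receive $\floor{|f^{-1}(\mcS_i)|/|\mcS_i|}+\sigma$ contexts, for $\sigma\in\{-1,0,+1\}$. A key structural observation is that for $f\in\mcF$, the \emph{partition of $\mcS_1$ into the three size-classes $\mcS_1^{-1}(f),\mcS_1^0(f),\mcS_1^{+1}(f)$} has fixed cardinalities $\mf{s}_1^{-1},\mf{s}_1^0,\mf{s}_1^{+1}$, but \emph{which} latent states fall in each class depends on $f$.

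\textbf{Lower bound on $|\mcF_x|/|\mcF|$.} Fix $x\in f^{-1}(\mcS_1)$. I would count by grouping decoding functions according to the induced partition of $[n]\setminus[\floor{n/2}]\setminus\{x\}$ into latent-state fibers; equivalently, condition on everything except the latent state of $x$ and the identity of the size-classes. The idea is that $f(x)$ is ``equally likely'' to land in any of the $|\mcS_1|=S/2$ latent states, and the fraction of states that are in $\mcS_1^{+1}(f)$ is $\mf{s}_1^{+1}/(S/2)$. By \Cref{lem:latent_state_subset_sizes}, $\mf{s}_1^{+1}\geq S/36$, so this fraction is at least $(S/36)/(S/2)=1/18$. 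Making this rigorous requires a symmetry/double-counting argument: define an equivalence on $\mcF$ identifying functions that agree off $x$ up to relabeling within size-classes, show each class contains functions with $f(x)$ in every size-class in the correct proportions, and conclude $|\mcF_x|\geq \tfrac{\mf{s}_1^{+1}}{S/2}|\mcF|\geq |\mcF|/18$. I expect the bookkeeping here — verifying that moving $x$ between size-classes while preserving the global size profile is always possible (this is where $\mf{s}_1^0>0$ is used, guaranteeing a ``+1''-to-``0'' move has a valid target) — to be the main obstacle, though it is essentially the content of \cite[Lemma 5.1]{zhang2016minimax}.

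\textbf{$\mcP_x$ restricted to $\mcF_x$ is a fixed-point-free permutation.} First, $\mcP_x f\neq f$ for $f\in\mcF_x$: since $f\in\mcF_x$ means $f(x)\in\mcS_1^{+1}(f)$ while the new value $\mcP_xf(x)$ lies in $\mcS_1^0(f)$, and these size-classes are disjoint, the value at $x$ genuinely changes. Next, $\mcP_x f\in\mcF$: moving $x$ out of a state in $\mcS_1^{+1}(f)$ drops that state to size $\floor{\cdot}+0$, and adding $x$ to a state in $\mcS_1^0(f)$ raises it to size $\floor{\cdot}+1$; the net effect swaps one ``+1''-label and one ``0''-label, leaving $\mf{s}_1^{-1},\mf{s}_1^0,\mf{s}_1^{+1}$ unchanged, and the context split and $\mcS_0$-side are untouched — so $\mcP_xf\in\mcF$, and moreover $\mcP_xf\in\mcF_x$ iff... actually I need $\mcP_x(\mcF_x)\subseteq\mcF_x$, which fails directly since $\mcP_xf(x)\in\mcS_1^0(\mcP_xf)$, not $\mcS_1^{+1}$. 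The correct statement, as in \cite{zhang2016minimax}, is that $\mcP_x$ is an involution-like bijection: one checks that $\mcP_x$ is injective on $\mcF_x$ (the cyclic rule ``go to the next element of $\mcS_1^0(f)$ above $f(x)$, wrapping around'' is invertible given that the target size-class is determined), and that $|\mcP_x(\mcF_x)|=|\mcF_x|$ with $\mcP_x(\mcF_x)\subseteq\mcF$, so by a counting argument $\mcP_x$ permutes a set of the same size; I would instead verify directly that $\mcP_x\!\mid_{\mcF_x}$ is a bijection onto $\mcF_x$ by exhibiting the inverse (move $x$ back from its state in $\mcS_1^0$ to the appropriate ``+1''-state), using that the cyclic successor map on the ordered set $\mcS_1^0(f)\cup\{f(x)\}$ is a bijection. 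The finiteness of $\mcF_x$ then upgrades injectivity to the permutation claim. The one subtlety to watch is well-definedness of the ``$\min\{s\in\mcS_1^0(f):f(x)<s\}$ else $\min\mcS_1^0(f)$'' rule — it is well-defined precisely because $\mcS_1^0(f)\neq\emptyset$ (from $\mf{s}_1^0>0$ in \Cref{lem:latent_state_subset_sizes}) — and that after applying $\mcP_x$ the set $\mcS_1^0(\mcP_xf)$ is $(\mcS_1^0(f)\setminus\{\text{target}\})\cup\{f(x)\}$, so the inverse rule is again well-defined.
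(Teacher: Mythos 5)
Your treatment of the first claim is essentially the paper's (which defers to the counting at the start of Section~6.2 of \cite{zhang2016minimax}), with one imprecision: under the uniform distribution on $\mcF$, the probability that $x$ lies in a $+1$-state is the fraction of \emph{contexts} in $+1$-states, $\bigl(\floor{|f^{-1}(\mcS_1)|/(S/2)}+1\bigr)\mf{s}_1^{+1}/|f^{-1}(\mcS_1)|$, not the fraction of \emph{states} $\mf{s}_1^{+1}/(S/2)$ (larger states are more likely to contain $x$). Since the former dominates the latter, your bound $\geq 1/18$ still follows, so this is only a matter of rigor, not a gap.

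The genuine gap is in the second claim. You assert that $\mcP_x(\mcF_x)\subseteq\mcF_x$ ``fails directly since $\mcP_xf(x)\in\mcS_1^0(\mcP_xf)$.'' This is wrong: the size classes in \eqref{eqn:size_distribution_of_latent_states} are evaluated with respect to the decoding function at hand, and under $\mcP_xf$ the target state has absorbed $x$, so its fiber now has cardinality $\floor{|f^{-1}(\mcS_1)|/(S/2)}+1$; hence $\mcP_xf(x)\in\mcS_1^{+1}(\mcP_xf)$ and $\mcP_xf\in\mcF_x$ after all. Indeed your own (correct) observation that $\mcS_1^0(\mcP_xf)=(\mcS_1^0(f)\setminus\{\textnormal{target}\})\cup\{f(x)\}$ already implies this, since the target then sits in the $+1$ class of $\mcP_xf$. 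This closure property is precisely what the paper verifies, and it is what makes ``a permutation of $\mcF_x$'' true as stated; your proposed workaround --- exhibiting an inverse that ``moves $x$ back from its state in $\mcS_1^0$'' --- is internally inconsistent with your claim that the image leaves $\mcF_x$, and mis-describes where $x$ sits after applying $\mcP_x$. What is additionally needed for injectivity (and what the paper uses) is that if $f,g\in\mcF_x$ agree off $x$ but $f(x)\neq g(x)$, then $\mcS_1^0(f)\cup\{f(x)\}=\mcS_1^0(g)\cup\{g(x)\}$, so the cyclic-successor rule in \eqref{eqn:def_of_Px} acts on the same ordered set for both and its injectivity gives $\mcP_xf(x)\neq\mcP_xg(x)$; finiteness of $\mcF_x$ then upgrades injectivity into $\mcF_x$ to the permutation claim. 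Your sketch gestures at invertibility of the cyclic rule but never pins down this common-set identity, and with the incorrect membership claim in place the argument as written does not establish the lemma.
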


Given \Cref{lem:Px_is_a_permutation}, we can lower bound \eqref{eqn:decomposition_of_ex_nr_of_misclass} as follows.
We first use the bound on $|\mcF_x|$ to restrict the summation over $f\in\mcF$ to $f\in\mcF_x$ at the cost of a multiplicative factor $1/18$.
By subsequently using the permutation $f\mapsto\mcP_xf$ to reindex the summation over $f\in\mcF_x$ we obtain that
\begin{align}
	 &
	\frac{1}{|\mcF|}
	\sum_{f\in\mcF}
	\mbbE_{f}[|\mcE(\hat{f}_{\mcC,k};f,\mcS_1)|]
	\nonumber
	\geq
	\sum_{x \in[n]\setminus[\floor{n/2}]}
	\frac{1}{18|\mcF_x|}\sum_{f\in\mcF_x}\mbbP_{f}[\hat{f}_{\mcC,k}(x)\neq \hat{\gamma}^*_f \circ f (x)]
	\nonumber \\
	=
	 &
	\sum_{x \in[n]\setminus[\floor{n/2}]}
	\frac{1}{36|\mcF_x|}\sum_{f\in\mcF_x}
	\Big(
	\mbbP_{f}[\hat{f}_{\mcC,k}(x)\neq \hat{\gamma}^*_f\circ f(x)]
	+
	\mbbP_{\mcP_xf}[\hat{f}_{\mcC,k}(x)\neq \hat{\gamma}^*_{\mcP_xf}\circ \mcP_xf(x)]
	\Big)
	.
	\label{eqn:application_of_permutation}
\end{align}
Since $f$ and $\mcP_x$ differ only at context $x$ by construction, the right-hand side of \eqref{eqn:application_of_permutation} has nearly the right form to apply \Cref{lem:local_clustering_error_lower_bound}.
However, the permutations $\hat{\gamma}^*_f, \hat{\gamma}^*_{\mcP_xf}$ still prevent this.
We address this technical issue in \Cref{sec:proof_of_reduction_to_local_misclassification_probability} and conclude that:

\begin{proposition}
	\label{prop:lower_bound_misclassification_rate}

    Let $C>0$ be the constant from \Cref{lem:local_clustering_error_lower_bound}.
    For all sufficiently large $n$, it holds for any parameters $\varepsilon$ and $\tilde{p}^*$ as in \Cref{def:identifiable_hard_to_learn_class}, any optimal actions $a^*\in\mcA$, any latent state estimation algorithm $\mcC$, and any number of episodes $k\in\mbbN_+$ that
    \begin{align}
        \label{eqn:reduction_to_local_misclassification_probability}
        \frac{1}{|\mcF|}
        \sum_{f\in\mcF}
        \mbbE_{f}[
            |\mcE(\hat{f}_{\mcC,k};f,\mcS_1)|
        ]
        \geq
        \frac{n}{144}
        \exp
        \Bigl(
        -
        C\frac{T_{k}}{n}
        \Bigr)
        .
    \end{align}
\end{proposition}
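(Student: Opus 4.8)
The plan is to start from the reorganized sum \eqref{eqn:application_of_permutation} and remove the two optimal permutations $\hat{\gamma}^*_f$ and $\hat{\gamma}^*_{\mcP_x f}$ so that \Cref{lem:local_clustering_error_lower_bound} can be applied to the pair $(f,\mcP_x f)$, which differ only at the single context $x$. The core observation is that $\mcP_x f$ and $f$ agree everywhere except at $x$, and $\mcP_x$ moves $x$ from a latent state in $\mcS_1^{+1}(f)$ to a latent state in $\mcS_1^0(f)$; this means the two decoding functions induce the \emph{same} partition of $[n]\setminus\{x\}$ into latent-state fibers, so a single permutation of $[S]$ can be used to align both $\hat{\gamma}^*_f \circ f$ and $\hat{\gamma}^*_{\mcP_x f}\circ \mcP_x f$ with the identity on $f$ itself (modulo what happens at $x$). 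More precisely, I would argue that for each fixed $f \in \mcF_x$ one may relabel the estimator's output by the permutation $(\hat{\gamma}^*_f)^{-1}$ — equivalently, replace $\mcC$ by the estimator $\tilde{\mcC}$ whose output is $(\hat{\gamma}^*_f)^{-1}\circ \hat{f}_{\mcC,k}$ — so that $\mbbP_f[\hat{f}_{\mcC,k}(x) \neq \hat{\gamma}^*_f\circ f(x)] = \mbbP_f[\hat{f}_{\tilde{\mcC},k}(x)\neq f(x)]$. The subtlety is that $\tilde{\mcC}$ must be the \emph{same} algorithm on both instances $\Phi(\varepsilon,\tilde{p}^*,f,a^*)$ and $\Phi(\varepsilon,\tilde{p}^*,\mcP_x f,a^*)$ for \Cref{lem:local_clustering_error_lower_bound} to apply; since $\hat{\gamma}^*_f$ depends only on $f$ (a fixed instance label, not on the data), this is legitimate, but one has to also control the term with $\hat{\gamma}^*_{\mcP_x f}$.

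The key technical step, and the one I expect to be the main obstacle, is handling the mismatch between $\hat{\gamma}^*_f$ and $\hat{\gamma}^*_{\mcP_x f}$. These are two different permutations of $[S]$, chosen to best align the estimator with $f$ and with $\mcP_x f$ respectively, and a priori they need not be related. I would show that, because $f$ and $\mcP_x f$ differ only at $x$, the optimal alignment permutations can differ by at most a transposition affecting $f(x)$ and $\mcP_x f(x)$; consequently $\hat{\gamma}^*_f \circ f(y) = \hat{\gamma}^*_{\mcP_x f}\circ \mcP_x f(y)$ for all $y\neq x$, possibly after composing one of them with that transposition, which does not change misclassification counts on $\mcS_1$ in a way that matters since both target states lie in $\mcS_1$. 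This lets me rewrite both probabilities in \eqref{eqn:application_of_permutation} as misclassification probabilities of a \emph{common} modified estimator $\hat{f}_{\tilde{\mcC},k}$ at $x$ relative to $f(x)$ and $\mcP_x f(x)$ respectively. (A cleaner route, which I would pursue if the transposition bookkeeping gets messy, is to note that it suffices to lower bound the sum $\mbbP_f[\hat{f}_{\mcC,k}(x) \notin \{f(x), \mcP_x f(x)\}] + \mbbP_{\mcP_x f}[\cdots]$ away from these two "correct" labels, then invoke a two-point version of \Cref{lem:local_clustering_error_lower_bound} — but the excerpt's lemma is stated for single correct labels, so I would adapt it to this restricted-alphabet form, or verify its proof goes through verbatim with the alphabet $[S]$ replaced by any superset of $\{f(x),\mcP_x f(x)\}$.)

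Once the permutations are disposed of, I apply \Cref{lem:local_clustering_error_lower_bound} to each pair $(f,\mcP_x f)$ with $f \in \mcF_x$, obtaining the lower bound $\tfrac12 \exp(-C T_k/n)$ for the inner summand in \eqref{eqn:application_of_permutation}. Summing over $f\in\mcF_x$ and then over $x \in [n]\setminus[\floor{n/2}]$ gives a total of at least
\begin{equation*}
	\sum_{x\in[n]\setminus[\floor{n/2}]} \frac{1}{36|\mcF_x|}\cdot |\mcF_x| \cdot \frac{1}{2}\exp\Bigl(-C\frac{T_k}{n}\Bigr)
	=
	\frac{|[n]\setminus[\floor{n/2}]|}{72}\exp\Bigl(-C\frac{T_k}{n}\Bigr)
	\geq
	\frac{n}{144}\exp\Bigl(-C\frac{T_k}{n}\Bigr),
\end{equation*}
where the last inequality uses $|[n]\setminus[\floor{n/2}]| = n - \floor{n/2} \geq n/2$. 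This is exactly \eqref{eqn:reduction_to_local_misclassification_probability}. The "for all sufficiently large $n$" clause is inherited from \Cref{lem:local_clustering_error_lower_bound} and from \Cref{lem:latent_state_subset_sizes}'s guarantee that $\mcS_1^0(f)$ and $\mcS_1^{+1}(f)$ are nonempty (so that $\mcP_x$ and $\mcF_x$ are well-defined); the uniformity over $\varepsilon$, $\tilde{p}^*$, $a^*$, $\mcC$, and $k$ is automatic since none of the combinatorial constructions ($\mcF$, $\mcF_x$, $\mcP_x$) depends on these, and \Cref{lem:local_clustering_error_lower_bound} is itself uniform in them.
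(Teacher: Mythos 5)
Your overall architecture---restrict to $\mcF_x$, reindex by $\mcP_x$, apply \Cref{lem:local_clustering_error_lower_bound} to each pair $(f,\mcP_xf)$, and sum (your final arithmetic is correct)---matches the paper. The gap is in the step you yourself flag as the main obstacle: removing the alignment permutations. First, $\hat{\gamma}^*_f$ is \emph{not} a fixed permutation depending only on the instance label $f$: by \eqref{eqn:misclassdef} it is an argmin involving the estimator output $\hat{f}_{\mcC,k}$, hence random and data-dependent. Second, your structural claim that $\hat{\gamma}^*_f$ and $\hat{\gamma}^*_{\mcP_xf}$ differ by at most a transposition is unsupported and false in general: the two alignment objectives differ pointwise by at most one context (since $f$ and $\mcP_xf$ differ only at $x$), but minimizers of two integer-valued functions within $1$ of each other need not be related at all---precisely when the estimator's output is far from every relabeling of the truth (the regime a worst-case lower bound must cover), ties can flip and the argmins for $f$ and $\mcP_xf$ can be arbitrary, unrelated permutations. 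Consequently the ``common modified estimator'' $(\hat{\gamma}^*_f)^{-1}\circ\hat{f}_{\mcC,k}$ cannot serve in both terms of \eqref{eqn:application_of_permutation}: it is tailored to $f$, while the second probability involves $\hat{\gamma}^*_{\mcP_xf}$, so \Cref{lem:local_clustering_error_lower_bound}---which requires the \emph{same} algorithm under both measures---does not apply. Your fallback (a restricted-alphabet version of that lemma) does not address the real difficulty either: the problem is not the size of the label set but that the event $\{\hat{f}_{\mcC,k}(x)\neq\hat{\gamma}^*_f\circ f(x)\}$ depends on the estimator's output at \emph{all} contexts through $\hat{\gamma}^*_f$, not only at $x$.

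For comparison, the paper closes this gap with a localization/infimum device. Since the claim is for an arbitrary $\mcC$, it suffices to lower bound the infimum over all estimation algorithms; one then passes to the localized algorithm $\mcC^x$ that outputs $\hat{f}_{\mcC,k}(x)$ at $x$ and the true label $f(y)$ at every $y\neq x$ (well defined simultaneously for $f$ and $\mcP_xf$, which agree off $x$). For $\mcC^x$ the optimal alignment is forced to be the identity once $n$ is large: any other permutation misclassifies at least $\min_{s}|f^{-1}(s)|\geq 2$ contexts, versus at most one for the identity. Hence the aligned misclassification at $x$ reduces to the plain event $\{\hat{f}_{\mcC^x,k}(x)\neq f(x)\}$ (the paper also replaces the indicators in \eqref{eqn:application_of_permutation} by a local misclassification rate averaged over the argmin set to handle non-uniqueness of $\hat{\gamma}^*_f$), and since localized algorithms form a subset of all algorithms, the infimum only decreases, after which \Cref{lem:local_clustering_error_lower_bound} applies uniformly. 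You would need either this device or a correct substitute for your transposition claim to complete the proof.
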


In \Cref{sec:proof_of_decoding_function_regret_lower_bound}, we then combine \Cref{lem:relation_between_regret_and_clustering_error} with \Cref{prop:lower_bound_misclassification_rate} to bound $R_1$.

\begin{proposition}
	\label{prop:decoding_function_regret_lower_bound}

	Let $\kappa$ be as in \eqref{eqn:tilde_p_distance_from_uniform} and $C>0$ as in \Cref{lem:local_clustering_error_lower_bound}.
	For all sufficiently large $n$,
	\begin{equation}
		R_1
		\geq
        \frac{\varepsilon_1(1-\kappa)}{288C}
        \frac{1-e^{-C(T_K/n)}}{1+\frac{S+2}{n}}
        \biggl(1-\frac{1}{H}\biggr)
        n
		.
		\label{eqn:decoding_function_regret_lower_bound}
	\end{equation}
	If additionally $T_K=\Omega(n)$ and $\varepsilon_1\not\rightarrow 0$, then $R_1=\Omega(n)$.
\end{proposition}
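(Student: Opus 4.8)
The plan is to chain together the two main ingredients already prepared: Lemma \ref{lem:relation_between_regret_and_clustering_error}, which lower bounds $\reg_{K,1}(\mcL;\Phi)$ by the expected misclassification count of the estimator $\hat f_{\pi_k}$ derived from $\mcL$'s policies, and Proposition \ref{prop:lower_bound_misclassification_rate}, which lower bounds the \emph{average} (over $f\in\mcF$) expected misclassification count of \emph{any} latent state estimator. First I would observe that $\hat f_{\pi_k}$, as defined in \eqref{eqn:learning_alg_to_clustering_alg}, is a legitimate instance of a latent state estimation algorithm $\mcC$ in the sense of \Cref{sec:preliminaries_latent_state_estimation}: given the history $\mcD_k$, the learning algorithm $\mcL$ produces $\pi_k=\mcL_k(\mcD_k,r)$, and $\hat f_{\pi_k}$ is a measurable function of $\pi_k$ and hence of $\mcD_k$. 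So Proposition \ref{prop:lower_bound_misclassification_rate} applies with $\mcC_k(\mcD_k,r)\eqdef \hat f_{\pi_k}$, giving, for each $k$ and all large $n$,
\[
\frac{1}{|\mcF|}\sum_{f\in\mcF}\mbbE_{f}\bigl[|\mcE(\hat f_{\pi_k};f,\mcS_1)|\bigr]\;\geq\;\frac{n}{144}\exp\!\Bigl(-C\frac{T_k}{n}\Bigr).
\]

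Next I would average Lemma \ref{lem:relation_between_regret_and_clustering_error} over $f\in\mcF$ and sum over $k$. Since the bound there holds for every $f\in\mcF$ (with fixed $a^*\in\mcA$ and $\Phi=\Phi(\varepsilon,\tilde p^*,f,a^*)$) and the constant $\varepsilon_1(1-\kappa)(H-1)/(2(n+S+2))$ does not depend on $f$ or $k$, we get
\[
\frac{1}{|\mcF|}\sum_{f\in\mcF}\reg_{K,1}(\mcL;\Phi(\varepsilon,\tilde p^*,f,a^*))\;\geq\;\frac{\varepsilon_1(1-\kappa)(H-1)}{2(n+S+2)}\sum_{k=1}^{K}\frac{1}{|\mcF|}\sum_{f\in\mcF}\mbbE_{f}\bigl[|\mcE(\hat f_{\pi_k};f,\mcS_1)|\bigr].
\]
Plugging in the Proposition \ref{prop:lower_bound_misclassification_rate} bound and pulling out the factor $n/144$ leaves $\sum_{k=1}^{K}\exp(-CT_k/n)$ to estimate. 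Since $T_k=kH$, this is a geometric sum with ratio $e^{-CH/n}$; bounding it from below by comparison with an integral (or by $1-e^{-CH/n}\le CH/n$ in the denominator after summing the geometric series) yields $\sum_{k=1}^{K}e^{-CkH/n}\ge \frac{1-e^{-CKH/n}}{e^{CH/n}-1}\ge \frac{n}{CH}\bigl(1-e^{-CT_K/n}\bigr)e^{-CH/n}\ge \frac{n}{CH}\bigl(1-e^{-CT_K/n}\bigr)(1-CH/n)$ — and the last factor contributes the $(1-1/H)$-type term once one notes $CH/n\le$ a small constant for large $n$ (using $SH=O(n)$, hence $H=O(n)$, so $CH/n$ is eventually bounded; in fact $1-CH/n\ge 1-1/H$ holds once $n\ge CH^2$, which is implied by $SH=O(n)$ together with $H\ge 2$, or can be absorbed by adjusting $C$). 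Combining, and recalling $R_1=\frac{1}{|\Lambda^*(\varepsilon)|}\sum_{\Phi}\reg_{K,1}(\mcL;\Phi)$ where the average over $\Phi\in\Lambda^*(\varepsilon)$ decomposes as an average over $f\in\mcF$ and $a^*\in\mcA$ — and the Lemma \ref{lem:relation_between_regret_and_clustering_error} bound holds for every fixed $a^*$ — the average over $a^*$ is harmless and we obtain
\[
R_1\;\ge\;\frac{\varepsilon_1(1-\kappa)(H-1)}{2(n+S+2)}\cdot\frac{n}{144}\cdot\frac{1}{CH}\bigl(1-e^{-CT_K/n}\bigr)(1-CH/n)\;=\;\frac{\varepsilon_1(1-\kappa)}{288C}\cdot\frac{1-e^{-CT_K/n}}{1+\frac{S+2}{n}}\Bigl(1-\frac1H\Bigr)n,
\]
after simplifying $(H-1)/H=1-1/H$ and $n/(n+S+2)=1/(1+(S+2)/n)$, which is exactly \eqref{eqn:decoding_function_regret_lower_bound}.

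Finally, for the asymptotic claim: if $T_K=\Omega(n)$ then $CT_K/n\ge c'$ for some constant $c'>0$, so $1-e^{-CT_K/n}\ge 1-e^{-c'}>0$ is bounded below by a constant; $1-1/H\ge 1/2$ since $H\ge 2$; $1/(1+(S+2)/n)\to 1$ since $S=o(n)$; $\kappa<1$ is a fixed constant strictly below $1$ (from \eqref{eqn:tilde_p_distance_from_uniform}, as $\epsilon_{\max}<\epsilon$ and $\eta>1$ are fixed); and $\varepsilon_1\not\to 0$ means $\varepsilon_1$ is bounded below by a positive constant along the relevant sequence. Hence every factor except $n$ is bounded below by a positive constant, giving $R_1=\Omega(n)$. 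The main obstacle I anticipate is purely bookkeeping: verifying that the average over the finite class $\Lambda^*(\varepsilon)$ factors cleanly as an average over $\mcF$ followed by an average over $\mcA$ (so that Lemma \ref{lem:relation_between_regret_and_clustering_error}, stated for fixed $a^*$, can be invoked uniformly), and controlling the $\sum_k e^{-CT_k/n}$ geometric sum so that the precise closed form in \eqref{eqn:decoding_function_regret_lower_bound} — with its $(1-1/H)$ factor rather than a cruder constant — comes out exactly; getting that $H$-dependence right likely forces the integral-comparison estimate above rather than a lazier one, and may require the assumption $SH=O(n)$ (equivalently $H=O(n)$) to ensure $CH/n$ is eventually small.
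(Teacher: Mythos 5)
Your route is the paper's: write $R_1$ as an average over $a^*\in\mcA$ and $f\in\mcF$, apply \Cref{lem:relation_between_regret_and_clustering_error} for each fixed pair, observe that for fixed $a^*$ the map $\mcD_k\mapsto\hat f_{\pi_k}$ is a legitimate latent state estimation algorithm so that \Cref{prop:lower_bound_misclassification_rate} applies, and then bound the geometric series $\sum_k e^{-CT_k/n}$ via $(1-e^{-x})^{-1}\geq x^{-1}$; your treatment of the $\Omega(n)$ claim is also the intended one (the paper just makes the bound on $1-\kappa$ quantitative, $1-\kappa\geq 2/(1+\eta)$, rather than arguing it is ``a fixed constant'').

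The gap is in the final bookkeeping. With $T_k=kH$, your (correct) evaluation of the series gives $\sum_{k=1}^K e^{-CT_k/n}\geq \frac{n}{CH}\bigl(1-e^{-CT_K/n}\bigr)e^{-CH/n}$, so what your chain actually proves is \eqref{eqn:decoding_function_regret_lower_bound} multiplied by the extra factor $e^{-CH/n}$ (or $1-CH/n$ in your cruder form). Your concluding display then drops this factor: the ``$=$'' there is false, since the left-hand side equals the stated right-hand side times $(1-CH/n)<1$, and $R_1\geq$ (your left-hand side) does not give $R_1\geq$ (the stated bound). The proposed repairs do not work either. The $(1-1/H)$ in the target is not the geometric-sum correction at all; it is exactly $(H-1)/H$, coming from the $H-1$ in \Cref{lem:relation_between_regret_and_clustering_error} against the $1/(CH)$ from the series, and your bound already contains that factor, so there is no slack left to absorb $e^{-CH/n}$. ``Adjusting $C$'' is not available, because $C$ is the fixed constant of \Cref{lem:local_clustering_error_lower_bound} appearing in the statement. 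And $SH=O(n)$ is a hypothesis of the upper bound \Cref{thm:BUCBbd}, not of the lower-bound construction; even if it were available, together with $H\geq 2$ it gives $n\gtrsim H$, not $n\geq CH^2$.

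The clean way to get the exact constant is to use that $\hat f_{\pi_k}$ is built from $\pi_k=\mcL_k(\mcD_k,r)$, i.e.\ from only $k-1$ episodes of data, so the change-of-measure argument behind \Cref{lem:local_clustering_error_lower_bound} yields the misclassification bound with $T_{k-1}$ in the exponent; summing $e^{-CT_{k-1}/n}$ over $k\in[K]$ gives $\frac{1-e^{-CT_K/n}}{1-e^{-CH/n}}\geq \frac{n}{CH}\bigl(1-e^{-CT_K/n}\bigr)$ with no spurious $e^{-CH/n}$, which is how the paper's constant in \eqref{eqn:decoding_function_regret_lower_bound} arises; everything else in your argument then goes through as you wrote it.
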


Finally---adopt the assumptions of \Cref{thm:regret_lower_bound} and take $\varepsilon_1 = (\epsilon \wedge 1/8)/2$ and $\varepsilon_0 = \varepsilon_1\wedge \sqrt{A/T_K}(1-2\varepsilon_1-1/A)/\sqrt{32}$.
Since $\epsilon$ is independent of $n$ by \Cref{prop:kernel_allows_identifiability}, these values satisfy the conditions of \Cref{prop:transition_regret_lower_bound,prop:decoding_function_regret_lower_bound}.
By the reduction after \eqref{eqn:bound_supremum_by_average}, these imply \Cref{thm:regret_lower_bound}.

\section{Algorithm description}
\label{sec:algorithm}

In this section we present a two--phase algorithm for the learning problem outlined in the \cref{sec:models}.

Recall \eqref{eqn:historydef}.
For $k\in\mbbN_+$, $a\in[A]$, and $\mcX_1,\mcX_2\subseteq[n]$, define the count
\begin{equation}
	\label{eqn:countsdef}
	\hat{N}_{k}(\mcX_1,a,\mcX_2)\eqdef \sum_{l=1}^{k-1}\sum_{h=1}^H\mbb{1}\{x_{l,h}\in\mcX_1,a_{l,h}=a,x_{l,h+1}\in\mcX_2\}
	,
\end{equation}
of the number of transitions from $\mcX_1$ to $\mcX_2$ while choosing action $a$ prior to episode $k$.
Define also
\begin{equation}
	\label{eqn:countsdef_overload}
	\hat{N}_{k}(\mcX_1,a)\eqdef \hat{N}_{k}(\mcX_1,a,[n])\quad \mathrm{and} \quad
	\hat{N}_{k}^{\mathrm{to}}(\mcX_2)\eqdef \sum_{a}\hat{N}_{k}([n],a,\mcX_2).
\end{equation}
These count the number of times $a$ was chosen while observing $\mcX_1$, and the number of transitions to $\mcX_2$ under any action, respectively.

\cref{alg:BUCBVIouter,alg:BUCBVI,alg:UpdateLatentStates} contain the basic components of the algorithm, which we discuss next.

\subsection{Outer loop}

\cref{alg:BUCBVIouter} describes the outer loop of the algorithm.

Phase 1 always selects the uniform policy.
It lasts until at most $\Theta^{\clust}$ transitions have been observed.
These observations are used to estimate the decoding function with the UpdateLatentStates subroutine in \Cref{alg:UpdateLatentStates}.
Phase 2 selects the optimal policy for the Q-function computed by the ComputeQValues subroutine in \Cref{alg:BUCBVI}.

\begin{algorithm}
	\textbf{Global Input Variables:} Threshold $\Theta^{\clust}$, reward function $r$, parameters $n,S,A,H$
	\begin{algorithmic}[1]
		\State  \textbf{Start of phase 1:}
		\For{$k>0:T_k\leq \Theta^{\clust}$}
		\State $\pi_{k,h}(\, \cdot \mid  x) \gets \text{Unif}([A])$ for $x\in[n]$, $h\in[H]$ \label{ln:uniform_policy}
		\State Let $x_{k,1}\sim \mu$, take $a_{k,h}\sim \pi_{k,h}(\, \cdot \mid  x_{k,h})$ and observe $x_{k,h+1}\sim P(\, \cdot \mid  x_{k,h},a_{k,h})$ for $h\in[H]$
		\EndFor
		\State Update $\hat{f}^{\alg}\gets$ UpdateLatentStates($\mcD_{k+1}$)
		\Comment{Estimate latent states using \Cref{alg:UpdateLatentStates}}
		\State  \textbf{Start of phase 2:}
		\For{$k>0:T_k>\Theta^{\clust}$}
		\State $\bar{Q}^{\pi_k}\gets \text{ComputeQValues}(\mcD_{k},\hat{f}^{\alg})$ \Comment{Compute Q-values using \cref{alg:BUCBVI}}
		\State $\pi_{k,h}(\, \cdot \mid  x)\gets \textnormal{Unif}\bigl( \argmax_{a}\bar{Q}^{\pi_k}_{h}(x,a)\bigr)$ for $x\in[n]$, $h\in[H]$
		\State Let $x_{k,1}\sim \mu$, take $a_{k,h}\sim \pi_{k,h}(\, \cdot \mid  x_{k,h})$ and observe $x_{k,h+1}\sim P(\, \cdot \mid  x_{k,h},a_{k,h})$ for $h\in[H]$
		\EndFor
	\end{algorithmic}
	\caption{Pseudocode for the outer loop.
	}\label{alg:BUCBVIouter}
\end{algorithm}

\subsection{Latent state estimation}

\Cref{alg:UpdateLatentStates} describes the UpdateLatentStates routine that \Cref{alg:BUCBVIouter} uses to estimate the decoding function.
It consists of two steps.
First, an initial estimate is obtained using a spectral method.
This is subsequently refined in an iterative improvement step.
These steps correspond to \cite[Algorithm 1]{Jedra:2022} and \cite[Algorithm 2]{Jedra:2022}, respectively.
\Cref{sec:clusteringalg} reproduces their pseudocode in our notation.

\begin{algorithm}
	\begin{algorithmic}[1]
		\Procedure{UpdateLatentStates}{$\mcD_{k+1}$}
		\State $\hat{f}_0\gets \text{SpectralClustering}(\mcD_{k+1})$ \Comment{Approximate  latent states using \cite[Algorithm 1]{Jedra:2022}}
		\State $\hat{f}\gets \text{ImproveClusters}(\mcD_{k+1},\hat{f}_0)$ \Comment{Improved latent states using \cite[Algorithm 2]{Jedra:2022}}
		\State \Return Estimate decoding function $\hat{f}$
		\EndProcedure
	\end{algorithmic}
	\caption{Pseudocode for estimating the decoding function.}
	\label{alg:UpdateLatentStates}
\end{algorithm}

\subsection{Q-values routine}
\label{sec:Q-values_routine}

\Cref{alg:BUCBVI} computes upper confidence bounds $\bar{Q}^{\pi_k}$ on the optimal Q-function $Q^{\pi^*}$, defined in \eqref{eqn:valuefuncdef}.
It uses an estimate transition kernel to estimate $Q^{\pi^*}$ with the Bellman-type recursion \eqref{eqn:bellman_equation} initialized from $Q^{\pi^*}_H(x,a)=r_H(x,a)$.
An exploration bonus is added to the reward $r_h$ at each step to compensate for uncertainty about the expected next--state value.
This ensures that the resulting Q-value estimate is indeed an upper confidence bound.
The difference with \cite{UCRLVI} is that we use modified transition kernel estimates and exploration bonuses that exploit the structure of \glspl{BMDP}.

\begin{algorithm}
	\begin{algorithmic}[1]

		\Procedure{ComputeQValues}{$\mathcal{D}_k,\hat{f}$}\Comment{Backward value iteration with bonuses}

		\State Calculate estimate transition kernel $\hat{P}_k(y\mid x,a)$ from \eqref{eqn:estimator_P_def}
		\State Calculate exploration bonus $\hat{b}_k(s,a)$ from \eqref{eqn:bonus_def}
		\State Initialize $\bar{Q}^{\pi_k}_{H}(x,a)\gets r_H(x,a)$, $\bar{V}^{\pi_k}_{H}(x)\gets \max_{a}\bar{Q}^{\pi_k}_{H}(x,a)$ for $x\in[n],a\in[A]$
		\For{$h=H-1,H-2,\ldots$,1}
		\State $\bar{Q}^{\pi_k}_{h}(x,a)\gets 1\wedge (r_h(x,a)+\hat{b}_{k}(\hat{f}(x),a)) + \sum_{y}\hat{P}_k(y\mid x,a)\bar{V}_{h+1}^{\pi_k}(y)$ for $x\in[n]$, $a\in[A]$
		\State $\bar{V}^{\pi_k}_{h}(x)\gets \max_{a}\bar{Q}^{\pi_k}_{h}(x,a)$ for $x\in[n]$
		\EndFor
		\State \Return Q-values $\bar{Q}^{\pi_k}$
		\EndProcedure
	\end{algorithmic}
	\caption{Pseudocode for computing the optimistic Q-function.}
	\label{alg:BUCBVI}
\end{algorithm}

Regarding the transition kernel, we first compute maximum likelihood estimators for the latent state transition kernel and emission probabilities on the learned latent states.
These are given by
\begin{equation}
	\label{eqn:estimators_p_q_def}
	\hat{p}_k(s'\mid s,a)
	\eqdef
	\frac{\hat{N}_k((\hat{f}^{\alg})^{-1}(s),a,(\hat{f}^{\alg})^{-1}(s'))}{1\vee \hat{N}_k((\hat{f}^{\alg})^{-1}(s),a)}
	,
	\quad
	\mathrm{and}
	\quad
	\hat{q}_k(y\mid s)
	\eqdef
	\frac{\mathbbm{1}\{y\in (\hat{f}^{\alg})^{-1}(s)\}\hat{N}^{\mathrm{to}}_k(y)}{1\vee \hat{N}^{\mathrm{to}}_k((\hat{f}^{\alg})^{-1}(s))}
	,
\end{equation}
corresponding to the fraction of visits to $(s,a)$ that transition to $s'$ and the fraction of transitions to $s$ for which $y$ is observed, respectively.
The \gls{BMDP} transition kernel is then estimated by
\begin{equation}
	\label{eqn:estimator_P_def}
	\hat{P}_k(y\mid x,a)
	\eqdef
	\hat{p}_k(\hat{f}^{\alg}(y)\mid \hat{f}^{\alg}(x),a)\hat{q}_k(y\mid \hat{f}^{\alg}(y)).
\end{equation}

Regarding the exploration bonuses, we define these as follows:
\begin{equation}
	\label{eqn:bonus_def}
	\hat{b}_k(s,a)
	\eqdef
	\sqrt{\frac{H^2\ln(2HSAT_k^2)}{1\vee \hat{N}_k((\hat{f}^{\alg})^{-1}(s),a)}}
	+
	\sum_{s'}\hat{p}_k(s'\mid s,a)\sqrt{\frac{H^2\ln(2HST_k^2)}{1\vee \hat{N}_k^{\mathrm{to}}((\hat{f}^{\alg})^{-1}(s'))}}
	.
\end{equation}
Compared to \cite{UCRLVI}, the bonuses $\hat{b}_k$ depend on the visitation counts of estimated latent states rather than individual contexts.
They therefore decrease more quickly over time, leading to less exploration and lower regret.
We will show that the estimated value function $\bar{V}^{\pi_k}_{h+1}$ nonetheless is an upper confidence bound for the optimal value function $V^{\pi^*}_{h+1}$.
This is possible because data aggregation within latent states reduces the variance of the estimate transition kernel used to compute $\bar{V}^{\pi_k}_{h+1}$.

Note, however, that this relies on the latent structure being inferred correctly by \Cref{alg:UpdateLatentStates}.
In general, uncertainty about this structure cannot fully be captured by bonuses assigned to latent state--action pairs \cite{DBLP:journals/corr/abs-2110-04652}.
Instead, we focus on the class of $\eta$-reachable and $\mcI$-identifiable \glspl{BMDP} for which the latent structure can be learned efficiently through random exploration.
This ensures the exploration time required for latent state estimation is bounded, enabling our two--phase approach.

\section{Regret analysis of \texorpdfstring{\Cref{alg:BUCBVIouter}}{the algorithm}}
\label{sec:regret_analysis}

This section proves \Cref{thm:BUCBbd}.
\Cref{sec:proof_outline} outlines the proof and \Cref{sec:proof_of_BUCBVI_regret_bound} contains the details.

Throughout this section, we fix $\Phi$ and write $\mbbP=\mbbP_{\Phi,\mcL}$ and $\mbbE=\mbbE_{\Phi,\mcL}$ with $\mcL$ given by \Cref{alg:BUCBVIouter}.

\subsection{Proof outline}
\label{sec:proof_outline}

The regret analysis here follows a strategy similar to those of other optimism-based algorithms \cite{10.1145/1102351.1102459, szita2010model, munos2014bandits, UCRLVI}.
It, however, features modifications to exploit the latent structure of the \gls{BMDP}.

The proof starts by decomposing the regret in \eqref{eqn:regret_def} into different terms.
These terms are then bounded separately.
The regret bound in \eqref{eqn:regret_bound} is an aggregate of these bounds.
Most of the proof is devoted to bounding the regret during this portion, which consists of three parts:

\begin{enumerate}
	\item[Part 1.]
	      We define a sequence of events $\Omega_k$ on which the estimate transition kernel $\hat{P}_k$ from \eqref{eqn:estimator_P_def} is close to the true kernel $P$.
	      Notably, $\Omega_k$ is chosen such that the estimated decoding function $\hat{f}^{\alg}$ equals its true value $f$ on $\Omega_k$.

	      We also bound the probability that an $\Omega_k$ fails.
	      In particular, by refining proofs of \cite{Jedra:2022}, we show that for any $c>0$, $\mbbP[\hat{f}^{\alg}\neq f]=O(1/n^c)$ if \Cref{alg:BUCBVIouter} explores sufficiently.
	      This gives the $\tilde{O}(T_K/n^c)$ term in \eqref{eqn:regret_bound}.
		  
		  Note that Part 1 is the only place we use \Cref{def:reachability,def:identifiability}.
\end{enumerate}

In Parts 2 and 3 next we consider the regret when $\cap_{k\in[K]}\Omega_k$ is true.

\begin{enumerate}
	\item[Part 2.]
	      We show that on $\Omega_k$, $\bar{V}^{\pi_k}_{h}(x)$ in \Cref{alg:BUCBVI} is an upper confidence bound for the optimal value $V^{\pi^*}_h(x)$, i.e., that on $\Omega_k$, $V^{\pi^*}_h(x)\leq \bar{V}^{\pi_k}_{h}(x)$ for all $h\in[H]$ and $x\in[n]$.

	      This implies that the regret per episode is bounded on $\Omega_k$ by $\bar{V}^{\pi_k}_1(x_{k,1})-V^{\pi_k}_1(x_{k,1})$.

	\item[Part 3.]
	      We decompose
		  $\bar{V}^{\pi_k}_1(x_{k,1})-V^{\pi_k}_1(x_{k,1})$ into three terms.
		  Of these, the most challenging is
		  \begin{equation}
			\sum_{h=1}^{H-1}\mbbE\biggl[
				\sum_{y}
				(
					\hat{P}_k(y\mid x_{k,h},a_{k,h})-P(y\mid x_{k,h},a_{k,h})
				)
				(
					\bar{V}^{\pi_k}_{h+1}(y)-V^{\pi^*}_{h+1}(y)
				)
				\mid \mcD_k, x_{k,1}
			\biggr]
			.
			\label{eqn:outline_simulation_lemma_bound}
		  \end{equation}
	      This is due to the dependence between $\hat{P}_k$ and $\bar{V}^{\pi_k}_{h+1}$.
	      Following \cite{UCRLVI}, we relate this term to the exploration bonus $\hat{b}_k$.
		  The resulting corrections contribute the $\tilde{O}(nSH^3)$ term to \eqref{eqn:regret_bound}.

	      Afterwards, the remaining sum of exploration bonuses is bounded using a pigeonhole principle argument.
	      Because the bonuses are independent of $n$ and depend only on the latent state of $x_{k,h}$, this yields the $n$-independent $\tilde{O}(H\sqrt{SAT_K})$ term in \eqref{eqn:regret_bound}.
\end{enumerate}

\subsection{Proof of \texorpdfstring{\Cref{thm:BUCBbd}}{the regret upper bound}}
\label{sec:proof_of_BUCBVI_regret_bound}

\Cref{alg:BUCBVIouter} accumulates regret in two phases.
We bound the regret separately for each phase.

The first phase includes all episodes $k\in[K]$ for which $T_k=kH\leq \Theta^{\clust}$.
Note that, for any given episode, $V^{\pi^*}_1(x_{k,1})-V_1^{\pi_k}(x_{k,1})\leq H$ because the rewards are in $[0,1]$.
It follows that the expected regret during the first phase is bounded by $\Theta^{\clust}$, by construction of \Cref{alg:BUCBVIouter}.

The second phase is treated through decomposition as follows:
\begin{align}
	\text{Reg}_K(\mcL;\Phi)
	\leq
	\,
	&
	\Theta^{\clust}
	+
	H\sum_{\substack{k\in[K]:\\T_k>\Theta^{\clust}}}\mbbP[(\Omega_k)^{\mathrm{c}}]
	+
	\sum_{\substack{k\in[K]:\\T_k>\Theta^{\clust}}}\mbbE\bigl[\bigl( V^{\pi^*}_1(x_{k,1})-\bar{V}^{\pi_k}_1(x_{k,1})\bigr) \mbb{1}\{\Omega_k\}\bigr]
	\nonumber
	\\
	&
	+
	\sum_{\substack{k\in[K]:\\T_k>\Theta^{\clust}}}\mbbE\bigl[\bigl( \bar{V}^{\pi_k}_1(x_{k,1})-V_1^{\pi_k}(x_{k,1})\bigr) \mbb{1}\{\Omega_k\}\bigr]
	\nonumber
	\\
	\defeq
	\,
	&
	\Theta^{\clust}
	+
	R_1
	+
	R_2
	+
	R_3
	.
	\label{eqn:regret_decomp}
\end{align}
For $k\in[K]$, we will consider
\begin{align}
	&
	\Omega_k
	\eqdef
	\Omega^f\cap \Omega_k^1\cap\Omega_k^2\cap\Omega_k^3\cap\Omega_k^4
	\quad
	\mathrm{with}
	\quad
	\Omega^{f}
	\eqdef
	\{ \hat{f}^{\alg}=f \}
	,
	\nonumber
	\\
	&
	\Omega_k^{1}
	\eqdef
	\bigcap_{s,h}\biggl\{
	|\langle \hat{q}_k(\,\cdot\mid s)-q(\,\cdot\mid s), V^{\pi^*}_{h+1}\rangle|
	\leq
	\sqrt{\frac{H^2 \ln(2HST_k^2)}{1\vee \hat{N}^{\mathrm{to}}_k(f^{-1}(s))}}
	\biggr\}
	,
	\nonumber
	\\
	&
	\Omega_k^{2}
	\eqdef
	\bigcap_{s,a,h}
	\biggl\{
	\biggl|\sum_{s'} (\hat{p}_k(s'\mid s,a)-p(s'\mid s,a)) \langle q(\,\cdot\mid s'), V^{\pi^*}_{h+1}\rangle \biggr|
	\leq
	\sqrt{\frac{H^2 \ln(2HSAT_k^2)}{1\vee \hat{N}_k(f^{-1}(s),a)}}
	\biggr\}
	,
	\label{eqn:highprobeventcomponentdef}
	\\
	&
	\Omega_k^{3}
	\eqdef
	\bigcap_{s,y}
	\biggl\{
	|\hat{q}_k(y\mid s)-q(y\mid s)|
	\leq
	\sqrt{\frac{q(y\mid s) \ln(2nST_k^2)}{1\vee\hat{N}^{\mathrm{to}}_k(f^{-1}(s))}}
	+
	\frac{2\ln(2nST_k^2)}{3(1\vee\hat{N}^{\mathrm{to}}_k(f^{-1}(s)))}
	\biggr\}
	,
	\nonumber
	\\
	&
	\Omega_k^{4}
	\eqdef
	\bigcap_{s,s',a}
	\biggl\{
	|\hat{p}_k(s'\mid s,a)-p(s'\mid s,a)|
	\leq
	\sqrt{\frac{p(s'\mid s,a) \ln(2S^2AT_k^2)}{1\vee\hat{N}_k(f^{-1}(s),a)}}
	+
	\frac{2\ln(2S^2AT_k^2)}{3(1\vee\hat{N}_k(f^{-1}(s),a))}
	\biggr\}
	.
	\nonumber
\end{align}
Recall here that $\hat{f}^{\alg}$ denotes the output of \Cref{alg:UpdateLatentStates} using at least $\Theta^{\clust}$ transitions gathered under the uniform policy.
Recall also $\hat{p}_k$ and $\hat{q}_k$ defined in \eqref{eqn:estimators_p_q_def}, and $\hat{N}_k$ defined in \eqref{eqn:countsdef} and \eqref{eqn:countsdef_overload}.

Next, we bound $R_1$, $R_2$, and $R_3$ in \eqref{eqn:regret_decomp}.
This corresponds to Parts 1, 2, and 3 from \Cref{sec:proof_outline}, respectively.

\subsubsection*{Part 1: Bounding the failure probability}

We begin by bounding $\mbbP[\Omega^f]$ by refining \cite{Jedra:2022}'s analysis.

The first step is to address the gap between our notion of $\mcI$-identifiability and the corresponding condition in \cite{Jedra:2022}.
As explained in \Cref{sec:comparison_to_other_identifiability_conditions}, \cite{Jedra:2022} requires an asymptotic bound on $\lim_{\zeta\rightarrow\infty}\min_x I_{\zeta}(x;\Phi)$, rather than $I_{\eta}(x;\Phi)$.
In \Cref{sec:equivalence_of_identifiability_assumptions}, we show that for $\eta$-reachable \glspl{BMDP}, these two conditions are equivalent:
\begin{lemma}
	\label{lem:equivalence-of-assumptions}
	Assume that $\Phi$ is $\eta$-reachable and $\mcI$-identifiable.
	Then there exists a constant $\mcI'>0$ that is independent of $n$ such that for all sufficiently large $n$, $\lim_{\zeta\rightarrow\infty}\min_{x}I_{\zeta}(x;\Phi)\geq \mcI'$.
\end{lemma}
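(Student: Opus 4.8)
The plan is to unfold \eqref{eqn:I_eta_def}--\eqref{eqn:p_in_def} and reduce to a one-variable convexity estimate. As $\inf_{c\in[1/\zeta,\zeta]}I_{\tilde s}(x;c,\Phi,\pi_U)$ is nonincreasing in $\zeta$ and the admissible range of $c$ in \eqref{eqn:p_in_def} is a bounded interval $(0,c_{\max}(x,\tilde s)]$, one has $\lim_{\zeta\to\infty}\min_x I_\zeta(x;\Phi)=\min_x\min_{\tilde s\neq f(x)}\inf_{0<c\leq c_{\max}(x,\tilde s)}I_{\tilde s}(x;c,\Phi,\pi_U)$, so it suffices to bound $\inf_c I_{\tilde s}(x;c,\Phi,\pi_U)$ below, for each fixed $x$ and $\tilde s\neq f(x)$, by a constant depending only on $\eta$ and $\mcI$. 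First I would record the consequences of \Cref{def:reachability}: $q(x\mid f(x))=\Theta(S/n)$, $p(s'\mid s,a)=\Theta(1/S)$, and---since $H\geq 2$ and under $\pi_U$ the latent-state process is itself a Markov chain with kernel $p$---$\omega_{\Psi,\pi_U}(s,a)\geq c_\eta/(SA)$, all with constants depending only on $\eta$. Crucially, $\omega_{\Psi,\pi_U}$ does not depend on $c$: the parameter $c$ only rescales one emission probability, and the latent-state process ignores emissions. Next I would rewrite \eqref{eqn:Ijxcdef} as $I_{\tilde s}(x;c,\Phi,\pi_U)=A(c)+B(c)$ with $A(c)=n\sum_{s,a}\omega_{\Psi,\pi_U}(s,a)\,\kl(c\alpha_{s,a},\beta_{s,a})$ a sum of binary KL divergences, $\alpha_{s,a}=q(x\mid f(x))p(\tilde s\mid s,a)$, $\beta_{s,a}=q(x\mid f(x))p(f(x)\mid s,a)$ (so $\alpha_{s,a}/\beta_{s,a}\in[1/\eta,\eta]$ by reachability), and $B(c)=\beta c$ linear with $\beta=nq(x\mid f(x))\sum_a\omega_{\Psi,\pi_U}(f(x),a)\KL(p(\,\cdot\mid\tilde s,a),p(\,\cdot\mid f(x),a))\geq 0$.

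Since $\kl(\,\cdot\,,q)$ is convex and $c\mapsto c\alpha_{s,a}$ is affine, $A$ and hence $I_{\tilde s}(x;\,\cdot\,,\Phi,\pi_U)$ is convex. Inspecting the log-factors of $A'$ shows $A'(1/\eta)\leq 0$ and $A'(\eta)\geq 0$ (each factor $\log\tfrac{\eta\alpha_{s,a}(1-\beta_{s,a})}{\beta_{s,a}(1-\eta\alpha_{s,a})}\geq 0$ because $\eta\alpha_{s,a}\geq\beta_{s,a}$), so the minimizer $c^\circ$ of $I_{\tilde s}(x;\,\cdot\,,\Phi,\pi_U)$ over $(0,c_{\max}]$ lies in $(0,\eta]$, and $I_{\tilde s}(x;c,\Phi,\pi_U)\geq I_{\tilde s}(x;\eta,\Phi,\pi_U)$ for all $c\geq\eta$. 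If $c^\circ\in[1/\eta,\eta]$ we are immediately done, since then $\inf_c I_{\tilde s}(x;c,\Phi,\pi_U)=I_{\tilde s}(x;c^\circ,\Phi,\pi_U)\geq I_\eta(x;\Phi)\geq\mcI$ by \Cref{def:identifiability}. The remaining case is $c^\circ\in(0,1/\eta)$, i.e.\ $\beta>-A'(1/\eta)$. Here the first-order condition $A'(c^\circ)=-\beta$, inserted into $I_{\tilde s}(x;c^\circ,\Phi,\pi_U)=A(c^\circ)-c^\circ A'(c^\circ)$ and using the identity $\kl(p,q)-p\,\partial_p\kl(p,q)=\log\tfrac{1-p}{1-q}$, yields $I_{\tilde s}(x;c^\circ,\Phi,\pi_U)=n\sum_{s,a}\omega_{\Psi,\pi_U}(s,a)\log\tfrac{1-c^\circ\alpha_{s,a}}{1-\beta_{s,a}}$. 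Since $c^\circ<1/\eta$ forces $c^\circ\alpha_{s,a}<\beta_{s,a}$, each summand is at least $\beta_{s,a}-c^\circ\alpha_{s,a}\geq 0$, whence $I_{\tilde s}(x;c^\circ,\Phi,\pi_U)\geq\bar\beta-c^\circ\bar\alpha$, where $\bar\beta=n\sum_{s,a}\omega_{\Psi,\pi_U}(s,a)\beta_{s,a}$ and $\bar\alpha=n\sum_{s,a}\omega_{\Psi,\pi_U}(s,a)\alpha_{s,a}$ are $\Theta(1)$ with constants depending only on $\eta$.

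To close this case I would argue by dichotomy on $D:=\bar\beta-\bar\alpha/\eta\geq 0$. If $D\geq\mcI/2$, then $I_{\tilde s}(x;c^\circ,\Phi,\pi_U)\geq\bar\beta-c^\circ\bar\alpha>\bar\beta-\bar\alpha/\eta=D\geq\mcI/2$. If $D<\mcI/2$, the elementary bound $\kl(p,q)\leq(q-p)/(1-q)$ gives $A(1/\eta)\leq(1+o(1))D$, so \Cref{def:identifiability} forces the emission-side contribution $\beta=\eta B(1/\eta)=\eta(I_{\tilde s}(x;1/\eta,\Phi,\pi_U)-A(1/\eta))\geq\eta\mcI/2-o(1)$ to be bounded below; a short analysis of $A'(c^\circ)=-\beta$ (using $A'(c)\geq\bar\alpha\log(c/\eta)-o(1)$ for $c\leq 1/\eta$ when $\beta$ is large, and a first-order expansion of $A'$ at $1/\eta$ otherwise) then shows $c^\circ$ is bounded away from $1/\eta$ by a constant depending only on $\eta$ and $\mcI$, so that $I_{\tilde s}(x;c^\circ,\Phi,\pi_U)\geq(1-c^\circ\eta)\bar\beta$ is bounded below by such a constant. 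Taking the minimum of all constants produced, and then the minimum over the finitely many $x$ and $\tilde s$, gives $\mcI'$.

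I expect the main obstacle to be precisely this last dichotomy---extracting an $n$-uniform positive lower bound when the minimizer has been pushed below $1/\eta$ by a large $\beta$. The difficulty is that $\beta$ and $c^\circ$ are linked only exponentially through the first-order condition, so the slack between $c^\circ$ and $1/\eta$ must be recovered from $\mcI$-identifiability evaluated at the single point $c=1/\eta$, and every constant has to be tracked through the $\eta$-reachability estimates for $q$, $p$, and $\omega_{\Psi,\pi_U}$. By comparison, the convexity reduction, the localization $c^\circ\in(0,\eta]$, and the case $c^\circ\in[1/\eta,\eta]$ are routine.
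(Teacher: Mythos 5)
Your route is genuinely different from the paper's. The paper never analyzes $c\mapsto I_{\tilde s}(x;c,\Phi,\pi_U)$ by convexity; it sandwiches this quantity between explicit functions $\psi_1,\psi_2$ of the transition ratios (imported from \cite{Jedra:2022}, Propositions 11--12, via \Cref{lem:information_quantity_bounds}) and then runs a contrapositive $\liminf$ argument: $c$ outside a window $(\alpha/\eta,\eta/\alpha)$ is excluded by a direct bound on $\psi_1$, on the window $\psi_1$ and $\psi_2$ are comparable, and the scaling structure of $\psi_2$ (its ``out'' part scales like $1\vee c^2$, its ``in'' part is a least-squares problem in $c$ whose minimizer lies in $[1/\eta,\eta]$) pulls the infimum back into $[1/\eta,\eta]$. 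Most of your direct analysis is sound: under $\pi_U$ the weights $\omega_{\Psi,\pi_U}$ are indeed $c$-independent, $A$ is convex with $A'(1/\eta)\leq 0\leq A'(\eta)$, the case $c^\circ\in[1/\eta,\eta]$ is immediate, the identity $I(c^\circ)=n\sum_{s,a}\omega\ln\frac{1-c^\circ\alpha_{s,a}}{1-\beta_{s,a}}\geq\bar\beta-c^\circ\bar\alpha$ is correct, and the branch $D\geq\mcI/2$ closes.

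The branch $D<\mcI/2$, however, is not established by the ingredients you name, and this is a genuine gap. The bound $-A'(c)\leq\bar\alpha\ln(\eta/c)+o(1)$ only yields $1-\eta c^\circ\geq 1-\eta^2e^{-\beta/\bar\alpha}-o(1)$, which is a positive constant only when $\beta>2\bar\alpha\ln\eta$; the inequality $\beta\geq\eta\mcI/2-o(1)$ does not imply this, since $\bar\alpha$ is merely bounded by an $\eta$-dependent constant while $\mcI$ may be small. The fallback ``first-order expansion of $A'$ at $1/\eta$'' is vacuous without an upper bound on $-A'(1/\eta)$, which your sketch never supplies; with the natural bound $-A'(1/\eta)\leq\eta D+o(1)$ (from $\ln u\leq u-1$) and your estimate $\beta\geq\eta(\mcI-(1+o(1))D)$, the gap $\beta+A'(1/\eta)$ you can certify is only $\eta(\mcI-2D)-o(1)$, which degenerates as $D\uparrow\mcI/2$, so no uniform constant $\delta$ with $1-\eta c^\circ\geq\delta$ follows on this branch as stated. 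The branch can be repaired in two ways: (i) use the exact leading-order identity $\eta A(1/\eta)-A'(1/\eta)=\eta D+O(1/n)$, which upgrades the gap to $\beta+A'(1/\eta)\geq\eta(\mcI-D)-o(1)\geq\eta\mcI/2-o(1)$, and then the integral bound $A'(1/\eta)-A'(c^\circ)\leq\bar\alpha\ln\frac{1}{\eta c^\circ}+o(1)$ (from $A''(c)\leq\bar\alpha/c+O(1/n)$) bounds $\eta c^\circ$ away from $1$; or (ii) bypass the localization of $c^\circ$ entirely: since $A\geq 0$ gives $I(c)\geq\beta c$, you have $I(c^\circ)\geq\max\{\bar\beta-c^\circ\bar\alpha,\ \beta c^\circ\}\geq\beta\bar\beta/(\beta+\bar\alpha)$, and with $\beta\geq\eta\mcI/2-o(1)$, $\bar\beta$ bounded below and $\bar\alpha$ bounded above by $\eta$-dependent constants this finishes the case at once. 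With either repair your argument becomes a complete, self-contained alternative to the paper's proof.
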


Using \Cref{lem:equivalence-of-assumptions}, we can deduce the following corollary of \cite[Theorem 3]{Jedra:2022}:
\begin{corollary}
	If $S$, $A$ and $p$ are independent of $n$, $\mu=\mathrm{Unif}([n])$, and $\Phi$  is $\eta$-reachable and $\mcI$-identifiable, then there exist constants $C,\mcI'>0$ independent of $n$ such that: if $\Theta^{\clust}=\omega(n)$, then \Cref{alg:UpdateLatentStates} misclassifies at most $O(n\exp(-C (\Theta^{\clust}/n)\mcI'))$ contexts with probability $1-o(1)$.
	
	Consequently, under these assumptions, $\mbbP[\Omega^f]=1-o(1)$ if $\Theta^{\clust}=\omega(n\ln n)$.
\end{corollary}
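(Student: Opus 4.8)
The plan is to derive this corollary directly from \cite[Theorem 3]{Jedra:2022}, whose estimator is exactly the one implemented by \Cref{alg:UpdateLatentStates} (its two steps are \cite[Algorithms 1 and 2]{Jedra:2022}). Since Phase~1 of \Cref{alg:BUCBVIouter} always plays the uniform policy $\pi_U$, the input to \Cref{alg:UpdateLatentStates} is a single trajectory generated under $\pi_U$ of length $\Theta^{\clust}$, up to the at-most-one-episode rounding caused by completing whole episodes; since the misclassification bound of \cite[Theorem 3]{Jedra:2022} is nonincreasing in the trajectory length, this rounding only costs constants and I treat the budget as exactly $\Theta^{\clust}$. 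First I would verify the structural hypotheses of \cite[Theorem 3]{Jedra:2022}: that $S$, $A$, $p$ are independent of $n$ and $\mu=\mathrm{Unif}([n])$ are assumed, while the reachability-type conditions used by \cite{Jedra:2022} — bounded ratios of latent transition probabilities, of within-cluster emission probabilities, and of cluster sizes — are precisely \Cref{def:reachability}. In particular, $\eta$-reachability forces the visitation probabilities $\omega_{\Phi,\pi_U}(s,a)$ to be bounded below by a positive constant independent of $n$, which is the coverage property underlying \cite{Jedra:2022}'s analysis.

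The one nontrivial ingredient is the identifiability hypothesis. \cite[Theorem 3]{Jedra:2022} is phrased via a lower bound on $\lim_{\zeta\to\infty}\min_x I_\zeta(x;\Phi)$, whereas \Cref{def:identifiability} controls $\min_x I_\eta(x;\Phi)$. This discrepancy is exactly what \Cref{lem:equivalence-of-assumptions} resolves: under $\eta$-reachability it produces a constant $\mcI'>0$, independent of $n$, with $\lim_{\zeta\to\infty}\min_x I_\zeta(x;\Phi)\geq \mcI'$ for all sufficiently large $n$. With this in hand, \cite[Theorem 3]{Jedra:2022} applies and yields a constant $C>0$ depending only on $\eta$, $S$, $A$, hence independent of $n$, such that whenever $\Theta^{\clust}=\omega(n)$ the output of \Cref{alg:UpdateLatentStates} misclassifies at most $O\bigl(n\exp(-C(\Theta^{\clust}/n)\mcI')\bigr)$ contexts with probability $1-o(1)$. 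This is the first assertion.

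For the second assertion, assume $\Theta^{\clust}=\omega(n\ln n)$, so $\Theta^{\clust}/n=\omega(\ln n)$ and hence $n\exp(-C\mcI'\Theta^{\clust}/n)\to 0$. Because the number of misclassified contexts is a nonnegative integer, for all sufficiently large $n$ the event that \Cref{alg:UpdateLatentStates} misclassifies at most $O(n\exp(-C(\Theta^{\clust}/n)\mcI'))$ of them coincides with the event that it misclassifies none, i.e.\ that $\hat f^{\alg}$ agrees with $f$ up to a permutation of the latent states; by the labeling convention built into the definition $\Omega^f=\{\hat f^{\alg}=f\}$, this is exactly $\Omega^f$. Hence $\mbbP[\Omega^f]\geq 1-o(1)$, and being a probability, $\mbbP[\Omega^f]=1-o(1)$.

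The deduction is short; the real work lies in its inputs. I expect the main obstacle to be \Cref{lem:equivalence-of-assumptions} — converting our $I_\eta$-based identifiability into the $\zeta\to\infty$ condition of \cite{Jedra:2022} — together with a careful check that \cite[Theorem 3]{Jedra:2022} transfers verbatim: that it covers the fixed-$(S,A,p)$, $n\to\infty$ regime, that its exploration requirement is met by $\Theta^{\clust}=\omega(n)$ transitions under $\pi_U$, and that its misclassification count is the quantity $|\mcE(\hat f;f)|$ of \eqref{eqn:misclassdef}, so that ``at most $o(1)$ misclassifications'' genuinely forces $\Omega^f$.
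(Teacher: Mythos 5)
Your proposal is correct and matches the paper's own route: the corollary is deduced exactly as you describe, by using \Cref{lem:equivalence-of-assumptions} to convert $\mcI$-identifiability into the $\lim_{\zeta\to\infty}\min_x I_\zeta(x;\Phi)\geq\mcI'$ condition required by \cite[Theorem 3]{Jedra:2022}, and then applying that theorem to the uniform-policy data from Phase~1, whose estimator is precisely \Cref{alg:UpdateLatentStates}. The second assertion is likewise obtained as you argue, since once $n\exp(-C\mcI'\Theta^{\clust}/n)\to 0$ the integer-valued misclassification count must be zero, which the paper identifies with $\Omega^f$.
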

This result is, however, not enough for our purposes.
We therefore refine it in three ways in \Cref{sec:proof_of_clustering_bound_short}.
First, we make the $1-o(1)$ probability bound explicit by inspecting the proofs in \cite{Jedra:2022} and keeping track of which events imply $\Omega^f$.
Second, we sharpen the resulting probability bound from $1-O(1/n)$ by showing that it can be $1-O(1/n^c)$ for any $c>0$.
Third, we allow $S$, $A$, and $p$ to depend on $n$ and account for this in the asymptotics.
This gives the following result:

\begin{proposition}
	\label{prop:clustering_bound_short}
	Assume $\Phi$ is $\eta$-reachable and $\mcI$-identifiable, and that $\Theta^{\clust}=\omega(nA(S^3\vee \ln n))$, $\mu=\mathrm{Unif}([n])$, $A=n^{O(1)}$, and $H=O(n)$.
	Then, for any $c>0$, $\mbbP[\Omega^f] = 1 - O(1/n^c)$.
\end{proposition}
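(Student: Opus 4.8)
The plan is to re-examine the two stages of \Cref{alg:UpdateLatentStates} --- SpectralClustering followed by ImproveClusters, i.e.\ \cite[Algorithms 1 and 2]{Jedra:2022} --- and to carry through their analyses while tracking (i) an explicit probability bound, (ii) its boosting to $1-O(1/n^c)$ for arbitrary $c>0$, and (iii) the dependence on $S$, $A$, and $p$, which \cite{Jedra:2022} held fixed. Throughout, the hypotheses $A=n^{O(1)}$ and $H=O(n)$ ensure that every logarithmic factor arising in the analysis is $\Theta(\ln n)$, so that the assumption $\Theta^{\clust}=\omega(nA(S^3\vee\ln n))$ dominates all the error terms below. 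Moreover, \Cref{lem:equivalence-of-assumptions} converts $\eta$-reachability and $\mcI$-identifiability into the existence of a constant $\mcI'>0$, independent of $n$, with $\lim_{\zeta\to\infty}\min_x I_\zeta(x;\Phi)\geq\mcI'$ for all large $n$ --- the quantity governing the error exponent of the improvement step.

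The first step is a high-probability lower bound on the visitation counts. During phase 1 the uniform policy $\pi_U$ is used, and $\eta$-reachability (\Cref{def:reachability}) forces the marginals $\mbbP_{\Phi,\pi_U}[x_h=x]$ to be $\Theta(1/n)$ uniformly in $x$ and $h$: by items (i) and (iii), $p(s'\mid s,a)=\Theta(1/S)$ and $|f^{-1}(s)|=\Theta(n/S)$, so by item (ii), $q(x\mid f(x))=\Theta(S/n)$ and hence $P(y\mid x,a)=\Theta(1/n)$. Consequently $\mbbE[\hat N_k(\{x\},a)]=\Theta(\Theta^{\clust}/(nA))$, $\mbbE[\hat N_k^{\mathrm{to}}(\{x\})]=\Theta(\Theta^{\clust}/n)$, and $\mbbE[\hat N_k(f^{-1}(s),a)]=\Theta(\Theta^{\clust}/(SA))$. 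Since $\Theta^{\clust}/(nA)=\omega(\ln n)$, a Chernoff bound together with a union bound over the at most $nA$ relevant pairs shows that, with probability $1-O(1/n^c)$, every such count is at least half its mean. Denote this event $\Omega^{\mathrm{count}}$.

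On $\Omega^{\mathrm{count}}$ we run the spectral analysis behind \cite[Algorithm 1]{Jedra:2022}: the empirical transition statistics concentrate around their means by a matrix-Bernstein bound, and $\eta$-reachability bounds the relevant singular-value gap below by a quantity polynomial in $1/S$; a Davis--Kahan argument then yields $|\mcE(\hat f_0;f)|\leq\delta n$ with probability $1-O(1/n^c)$, where $\delta$ can be driven to an arbitrarily small constant once $\Theta^{\clust}/(nA)$ exceeds a sufficiently large multiple of $S^3$ --- this is where the $S^3$ factor enters. Conditioning further on that event, the improvement-step analysis of \cite[Algorithm 2]{Jedra:2022} applies: for each context $x$, the likelihood comparison misclassifies $x$ only if an empirical log-likelihood deviates from its mean, which --- taking $\delta$ small enough that the bias from residual errors in $\hat f_0$ is absorbed --- has probability $\exp(-\Omega(\mcI'\Theta^{\clust}/n))$ by a Chernoff bound, hence at most $n^{-(c+2)}$ for all large $n$ since $\Theta^{\clust}/n=\omega(\ln n)$; a union bound over the $n$ contexts and the $O(\ln n)$ improvement rounds gives $\hat f^{\alg}=f$ with probability $1-O(1/n^c)$. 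Combining, $\mbbP[(\Omega^f)^{\mathrm{c}}]\leq\mbbP[(\Omega^{\mathrm{count}})^{\mathrm{c}}]+\mbbP[|\mcE(\hat f_0;f)|>\delta n\mid\Omega^{\mathrm{count}}]+\mbbP[\text{improvement fails}\mid\cdots]=O(1/n^c)$, as claimed.

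The main obstacle is the spectral step. Because \cite{Jedra:2022} treat $S$, $A$, and $p$ as constants, their misclassification guarantee is only an $o(1)$ statement with implicit dependence on these parameters; extracting the explicit polynomial rate --- in particular verifying that $\Theta^{\clust}=\omega(nAS^3)$ suffices and that the singular-value gap degrades no worse than polynomially in $1/S$ under $\eta$-reachability --- requires carefully redoing their matrix-concentration and perturbation estimates. The boosting from $1-O(1/n)$ to $1-O(1/n^c)$, by contrast, is routine: it only inflates the relevant exponents by constant factors, which $\Theta^{\clust}=\omega(n\ln n)$ readily supplies, and similarly the relaxation allowing $S$, $A$, $p$ to depend on $n$ amounts to bookkeeping once the polynomial rates are in hand.
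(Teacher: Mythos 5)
The route you propose is genuinely different from the paper's (which re-inspects the proofs of \cite[Theorems 2 and 3]{Jedra:2022} and only modifies them where needed), and as sketched it has two real gaps. The first is the improvement step. You treat each round of \cite[Algorithm 2]{Jedra:2022} as a per-context large-deviation event with exponent $\Omega(\mcI'\,\Theta^{\clust}/n)$ and union-bound over the $n$ contexts and $O(\ln n)$ rounds. But the log-likelihood the algorithm maximizes is built from the estimated kernels $\hat{p}_{\ell},\hat{p}^{\mathrm{bwd}}_{\ell}$, which are computed from the current, partially wrong clustering $\hat{f}_{\ell}$; the misclassification event for a context at round $\ell+1$ is therefore not a deviation of an i.i.d.\ sum under the true parameters but carries bias terms proportional to the current error rate (the terms $E_2^{(\ell)}$ and $U^{(\ell)}$ controlled in \Cref{lem:bound_on_one_step_improvement_error_rate}), and the events at different rounds are strongly dependent, so they cannot simply be union-bounded. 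The actual argument conditions on a fixed family of concentration events and proves a deterministic geometric contraction $|\mcE^{(\ell+1)}_{\mcH}|\lesssim (nA/T_{k^*})\,|\mcE^{(\ell)}_{\mcH}|$ over $\lceil\ln n\rceil$ rounds; exact recovery in one round with a clean exponent $\mcI'$ is not available. Your sketch also omits the ill-behaved contexts (those violating the empirical information condition (H1) or the count condition (H2)); their number must be handled separately, and it is there --- via $n\exp(-C\mcI'T_{k^*}/n)<1$ --- that $\mcI'$ and $T_{k^*}/n=\omega(\ln n)$ actually enter.

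The second gap is in the spectral step and the probability boosting. Guaranteeing only $|\mcE(\hat{f}_0;f)|\le\delta n$ for an arbitrarily small constant $\delta$ is insufficient in the regime the proposition must cover, where $S$ may grow with $n$: the improvement analysis needs the initial error fraction to be $O(\sqrt{nSA/T_{k^*}})=o(1/S)$ (so that $S\,|\mcE^{(0)}|/n=o(1)$), which no constant $\delta$ achieves. Moreover, matrix Bernstein plus Davis--Kahan loses a $\sqrt{\ln n}$ factor relative to the required bound $\max_a\lVert\hat{\mbf{N}}_a-\mbf{N}_a\rVert\lesssim\sqrt{T_{k^*}/(nA)}$; removing it in this sparse regime needs the trimming/discrepancy argument of \cite[Lemma 18]{Jedra:2022}, and with the log factor your error fraction would require $\Theta^{\clust}=\omega(nAS^3\ln n)$, strictly stronger than the assumed $\omega(nA(S^3\vee\ln n))$. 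Finally, the boosting to $1-O(1/n^c)$ is not routine bookkeeping: in the paper it is obtained precisely by enlarging the discrepancy constant inside that lemma (taking $\xi_2=6+c+\ln A/\ln n$, using $A=n^{O(1)}$); with only an $O(1/n)$ spectral failure probability the stated conclusion for arbitrary $c$ does not follow. Your count-concentration event and the use of \Cref{lem:equivalence-of-assumptions} are fine and correspond to the events $\Omega_6,\Omega_7$ in the paper's proof.
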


Given $\Omega^f$, it is straightforward to show that $\Omega_k^1,\ldots,\Omega_k^4$ hold with probability $1-O(1/T_k)$ using Hoeffding and Bernstein's inequalities.
Together with \Cref{prop:clustering_bound_short}, this allows us to prove the following bound on $R_1$ in \cref{sec:proofstep1}:

\begin{proposition}
	\label{prop:highprobevent}
	Adopt the assumptions of \Cref{prop:clustering_bound_short}, and let $k\in[K]$ be such that $T_k=kH>\Theta^{\clust}$.
	Then, for any $c>0$,
	\begin{equation}
		R_1
		=
		H\sum_{k\in[K]:T_k>\Theta^{\clust}}\mbbP[(\Omega_k)^\mathrm{c}]
		=
		O\biggl( \ln K+ \frac{T_K}{n^c} \biggr)
		=
		\tilde{O}\biggl( 1 + \frac{T_K}{n^c} \biggr).
	\end{equation}
\end{proposition}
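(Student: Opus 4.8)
The plan is to bound $R_1 = H\sum_{k:T_k>\Theta^{\clust}}\mbbP[(\Omega_k)^{\mathrm{c}}]$ by splitting the failure event via a union bound:
\[
	\mbbP[(\Omega_k)^{\mathrm{c}}]
	\leq
	\mbbP[(\Omega^f)^{\mathrm{c}}]
	+
	\sum_{j=1}^4 \mbbP[(\Omega_k^j)^{\mathrm{c}}\cap \Omega^f]
	.
\]
The first term is handled directly by \Cref{prop:clustering_bound_short}: under the stated assumptions, $\mbbP[(\Omega^f)^{\mathrm{c}}] = O(1/n^c)$ for any $c>0$, and since $\hat{f}^{\alg}$ is computed once (from the phase-1 data) and does not change with $k$, this contributes $H\sum_{k:T_k>\Theta^{\clust}}O(1/n^c) = O(KH/n^c) = O(T_K/n^c)$ after relabeling $c$. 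The remaining work is to show that each of the four events $\Omega_k^1,\dots,\Omega_k^4$ holds with probability $1-O(1/T_k)$ once we condition on $\Omega^f$, i.e. on $\hat f^{\alg}=f$.

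First I would observe that on $\Omega^f$, all the empirical counts $\hat N_k((\hat f^{\alg})^{-1}(s),a,\cdot)$ coincide with the counts defined using the true decoding function $f$, so the estimators $\hat p_k,\hat q_k$ in \eqref{eqn:estimators_p_q_def} become genuine maximum-likelihood estimators for the true latent transition and emission kernels, built from i.i.d.-like increments indexed by visits to $(f^{-1}(s),a)$. Then each of the four bounds is a standard concentration statement conditioned on the visitation counts:
\begin{itemize}
	\item For $\Omega_k^1$ and $\Omega_k^2$, the quantities $\langle \hat q_k(\cdot\mid s)-q(\cdot\mid s),V^{\pi^*}_{h+1}\rangle$ and $\sum_{s'}(\hat p_k-p)\langle q(\cdot\mid s'),V^{\pi^*}_{h+1}\rangle$ are, conditionally on the counts, averages of bounded (by $\|V^{\pi^*}_{h+1}\|_\infty\leq H$) martingale differences, so Hoeffding–Azuma gives deviation bound $\sqrt{H^2\ln(\cdot)/(1\vee \hat N)}$ with the $\ln$ chosen so that a union bound over $s$, $a$, $h$, and over the (at most $T_k$) possible values of the count yields failure probability $O(1/T_k^2)$ per $k$, hence $O(1/T_k)$ after summing over the count range. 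The precise choice $2HST_k^2$ (resp.\ $2HSAT_k^2$) inside the logarithm is exactly what makes this union bound close.
	\item For $\Omega_k^3$ and $\Omega_k^4$, the deviations $|\hat q_k(y\mid s)-q(y\mid s)|$ and $|\hat p_k(s'\mid s,a)-p(s'\mid s,a)|$ are handled by Bernstein's inequality, which produces the variance term $\sqrt{q(y\mid s)\ln(\cdot)/(1\vee \hat N)}$ (resp.\ with $p$) plus the Bernstein correction $\tfrac{2\ln(\cdot)}{3(1\vee \hat N)}$; again a union bound over the indices, over $k$, and over the count values gives $O(1/T_k)$, with $2nST_k^2$ and $2S^2AT_k^2$ being the matching log arguments.
\end{itemize}
I would phrase all of this as "condition on $\mcD_k$ and on $\Omega^f$, apply the relevant inequality to the centered increments, then union bound," deferring the bookkeeping to \Cref{sec:proofstep1}.

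Combining, $\mbbP[(\Omega_k)^{\mathrm{c}}] = O(1/n^c) + O(1/T_k)$, so
\[
	R_1
	=
	H\!\!\sum_{k:T_k>\Theta^{\clust}}\!\!\mbbP[(\Omega_k)^{\mathrm{c}}]
	=
	O\!\Bigl(\frac{T_K}{n^c}\Bigr)
	+
	H\!\!\sum_{k=1}^{K}\frac{O(1)}{kH}
	=
	O\!\Bigl(\frac{T_K}{n^c} + \ln K\Bigr)
	=
	\tilde O\!\Bigl(1+\frac{T_K}{n^c}\Bigr),
\]
where I used $T_k=kH$ so $1/T_k=1/(kH)$ and $\sum_{k\le K}1/k = O(\ln K)$, and absorbed $\ln K$ into the $\tilde O(1)$. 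The main obstacle is not conceptual but organizational: one must verify that conditioning on $\Omega^f$ genuinely decouples the decoding estimate from the subsequent concentration (so that $\hat p_k,\hat q_k$ really are the true-kernel MLEs on that event, with no residual dependence creeping in through the adaptive policy), and then carefully match each explicit logarithmic constant in \eqref{eqn:highprobeventcomponentdef} to a union bound over indices, episodes, and count values that sums to $O(1/T_k)$ — the $T_k^2$ inside each log is precisely what buys the extra factor needed for summability in $k$.
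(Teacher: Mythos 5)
Your proposal matches the paper's proof: the same decomposition $\mbbP[(\Omega_k)^{\mathrm{c}}]\leq \mbbP[(\Omega^f)^{\mathrm{c}}]+\sum_{j}\mbbP[(\Omega_k^j)^{\mathrm{c}}\cap\Omega^f]$, with \Cref{prop:clustering_bound_short} giving the $O(T_K/n^c)$ term, Hoeffding/Bernstein applied conditionally on the visitation counts (with a union bound over indices and over the at most $T_k$ count values, which is exactly what the $T_k^2$ in each logarithm is for) giving $O(1/T_k)$ per episode, and $\sum_k 1/T_k=O((\ln K)/H)$ closing the bound. The decoupling issue you flag is resolved in the paper exactly as you sketch: oracle estimators $\hat p_k^f,\hat q_k^f$ built from the true $f$ are shown to concentrate unconditionally, and they coincide with $\hat p_k,\hat q_k$ on $\Omega^f$.
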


\subsubsection*{Part 2: Proving optimism}

We next show that $R_2\leq 0$ by proving that on $\Omega_k$, $\bar{V}^{\pi_k}_h(x)-V^{\pi^*}_h(x)\geq 0$ for all $h$ and $x$.

Note from \Cref{alg:BUCBVI} that $\bar{V}^{\pi_k}_h(x)$ equals the optimal value function of a \gls{BMDP} $\bar{\Phi}_k$ with kernel $\hat{P}_k$ and reward function $1\wedge (r_h(x,a)+\mathbbm{1}\{h < H\}\hat{b}_k(\hat{f}^{\alg}(x),a))$ by construction.
Here, $\hat{b}_k$ is given by \eqref{eqn:bonus_def}.
This characterization gives the following decomposition, proven in \Cref{sec:proof_of_optimism_simulation}:
\begin{lemma}
	\label{lem:optimism_simulation}
	For $x\in[n]$, $h\in[H-1]$, and $k\in[K]$,
	\begin{equation}
		\label{eqn:optimism_simulation}
		\begin{split}
			& 
			\bar{V}^{\pi_k}_h(x) - V^{\pi^*}_{h}(x)
			\\
			&
			\geq
			\sum_{h'=h}^{H-1}\mbbE_{\bar{\Phi}_k,\pi^*}\bigl[
			\hat{b}_k(\hat{f}^{\alg}(x_{h'}),a_{h'})+\bigl\langle \hat{P}_k(\,\cdot\mid  x_{h'},a_{h'})-P(\,\cdot\mid  x_{h'},a_{h'}), V^{\pi^*}_{h'+1} \bigr\rangle
			\mid
			x_h = x
			\bigr]
			.
		\end{split}
	\end{equation}
	Here, the expectation is with respect to the sequence $(x_1,a_1,\ldots,a_H,x_{H+1})$.
\end{lemma}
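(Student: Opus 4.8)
The plan is to prove \eqref{eqn:optimism_simulation} by backward induction on $h$, using that, by construction of \Cref{alg:BUCBVI}, $\bar{V}^{\pi_k}$ is the optimal value function of the auxiliary \gls{BMDP} $\bar{\Phi}_k$ with transition kernel $\hat{P}_k$ from \eqref{eqn:estimator_P_def} and reward $\bar{r}_h(x,a)\eqdef 1\wedge\bigl(r_h(x,a)+\mathbbm{1}\{h<H\}\hat{b}_k(\hat{f}^{\alg}(x),a)\bigr)$, with $\hat{b}_k$ as in \eqref{eqn:bonus_def}. Write $\mcB_h(x)$ for the right-hand side of \eqref{eqn:optimism_simulation}, with the convention $\mcB_h\equiv 0$ for $h\geq H$ (empty sum). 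Conditioning on $(x_h,a_h)$ under $\mbbP_{\bar{\Phi}_k,\pi^*}$, and using that the trajectory after step $h$ is then governed by the kernel $\hat{P}_k$ and the memoryless policy $\pi^*$, yields the one-step recursion
\[
	\mcB_h(x)
	=
	\sum_a \pi^*_h(a\mid x)\Bigl(
		\hat{b}_k(\hat{f}^{\alg}(x),a)
		+
		\langle \hat{P}_k(\cdot\mid x,a)-P(\cdot\mid x,a), V^{\pi^*}_{h+1}\rangle
		+
		\sum_y \hat{P}_k(y\mid x,a)\,\mcB_{h+1}(y)
	\Bigr)
	.
\]

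For the base case, note from \Cref{alg:BUCBVI} and $V^{\pi^*}_{H+1}=0$ that $\bar{V}^{\pi_k}_H(x)=\max_a r_H(x,a)=V^{\pi^*}_H(x)$, so $\bar{V}^{\pi_k}_H-V^{\pi^*}_H=0=\mcB_H$; this anchors the induction and yields exactly the stated range $h\in[H-1]$. For the inductive step, suppose $\bar{V}^{\pi_k}_{h+1}(y)-V^{\pi^*}_{h+1}(y)\geq \mcB_{h+1}(y)$ for all $y$. Bound the maximum in $\bar{V}^{\pi_k}_h(x)=\max_a\bar{Q}^{\pi_k}_h(x,a)$ below by the $\pi^*$-weighted average $\sum_a\pi^*_h(a\mid x)\bar{Q}^{\pi_k}_h(x,a)$, and use that $\pi^*_h(\cdot\mid x)$ is supported on $\argmax_a Q^{\pi^*}_h(x,a)$ so that $V^{\pi^*}_h(x)=\sum_a\pi^*_h(a\mid x)\bigl(r_h(x,a)+\langle P(\cdot\mid x,a),V^{\pi^*}_{h+1}\rangle\bigr)$. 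Subtracting, and splitting $\langle\hat{P}_k(\cdot\mid x,a),\bar{V}^{\pi_k}_{h+1}\rangle-\langle P(\cdot\mid x,a),V^{\pi^*}_{h+1}\rangle$ into $\langle\hat{P}_k(\cdot\mid x,a),\bar{V}^{\pi_k}_{h+1}-V^{\pi^*}_{h+1}\rangle+\langle\hat{P}_k(\cdot\mid x,a)-P(\cdot\mid x,a),V^{\pi^*}_{h+1}\rangle$, the induction hypothesis together with $\hat{P}_k(\cdot\mid x,a)\geq 0$ reduces the step to the pointwise reward inequality $\bar{r}_h(x,a)-r_h(x,a)\geq\hat{b}_k(\hat{f}^{\alg}(x),a)$ for every $a\in\supp\pi^*_h(\cdot\mid x)$; matching the resulting terms against the recursion for $\mcB_h$ then closes the induction.

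I do not anticipate a deep obstacle here: the argument is essentially the standard ``simulation/value-difference lemma'' telescoping combined with the trivial optimality bound $\bar{V}^{\pi_k}_h\geq{}$(value of $\pi^*$ evaluated in $\bar{\Phi}_k$). The two points that will require care are (i) the Markov-property manipulation used to derive the recursion for $\mcB_h$, where memorylessness of $\pi^*$ and the fact that $\bar{\Phi}_k$'s post-$h$ dynamics depend on $(x_h,a_h)$ only through $\hat{P}_k$ are both used; and (ii) the reward inequality $\bar{r}_h(x,a)-r_h(x,a)\geq\hat{b}_k(\hat{f}^{\alg}(x),a)$, which holds with equality whenever $r_h(x,a)+\hat{b}_k(\hat{f}^{\alg}(x),a)\leq 1$ but otherwise must be argued around the $1\wedge$ truncation in $\bar{r}_h$. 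Point (ii) is the only genuinely non-mechanical step; everything else is routine once $\bar{\Phi}_k$ is identified.
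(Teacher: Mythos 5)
Your overall strategy is the same as the paper's: identify $\bar V^{\pi_k}$ as the optimal value function of the auxiliary BMDP $\bar\Phi_k$ with kernel $\hat P_k$ and bonus-augmented reward, lower bound it by the value of $\pi^*$ evaluated in $\bar\Phi_k$, and then expand the difference $\bar V^{\pi^*}-V^{\pi^*}$ along trajectories of $(\bar\Phi_k,\pi^*)$. The paper performs this expansion by citing a generic value-difference result (\Cref{lem:simulation_lemma}) with $\Phi^{(0)}=\bar\Phi_k$, $\Phi^{(1)}=\Phi$, $i=0$, rather than redoing the backward induction by hand as you do; that difference is cosmetic.

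The genuine problem is the step you yourself label non-mechanical and defer. Your induction closes only if $\bar r_h(x,a)-r_h(x,a)\geq \hat b_k(\hat f^{\alg}(x),a)$ pointwise, and this inequality is false exactly when the cap binds: if $r_h(x,a)+\hat b_k(\hat f^{\alg}(x),a)>1$ then $\bar r_h(x,a)-r_h(x,a)=1-r_h(x,a)<\hat b_k(\hat f^{\alg}(x),a)$. So there is nothing to ``argue around'' at the pointwise level — the reduction you propose cannot be completed, and the truncated case needs a different mechanism entirely. (Note the contrast with UCBVI, where the cap is applied to the $Q$-value at level $H$, so the truncated case is disposed of by the trivial bound $\bar Q_h=H\geq Q^{\pi^*}_h$; capping the reward at $1$ admits no such fallback.) To close your induction you would have to either restrict to the regime where the cap never binds along the relevant trajectories, or prove the statement with $\hat b_k$ replaced by the truncated increment $1\wedge\bigl(r_{h'}+\hat b_k\bigr)-r_{h'}$, which is exactly what \Cref{lem:simulation_lemma} delivers (as an equality). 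In fairness, the paper's proof makes the same silent identification when passing from that equality, whose reward term is $\bar r_{h'}-r_{h'}$, to \eqref{eqn:optimism_simulation}, whose term is $\hat b_k$; so you have not overlooked an ingredient the paper supplies. But as a self-contained argument your proposal is incomplete at precisely this point, and the fix requires modifying the statement or adding a hypothesis, not a routine estimate.
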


\Cref{lem:optimism_simulation} implies that $\bar{V}^{\pi_k}_1(x)- V^{\pi^*}_1(x)\geq0$ whenever $|\langle \hat{P}_k(x,a)-P(x,a), V^{\pi^*} _{h+1}\rangle| \leq \allowbreak \hat{b}_k(\hat{f}^{\alg}(x), a)$ for all $x$, $a$, and $h$.
The following lemma verifies this condition on $\Omega_k$.
It is proven in \Cref{sec:proof_of_estimation_error_conf_bd}.

\begin{lemma}
	\label{lem:esterrorconfbd}
	For $x\in[n]$, $a\in[A]$, $h\in[H]$, and $k\in[K]$, on $\Omega_k$,
	\begin{equation}
		\label{eqn:estimation_error_confidence_bound}
		|
		\langle
		\hat{P}_k(\,\cdot\mid x,a)-P(\,\cdot\mid x,a),V^{\pi^*}_{h+1}
		\rangle
		|
		\leq
		\hat{b}_k(f(x),a)
		.
	\end{equation}
\end{lemma}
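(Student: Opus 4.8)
\emph{Proof plan.} The plan is to work throughout on the event $\Omega_k$, on which $\Omega^f$ holds so that $\hat{f}^{\alg}=f$. This lets me rewrite the estimated kernel from \eqref{eqn:estimator_P_def} as $\hat{P}_k(y\mid x,a)=\hat{p}_k(f(y)\mid f(x),a)\,\hat{q}_k(y\mid f(y))$, which carries the \emph{same} latent-state labels as $P(y\mid x,a)=p(f(y)\mid f(x),a)\,q(y\mid f(y))$. Writing $s=f(x)$ and $s'=f(y)$, the first step is the elementary product split
\[
  \hat{p}_k(s'\mid s,a)\hat{q}_k(y\mid s')-p(s'\mid s,a)q(y\mid s')
  =\hat{p}_k(s'\mid s,a)\bigl(\hat{q}_k(y\mid s')-q(y\mid s')\bigr)+\bigl(\hat{p}_k(s'\mid s,a)-p(s'\mid s,a)\bigr)q(y\mid s').
\]
Multiplying by $V^{\pi^*}_{h+1}(y)$, summing over $y$, and regrouping the sum according to the latent state $s'$ --- using that on $\Omega^f$ both $q(\cdot\mid s')$ and $\hat{q}_k(\cdot\mid s')$ are supported on $f^{-1}(s')$, so that the partial sums over $f^{-1}(s')$ equal full inner products over $[n]$ --- I expect to obtain
\[
  \ip{\hat{P}_k(\cdot\mid x,a)-P(\cdot\mid x,a),V^{\pi^*}_{h+1}}
  =\sum_{s'}\hat{p}_k(s'\mid s,a)\,\ip{\hat{q}_k(\cdot\mid s')-q(\cdot\mid s'),V^{\pi^*}_{h+1}}
   +\sum_{s'}\bigl(\hat{p}_k(s'\mid s,a)-p(s'\mid s,a)\bigr)\ip{q(\cdot\mid s'),V^{\pi^*}_{h+1}}.
\]

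From here I would simply read the bound off the confidence events in \eqref{eqn:highprobeventcomponentdef}. For the first sum, the triangle inequality together with $\hat{p}_k(s'\mid s,a)\geq 0$ and then the definition of $\Omega_k^1$ gives the bound $\sum_{s'}\hat{p}_k(s'\mid s,a)\sqrt{H^2\ln(2HST_k^2)/(1\vee\hat{N}_k^{\mathrm{to}}(f^{-1}(s')))}$, which is precisely the second summand of $\hat{b}_k(s,a)$ in \eqref{eqn:bonus_def}. For the second sum, the definition of $\Omega_k^2$ directly yields the bound $\sqrt{H^2\ln(2HSAT_k^2)/(1\vee\hat{N}_k(f^{-1}(s),a))}$, which is exactly the first summand of $\hat{b}_k(s,a)$. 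Combining the two and recalling $s=f(x)$ gives \eqref{eqn:estimation_error_confidence_bound}. The case $h=H$ is immediate because $V^{\pi^*}_{H+1}\equiv 0$ makes the left-hand side vanish.

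I do not anticipate a genuine obstacle: the lemma is essentially the statement that the two summands of the bonus $\hat{b}_k$ were defined to absorb, respectively, the emission-estimation error and the latent-transition-estimation error exposed by the product split, and the remainder is bookkeeping. The points needing a little care are (i) invoking $\Omega^f\subseteq\Omega_k$ to justify both the substitution $\hat{f}^{\alg}=f$ and the support claim for $\hat{q}_k$; (ii) checking that the index $h+1$ appearing in the lemma lies in the range over which $\Omega_k^1$ and $\Omega_k^2$ are quantified (it does, with $h=H$ handled separately); and (iii) matching the two confidence radii --- which differ in whether the logarithm carries a factor $A$ and whether the count is $\hat{N}_k^{\mathrm{to}}$ or $\hat{N}_k$ --- to the correct term of the split.
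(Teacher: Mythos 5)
Your proposal is correct and follows essentially the same route as the paper: the product split you write is exactly the paper's decomposition (its \eqref{eqn:decomposition_with_estimate_p}, proved via an auxiliary kernel but with the same support observation on $\Omega^f$), and the subsequent matching of the two terms with $\Omega_k^2$ and $\Omega_k^1$ to recover the two summands of $\hat{b}_k(f(x),a)$ is precisely the paper's argument.
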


Combining \Cref{lem:optimism_simulation,lem:esterrorconfbd} and using that $\hat{f}^{\alg}=f$ on $\Omega^f$ proves that $\bar{V}^{\pi_k}_h(x)-V^{\pi^*}_h(x)\geq 0$ on $\Omega_k$.
This in turn implies that $R_2\leq 0$ on $\Omega_k$, establishing the following:
\begin{proposition}
	\label{prop:optimism}
	For $k\in[K]$, $\Omega_k\subset \cap_{x,h}\{\bar{V}^{\pi_k}_h(x)\geq V^{\pi^*}_h(x)\}$.
	Therefore,
	\begin{equation}
		R_2
		=
		\sum_{k\in[K]:T_k>\Theta^{\clust}}\mbbE[(V^{\pi^*}_1(x_{k,1})-\bar{V}^{\pi_k}_1(x_{k,1}))\mathbbm{1}\{\Omega_k\}]
		\leq
		0
		.
	\end{equation}
\end{proposition}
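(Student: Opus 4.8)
The plan is to upgrade the two preceding lemmas into a pointwise optimism statement valid on $\Omega_k$, and then read off the sign bound on $R_2$. The structure mirrors the outline: combine \Cref{lem:optimism_simulation,lem:esterrorconfbd}, using $\hat{f}^{\alg}=f$ on $\Omega^f\supseteq\Omega_k$ to align the latent-state argument of the bonus appearing in the two lemmas, and then induct implicitly through the decomposition in \Cref{lem:optimism_simulation}.

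Concretely, fix $k\in[K]$ and work on $\Omega_k$. Since $\Omega_k\subseteq\Omega^f=\{\hat{f}^{\alg}=f\}$, we have $\hat{b}_k(\hat{f}^{\alg}(x'),a)=\hat{b}_k(f(x'),a)$ for every $x'\in[n]$ and $a\in[A]$. By \Cref{lem:esterrorconfbd}, on $\Omega_k$ it holds for all $x'\in[n]$, $a\in[A]$, and $h'\in[H]$ that $|\langle \hat{P}_k(\,\cdot\mid x',a)-P(\,\cdot\mid x',a),V^{\pi^*}_{h'+1}\rangle|\leq \hat{b}_k(f(x'),a)$, and hence in particular
\begin{equation*}
	\hat{b}_k(\hat{f}^{\alg}(x'),a)+\bigl\langle \hat{P}_k(\,\cdot\mid x',a)-P(\,\cdot\mid x',a),V^{\pi^*}_{h'+1}\bigr\rangle
	\geq
	0
	.
\end{equation*}
Substituting this uniform lower bound into the decomposition of \Cref{lem:optimism_simulation} shows that, on $\Omega_k$, the conditional expectation on its right-hand side is an expectation of a nonnegative integrand, so $\bar{V}^{\pi_k}_h(x)-V^{\pi^*}_h(x)\geq 0$ for all $x\in[n]$ and all $h\in[H-1]$.

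It remains to handle the terminal step $h=H$, which \Cref{lem:optimism_simulation} does not cover. Here \Cref{alg:BUCBVI} initializes $\bar{V}^{\pi_k}_H(x)=\max_a\bar{Q}^{\pi_k}_H(x,a)=\max_a r_H(x,a)$, while backward value iteration \eqref{eqn:backward_value_iteration} with $V^{\pi^*}_{H+1}\equiv 0$ gives $V^{\pi^*}_H(x)=\max_a Q^{\pi^*}_H(x,a)=\max_a r_H(x,a)$; thus $\bar{V}^{\pi_k}_H(x)=V^{\pi^*}_H(x)$. Together with the previous paragraph this yields $\Omega_k\subseteq\bigcap_{x,h}\{\bar{V}^{\pi_k}_h(x)\geq V^{\pi^*}_h(x)\}$, the first claim.

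For the bound on $R_2$, note that $V^{\pi^*}_1(x_{k,1})-\bar{V}^{\pi_k}_1(x_{k,1})\leq 0$ on $\Omega_k$ by the first claim, so the random variable $(V^{\pi^*}_1(x_{k,1})-\bar{V}^{\pi_k}_1(x_{k,1}))\mathbbm{1}\{\Omega_k\}$ is surely nonpositive: it vanishes off $\Omega_k$ and is $\leq 0$ on $\Omega_k$. Taking expectations and summing over the episodes $k$ with $T_k>\Theta^{\clust}$ gives $R_2\leq 0$. I do not anticipate any real obstacle here; the only points needing a little care are invoking $\hat{f}^{\alg}=f$ to match the bonus arguments across \Cref{lem:optimism_simulation,lem:esterrorconfbd}, and separately disposing of the step $h=H$.
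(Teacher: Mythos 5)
Your proof is correct and follows essentially the same route as the paper: combine \Cref{lem:optimism_simulation} with \Cref{lem:esterrorconfbd}, using $\hat{f}^{\alg}=f$ on $\Omega_k\subseteq\Omega^f$ to make the bonus arguments match, so the integrand in the decomposition is nonnegative and hence $\bar{V}^{\pi_k}_h\geq V^{\pi^*}_h$ on $\Omega_k$, which gives $R_2\leq 0$. Your explicit treatment of the terminal case $h=H$ (where $\bar{V}^{\pi_k}_H=V^{\pi^*}_H=\max_a r_H(\cdot,a)$) is a detail the paper leaves implicit, but it is handled correctly.
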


\subsubsection*{Part 3: Bounding the estimation error}

To bound $R_3$, we fix an episode $k\in[K]$ such that $T_k=kH>\Theta^{\clust}$.
We will now bound $\mbbE\bigl[\bigl( V^{\pi^*}_1(x_{k,1})-\bar{V}^{\pi_k}_1(x_{k,1})\bigr) \mbb{1}\{\Omega_k\}\bigr]$ for this episode.

Recall \eqref{eqn:historydef}.
Following \cite{UCRLVI}, we decompose
\begin{equation}
	\bar{V}^{\pi_k}_1(x_{k,1})
	-
	V^{\pi_k}_1(x_{k,1})
	=
	\sum_{h=1}^{H-1}\mbbE\bigl[
		\hat{b}_{k}(\hat{f}^{\alg}(x_{k,h}),a_{k,h})
		+
		E_{k,h}(x_{k,h},a_{k,h})
		+
		F_{k,h}(x_{k,h},a_{k,h})
		\mid \mcD_k, x_{k,1}
	\bigr]
	,
	\label{eqn:application_simulation_lemma_true_distribution}
\end{equation}
where
\begin{equation}
	\begin{split}
		&
		E_{k,h}(x,a)
		\eqdef
		\langle \hat{P}_k(\,\cdot\mid x,a)-P(\,\cdot\mid x,a), V^{\pi^*}_{h+1}\rangle
		\quad
		\mathrm{and}
		\\
		&
		F_{k,h}(x,a)
		\eqdef
		\langle \hat{P}_k(\,\cdot\mid x,a)-P(\,\cdot\mid x,a), \bar{V}^{\pi_k}_{h+1}-V^{\pi^*}_{h+1}\rangle
		.
	\end{split}
\end{equation}
A proof of \eqref{eqn:application_simulation_lemma_true_distribution} is included in \Cref{sec:proof_of_application_simulation_lemma_true_distribution}.
We now bound $E_{k,h}$ and $F_{k,h}$.

While $E_{k,h}$ is bounded on $\Omega_k$ by \Cref{lem:esterrorconfbd}, handling $F_{k,h}$ is more challenging because $\hat{P}_k$ and $\bar{V}^{\pi_k}_{h+1}$ are dependent.
\Cref{prop:concentration_bound_nonnegative_vector} below addresses this by relating $F_{k,h}$ to the expected next-context value of $\bar{V}^{\pi_k}_{h+1}-V^{\pi^*}_{h+1}$.
This removes the dependence on $\hat{P}_k$ and enables one to bound $F_{k,h}$ whenever $\bar{V}^{\pi_k}_{h+1}$ is close to $V^{\pi^*}_{h+1}$.

\begin{proposition}
	\label{prop:concentration_bound_nonnegative_vector}
	Let $L\eqdef 1+\ln(nHS^2AT_K^2)$.
	Then, for all $x$, $a$, $h\in[H-1]$, and $k\in[K]$, on $\Omega_k$,
	\begin{equation}
		\label{eqn:concentration_bound_nonnegative_vector_large_P}
		\begin{split}
			F_{k,h}(x,a)
			\leq
			&\,
			(1/H)(2+1/H)
			\langle P(\,\cdot\mid  x,a), \bar{V}^{\pi_k}_{h+1} - V^{\pi^*}_{h+1}\rangle
			\\
			&
			+
			\frac{2H^2SL}{1\vee \hat{N}_k(f^{-1}(f(x)),a)}
			+
			\sum_{s}p(s\mid f(x),a)\frac{4n H^2L}{1\vee \hat{N}^{\mathrm{to}}_k(f^{-1}(s))}
			.
		\end{split}
	\end{equation}
\end{proposition}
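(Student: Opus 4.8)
The plan is to work on the event $\Omega_k$, where $\hat{f}^{\alg}=f$ — so that $\hat{P}_k(y\mid x,a)=\hat{p}_k(f(y)\mid f(x),a)\,\hat{q}_k(y\mid f(y))$ shares the block structure of $P$ — and where $W\eqdef\bar{V}^{\pi_k}_{h+1}-V^{\pi^*}_{h+1}$ satisfies $0\le W(y)\le H$ for all $y$: the lower bound by \Cref{prop:optimism} and the upper bound by the truncation $1\wedge(\cdot)$ in \Cref{alg:BUCBVI} together with $V^{\pi^*}\ge0$. Writing $s=f(x)$, the first step is the structured decomposition $\hat{P}_k(y\mid x,a)-P(y\mid x,a)=(\hat{p}_k(f(y)\mid s,a)-p(f(y)\mid s,a))\,q(y\mid f(y))+\hat{p}_k(f(y)\mid s,a)\,(\hat{q}_k(y\mid f(y))-q(y\mid f(y)))$. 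Grouping contexts by latent state and using that $\hat{q}_k(\cdot\mid s')$ is supported on $f^{-1}(s')$ on $\Omega^f$, this yields $F_{k,h}(x,a)=A+B$ with
\[
	A=\sum_{s'}\bigl(\hat{p}_k(s'\mid s,a)-p(s'\mid s,a)\bigr)\langle q(\cdot\mid s'),W\rangle,
	\qquad
	B=\sum_{s'}\hat{p}_k(s'\mid s,a)\,\langle \hat{q}_k(\cdot\mid s')-q(\cdot\mid s'),W\rangle .
\]

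The next step is to bound $A$ using the coordinate-wise Bernstein bound on $\Omega_k^4$ and $\langle q(\cdot\mid s'),W\rangle\ge0$: applying $\sqrt{ab}\le\tfrac{1}{2H}a+\tfrac{H}{2}b$ to the term $\sqrt{p(s'\mid s,a)L''/M}\,\langle q(\cdot\mid s'),W\rangle$ with $M=1\vee\hat{N}_k(f^{-1}(s),a)$ and $L''=\ln(2S^2AT_k^2)\le L$, and using $\sum_{s'}\langle q(\cdot\mid s'),W\rangle\le SH$, gives $A\le\tfrac{1}{2H}\langle P(\cdot\mid x,a),W\rangle+O(H^2SL/M)$. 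For $B$, the key per-state estimate is that on $\Omega_k^3$, by the same AM-GM with weight $H$ and $\sum_{y\in f^{-1}(s')}W(y)\le|f^{-1}(s')|H$, one has $\langle\hat{q}_k(\cdot\mid s')-q(\cdot\mid s'),W\rangle\le\tfrac{1}{2H}\langle q(\cdot\mid s'),W\rangle+O(|f^{-1}(s')|H^2L/N_{s'})$ with $N_{s'}=1\vee\hat{N}^{\mathrm{to}}_k(f^{-1}(s'))$. Multiplying by $\hat{p}_k(s'\mid s,a)\ge0$ and summing, the leading part is $\tfrac{1}{2H}\sum_{s'}\hat{p}_k(s'\mid s,a)\langle q(\cdot\mid s'),W\rangle=\tfrac{1}{2H}\bigl(\langle P(\cdot\mid x,a),W\rangle+A\bigr)$, which folds in a further $\tfrac{1}{2H}(1+\tfrac{1}{2H})\langle P(\cdot\mid x,a),W\rangle$ plus an $O(HSL/M)$ term; the remaining piece $\sum_{s'}\hat{p}_k(s'\mid s,a)\,O(|f^{-1}(s')|H^2L/N_{s'})$ must still be converted to the true-$p$-weighted form appearing in the statement.

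The main obstacle is this last piece: it is the product of the $\hat{p}_k$-weight with an emission error that unavoidably carries a $|f^{-1}(s')|\le n$ factor, and a crude expansion $\hat{p}_k(s'\mid s,a)\le p(s'\mid s,a)+\sqrt{p(s'\mid s,a)L''/M}+\tfrac{2L''}{3M}$ produces second-order terms that couple $M$ and $N_{s'}$ — e.g.\ $\sum_{s'}(L''/M)\,|f^{-1}(s')|H^2L/N_{s'}$ — which, bounded via $\sum_{s'}|f^{-1}(s')|=n$, leave an $O(nH^2\polylog/M)$ contribution not covered by the target $O(H^2SL/M)$. I expect this to be resolved by controlling the ratio $\hat{p}_k/p$, e.g.\ by showing $\hat{p}_k(s'\mid s,a)\le 4\,p(s'\mid s,a)$ on $\Omega_k$: the Bernstein bound gives this once $M\gtrsim\eta S L''$, and $M$ is that large throughout Phase~2 because Phase~1 runs the uniform policy for $\Theta^{\clust}=\omega(nA(S^3\vee\ln n))$ transitions, building up visit counts $\omega(nS^2)$ at every latent state--action pair — here the deterministic consequence $p(s'\mid s,a)\ge 1/(\eta S)$ of $\eta$-reachability is what keeps the concentration ratio $\sqrt{\eta S L''/M}$ below $1$. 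Then $\sum_{s'}\hat{p}_k(s'\mid s,a)\,O(|f^{-1}(s')|H^2L/N_{s'})\le\sum_{s'}p(s'\mid s,a)\,O(nH^2L/N_{s'})$, which is the last term of the statement. It remains to collect the $\langle P(\cdot\mid x,a),W\rangle$ coefficients — $\tfrac{1}{2H}$ from $A$ and $\tfrac{1}{2H}+\tfrac{1}{4H^2}$ from $B$, summing to less than $(1/H)(2+1/H)$ — and to absorb the leftover $M^{-1}$- and $p$-weighted $N_{s'}^{-1}$-terms into $\tfrac{2H^2SL}{M}$ and $\sum_s p(s\mid f(x),a)\tfrac{4nH^2L}{N_s}$, using $L',L''\le L$ and the factor-$2$ and factor-$4$ slack in those two error terms.
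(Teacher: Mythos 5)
Your setup (working on $\Omega_k$, $0\le W\le H$, splitting into a transition--error term and an emission--error term, and the AM--GM/split-sum handling of each) is sound, but the decomposition you chose creates a gap that your proposed fix does not close. You pair $\hat{p}_k$ with the emission error, i.e.\ you use \eqref{eqn:decomposition_with_estimate_p}, and then need to convert the $\hat{p}_k$-weighted term $\sum_{s'}\hat{p}_k(s'\mid s,a)\,O(|f^{-1}(s')|H^2L/N_{s'})$ into the $p$-weighted form of the statement. Your resolution, ``$\hat{p}_k(s'\mid s,a)\le 4\,p(s'\mid s,a)$ on $\Omega_k$,'' is not available: the event $\Omega_k$ in \eqref{eqn:highprobeventcomponentdef} contains no lower bound whatsoever on $\hat{N}_k(f^{-1}(s),a)$, and the Bernstein-type inequality of $\Omega_k^4$ only yields $\hat{p}_k\lesssim p$ once $1\vee\hat{N}_k(f^{-1}(s),a)\gtrsim \eta S\,\ln(\cdot)$. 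The claim that Phase~1 ``builds up visit counts $\omega(nS^2)$ at every latent state--action pair'' is itself only a high-probability statement (a concentration bound on the Phase-1 visitation counts), not a consequence of $\Omega_k$; to use it you would have to enlarge $\Omega_k$ by an additional event and pay for its failure probability in \Cref{prop:highprobevent}, i.e.\ you would be proving a different proposition than the deterministic ``on $\Omega_k$'' statement asked for. (It would also import $\eta$-reachability into this part of the analysis, which the statement does not assume is needed.) Your own diagnosis of the crude alternative is correct: expanding $\hat{p}_k\le p+\sqrt{pL/M}+O(L/M)$ leaves cross terms of order $nH^2\polylog/M$ that the budget $2H^2SL/M$ cannot absorb without exactly such visit-count lower bounds.

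The gap disappears if you swap the roles of the two kernels in the decomposition, which is what the paper does: use \eqref{eqn:decomposition_with_estimate_q} of \Cref{lem:estimation_error_decomposition}, so that the emission error $\langle\hat{q}_k(\cdot\mid s')-q(\cdot\mid s'),W\rangle$ is weighted by the \emph{true} $p(s'\mid f(x),a)$ from the outset, and the only place $\hat{q}_k$ survives is inside the weight $u(s')=\langle\hat{q}_k(\cdot\mid s'),W\rangle$ of the transition-error term, which is harmless since $\lVert u\rVert_\infty\le H$. Applying the split-sum bound (\Cref{lem:concentration_bound_nonnegative_vector}, the analogue of your per-state estimates) once to the $\hat{p}_k-p$ term and once more to replace $\langle\hat{q}_k(\cdot\mid s'),W\rangle$ by $(1+1/H)\langle q(\cdot\mid s'),W\rangle$ plus a $2nH^2L/N_{s'}$ correction yields the coefficient $(1/H)(2+1/H)$ and the two error terms exactly as stated, with no comparison between $\hat{p}_k$ and $p$ ever needed.
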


\Cref{sec:proof_of_concentration_bound_nonnegative_vector} contains the proof of \Cref{prop:concentration_bound_nonnegative_vector}.
It is similar to that of \cite[Lemma 3]{UCRLVI}, which bounds $F_{k,h}$ for tabular \glspl{MDP}.
We cannot directly apply their argument, however, because $\hat{P}_k$'s elements are products of $\hat{p}_k$ and $\hat{q}_k$.
Instead, we decompose $F_{k,h}$ into separate contributions involving only $\hat{p}_k$ or $\hat{q}_k$.
These can then be bounded using \cite{UCRLVI}'s argument, applied separately to $\hat{p}_k$ and $\hat{q}_k$.

We now return to \eqref{eqn:application_simulation_lemma_true_distribution}.
First, observe that $V^{\pi^*}_{h+1}\geq V^{\pi_k}_{h+1}$ since $\pi^*$ is optimal.
Thus $\bar{V}^{\pi_k}_{h+1} - V^{\pi^*}_{h+1}\leq \bar{V}^{\pi_k}_{h+1} - V^{\pi_k}_{h+1}$.
With \Cref{prop:concentration_bound_nonnegative_vector}, this implies that $F_{k,h}$ is bounded by the expected next--context value of $\bar{V}^{\pi_k}_{h+1} - V^{\pi_k}_{h+1}$.
This allows us to prove the following recursive bound in \Cref{sec:proof_of_estimation_error_bound}:
\begin{proposition}
	\label{prop:estimation_error_bound}
	Write $s_{k,h}=f(x_{k,h})$.
	For $h\in[H-1]$ and $k\in[K]$ such that $T_k=kH>\Theta^{\clust}$,
	\begin{align}
		\mbbE\Bigl[ & \bigl(\bar{V}_h^{\pi_k}(x_{k,h})-V_h^{\pi_k}(x_{k,h}) \bigr) \mathbbm{1}\{\Omega_k\} \Bigr]
		\label{eqn:estimation_error_bound}
		\\
		       &
		\leq
		\sum_{h'=h}^{H-1}\mbbE\biggl[
		\biggl(
			\hat{B}_{k}(s_{k,h'},a_{k,h'},s_{k,h'+1})
			+
			\frac{1}{H}\biggl(2+\frac{1}{H}\biggr)
			\bigl(\bar{V}_{h'+1}^{\pi_k}(x_{k,h'+1})-V_{h'+1}^{\pi_k}(x_{k,h'+1})\bigr)
		\biggr)
		\mathbbm{1}\{\Omega_k\}
		\biggr]
		,
		\nonumber
	\end{align}
	where
	\begin{equation}
		\label{eqn:bounding_term_definition}
		\hat{B}_{k}(s,a,s')
		\eqdef
		\sqrt{\frac{4H^2 L}{1\vee \hat{N}_k(f^{-1}(s),a)}}
		+
		\frac{6H^2SL^2}{1\vee \hat{N}_k(f^{-1}(s),a)}
		+
		\sqrt{\frac{16H^2 L}{1\vee \hat{N}^{\mathrm{to}}_k(f^{-1}(s'))}}
		+
		\frac{4nH^2L}{1\vee \hat{N}^{\mathrm{to}}_k(f^{-1}(s'))}
		,
	\end{equation}
	with $L=1+\ln(nHS^2AT_K^2)$ as in \Cref{prop:concentration_bound_nonnegative_vector}.
\end{proposition}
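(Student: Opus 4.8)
The plan is to derive \eqref{eqn:estimation_error_bound} by feeding \Cref{lem:esterrorconfbd,prop:concentration_bound_nonnegative_vector} into the simulation–lemma decomposition behind \eqref{eqn:application_simulation_lemma_true_distribution}, exploiting that $\Omega_k\subseteq\Omega^f$ forces $\hat f^{\alg}=f$, and then rewriting every $p$-weighted sum over the next latent state as a one-step conditional expectation. First I would invoke the general-horizon version of \eqref{eqn:application_simulation_lemma_true_distribution} (proven exactly as in \Cref{sec:proof_of_application_simulation_lemma_true_distribution}, but started at step $h$): for every $x$,
\[
\bar V^{\pi_k}_h(x)-V^{\pi_k}_h(x)
\leq
\sum_{h'=h}^{H-1}
\mbbE_{\Phi,\pi_k}\bigl[\hat b_k(\hat f^{\alg}(x_{h'}),a_{h'})+E_{k,h'}(x_{h'},a_{h'})+F_{k,h'}(x_{h'},a_{h'})\mid x_h=x\bigr].
\]
Since $\Omega_k$ is $\mcD_k$-measurable and episode $k$'s trajectory is, conditionally on $\mcD_k$, generated by $\pi_k$ under the true kernel $P$, multiplying by $\mathbbm{1}\{\Omega_k\}$ and taking the full expectation gives $\mbbE[(\bar V^{\pi_k}_h(x_{k,h})-V^{\pi_k}_h(x_{k,h}))\mathbbm{1}\{\Omega_k\}]\leq\sum_{h'=h}^{H-1}\mbbE[(\hat b_k(\hat f^{\alg}(x_{k,h'}),a_{k,h'})+E_{k,h'}(x_{k,h'},a_{k,h'})+F_{k,h'}(x_{k,h'},a_{k,h'}))\mathbbm{1}\{\Omega_k\}]$.

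Next I would bound the integrand on $\Omega_k$. There $\hat f^{\alg}(x_{k,h'})=f(x_{k,h'})=s_{k,h'}$, so the bonus term equals $\hat b_k(s_{k,h'},a_{k,h'})$; \Cref{lem:esterrorconfbd} bounds $E_{k,h'}(x_{k,h'},a_{k,h'})\leq\hat b_k(s_{k,h'},a_{k,h'})$; and \Cref{prop:concentration_bound_nonnegative_vector} bounds $F_{k,h'}$ by $(1/H)(2+1/H)\langle P(\,\cdot\mid x_{k,h'},a_{k,h'}),\bar V^{\pi_k}_{h'+1}-V^{\pi^*}_{h'+1}\rangle$ plus the two count corrections evaluated at $(s_{k,h'},a_{k,h'})$. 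Using $V^{\pi^*}_{h'+1}\geq V^{\pi_k}_{h'+1}$ (optimality of $\pi^*$) and $P\geq0$ componentwise, the first of these is at most $\langle P(\,\cdot\mid x_{k,h'},a_{k,h'}),\bar V^{\pi_k}_{h'+1}-V^{\pi_k}_{h'+1}\rangle=\mbbE[\bar V^{\pi_k}_{h'+1}(x_{k,h'+1})-V^{\pi_k}_{h'+1}(x_{k,h'+1})\mid\mcD_k,x_{k,h'},a_{k,h'}]$, which after one more tower step is the recursive term in \eqref{eqn:estimation_error_bound}. It remains to show that $2\hat b_k(s_{k,h'},a_{k,h'})$ together with the two corrections from \Cref{prop:concentration_bound_nonnegative_vector} is, in conditional expectation over $x_{k,h'+1}$, at most $\mbbE[\hat B_k(s_{k,h'},a_{k,h'},s_{k,h'+1})\mid\mcD_k,x_{k,h'},a_{k,h'}]$. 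Here $2\sqrt{H^2\ln(2HSAT_k^2)/(1\vee\hat N_k(f^{-1}(s),a))}\leq\sqrt{4H^2L/(1\vee\hat N_k(f^{-1}(s),a))}$ and $2H^2SL/(1\vee\hat N_k(f^{-1}(s),a))\leq 6H^2SL^2/(1\vee\hat N_k(f^{-1}(s),a))$ since $L$ dominates all logarithmic factors present; and $\sum_{s}p(s\mid s_{k,h'},a_{k,h'})\,4nH^2L/(1\vee\hat N^{\mathrm{to}}_k(f^{-1}(s)))=\mbbE[4nH^2L/(1\vee\hat N^{\mathrm{to}}_k(f^{-1}(s_{k,h'+1})))\mid\mcD_k,x_{k,h'},a_{k,h'}]$ because $\mbbP[f(x_{k,h'+1})=s\mid x_{k,h'},a_{k,h'}]=p(s\mid f(x_{k,h'}),a_{k,h'})$.

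The one genuinely delicate point—and the step I expect to be the main obstacle—is the remaining summand of $\hat b_k$, namely $2\sum_{s'}\hat p_k(s'\mid s,a)\sqrt{H^2\ln(2HST_k^2)/(1\vee\hat N^{\mathrm{to}}_k(f^{-1}(s')))}$, whose weights $\hat p_k$ cannot be directly turned into an expectation over $s_{k,h'+1}$. I would split it as $\sum_{s'}p(s'\mid s,a)(\cdots)+\sum_{s'}(\hat p_k-p)(s'\mid s,a)(\cdots)$; the first part becomes the conditional expectation $\mbbE[\,\cdot\mid\mcD_k,x_{k,h'},a_{k,h'}]$ over $s_{k,h'+1}$ and is dominated by $\mbbE[\sqrt{16H^2L/(1\vee\hat N^{\mathrm{to}}_k(f^{-1}(s_{k,h'+1})))}\mid\cdots]$, while the second part is controlled on $\Omega_k\subseteq\Omega_k^4$ by the Bernstein bound $|\hat p_k(s'\mid s,a)-p(s'\mid s,a)|\leq\sqrt{p(s'\mid s,a)\ln(2S^2AT_k^2)/(1\vee\hat N_k(f^{-1}(s),a))}+2\ln(2S^2AT_k^2)/(3(1\vee\hat N_k(f^{-1}(s),a)))$. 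Splitting the resulting cross-terms by weighted AM--GM and Cauchy--Schwarz and crudely bounding each residual factor by powers of $H$, $S$, $n$, $L$, one checks that they are absorbed into the slack deliberately built into the $6H^2SL^2/(1\vee\hat N_k(f^{-1}(s),a))$ and $\sqrt{16H^2L/(1\vee\hat N^{\mathrm{to}}_k(f^{-1}(s')))}$ pieces of $\hat B_k$ (whose constants and powers of $L$ are chosen precisely for this). Assembling the three steps and taking the full expectation against $\mathbbm{1}\{\Omega_k\}$ yields \eqref{eqn:estimation_error_bound}; everything beyond the constant-chasing in this last step is a routine rearrangement of \eqref{eqn:application_simulation_lemma_true_distribution}, \Cref{lem:esterrorconfbd}, and \Cref{prop:concentration_bound_nonnegative_vector}.
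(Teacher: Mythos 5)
Your proposal is correct and takes essentially the same route as the paper's proof: the simulation-lemma decomposition started at step $h$, $\mcD_k$-measurability of $\Omega_k$ (valid since $T_k>\Theta^{\clust}$), \Cref{lem:esterrorconfbd} for the $E$-term, \Cref{prop:concentration_bound_nonnegative_vector} for the $F$-term together with $V^{\pi^*}_{h'+1}\geq V^{\pi_k}_{h'+1}$, and one-step conditional expectations to turn the $p$-weighted sums into expectations over $s_{k,h'+1}$. The only cosmetic difference is that for the $\hat p_k$-weighted part of the bonus you re-derive the replacement of $\hat p_k$ by $p$ directly from $\Omega_k^4$ via AM--GM/Cauchy--Schwarz, whereas the paper invokes \Cref{lem:concentration_bound_nonnegative_vector} (which is proved by exactly that kind of split), and the resulting constants do fit inside the slack of $\hat B_k$ as you claim.
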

By using \Cref{prop:estimation_error_bound} iteratively, we can obtain a bound that depends only on the $\hat{B}_{k}( s_{k,h}, \allowbreak a_{k,h}, \allowbreak s_{k,h+1})$.
The following lemma, proven in \Cref{sec:proof_of_recursive_sum}, makes this precise:
\begin{lemma}
	\label{lem:recursive_sum}
	Consider two sequences $\{\Delta_h\}_{h\in[H]}$, $\{B_h\}_{h\in[H]}$ with $\Delta_h,B_h\in\mbbR$ for $h\in[H]$, that satisfy $\Delta_H=0$ and $\Delta_h \leq \sum_{h'=h}^{H-1}(B_{h'} + (1/H)(2+1/H)\Delta_{h'})$ for all $h\in[H-1]$.
	Then $\Delta_1\leq e^2\sum_{h=1}^{H-1}B_{h}$.
\end{lemma}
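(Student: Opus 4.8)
The plan is to collapse the recursion into a single geometric bound for the partial sums $\Sigma_h \eqdef \sum_{h'=h}^{H-1}\Delta_{h'}$, with the convention $\Sigma_H = 0$. Throughout I write $c \eqdef (1/H)(2+1/H)$ and $\mathcal{B} \eqdef \sum_{h=1}^{H-1}B_h$, and I will use that each $B_h \ge 0$ and that the recursion is in force in the form in which it is applied in \Cref{prop:estimation_error_bound}, namely with $\Delta_{h'+1}$ (rather than $\Delta_{h'}$) on the right-hand side.

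The first step is to put the recursion into telescoping form. Reindexing $\sum_{h'=h}^{H-1}\Delta_{h'+1} = \sum_{h'=h+1}^{H}\Delta_{h'}$ and discarding the vanishing term $\Delta_H$ gives $\Delta_h \le \beta_h + c\,\Sigma_{h+1}$ for every $h\in[H-1]$, where $\beta_h \eqdef \sum_{h'=h}^{H-1}B_{h'} \le \mathcal{B}$ by nonnegativity of the $B_{h'}$. Adding $\Sigma_{h+1}$ to both sides and using $\Sigma_h = \Delta_h + \Sigma_{h+1}$ turns this into $\Sigma_h \le \mathcal{B} + (1+c)\,\Sigma_{h+1}$. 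A short downward induction on $h$, with base case $\Sigma_H = 0$, then gives $\Sigma_h \le \mathcal{B}\bigl((1+c)^{H-h}-1\bigr)/c$, and in particular $\Sigma_2 \le \mathcal{B}\bigl((1+c)^{H-2}-1\bigr)/c$. Substituting this into $\Delta_1 \le \beta_1 + c\,\Sigma_2 \le \mathcal{B} + c\,\Sigma_2$ collapses the telescope and yields $\Delta_1 \le \mathcal{B}\,(1+c)^{H-2}$.

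It then remains only to replace $(1+c)^{H-2}$ by the stated constant. Using $1+x\le e^x$ one has $(1+c)^{H-2}\le e^{c(H-2)}$, and expanding $c(H-2) = (2/H + 1/H^2)(H-2) = 2 - 3/H - 2/H^2 \le 2$ gives $(1+c)^{H-2}\le e^2$, hence $\Delta_1 \le e^2\mathcal{B} = e^2\sum_{h=1}^{H-1}B_h$, which is the claim. None of the steps is technically demanding; the point that warrants a little care is this final estimate, where one must use that the exponent is $c(H-2)$, not the looser $cH = 2 + 1/H$, so that the constant comes out as exactly $e^2$. I would also note that the nonnegativity of the $B_h$ is genuinely used, both to bound $\beta_h \le \mathcal{B}$ and to keep the geometric coefficients appearing in the telescope from acting against the bound; and that the relevant recursion here is the one with $\Delta_{h'+1}$, since the literal form with $\Delta_{h'}$ makes the $h'=h$ term self-referential, which on rearrangement costs a factor $1/(1-c)$ per level with $(1-c)^{-(H-1)}$ exceeding $e^2$.
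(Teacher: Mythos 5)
Your proof is correct and takes essentially the same route as the paper's: both arguments unroll the recursion into a geometric factor bounded by $e^2$, the paper via an induction giving the per-term bound $\Delta_h\leq\sum_{h'=h}^{H-1}(1+\varepsilon)^{h'-h}B_{h'}$ and the identity $1+(1/H)(2+1/H)=(1+1/H)^2$ so that $(1+c)^{H}\leq e^2$, you via the aggregated partial sums $\Sigma_h$ with the uniform bound $\beta_h\leq\mathcal{B}$ and the estimate $e^{c(H-2)}\leq e^2$ --- a cosmetic difference. Your two reading choices, namely taking the recursion with $\Delta_{h'+1}$ on the right-hand side and assuming $B_h\geq 0$, are precisely what the paper's own proof and its application in \Cref{prop:estimation_error_bound} use (the literal statement with $\Delta_{h'}$ and signed $B_h$ would indeed fail, e.g.\ already for $H=2$), so these are the intended hypotheses rather than gaps in your argument.
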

We will apply \Cref{lem:recursive_sum} with $\Delta_h=\mbbE[ (\bar{V}_h^{\pi_k}(x_{k,h})-V_h^{\pi_k}(x_{k,h}))\mathbbm{1}\{\Omega_k\} ]$.
\Cref{lem:recursive_sum}'s assumptions are met for $B_h = \mbbE[\hat{B}_{k}(s_{k,h},\allowbreak a_{k,h},\allowbreak s_{k,h+1})\mathbbm{1}\{\Omega_k\}]$ by \Cref{prop:estimation_error_bound} and because $\bar{V}^{\pi_k}_H(x)=V^{\pi_k}_H(x)=\max_{a\in[A]}r_H(x,a)$ by definition.
Together with $\hat{B}_k\geq 0$, we conclude that
\begin{equation}
	\label{eqn:application_of_recursive_sum_lemma}
	\mbbE[ (\bar{V}_1^{\pi_k}(x_{k,1})-V_1^{\pi_k}(x_{k,1}))\mathbbm{1}\{\Omega_k\} ]
	\leq
	e^2\sum_{h=1}^{H-1}\mbbE[
		\hat{B}_{k}(s_{k,h},a_{k,h},s_{k,h+1})
	]
	.
\end{equation}
Observe now that $\hat{B}_k(s,a,s')$ depends inversely on the visitation count of the triple $(s,a,s')$.
Hence, once a triple is visited, its corresponding $\hat{B}_k(s,a,s')$ dereases in subsequent episodes.
\Cref{prop:estimation_error_bound} was formulated precisely so that we can exploit this fact.
We do so in \Cref{sec:proof_of_final_bound_R3}, where we prove our final bound on $R_3$.
This result is better than its analogue for tabular \glspl{MDP} because the visitation count of a latent state increases more quickly than that of a single context.
\begin{proposition}
	\label{prop:final_bound_R3}
	If $T_K=KH= \Omega(H^4S^3A)$, then
	\begin{equation}
		\label{eqn:final_bound_R3}
		\begin{split}
			R_3
			=
			\sum_{k\in[K]:T_k>\Theta^{\clust}}\mbbE[(\bar{V}^{\pi_k}_1(x_{k,1})-V^{\pi_k}_1(x_{k_1}))\mbb{1}\{\Omega_k\}]
			=
			\tilde{O}(
			H\sqrt{SAT_K}
			+
			nH^3
			)
			.
		\end{split}
	\end{equation}
\end{proposition}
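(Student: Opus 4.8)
The plan is to take \eqref{eqn:application_of_recursive_sum_lemma} as the starting point: it already reduces the per-episode quantity $\mbbE[(\bar V^{\pi_k}_1(x_{k,1})-V^{\pi_k}_1(x_{k,1}))\mathbbm{1}\{\Omega_k\}]$ to $e^2\sum_{h=1}^{H-1}\mbbE[\hat B_k(s_{k,h},a_{k,h},s_{k,h+1})]$. Summing over the second-phase episodes $k$ with $T_k>\Theta^{\clust}$, it then suffices to bound
\[
\sum_{k:T_k>\Theta^{\clust}}\sum_{h=1}^{H-1}\mbbE\bigl[\hat B_k(s_{k,h},a_{k,h},s_{k,h+1})\mathbbm{1}\{\Omega_k\}\bigr]
.
\]
Since $\hat B_k$ in \eqref{eqn:bounding_term_definition} is a sum of four nonnegative terms, I would bound the four resulting sums separately: two ``square--root'' sums, $\sum_{k,h}\sqrt{4H^2L/(1\vee\hat N_k(f^{-1}(s_{k,h}),a_{k,h}))}$ and the analogue with $\hat N^{\mathrm{to}}_k(f^{-1}(s_{k,h+1}))$, and two ``linear'' sums, $\sum_{k,h}6H^2SL^2/(1\vee\hat N_k(f^{-1}(s_{k,h}),a_{k,h}))$ and $\sum_{k,h}4nH^2L/(1\vee\hat N^{\mathrm{to}}_k(f^{-1}(s_{k,h+1})))$.

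For each of these the key step is to interchange the order of summation so as to group the $(k,h)$--sum by the latent state--action pair $(s,a)$, respectively by the target latent state $s'$, that is being visited. On $\Omega^f\subseteq\Omega_k$ we have $\hat f^{\alg}=f$, so $\hat N_k(f^{-1}(s),a)$ and $\hat N^{\mathrm{to}}_k(f^{-1}(s'))$ are genuine visitation counts along the realized latent trajectory, and each sum takes the form $\sum_{k}m_{k}/g(\hat N_k)$ with $m_k\le H$ the number of relevant visits in episode $k$ and $\hat N_k=\sum_{j<k}m_j$. I would then invoke two elementary facts, valid pathwise: $\sum_k m_k/\sqrt{1\vee\hat N_k}=O(H^2+\sqrt{\hat N_{K+1}})$ and $\sum_k m_k/(1\vee\hat N_k)=O(H+\ln\hat N_{K+1})$, where the additive $H^2$ and $H$ arise precisely because the counts are updated once per episode rather than once per step, so up to $H$ consecutive summands share a denominator. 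Summing over the at most $SA$ (respectively $S$) indices and applying Cauchy--Schwarz, e.g. $\sum_{s,a}\sqrt{\hat N_{K+1}(f^{-1}(s),a)}\le\sqrt{SA\,T_K}$ and $\sum_{s'}\sqrt{\hat N^{\mathrm{to}}_{K+1}(f^{-1}(s'))}\le\sqrt{S\,T_K}$, turns these into closed form.

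Collecting the pieces, the two square--root sums contribute $\tilde O(H\sqrt{SAT_K})$ together with a lower--order $\tilde O(SAH^3)$ term, the linear $(s,a)$--sum contributes $\tilde O(H^3S^2A)$, and the linear ``to''--sum contributes the $n$--dependent term $\tilde O(nSH^3)$ reported in \eqref{eqn:regret_bound}. The hypothesis $T_K=\Omega(H^4S^3A)$ is exactly what is needed to absorb both $SAH^3$ and $H^3S^2A$ into $\tilde O(H\sqrt{SAT_K})$, which yields the claimed bound in \eqref{eqn:final_bound_R3}. I expect the main obstacle to be the bookkeeping forced by the per--episode updating of the counts: the naive per--step pigeonhole bounds do not apply verbatim, so one must either carry the additive $H$/$H^2$ penalties through and verify they are negligible under $T_K=\Omega(H^4S^3A)$ (as above), or argue that $\eta$--reachability together with $\Theta^{\clust}=\omega(nA(S^3\vee\ln n))$ already forces every latent and latent--action count to exceed $H$ before the second phase begins, so that the ``doubling'' telescoping applies from the outset. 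A secondary point of care is that the indicator $\mathbbm{1}\{\Omega_k\}$ must be kept throughout, since it is only on $\Omega^f$ that $\hat f^{\alg}=f$ and hence that the count identities underlying the pigeonhole step hold.
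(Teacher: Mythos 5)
Your proposal is correct and follows essentially the same route as the paper: starting from \eqref{eqn:application_of_recursive_sum_lemma}, splitting $\hat{B}_k$ into its four terms, bounding the resulting sums by a per-episode pigeonhole argument over latent state--action pairs (with additive $H$-type corrections for the once-per-episode count updates, exactly as in \Cref{lem:pigeonhole}), applying Cauchy--Schwarz to get the $\sqrt{SAT_K}$ and $\sqrt{ST_K}$ terms, and absorbing the lower-order $SAH^3$ and $S^2AH^3$ contributions using $T_K=\Omega(H^4S^3A)$, leaving the $\tilde{O}(nSH^3)$ term from the ``to''-counts. One small correction: the indicator $\mathbbm{1}\{\Omega_k\}$ need not be retained (and the paper drops it via $\hat{B}_k\geq 0$), because $\hat{B}_k$ in \eqref{eqn:bounding_term_definition} is defined through the \emph{true} decoding function $f$, so the visitation-count identities behind the pigeonhole step hold almost surely, not only on $\Omega^f$.
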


Finally --- the proof of \Cref{thm:BUCBbd} is done by noting that the assumptions of \Cref{thm:BUCBbd} are the same as those of \Cref{prop:highprobevent,prop:optimism,prop:final_bound_R3}, and using these to bound $R_1$, $R_2$, and $R_3$ in \eqref{eqn:regret_decomp}.

\section{Numerical proof of concept}
\label{sec:numerical}

We now demonstrate the method and briefly compare its performance to the UCBVI algorithms of \cite{UCRLVI}, which do not exploit the latent structure.

We fix $A=3$ and $H=20$.
For $n\in\{100,200\}$ and $S\in\{3,6\}$, we sample $p(\,\cdot\mid s,a)$ independently from $\textnormal{Dir}(1,\ldots,1)$ for each $s$ and $a$.
Here, $\textnormal{Dir}(\alpha_1,\ldots,\alpha_k)$ denotes the $k$-dimensional Dirichlet distribution.
Similarly, we sample $\{q(y\mid s)\}_{y\in f^{-1}(s)}$ independently from $\textnormal{Dir}(\sqrt{n},\ldots,\sqrt{n})$ for each $s$.
We set each latent state to have the same number of contexts.
These \glspl{BMDP} turned out to be $\eta$-reachable and $\mcI$-identifiable with $\eta \approx 25.5$ and $\mcI\approx 0.12$ for $S=3$, and $\eta \approx 33.6$ and $\mcI\approx 0.49$ for $S=6$.
The rewards $r_h(x,a)$ are sampled independently from $\textnormal{Unif}([0,1])$ for each $x$, $a$, and $h$.

\Cref{fig:comparison-in-simple-example} shows the cumulative regret over $K=10^5$ episodes for UCBVI--CH, UCBVI--BF, and \Cref{alg:BUCBVIouter}.
Here, we averaged over 10 runs for a fixed \gls{BMDP}.
\Cref{alg:BUCBVIouter} eventually achieves lower regret for all parameter configurations in \Cref{fig:comparison-in-simple-example}.
The difference increases for larger $n$ and smaller $S$.
Notably, \Cref{alg:BUCBVIouter}'s performance stays the same when $n$ is increased.

\begin{figure}[hbtp]
	\centering

	\begin{subfigure}{0.49\linewidth}
		\begin{tikzpicture}[scale = 0.6]
			\begin{axis}[
					xmin = 0, xmax = 2000000,
					ymin = 0, ymax = 500000,
					xlabel = \empty,
					xticklabel = \empty,
					ylabel = {Cumulative regret},
					label style = {font = \LARGE},
					tick label style = {font = \LARGE},
					scaled x ticks = false,
					width=12cm, height=6cm
				]

				\addplot[mark=none, line width = 1] table[x=Time, y=Regret, col sep=comma]{final_simulations/data/case1/UCBVICH_n100_S3_A3_h20_K100000.csv};

				\addplot[mark=none, line width = 1, color = blue, dashed] table[x=Time, y=Regret, col sep=comma]{final_simulations/data/case1/UCBVIBF_n100_S3_A3_h20_K100000.csv};

				\addplot[mark=none, line width = 1, color = red, loosely dashdotted] table[x=Time, y=Regret, col sep=comma]{final_simulations/data/case1/BUCBVI_n100_S3_A3_h20_K100000.csv};

				\draw (0,0) rectangle (2.1cm,1.05cm);
				\draw (0.7875cm, 1.05cm) -- (2.625cm, 2.535cm);
			\end{axis}

			\begin{axis}[
					xshift = 0.2cm, yshift = 4.3cm,
					xmin = 0, xmax = 400000,
					ymin = 0, ymax = 100000,
					ticks=none,
					anchor={outer north west},
					width=6.6cm, height=3.3cm
				]
				\addplot[mark=none, line width = 1] table[x=Time, y=Regret, col sep=comma]{final_simulations/data/case1/UCBVICH_n100_S3_A3_h20_K100000.csv};
				\addplot[mark=none, line width = 1, color=blue, dashed] table[x=Time, y=Regret, col sep=comma]{final_simulations/data/case1/UCBVIBF_n100_S3_A3_h20_K100000.csv};
				\addplot[mark=none, line width = 1, color=red, loosely dashdotted] table[x=Time, y=Regret, col sep=comma]{final_simulations/data/case1/BUCBVI_n100_S3_A3_h20_K100000.csv};
			\end{axis}
		\end{tikzpicture}
		\centering
		\caption{$S=3,n=100$}
	\end{subfigure}
	\begin{subfigure}{0.49\linewidth}
		\begin{tikzpicture}[scale = 0.6]
			\begin{axis}[
					xmin = 0, xmax = 2000000,
					ymin = 0, ymax = 500000,
					xlabel = \empty,
					xticklabel = \empty,
					ylabel = \empty,
					yticklabel = \empty,
					label style = {font = \LARGE},
					tick label style = {font = \LARGE},
					scaled x ticks = false,
					width=12cm, height=6cm
				]

				\addplot[mark=none, line width = 1] table[x=Time, y=Regret, col sep=comma]{final_simulations/data/case1/UCBVICH_n200_S3_A3_h20_K100000.csv};

				\addplot[mark=none, line width = 1, color = blue, dashed] table[x=Time, y=Regret, col sep=comma]{final_simulations/data/case1/UCBVIBF_n200_S3_A3_h20_K100000.csv};

				\addplot[mark=none, line width = 1, color = red, loosely dashdotted] table[x=Time, y=Regret, col sep=comma]{final_simulations/data/case1/BUCBVI_n200_S3_A3_h20_K100000.csv};

				\draw (0,0) rectangle (2.1cm,1.05cm);
				\draw (0.7875cm, 1.05cm) -- (2.625cm, 2.535cm);
			\end{axis}

			\begin{axis}[
					xshift = 0.2cm, yshift = 4.3cm,
					xmin = 0, xmax = 400000,
					ymin = 0, ymax = 100000,
					ticks=none,
					anchor={outer north west},
					width=6.6cm, height=3.3cm
				]
				\addplot[mark=none, line width = 1] table[x=Time, y=Regret, col sep=comma]{final_simulations/data/case1/UCBVICH_n200_S3_A3_h20_K100000.csv};
				\addplot[mark=none, line width = 1, color=blue, dashed] table[x=Time, y=Regret, col sep=comma]{final_simulations/data/case1/UCBVIBF_n200_S3_A3_h20_K100000.csv};
				\addplot[mark=none, line width = 1, color=red, loosely dashdotted] table[x=Time, y=Regret, col sep=comma]{final_simulations/data/case1/BUCBVI_n200_S3_A3_h20_K100000.csv};
			\end{axis}
		\end{tikzpicture}
		\centering
		\caption{$S=3,n=200$}
	\end{subfigure}

	\begin{subfigure}{0.49\linewidth}
		\begin{tikzpicture}[scale = 0.6]
			\begin{axis}[
					xmin = 0, xmax = 2000000,
					ymin = 0, ymax = 500000,
					xlabel = {Time},
					ylabel = {Cumulative regret},
					label style = {font = \LARGE},
					tick label style = {font = \LARGE},
					width=12cm, height=6cm
				]

				\addplot[mark=none, line width = 1] table[x=Time, y=Regret, col sep=comma]{final_simulations/data/case1/UCBVICH_n100_S6_A3_h20_K100000.csv};

				\addplot[mark=none, line width = 1, color = blue, dashed] table[x=Time, y=Regret, col sep=comma]{final_simulations/data/case1/UCBVIBF_n100_S6_A3_h20_K100000.csv};

				\addplot[mark=none, line width = 1, color = red, loosely dashdotted] table[x=Time, y=Regret, col sep=comma]{final_simulations/data/case1/BUCBVI_n100_S6_A3_h20_K100000.csv};

				\draw (0,0) rectangle (2.1cm,1.05cm);
				\draw (0.7875cm, 1.05cm) -- (2.625cm, 2.535cm);

			\end{axis}

			\begin{axis}[
					xshift = 0.2cm, yshift = 4.3cm,
					xmin = 0, xmax = 400000,
					ymin = 0, ymax = 100000,
					ticks=none,
					anchor={outer north west},
					width=6.6cm, height=3.3cm
				]
				\addplot[mark=none, line width = 1] table[x=Time, y=Regret, col sep=comma]{final_simulations/data/case1/UCBVICH_n100_S6_A3_h20_K100000.csv};
				\addplot[mark=none, line width = 1, color=blue, dashed] table[x=Time, y=Regret, col sep=comma]{final_simulations/data/case1/UCBVIBF_n100_S6_A3_h20_K100000.csv};
				\addplot[mark=none, line width = 1, color=red, loosely dashdotted] table[x=Time, y=Regret, col sep=comma]{final_simulations/data/case1/BUCBVI_n100_S6_A3_h20_K100000.csv};
			\end{axis}
		\end{tikzpicture}
		\centering
		\caption{$S=6,n=100$}
	\end{subfigure}
	\begin{subfigure}{0.49\linewidth}
		\begin{tikzpicture}[scale = 0.6]
			\begin{axis}[
					xmin = 0, xmax = 2000000,
					ymin = 0, ymax = 500000,
					xlabel = {Time},
					ylabel = \empty,
					yticklabel = \empty,
					label style = {font = \LARGE},
					tick label style = {font = \LARGE},
					legend style={at={(0.975, 0.05)}, anchor=south east, font=\LARGE},
					width=12cm, height=6cm
				]

				\addplot[mark=none, line width = 1] table[x=Time, y=Regret, col sep=comma]{final_simulations/data/case1/UCBVICH_n200_S6_A3_h20_K100000.csv};
				\addlegendentry{UCBVI-CH}

				\addplot[mark=none, line width = 1, color = blue, dashed] table[x=Time, y=Regret, col sep=comma]{final_simulations/data/case1/UCBVIBF_n200_S6_A3_h20_K100000.csv};
				\addlegendentry{UCBVI-BF}

				\addplot[mark=none, line width = 1, color = red, loosely dashdotted] table[x=Time, y=Regret, col sep=comma]{final_simulations/data/case1/BUCBVI_n200_S6_A3_h20_K100000.csv};
				\addlegendentry{Algorithm 1}

				\draw (0,0) rectangle (2.1cm,1.05cm);
				\draw (0.7875cm, 1.05cm) -- (2.625cm, 2.535cm);
			\end{axis}

			\begin{axis}[
					xshift = 0.2cm, yshift = 4.3cm,
					xmin = 0, xmax = 200000,
					ymin = 0, ymax = 50000,
					ticks=none,
					anchor={outer north west},
					width=6.6cm, height=3.3cm
				]
				\addplot[mark=none, line width = 1] table[x=Time, y=Regret, col sep=comma]{final_simulations/data/case1/UCBVICH_n200_S6_A3_h20_K100000.csv};
				\addplot[mark=none, line width = 1, color=blue, dashed] table[x=Time, y=Regret, col sep=comma]{final_simulations/data/case1/UCBVIBF_n200_S6_A3_h20_K100000.csv};
				\addplot[mark=none, line width = 1, color=red, loosely dashdotted] table[x=Time, y=Regret, col sep=comma]{final_simulations/data/case1/BUCBVI_n200_S6_A3_h20_K100000.csv};
			\end{axis}
		\end{tikzpicture}
		\centering
		\caption{$S=6,n=200$}
	\end{subfigure}
	\caption{
		The performance of UCBVI--CH, UCBVI--BF, and \Cref{alg:BUCBVIouter} on \Cref{sec:numerical}'s example.
		The plots include 95\% confidence intervals on the cumulative regret, but these are too narrow to see on this scale.
	}
	\label{fig:comparison-in-simple-example}
	\Description{
		Four plots show the cumulative regret over time of three algorithms.
		The regret of the algorithm that exploits the latent structure initially increases linearly.
		Later, a slower increase is observed.
		The same behavior is observed for the algorithms that do not exploit the latent structure, but the period of linear growth lasts considerably longer.
	}
\end{figure}
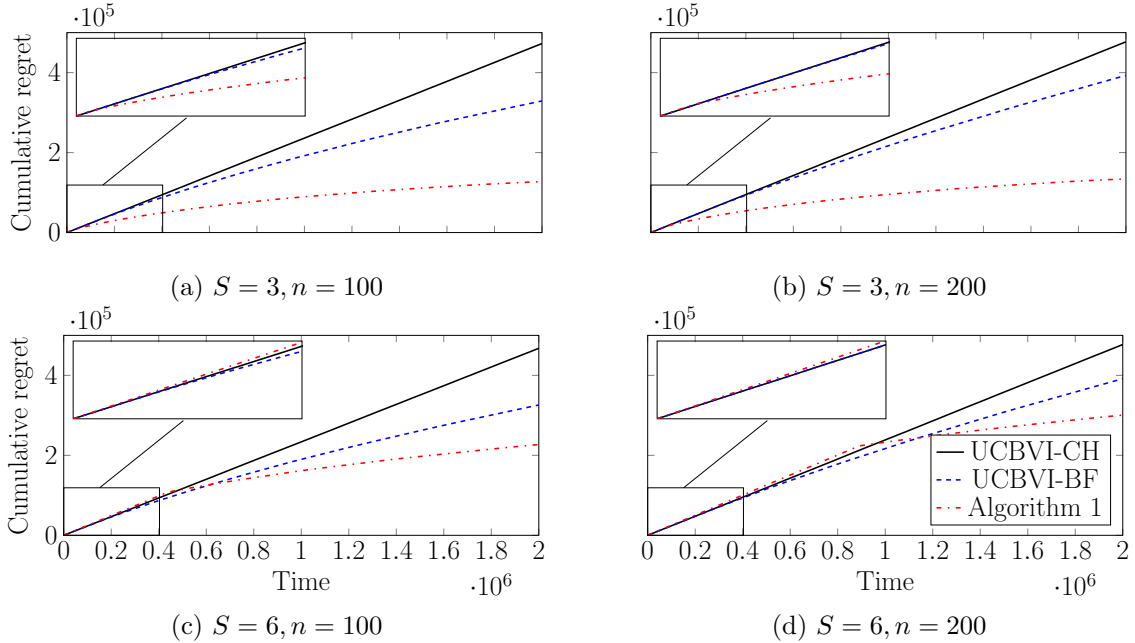

\emergencystretch=1em
\printbibliography

\appendix
\section{Latent state estimation algorithm}
\label{sec:clusteringalg}

We here briefly recall \cite[Algorithm 1]{Jedra:2022} and \cite[Algorithm 2]{Jedra:2022}, which are used in \Cref{alg:UpdateLatentStates} to estimate the decoding function.
We do so to relate their notation to ours, and for ease of reference during the proofs, specifically that of \Cref{prop:clustering_bound_short}.

\subsection{Spectral Clustering Algorithm}
\label{sec:clusteringalgspectral}

\begin{algorithm}
    \captionsetup{labelformat=empty}
    \begin{algorithmic}[1]

        \Procedure{SpectralClustering}{$\mcD_{k+1}$}
        \State $\hat{\mbf{N}}_{a}(x,y)\gets \sum_{k'=1}^k\sum_{h=1}^H\mbb{1}\{x_{k',h}=x,a_{k',h}=a,x_{k',h+1}=y\}$ for $a\in[A]$, $x,y\in[n]$

        \State $\Gamma_a\gets [n]$ for $a\in[A]$
        \For{$k=1,2,\ldots,\floor{n\exp(-T_k/(nA)\ln(T_k/(nA)))}$}
        \State $\Gamma_a\gets \Gamma_a\setminus \{\textnormal{uniformly random } x\in \argmax_{y\in \Gamma_a}\{\sum_{z}\hat{\mbf{N}}_a(y,z)\}\}$ for $a\in[A]$
        \EndFor
        \State $\hat{\mbf{N}}_{\Gamma,a}\gets (\hat{\mbf{N}}_a(x,y)\mathbbm{1}\{(x,y)\in\Gamma_a\times\Gamma_a\})_{x,y\in[n]}$ for $a\in[A]$

        \State $\hat{\mbf{R}}_{a}\gets S\textnormal{-Rank}(\hat{\mbf{N}}_{a})$ for $a\in[A]$ \Comment{Compute rank-$S$ approximation \eqref{eqn:srankdef}}
        \State $\hat{\mbf{R}}\gets \left[\hat{\mbf{R}}_{1}^\intercal \quad\ldots \quad\hat{\mbf{R}}_{A}^\intercal \quad\hat{\mbf{R}}_{1} \quad\ldots \quad\hat{\mbf{R}}_{A}\right]$
        \label{ln:fat_matrix}
        \State $\tilde{\mbf{R}}(x,\,\cdot\,)\gets \mathbbm{1}\{\lVert\hat{\mbf{R}}(x,\,\cdot\,)\rVert_1>0\}\hat{\mbf{R}}(x,\,\cdot\,)/\lVert\hat{\mbf{R}}(x,\,\cdot\,)\rVert_1$ for $x\in[n]$
        \State $\hat{f}_0\gets \textnormal{K-Medians}(\{ \tilde{\mbf{R}}(x,\,\cdot\,) \}_{x\in[n]} )$ \Comment{Solve $K$-Medians problem \cite[Eqn. (61)]{Jedra:2022}}
        \State \Return Initial estimate decoding function $\hat{f}_0$
        \EndProcedure
    \end{algorithmic}
    \caption{\textbf{\cite[Algorithm 1]{Jedra:2022} in our notation}: Pseudocode for the Spectral Clustering Algorithm.}
    \label{alg:spectral}
\end{algorithm}

The first step of \Cref{alg:UpdateLatentStates} consists of the spectral algorithm \cite[Algorithm 1]{Jedra:2022}, whose pseudocode we reproduce here in our notation.

The procedure starts by trimming the transition count matrices matrices $\hat{\mbf{N}}_a$ for each $a$ by setting to zero each row and column corresponding to contexts in $\Gamma_a$.
Here, $\Gamma_a\subset[n]$ are the contexts that remain after removing the $\floor{n\exp(-T_k/(nA)\ln(T_k/(nA)))}$ contexts that are visited most often when choosing action $a$.

Next, a rank-$S$ approximation $\hat{\mbf{R}}_a$ of the resulting matrix $\hat{\mbf{N}}_{\Gamma,a}$ is constructed by calculating the \gls{SVD} $\hat{\mbf{N}}^{\Gamma}_{a}=\mbf{U}_a^\intercal\text{diag}(\sigma_{a,1},\ldots,\sigma_{a,n})\mbf{V}_a$ where $\mbf{U}_a,\mbf{V}_a$ are two $n\times n$ orthonormal matrices, and $\sigma_{a,1}\geq \sigma_{a,2}\geq\ldots\geq\sigma_{a,n}$ are the singular values of $\hat{\mbf{N}}^{\Gamma}_{a}$ in decreasing order, and defining for each $a$
\begin{equation}\label{eqn:srankdef}
    \hat{\mbf{R}}_{a}
    \eqdef
    \mbf{U}_a^\intercal\text{diag}(\sigma_{a,1},\ldots,\sigma_{a,S},0,\ldots,0)\mbf{V}_a.
\end{equation}
The matrices $\hat{\mbf{R}}_{a}$ are subsequently concatenated to obtain the ``fat'' matrix $\hat{\mathbf{R}}$.
After normalizing each row of $\hat{\mathbf{R}}$ by its corresponding $\ell^1$-norm, a weighted $K$-medians procedure is used to estimate $\hat{f}_0$.
We refer to \cite[Eqn. (61)]{Jedra:2022} for the exact optimization problem that is solved to determine $\hat{f}_0$.

\subsection{Iterative Improvement Algorithm}
\label{sec:clusteringalgcia}

\begin{algorithm}
    \captionsetup{labelformat=empty}
    \begin{algorithmic}[1]
        \Procedure{EstimateLatentKernels}{$\mcD_{k+1}, \hat{f}$}
        \State $\hat{p}(s'\mid s,a)\gets \frac{\hat{N}(\hat{f}^{-1}(s),a,\hat{f}^{-1}(s'))}{\hat{N}(\hat{f}^{-1}(s),a)}$ for $s,s'\in[S]$, $a\in[A]$\label{ln:p_def}
        \State $\hat{p}^{\mathrm{bwd}}(s,a\mid s')\gets \frac{\hat{N}(\hat{f}^{-1}(s),a,\hat{f}^{-1}(s'))}{\hat{N}^{\mathrm{to}}(\hat{f}^{-1}(s'))}$ for $s,s'\in[S]$, $a\in[A]$\label{ln:pbwd_def}
        \State \Return Estimate latent state transition kernels $\hat{p}, \hat{p}^{\mathrm{\mathrm{bwd}}}$
        \EndProcedure

        \Procedure{ImproveClusters}{ $\mcD_{k+1}$, $\hat{f}_0$}

        \State $\hat{N}_{k}(x,a,y)\gets \sum_{k'\in[k]}\sum_{h=1}^H\mbb{1}\{x_{k',h}=x,a_{k',h}=a,x_{k',h+1}=y\}$ for $a\in[A]$, $x,y\in[n]$
        \For{$\ell=0,\ldots,\ceil{\ln(n)}-1$}
        \State $\hat{p}_{\ell},\hat{p}_{\ell}^{\mathrm{\mathrm{bwd}}}\gets \text{EstimateLatentKernels}(\mcD_{k+1},\hat{f}_{\ell})$\label{ln:p_pbwd_def}
        \State $\mcL_{\ell}(x,s')\gets \sum_{a,s}\big[\hat{N}_k(x,a,(\hat{f}_{\ell})^{-1}(s))\ln\hat{p}_{\ell}(s\mid s',a)$\label{ln:log_likelihood_def}
        \Statex \hspace{3cm} $+\hat{N}_k((\hat{f}_{\ell})^{-1}(s),a,x)\ln \hat{p}_{\ell}^{\mathrm{bwd}}(s,a\mid s')\big]$ for $x\in[n]$, $s'\in[S]$
        \State $\hat{f}_{\ell+1}(x)\gets \argmax_{s}\mcL_{\ell}(x,s)$ for $x\in[n]$
        \EndFor

        \State \Return Improved estimate decoding function $\hat{f}^{(\ceil{\ln n})}$
        \EndProcedure
    \end{algorithmic}
    \caption{\textbf{\cite[Algorithm 2]{Jedra:2022} in our notation}: Pseudocode for Iterative Improvement Algorithm.}\label{alg:CIA}
\end{algorithm}

The second step of \Cref{alg:UpdateLatentStates} consists of the iterative improvement algorithm \cite[Algorithm 2]{Jedra:2022}, whose pseudocode we reproduce here in our notation.

Starting from some initial estimate $\hat{f}_0$, at each iteration $\ell=0,1,\ldots$ it uses the current estimate decoding function $\hat{f}_{\ell}$ to estimate the latent state transition probabilities from $\hat{N}$.
Note that it also estimates the backward transition probabilities $p^{\mathrm{bwd}}(s,a\mid s')$, i.e., the probability that action $a$ was taken in latent state $s$ given that we are currently in latent state $s'$.
These estimates are then used to estimate for each context $x$, the log-likelihood of the observed data over each potential new assignment $\hat{f}_{\ell+1}(x)=s'$.

The context $x$ is assigned to the latent state $s'$ that maximizes this estimate log-likelihood.
This greedy, local likelihood maximization is repeated $\ceil{\ln n}$ times, each time using the updated decoding to estimate a new log-likelihood function.
In \Cref{alg:UpdateLatentStates}, the improvement algorithm is initialized with the decoding function $\hat{f}_0$ returned by the spectral clustering algorithm.

\section{On the structural properties}
\label{sec:on-the-structural-properties}

This section further explains \Cref{def:identifiability,def:reachability} and relates them to analogous properties used in the literature.

\subsection{Explanation of the structural properties}

\subsubsection{Reachability}

\Cref{def:reachability} imposes a degree of regularity on the latent state transition probabilities, the emission probabilities, and the number of contexts per latent state.
Concretely, since $\sum_{s'}p(s'\mid s,a) = \sum_{y\in f^{-1}(s)}q(y\mid s) = \sum_{s} |f^{-1}(s)|/n = 1$, \Cref{def:reachability} implies that for sufficiently large $n$,
\begin{equation}
	\label{eqn:bounds_on_elements_of_kernels}
	\frac{1}{\eta S}\leq p(s'\mid s,a)\leq \frac{\eta}{S}
	,
	\quad
	\frac{1}{\eta |f^{-1}(s)|}\leq q(y\mid s)\leq \frac{\eta}{|f^{-1}(s)|}
	,
	\quad
	\mathrm{and}
	\quad
	\frac{n}{\eta S}
	\leq
	|f^{-1}(s)|
	\leq
	\frac{\eta n}{S}
\end{equation}
for $s,s'\in[S]$, $y\in f^{-1}(s)$.
In turn, this implies that in expectation, and up to a constant factor, all contexts are observed equally often under any exploration policy.

\subsubsection{Identifiability}

\Cref{def:identifiability} imposes a minimum degree of separation between different latent states.
In particular, it ensures this separation remains bounded away from zero as $n\rightarrow \infty$.

Concretely, the quantity $I_{\tilde{s}}(x;c,\Phi,\pi)$ in \eqref{eqn:Ijxcdef} is the leading-order term in the expansion of the log-likelihood ratio of the context--action sequence between $\Phi$ and the confusing \gls{BMDP} $\Psi$ in \cite[Appendix D]{Jedra:2022} as $n\rightarrow \infty$.
The parameter $c>0$ controls the emission probability of $x$ in $\tilde{s}$.
Thus, by ensuring that $\min_x I_{\eta}(x;\Phi)$ remains bounded away from zero, \Cref{def:identifiability} prevents two distinct latent states from having asymptotically indistinguishable transition and emission probabilities.

\subsection{Relation to literature}

Analogues of \Cref{def:reachability,def:identifiability} are common in the \gls{BMDP} literature.
See, for example, \cite[Definition 2.1 and/or Assumption 3.2]{pmlr-v97-du19b} as well as \cite[Definition 6]{pmlr-v119-misra20a}.

Our formulation is based on \cite{Jedra:2022}, in order to leverage their results on latent state estimation.
In turn, \cite{Jedra:2022} builds on related conditions from the block model literature, including \glspl{BMC} \cite{Sanders:2020} and (labeled) \glspl{SBM} \cite{abbe2018community,yun2016optimal,10.1214/17-AOS1615}.

\subsubsection{Reachability}

When $S$ is independent of $n$, any $\eta$-reachable \gls{BMDP} satisfies \cite[Assumptions 2 and 3]{Jedra:2022}.
Both $\eta$-reachability and \cite[Assumptions 2 and 3]{Jedra:2022} are stronger than the reachability conditions typically considered in the \gls{BMDP} literature.
For example, \cite{pmlr-v97-du19b,pmlr-v119-misra20a} require that every state is reachable under \emph{some} policy, whereas $\eta$-reachability enforces a uniform degree of reachability under \emph{any} policy.

As highlighted in \Cref{sec:related-literature:function-approximation}, this difference reflects distinct objectives.
Works such as \cite{pmlr-v97-du19b,pmlr-v119-misra20a} aim to design efficient exploration policies to learn the latent structure using function approximation.
By contrast, our focus is on analyzing explicit recovery algorithms with provable guarantees in the line of \cite{Sanders:2020,Jedra:2022}.

\subsubsection{Identifiability}
\label{sec:comparison_to_other_identifiability_conditions}

$\mcI$-identifiability is closely related to \cite[Theorem 3]{Jedra:2022}'s assumption $I(\Phi)>0$, where
\begin{equation}
	I(\Phi)
	\eqdef
	-\frac{n}{T_k}\ln\biggl(\frac{1}{2\eta Sn} \sum_x\exp\biggl(-\frac{T_k}{n}I(x;\Phi)\biggr)\biggr)
\end{equation}
with
\begin{equation}
	\label{eqn:def_minimum_information_quantity}
	I(x;\Phi)
	\eqdef
	\lim_{\zeta\rightarrow \infty}I_{\zeta}(x;\Phi)
	=
	\min_{\tilde{s}\neq f(x)}\inf_{c>0}I_{\tilde{s}}(x;c,\Phi,\pi_U)
	.
\end{equation}
When $T_k=\omega(n)$ and $S$, $A$ and $p$ are independent of $n$, one can show that the conditions
\begin{equation}
	I(\Phi)=\Omega(1)\quad\mathrm{and}\quad\min_{x}I(x;\Phi)=\Omega(1)
\end{equation}
are equivalent.
In our framework, however, $S$, $A$, and $p$ do depend on $n$.
In this case, $\min_{x}I(x;\Phi)=\Omega(1)$ still implies $I(\Phi)=\Omega(1)$, but the converse need not hold.
\Cref{def:identifiability} therefore requires an analogue of the stronger condition $\min_{x}I(x;\Phi)=\Omega(1)$.

\Cref{def:identifiability}'s requirement, however, is slightly weaker than $\min_{x}I(x;\Phi)=\Omega(1)$ because $I_{\eta}(x;\Phi)\geq I(x;\Phi)$; (compare the infima in \eqref{eqn:I_eta_def} and \eqref{eqn:def_minimum_information_quantity}).
\Cref{sec:equivalence_of_identifiability_assumptions} contains a proof that that \Cref{def:identifiability} nonetheless implies $\min_{x}I(x;\Phi)=\Omega(1)$, ensuring compatibility with \cite{Jedra:2022}.

For further comparisons with other notions of identifiability in the \gls{BMDP} literature, see \cite[Appendix D.3.5]{Jedra:2022}.

\section{Proofs for \texorpdfstring{\Cref{sec:regret_lower_bound}}{the regret lower bound}}

\subsection{Bounds on visitation rates}

The following lemma bounds visitation rates and is used throughout this section.

\begin{lemma}
	\label{lem:visitation_rate_bound}
	If $\mu=\mathrm{Unif}([n])$, then the following inequalities hold for any $\Phi$, $\pi$, $h$, and $x$:
	\begin{align}
		 &
		\mbbP_{\Phi,\pi}[x_h\in\mcX]
		\geq
		\min
		\Bigl\{
		\frac{|\mcX|}{n}
		,
		\min_{s, a}
		\sum_{y\in\mcX}
		p(f(y) \mid s,a)
		q(y \mid f(y))
		\Bigr\}
		,
		\label{eqn:visitation_rate_bound_lower}
		\\
		 &
		\mbbP_{\Phi,\pi}[x_h\in\mcX]
		\leq
		\max
		\Bigl\{
		\frac{|\mcX|}{n}
		,
		\max_{s, a}
		\sum_{y\in\mcX}
		p(f(y) \mid s,a)
		q(y \mid f(y))
		\Bigr\}
		.
		\label{eqn:visitation_rate_bound_upper}
	\end{align}
\end{lemma}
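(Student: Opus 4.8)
The statement to prove is \Cref{lem:visitation_rate_bound}: for $\mu = \mathrm{Unif}([n])$, lower and upper bounds on $\mbbP_{\Phi,\pi}[x_h\in\mcX]$ in terms of $|\mcX|/n$ and the quantity $\min_{s,a}$ (resp.\ $\max_{s,a}$) of $\sum_{y\in\mcX} p(f(y)\mid s,a) q(y\mid f(y))$.

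\begin{proof}[Proof sketch]
The plan is to split on the round index $h$. For $h=1$, the initial distribution is $\mu=\mathrm{Unif}([n])$, so $\mbbP_{\Phi,\pi}[x_1\in\mcX]=|\mcX|/n$ exactly; this lies between the claimed lower and upper bounds trivially, since the minimum (resp.\ maximum) of two quantities is at most (resp.\ at least) the first of them. For $h\geq 2$, condition on the pair $(x_{h-1},a_{h-1})$. Using the BMDP transition kernel \eqref{def:BMDP-transition-kernel}, namely $P(y\mid x,a)=p(f(y)\mid f(x),a)q(y\mid f(y))$, we get
\begin{equation}
\mbbP_{\Phi,\pi}[x_h\in\mcX]
=
\sum_{x,a}
\mbbP_{\Phi,\pi}[x_{h-1}=x, a_{h-1}=a]
\sum_{y\in\mcX} p(f(y)\mid f(x),a)\, q(y\mid f(y)).
\end{equation}
The inner sum depends on $x$ only through $s:=f(x)$, so it is bounded below by $\min_{s,a}\sum_{y\in\mcX} p(f(y)\mid s,a)q(y\mid f(y))$ and above by the corresponding $\max_{s,a}$. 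Pulling this bound out of the sum and using that the weights $\mbbP_{\Phi,\pi}[x_{h-1}=x,a_{h-1}=a]$ sum to $1$ gives the two inequalities for $h\geq 2$, and again these are weaker than the stated bounds (each stated bound takes the extremum against the extra candidate $|\mcX|/n$, which only makes the lower bound smaller and the upper bound larger). Combining the two cases establishes the lemma.
\end{proof}

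There is essentially no obstacle here — the only thing to be careful about is that the bound must hold simultaneously for $h=1$ (where $|\mcX|/n$ is the exact value and the other candidate may be larger or smaller) and for $h\geq 2$ (where the $p\cdot q$ extremum is what governs), which is precisely why the statement is phrased with a $\min$/$\max$ over both candidates rather than a single clean expression. I would also double-check that no measurability or conditioning subtlety arises from $\pi$ being memoryless — it does not, since we only ever sum over $(x_{h-1},a_{h-1})$ and the transition to $x_h$ is independent of the policy given that pair.
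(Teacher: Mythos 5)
Your proof is correct and follows essentially the same route as the paper: split $h=1$ (where $\mu=\mathrm{Unif}([n])$ gives exactly $|\mcX|/n$) from $h\geq 2$, condition on $(x_{h-1},a_{h-1})$, use \eqref{def:BMDP-transition-kernel} and the fact that the inner sum depends on $x_{h-1}$ only through its latent state, then pull out the extremum over $(s,a)$. No gaps; the remarks about the $\min/\max$ over both candidates and the harmlessness of the memoryless policy match the paper's reasoning.
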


\begin{proof}
	We will prove \eqref{eqn:visitation_rate_bound_lower}.
	\Cref{eqn:visitation_rate_bound_upper} follows \emph{mutatis mutandis}.

	For $h = 1$, it follows from the assumption on $\mu$ that $\mbbP_{\Phi,\pi}[x_h\in \mcX] = |\mcX|/n$.
	This proves \eqref{eqn:visitation_rate_bound_lower} for $h = 1$.

	For $h > 1$,
	it follows from
	(i) the law of total probability,
	(ii) the definition of a \gls{BMDP} in \Cref{sec:Block-MDPs},
	and
	\eqref{def:BMDP-transition-kernel}
	that
	\begin{align}
		 &
		\mbbP_{\Phi,\pi}[x_{h}\in\mcX]
		\nonumber
		\\
		 &
		\overset{(i)}{=}
		\sum_{x, a}
		\mbbP_{\Phi,\pi}[x_{h} \in \mcX \mid x_{h-1} = x,a_{h-1}=a]
		\mbbP_{\Phi,\pi}[x_{h-1}=x,a_{h-1}=a]
		\nonumber
		\\
		 &
		\geq
		\bigl(
		\min_{x', a'}
		\mbbP_{\Phi,\pi}[x_{h} \in \mcX \mid x_{h-1} = x', a_{h-1}=a']
		\bigr)
		\sum_{x, a}
		\mbbP_{\Phi,\pi}[x_{h-1}=x,a_{h-1}=a]
		\nonumber
		\\
		 &
		=
		\min_{x', a'}
		\mbbP_{\Phi,\pi}[x_{h} \in \mcX \mid x_{h-1} = x', a_{h-1}=a']
		\nonumber
		\\
		 &
		\overset{(ii)}{=}
		\min_{x, a}
		\sum_{y\in\mcX}
		P(y \mid x,a)
		\nonumber
		\\
		 &
		\overset{\eqref{def:BMDP-transition-kernel}}{=}
		\min_{s, a}
		\sum_{y\in\mcX}
		q(y \mid f(y))p(f(y) \mid s,a)
		.
	\end{align}
	This proves \eqref{eqn:visitation_rate_bound_lower} for $h > 1$.
\end{proof}

\subsection{Proof of \texorpdfstring{\Cref{lem:latent_state_subset_sizes}}{latent state subset size bounds}}
\label{sec:proof_of_latent_state_subset_sizes}

The proof of \Cref{lem:latent_state_subset_sizes} adapts the argument given after \cite[Eqn. (5.2)]{zhang2016minimax}, modifying it slightly to avoid a case distinction between $S/2=2$ and $S/2\geq 3$.
In addition, whereas \cite{zhang2016minimax} shows that $\mf{s}_i^{+1}\geq \epsilon S$ for some unspecified $\epsilon>0$, we obtain the quantitative bound $\mf{s}_i^{+1}\geq S/36$.
We do not, however, pursue an upper bound or a bound on $\mf{s}_i^{0}$, both of which are derived in \cite{zhang2016minimax}.

Fix $i \in \{0, 1\}$.
Let $r_i \eqdef n_i - \floor{2 n_i / S}S/ 2$ and note that $0 \leq r_i\leq S/2 - 1$.
We consider separately the cases $r_i \geq S/36$ and $r_i< S/36$.

If $r_i \geq S/36$, let $\mf{s}^{-1}_i=0$, $\mf{s}^0_i = S/2 - r_i$, and $\mf{s}^{+1}_i = r_i$.
Observe then that from the definition of $r_i$ it holds that
$
	\sum_{\sigma\in\{-1,0,+1\}}
	\mf{s}^{\sigma}_i
	(
	\lfloor n_i/(S/2)\rfloor + \sigma
	)
	=
	n_i
	.
$
Moreover, since $r_i \leq S/2-1$ we  have $\mf{s}^0_i \geq 1$, and the subcase implies that $\mf{s}^{+1}_i \geq S/36$.
This proves \Cref{lem:latent_state_subset_sizes} when $r_i\geq S/36$.

If instead $r_i< S/36$, let $\mf{s}^{-1}_i=\floor{S/6}$, $\mf{s}^0_i = S/2 - 2\floor{S/6}-r_i$, and $\mf{s}^{+1}_i = r_i + \floor{S/6}$.
Observe again that by definition of $r_i$ it holds that
$
	\sum_{\sigma\in\{-1,0,+1\}}
	\mf{s}^{\sigma}_i
	(
	\lfloor n_i/(S/2)\rfloor + \sigma
	)
	=
	n_i
	.
$
Moreover, since $\floor{S/6}\leq S/6$, the subcase implies $\mf{s}_i^0 \geq S/2 - S/3 - S/36 \geq S/36$, and because $\mf{s}_i^0$ is integer it must be that $\mf{s}_i^0 \geq 1$.
Finally since $\floor{S/6} \geq S/6 - (1-1/S)$, $r_i \geq 0$ and $S\geq 6$ by \eqref{eqn:parameter_assumptions}, it follows that $\mf{s}^{+1}_i \geq S/6 - (1-1/S) \geq S/36$.
This proves \Cref{lem:latent_state_subset_sizes} when $r_i< S/36$.
\qed

\subsection{Proof of \texorpdfstring{\Cref{prop:kernel_allows_identifiability}}{reachability and identifiability}}
\label{sec:proof_of_kernel_allows_identifiability}

Let $\eta>1$.
Let
$
	\varepsilon = (\varepsilon_0, \varepsilon_1)
$
be such that
$
	\epsilon_{\max}
	=
	\varepsilon_0 \vee \varepsilon_1
	<
	(1/2)(\eta-1)/(\eta+1)
$.

We first define a class of $\tilde{p} = ( \tilde{p}_1, \tilde{p}_2 )$ satisfying \eqref{eqn:tilde_p_distance_from_uniform} such that all $\Phi\in \Lambda^*( \varepsilon, \tilde{p} )$ are $\eta$-reachable.
We then show that for a particular $\tilde{p}^*$ in this class, all $\Phi\in \Lambda^*(\varepsilon, \tilde{p}^*)$ are also $\mcI_{\eta}$-identifiable for $\mcI_{\eta} = \mf{d}\eta^{-1}(\eta-1)^2/(\eta+1)^3$, provided $\epsilon_{\max}$ is sufficiently small.

\subsubsection{Construction of the candidate pairs \texorpdfstring{$( \tilde{p}_0, \tilde{p}_1 )$}{(p0, p1)}}

Let $\kappa$ be as in \eqref{eqn:tilde_p_distance_from_uniform}.

For any $v(\,\cdot\mid  s ): \mcS_0\rightarrow [-1,1]$ satisfying $\sum_{s'\in \mcS_0}v(s' \mid s)=0$ for $s\in[S]$, let
\begin{equation}
	\tilde{p}^{(v)}_0(s' \mid s)
	\eqdef
	\frac{2}{S}
	\bigl(
	1
	+
	\kappa v(s' \mid s)
	\bigr)
	\quad
	\mathrm{and}
	\quad
	\tilde{p}^{(v)}_1(s'' \mid s)
	\eqdef
	\frac{2}{S}
	\bigl(
	1
	+
	\kappa v(s''-S/2 \mid s)
	\bigr)
	\label{eqn:p_tilde_choice_with_packing}
\end{equation}
for $s' \in \mcS_0$ and $s''\in\mcS_1$.
Recall that $\mcS_0 = [S/2]$ and $\mcS_1 = [S] \setminus [S/2]$.
Therefore $s'' - S/2 \in \mcS_0$ whenever $s'' \in \mcS_1$.

Observe now that for any $v$, $s\in[S]$, $s'\in\mcS_0$, and $s''\in\mcS_1$,
\begin{equation}
	| \tilde{p}^{(v)}_0(s' \mid s)-2/S |
	\leq
	\frac{2\kappa}{S}
	\quad
	\textnormal{and}
	\quad
	| \tilde{p}^{(v)}_1(s' \mid s)-2/S |
	\leq
	\frac{2\kappa}{S}
	.
\end{equation}
This implies \eqref{eqn:tilde_p_distance_from_uniform} for $\tilde{p}^{(v)}_0,\tilde{p}^{(v)}_1$.

\subsubsection{Proof of \texorpdfstring{\Cref{prop:kernel_allows_identifiability--reachability}}{reachability}}

To prove \Cref{prop:kernel_allows_identifiability--reachability} for all $\Phi\in \Lambda(\varepsilon, \tilde{p}^{(v)})$, we need to verify \Crefrange{def:reachability:p}{def:reachability:f}.

First, recall the definition of $p( \cdot \mid \cdot, \cdot )$ in terms of $\tilde{p}$ in \eqref{eqn:latent_state_transition_kernel_macro_def}.
Next, conclude that for any $v$:

\begin{itemize}

	\item
	      \Cref{def:reachability:p} is satisfied because for the specific $\bigl(\tilde{p}^{(v)}_1, \tilde{p}^{(v)}_2\bigr)$ in \eqref{eqn:p_tilde_choice_with_packing},
	      \begin{align}
		      \max_{a}
		      \max_{s_1,s_2,s_3}
		      \biggl\{
		      \frac{p(s_2 \mid s_1,a)}{p(s_3 \mid s_1,a)}
		      ,
		      \frac{p(s_1 \mid s_2,a)}{p(s_1 \mid s_3,a)}
		      \biggr\}
		       &
		      \leq
		      \frac{\max_{a_1}\max_{s_1,s_2}p(s_1\mid s_2,a_1)}{\min_{a_2}\max_{s_3,s_4}p(s_3\mid s_4,a_2)}
		      \nonumber \\
		       &
		      \leq
		      \frac{(1+2\epsilon_{\max})(1+\kappa)}{(1-2\epsilon_{\max})(1-\kappa)}
		      ;
	      \end{align}
	      and the definition
	      \begin{equation}
		      \kappa
		      =
		      \frac{\eta(1-2\epsilon_{\max})-(1+2\epsilon_{\max})}{\eta(1-2\epsilon_{\max})+(1+2\epsilon_{\max})}
		      \quad
		      \textnormal{implies}
		      \quad
		      \frac{(1+2\epsilon_{\max})(1+\kappa)}{(1-2\epsilon_{\max})(1-\kappa)}
		      =
		      \eta
		      .
	      \end{equation}

	\item
	      \Cref{def:reachability:q} is satisfied because $q(y \mid s) = 1/|f^{-1}(s)|$ for all $y$ and $s$ by \Cref{def:hard-to-learn-BMDP-instance} and $\eta>1$ by assumption.

	\item
	      Finally, \Cref{def:reachability:f} is satisfied because $|\mcS_0|=|\mcS_1|=S/2$ so that for $f$ satisfying \eqref{eqn:size_distribution_of_latent_states}, if $n >2S+1$,
	      \begin{equation}
		      \max_{s_1,s_2}
		      \frac{|f^{-1}(s_1)|}{|f^{-1}(s_2)|}
		      \leq
		      \frac{\floor{\ceil{n/2}/(S/2)}+1}{\floor{\floor{n/2}/(S/2)}-1}
		      \leq
		      \frac{\floor{(n/2+1/2)/(S/2)}+1}{\floor{(n/2-1/2)/(S/2)}-1}
		      \leq
		      \frac{1+\frac{1+S}{n}}{1-\frac{1+2S}{n}}
		      .
	      \end{equation}
	      This implies \Cref{def:reachability:f} since $n=\omega(S)$ by \eqref{eqn:parameter_assumptions}.
\end{itemize}

\subsubsection{Proof of \texorpdfstring{\Cref{prop:kernel_allows_identifiability--identifiability}}{identifiability}}

The proof of \Cref{prop:kernel_allows_identifiability--identifiability} relies on the following lemma, proven below.
\begin{lemma}
	\label{lem:bound_on_information_quantity}
	There exists a constant $\mf{d}>0$
	independent of $n$ and $\eta$,
	and vectors $v^*(\,\cdot\mid  s)\in [-1,1]^{S/2}$ for $s\in[S]$ possibly dependent on $\mf{d}$,
	for which the pair $(\tilde{p}^*_0,\tilde{p}^*_1) \eqdef \bigl(\tilde{p}_0^{(v^*)}, \tilde{p}_1^{(v^*)}\bigr)$ satisfies the following:
	for any $\Phi\in \Lambda(\varepsilon,\tilde{p}^*)$, $x \in[n]$, $\tilde{s} \neq f(x)$, $c>0$, and policy $\pi$,
	\begin{equation}
		I_{\tilde{s}}(x;c,\Phi,\pi)
		\geq
		\mf{d}c\kappa^2\Bigl(1-\frac{2S}{n}\Bigr)
		\min\Bigl\{1-\frac{2S}{n},(1-2\epsilon_{\max})(1-\kappa)\Bigr\}
		.
		\label{eqn:kernel_allows_identifiability}
	\end{equation}
\end{lemma}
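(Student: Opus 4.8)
The plan is to prove \eqref{eqn:kernel_allows_identifiability} by discarding the nonnegative first summand of \eqref{eqn:Ijxcdef} and lower bounding only the second summand $ncq(x\mid f(x))\sum_a\omega_{\Psi,\pi}(f(x),a)\KL(p(\cdot\mid\tilde s,a),p(\cdot\mid f(x),a))$. The decisive choice is of the vectors $v^*(\cdot\mid s)$: I want every pair of rows $\tilde p_i(\cdot\mid s_1),\tilde p_i(\cdot\mid s_2)$ to be well separated. Concretely, I invoke the packing lemma of \cite{jin2020reward} to obtain $S$ vectors $v^*(\cdot\mid s)\in[-1,1]^{S/2}$, $s\in[S]$, each zero-sum, with $\|v^*(\cdot\mid s_1)-v^*(\cdot\mid s_2)\|_2^2\geq \mf{c}_0 S$ for all $s_1\neq s_2$ and an absolute constant $\mf{c}_0>0$ (for the finitely many small $S$ not covered by the packing bound, one constructs such a family directly inside the $(S/2-1)$-dimensional zero-sum polytope and shrinks $\mf{c}_0$ to absorb them). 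Setting $(\tilde p^*_0,\tilde p^*_1)=(\tilde p^{(v^*)}_0,\tilde p^{(v^*)}_1)$ then satisfies \eqref{eqn:tilde_p_distance_from_uniform}, as already checked in \Cref{sec:proof_of_kernel_allows_identifiability}.

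The core estimate is $\KL(p(\cdot\mid\tilde s,a),p(\cdot\mid f(x),a))\geq \mf{c}_0\kappa^2/4$ for every $a\in[A]$ and every $\tilde s\neq f(x)$. I get this by conditioning on the event $\{s'\in\mcS_1\}$: by \eqref{eqn:latent_state_transition_kernel_macro_def} the $p(\cdot\mid s,a)$-mass of $\mcS_1$ equals $\tfrac12$ or $\tfrac12(1+2\varepsilon_{\iota(s)})$, hence is at least $\tfrac12$, and the conditional distribution on $\mcS_1$ is exactly $\tilde p_1(\cdot\mid s)$ in both cases, so the $\varepsilon$-dependence drops. The chain rule for Kullback--Leibler divergence gives $\KL(p(\cdot\mid\tilde s,a),p(\cdot\mid f(x),a))\geq \tfrac12\KL(\tilde p_1(\cdot\mid\tilde s),\tilde p_1(\cdot\mid f(x)))$. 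Writing $a_{s'}=\kappa v^*(s'-S/2\mid\tilde s)$ and $b_{s'}=\kappa v^*(s'-S/2\mid f(x))$ and expanding $t\mapsto(1+t)\ln\frac{1+t}{1+b_{s'}}$ to second order at $t=b_{s'}$ yields $(1+a_{s'})\ln\frac{1+a_{s'}}{1+b_{s'}}\geq (a_{s'}-b_{s'})+\frac{(a_{s'}-b_{s'})^2}{2(1+\kappa)}$; summing over $s'\in\mcS_1$ against the weights $\tfrac2S$, the linear terms cancel by the zero-sum property, leaving $\KL(\tilde p_1(\cdot\mid\tilde s),\tilde p_1(\cdot\mid f(x)))\geq \frac{\kappa^2}{S(1+\kappa)}\|v^*(\cdot\mid\tilde s)-v^*(\cdot\mid f(x))\|_2^2\geq \frac{\mf{c}_0\kappa^2}{1+\kappa}\geq \frac{\mf{c}_0\kappa^2}{2}$. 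Note the resulting constant depends only on $\mf{c}_0$ and not on $\eta$, which is what \Cref{lem:bound_on_information_quantity} demands.

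It then remains to control the two prefactors. For $q(x\mid f(x))$, use that $q$ is uniform on $f^{-1}(f(x))$ and that \eqref{eqn:size_distribution_of_latent_states} forces $|f^{-1}(f(x))|\leq \lfloor\lfloor n/2\rfloor/(S/2)\rfloor+1$, so $nq(x\mid f(x))=n/|f^{-1}(f(x))|\geq S(1-2S/n)$ for large $n$. For the visitation term, $\sum_a\omega_{\Psi,\pi}(f(x),a)=\tfrac1H\sum_h\mbbP_{\Psi,\pi}[x_h\in f^{-1}(f(x))\setminus\{x\}]$; since $\Psi$ inherits $\Phi$'s initial law $\mathrm{Unif}([n])$ and its latent kernel, \Cref{lem:visitation_rate_bound} applied with $\mcX=f^{-1}(f(x))\setminus\{x\}$ gives a lower bound $\min\{|\mcX|/n,\min_{s,a}p(f(x)\mid s,a)\}$, where the inner sum collapses because $\mcX$ is exactly the support of $\Psi$'s emission from $f(x)$. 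By \eqref{eqn:latent_state_transition_kernel_macro_def} and \eqref{eqn:bounds_on_elements_of_kernels}, $\min_{s,a}p(f(x)\mid s,a)\geq (1-2\varepsilon_{\max})(1-\kappa)/S$ and $|\mcX|/n\geq (1-2S/n)/S$, so $\sum_a\omega_{\Psi,\pi}(f(x),a)\geq \frac1S\min\{1-2S/n,(1-2\varepsilon_{\max})(1-\kappa)\}$. Multiplying the three bounds and setting $\mf d=\mf{c}_0/4$ gives \eqref{eqn:kernel_allows_identifiability}. I expect the visitation bound to require the most care: it needs the precise description of the confusing \gls{BMDP} $\Psi$ in \cite[Appendix D]{Jedra:2022} to confirm that its latent kernel is unchanged, that its emission from $f(x)$ is supported on exactly $f^{-1}(f(x))\setminus\{x\}$, and that $\mu_\Psi=\mathrm{Unif}([n])$, so \Cref{lem:visitation_rate_bound} applies verbatim; the small-$S$ case of the packing construction is a secondary but genuine detail to fill in.
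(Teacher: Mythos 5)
Your proposal is correct in its essentials and shares the paper's skeleton (discard the nonnegative first KL term in \eqref{eqn:Ijxcdef}, choose $v^*$ by a packing argument, then bound the emission factor and the visitation factor via \Cref{lem:visitation_rate_bound}), but two steps are genuinely different from the paper's proof. First, for the core estimate $\KL(p(\,\cdot\mid \tilde{s},a),p(\,\cdot\mid f(x),a))=\Omega(\kappa^2)$ the paper runs a four-case analysis over the configurations of $a$ relative to $a^*_{\tilde s},a^*_{f(x)}$, shows the $\varepsilon$-dependent contributions are extra Bernoulli KL terms, and then applies Pinsker together with the $\ell_1$-separation implied by the inner-product packing; you instead condition on $\{s'\in\mcS_1\}$, use the KL chain rule (the conditional law on $\mcS_1$ is exactly $\tilde{p}_1(\,\cdot\mid s)$ for every action, and the mass of $\mcS_1$ is at least $1/2$), and lower bound $\KL(\tilde p_1(\,\cdot\mid\tilde s),\tilde p_1(\,\cdot\mid f(x)))$ by a second-order Taylor expansion whose linear part cancels by the zero-sum property. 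This is cleaner and avoids the case distinction, at the cost of a harmless factor $1/2$; it also yields, as required, a constant independent of $\eta$. Second, you bound $\sum_a\omega_{\Psi,\pi}(f(x),a)$ — which is what \eqref{eqn:Ijxcdef} actually contains — via $\mcX=f^{-1}(f(x))\setminus\{x\}$, whereas the paper passes to $\sum_a\omega_{\Psi,\pi}(\tilde s,a)$ with $\mcX=f_\Psi^{-1}(\tilde s)=f^{-1}(\tilde s)\cup\{x\}$; your choice loses one context instead of gaining one, so your count gives only $|\mcX|\geq \lfloor\lfloor n/2\rfloor/(S/2)\rfloor-2$, i.e.\ roughly $1-\tfrac{4S}{n}$ rather than the stated $1-\tfrac{2S}{n}$. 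This is not a conceptual gap, but as written it proves a marginally weaker inequality than \eqref{eqn:kernel_allows_identifiability}; either absorb it (e.g.\ $1-\tfrac{4S}{n}\geq\tfrac12(1-\tfrac{2S}{n})$ once $n\geq 6S$, which is eventually guaranteed by \eqref{eqn:parameter_assumptions}, the remaining cases being trivial or absorbable into $\mf d$) or mirror the paper's bookkeeping. Finally, be aware that your invocation of \cite{jin2020reward} assumes the packing vectors can be taken zero-sum; that is exactly the same point the paper leans on (its identification $n\equiv S/4$ for vectors in $\{-1,+1\}^{S/2}$ reflects an antisymmetric $(w,-w)$ structure), and your fallback explicit construction for the finitely many small $S$ plays the role of the paper's second case, so this detail must be spelled out but does not change the verdict.
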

With $\mf{d}$ and $(\tilde{p}^*_0,\tilde{p}^*_1)$ given by \Cref{lem:bound_on_information_quantity} the remaining claim of \Cref{prop:kernel_allows_identifiability} is that for $\epsilon_{\max}$ sufficiently small, any $\Phi\in\Lambda(\varepsilon,\tilde{p}^*)$ is $\mcI_{\eta}$-reachable with $\mathcal{I}_{\eta} = \mf{d}\eta^{-1}(\eta-1)^2/(\eta+1)^3$.

Note that because $\epsilon_{\max}<(1/2)(\eta-1)/(\eta+1)<1/2$, it follows from \eqref{eqn:tilde_p_distance_from_uniform} that $\kappa<1$.
Since $n=\omega(S)$ by \eqref{eqn:parameter_assumptions}, it also follows that for all sufficiently large $n$, $(1-2\epsilon_{\max})(1-\kappa)<1-2S/n$.
By \Cref{lem:bound_on_information_quantity}, and the definition of $\kappa$, we then have for all sufficiently large $n$,
\begin{align}
	\min_{x }
	I_{\eta}(x;\Phi)
	 &
	\overset{\eqref{eqn:I_eta_def}}{=}
	\min_{\tilde{s}\neq f(x)}
	\inf_{c\in[1/\eta,\eta]}
	I_{\tilde{s}}(x;c,\Phi,\pi)
	\nonumber
	\\
	 &
	\geq
	\Bigl(1-\frac{2S}{n}\Bigr)\frac{\mf{d}}{\eta}
	\kappa^2
	(1-2\epsilon_{\max})(1-\kappa)
	\nonumber
	\\
	&
	=
	2\Bigl(1-\frac{2S}{n}\Bigr)\frac{\mf{d}}{\eta}
	(1-4\epsilon_{\max}^2)
	\frac{(\eta(1-2\epsilon_{\max})-(1+2\epsilon_{\max}))^2}{(\eta(1-2\epsilon_{\max})+(1+2\epsilon_{\max}))^3}
	\defeq
	Z(\epsilon_{\max})
	.
\end{align}

Note next that $Z(\epsilon_{\max})$ is concave in $\epsilon_{\max}$ on the interval $\bigl( 0, (1/2)(\eta-1)/(\eta+1) \bigr)$.
Furthermore, observe that it equals $2(1-2S/n)\mf{d}\eta^{-1}(\eta-1)^2/(\eta+1)^3$ at $\epsilon_{\max}=0$ and $0$ at $\epsilon_{\max}=(1/2)(\eta-1)/(\eta+1)$.
If $2(1-2S/n)\geq 3/2$, say, this implies that there exists $\epsilon\in \bigl( 0, (1/2)(\eta-1)/(\eta+1) \bigr)$ depending only on $\eta$ and $\mf{d}$ such that $Z(\epsilon_{\max})\geq \mf{d}\eta^{-1}(\eta-1)^2/(\eta+1)^3$ for all $\epsilon_{\max}<\epsilon$.
Since $n=\omega(S)$, it follows that for $\epsilon_{\max}<\epsilon$, for all sufficiently large $n$,
\begin{align}
	\min_{x }
	I_{\eta}(x;\Phi)
	\geq
	\frac{\mf{d}}{\eta}
	\frac{(\eta-1)^2}{(\eta+1)^3}
	.
\end{align}
That is it.

\paragraph{Proof of \texorpdfstring{\Cref{lem:bound_on_information_quantity}}{identifiability}}

\Cref{eqn:Ijxcdef} together with nonnegativity of the KL-divergence implies that
\begin{equation}
	I_{\tilde{s}}(x;c,\Phi,\pi)
	\geq
	cnq(x\mid f(x))
	\sum_a
	\omega_{\Psi,\pi}(f(x),a)
	\KL\bigl( p(\,\cdot\mid \tilde{s},a),p(\,\cdot\mid f(x),a) \bigr)
	;
	\label{eqn:I_lower_bound_in_terms_of_KL_p}
\end{equation}
and \eqref{eqn:p_tilde_choice_with_packing} ensures that for $s, s' \in [S]$,
\begin{equation}
	\KL(\tilde{p}^{(v)}_0(\,\cdot\mid  s),\tilde{p}^{(v)}_0(\,\cdot\mid  s'))
	=
	\KL(\tilde{p}^{(v)}_1(\,\cdot\mid  s),\tilde{p}^{(v)}_1(\,\cdot\mid  s'))
	=:
	\delta^{(v)}_{s,s'}
\end{equation}
say.
To prove \Cref{lem:bound_on_information_quantity}, it will therefore be sufficient to show that:
\begin{enumerate}[label=(\alph*)]
	\item
	      For any $v(\,\cdot\mid  s)\in [-1,+1]^{S/2}$ and $\Phi\in\Lambda^*(\varepsilon,\tilde{p}^{(v)})$,
	      \begin{equation}
		      I_{\tilde{s}}(x;c,\Phi,\pi)
		      \geq
		      c
			  \Bigl(1-\frac{2S}{n}\Bigr)
			  \min\Bigl\{
				1-\frac{2S}{n},(1-2\epsilon_{\max})(1-\kappa)
			  \Bigr\}
		      \delta^{(v)}_{\tilde{s},f(x)}
		      .
		      \label{eqn:I_lower_bound_in_terms_of_KL_ptilde}
	      \end{equation}

	\item
	      There exists a constant $\mf{d}>0$ independent of $n$ and $\eta$, and $v^*( \,\cdot\mid  s )\in[-1,+1]^{S/2}$ for $s\in[S]$ possibly dependent on $\mf{d}$, such that
	      \begin{equation}
		      \label{eqn:bound_on_KL_div}
		      \delta^{(v^*)}_{s,s'}
		      \geq
		      \mf{d}\kappa^2
		      .
	      \end{equation}
\end{enumerate}

\paragraph{Proof of (a)}

For this proof we will suppress $v$ in the notation and write $\tilde{p}_i = \tilde{p}_i^{(v)}$ for $i\in\{0,1\}$ and $\delta_{s,s'}=\delta^{(v)}_{s,s'}$.

We begin by proving that for all $s, s'$, $a$,
\begin{equation}
	\KL\bigl( p(\,\cdot\mid  s,a), p(\,\cdot\mid  s',a) \bigr)
	\geq
	\delta_{s,s'}
	.
	\label{eqn:KL_ps_psprime_geq_delta}
\end{equation}
We distinguish four cases, using \eqref{eqn:latent_state_transition_kernel_macro_def}:

\begin{itemize}
	\item[$a \neq a^*_s, a^*_{s'}$.]
	      Observe that
	      \begin{equation}
				\delta_{s,s'}
				=
				\frac{1}{2}
				\mathrm{KL}\bigl(
					\tilde{p}_0(\,\cdot\mid  s),\tilde{p}_0(\,\cdot\mid  s')
				\bigr)
				+
				\frac{1}{2}
				\mathrm{KL}\bigl(
					\tilde{p}_1(\,\cdot\mid  s),\tilde{p}_1(\,\cdot\mid  s')
				\bigr)
				.
				\label{eqn:delta_s_s_identity}
	      \end{equation}
	      We therefore have
	      \begin{align}
		       &
		      \KL\bigl( p(\,\cdot\mid  s,a), p(\,\cdot\mid  s',a) \bigr)
		      \\
		       &
		      \overset{\eqref{eqn:latent_state_transition_kernel_macro_def}}{=}
		      \frac{1}{2}
		      \sum_{s''\in\mcS_0}
		      \tilde{p}_0(s'' \mid s)
		      \ln\frac{\tilde{p}_0(s'' \mid s)}{\tilde{p}_0(s'' \mid s')}
		      +
		      \frac{1}{2}
		      \sum_{s''\in\mcS_1}
		      \tilde{p}_1(s'' \mid s)
		      \ln\frac{\tilde{p}_1(s'' \mid s)}{\tilde{p}_1(s'' \mid s')}
		      \overset{\eqref{eqn:delta_s_s_identity}}{=}
		      \delta_{s,s'}
		      .
		      \nonumber
	      \end{align}

	\item[$a=a^*_{s'}\neq a^*_s$.]
	      We have
	      \begin{align}
		       &
		      \KL\bigl(p(\,\cdot\mid  s,a^*_{s'}),p(\,\cdot\mid  s',a^*_{s'})\bigr)
		      \nonumber
		      \\
		       &
		      \overset{\eqref{eqn:latent_state_transition_kernel_macro_def}}{=}
		      \frac{1}{2}
		      \sum_{s''\in\mcS_0}
		      \tilde{p}_0(s'' \mid s)
		      \ln\frac{\tilde{p}_0(s'' \mid s)}{(1-2\varepsilon_{\iota(s')})\tilde{p}_0(s'' \mid s')}
		      \nonumber
		      +
		      \frac{1}{2}
		      \sum_{s''\in\mcS_1}
		      \tilde{p}_1(s'' \mid s)
		      \ln\frac{\tilde{p}_1(s'' \mid s)}{(1+2\varepsilon_{\iota(s')})\tilde{p}_1(s'' \mid s')}
		      \\
		       &
		      =
		      \frac{1}{2}
		      \sum_{s''\in\mcS_0}
		      \tilde{p}_0(s'' \mid s)
		      \ln\frac{1/2}{1/2-\varepsilon_{\iota(s')}}
		      +
		      \frac{1}{2}
		      \sum_{s''\in\mcS_1}
		      \tilde{p}_1(s'' \mid s)
		      \ln\frac{1/2}{1/2+\varepsilon_{\iota(s')}}
		      +
		      \delta_{s,s'}
		      \nonumber
		      \\
		       &
		      =
		      \KL
		      \bigl(
		      \mathrm{Ber}(1/2),\mathrm{Ber}(1/2-\varepsilon_{\iota(s')})
		      \bigr)
		      +
		      \delta_{s,s'}
		      \geq
		      \delta_{s,s'},
	      \end{align}
	      where we have used that $\sum_{s''\in\mcS_i}\tilde{p}_i(s'' \mid s)=1$ for each $s$, $i\in\{0,1\}$.

	\item[$a=a^*_s\neq a^*_{s'}$.]
	      We have
	      \begin{align}
		       &
		      \KL\bigl(p(\,\cdot\mid  s,a^*_{s}),p(\,\cdot\mid  s',a^*_{s})\bigr)
		      \nonumber
		      \\
		       &
		      \overset{\eqref{eqn:latent_state_transition_kernel_macro_def}}{=}
		      \frac{1-2\varepsilon_{\iota(s)}}{2}
		      \sum_{s''\in\mcS_0}
		      \tilde{p}_0(s'' \mid s)
		      \ln\frac{(1-2\varepsilon_{\iota(s)})\tilde{p}_0(s'' \mid s)}{\tilde{p}_0(s'' \mid s')}
		      \nonumber \\
		       &
		      \phantom{=} +
		      \frac{1+2\varepsilon_{\iota(s)}}{2}
		      \sum_{s''\in\mcS_1}
		      \tilde{p}_1(s'' \mid s)
		      \ln\frac{(1+2\varepsilon_{\iota(s)})\tilde{p}_1(s'' \mid s)}{\tilde{p}_1(s'' \mid s')}
		      \nonumber
		      \\
		       &
		      =
		      \mathrm{KL}\bigl(\mathrm{Ber}(1/2-\varepsilon_{\iota(s)}),\mathrm{Ber}(1/2)\bigr)
		      +
		      \delta_{s,s'}
		      \geq
		      \delta_{s,s'}.
	      \end{align}

	\item[$a = a^*_s = a^*_{s'}$.]
	      Finally,
	      \begin{align}
		       &
		      \KL(p(\,\cdot\mid  s,a^*_s),p(\,\cdot\mid  s',a^*_s))
		      \nonumber
		      \\
		       &
		      \overset{\eqref{eqn:latent_state_transition_kernel_macro_def}}{=}
		      \frac{1-2\varepsilon_{\iota(s)}}{2}
		      \sum_{s''\in\mcS_0}
		      \tilde{p}_0(s'' \mid s)
		      \ln\frac{\tilde{p}_0(s'' \mid s)}{\tilde{p}_0(s'' \mid s')}
		      +
		      \frac{1+2\varepsilon_{\iota(s)}}{2}
		      \sum_{s''\in\mcS_1}
		      \tilde{p}_1(s'' \mid s)
		      \ln\frac{\tilde{p}_1(s'' \mid s)}{\tilde{p}_1(s'' \mid s')}
		      \nonumber \\
		       &
		      =
		      \delta_{s,s'}
		      .
	      \end{align}

\end{itemize}

This proves \eqref{eqn:KL_ps_psprime_geq_delta}.
Combined with \eqref{eqn:I_lower_bound_in_terms_of_KL_p}, this implies that
\begin{equation}
	I_{\tilde{s}}(x;c,\Phi,\pi)
	\geq
	cnq(x\mid f(x))\sum_a\omega_{\Psi,\pi}(\tilde{s},a)\delta_{\tilde{s},f(x)}
	.
	\label{eqn:Itildes_intermediate_bound}
\end{equation}

What remains is to show that
\begin{equation}
	cnq(x\mid f(x))\allowbreak\sum_a\omega_{\Psi,\pi}(\tilde{s},a)
	\geq
	c\biggl(1-\frac{2S}{n}\biggr)
	\min\biggl\{
	1-\frac{2S}{n},
	(1-2\epsilon_{\max})(1-\kappa)
	\biggr\}
	.
	\label{eqn:bound_on_visitation_rate}
\end{equation}
Firstly, by \Cref{def:hard-to-learn-BMDP-instance,def:hard_to_learn_class} we have for $\Phi\in\Lambda(\varepsilon,\tilde{p})$ that $q(y \mid s)=1/|f^{-1}(s)|$ and $|f^{-1}(s)|\leq \floor{\ceil{n/2}/(S/2)}+1\leq n/S + 2$.
It follows that
\begin{equation}
	\label{eqn:bound_on_emission_probability}
	nq(x\mid f(x))
	=
	\frac{n}{|f^{-1}(f(x))|}
	\geq
	\frac{n}{\frac{n}{S}+2}
	=
	\frac{S}{1+\frac{2S}{n}}
	\geq
	S\biggl(1-\frac{2S}{n}\biggr)
	.
\end{equation}
Next, recall from the paragraph preceding \eqref{eqn:Ijxcdef} that $\Psi$ is obtained from $\Phi$ by reassigning context $x$ to latent state $\tilde{s}\neq f(x)$ and appropriately renormalizing the emission probabilities.
Let $f_{\Psi}$ and $q_{\Psi}$ denote the decoding function and emission probabilities of $\Psi$, respectively.
It follows from \Cref{lem:visitation_rate_bound} for $\mcX = f^{-1}_{\Psi}(\tilde{s})$ and $\sum_{y\in f^{-1}_{\Psi}(\tilde{s})}q_{\Psi}(y \mid \tilde{s})=1$ that
\begin{equation}
	\label{eqn:visitation_rate_bound_applied_to_omega}
	\begin{split}
		\sum_a
		\omega_{\Psi,\pi}(\tilde{s},a)
		 &
		\overset{\eqref{eqn:definition_of_visitation_probabilities}}{=}
		\frac{1}{H}
		\sum_a
		\sum_{h=1}^H
		\mbbP_{\Psi,\pi}[s_h=\tilde{s},a_h=a]
		=
		\frac{1}{H}
		\sum_{h=1}^H
		\mbbP_{\Psi,\pi}[s_h=\tilde{s}]
		\\
		 &
		=
		\frac{1}{H}
		\sum_{h=1}^H
		\mbbP_{\Psi,\pi}[x_h\in f_{\Psi}^{-1}(\tilde{s})]
		\geq
		\min
		\Bigl\{
		\frac{|f_{\Psi}^{-1}(\tilde{s})|}{n}
		,
		\min_{s,a}
		p(\tilde{s} \mid s,a)
		\Bigr\}
		.
	\end{split}
\end{equation}
Then note that because $x$ was moved to $\tilde{s}$, $f^{-1}_{\Psi}(\tilde{s})= f^{-1}(\tilde{s})+1 \geq \floor{\floor{n/2}/(S/2)} \geq n/S-2$.
Moreover, by combining \eqref{eqn:latent_state_transition_kernel_macro_def} and \eqref{eqn:p_tilde_choice_with_packing} we also find that $\min_{s,a}p(\tilde{s}\mid s,a)\geq (1-2\epsilon_{\max})(1-\kappa)/S$ since $v(s' \mid s)\geq -1$.
Together with \eqref{eqn:visitation_rate_bound_applied_to_omega}, these observations imply that
\begin{equation}
	\label{eqn:bound_on_omega_psi}
	\sum_a
	\omega_{\Psi,\pi}(\tilde{s},a)
	\geq
	\frac{1}{S}
	\min
	\biggl\{1-\frac{2S}{n},(1-2\epsilon_{\max})(1-\kappa)\biggr\}
	.
\end{equation}
\Cref{eqn:bound_on_visitation_rate} now follows by combining \eqref{eqn:bound_on_emission_probability} and \eqref{eqn:bound_on_omega_psi}.
This completes the proof of \eqref{eqn:I_lower_bound_in_terms_of_KL_ptilde}.

\paragraph{Proof of (b)}
Use Pinsker's inequality to write
\begin{equation}
	\delta^{(v)}_{s,s'}
	\geq
	\frac{1}{2}
	\Bigl(
	\sum_{s''\in\mcS_0}
	\bigl|
	\tilde{p}^{(v)}_0(s'' \mid s)-\tilde{p}^{(v)}_0(s'' \mid s')
	\bigr|
	\Bigr)^2
	=
	\frac{2\kappa^2}{S^2}
	\Bigl(
	\sum_{s''\in\mcS_0}
	\bigl|
	v(s'' \mid s)-v(s'' \mid s')
	\bigr|
	\Bigr)^2
	.
	\label{eqn:pinsker_applied}
\end{equation}
We will proceed with case distinctions on the value of $S$.
Recall that $S/4 \in \mbbN_{\geq 2}$ by \eqref{eqn:parameter_assumptions}.

\begin{itemize}
	\item[Case $\frac{S}{4}\geq 12$.]
	      Let $\gamma\eqdef \sqrt{(12/S)\ln S}$.
		  Observe that $\gamma\in (0,1)$ for $S/4\geq 12$, and that $2\ln(S)\leq (S/4)\gamma^2-\ln(S)$.
		  Upon identifying $M \equiv S$, $n\equiv S/4$, $A \equiv 1$, \cite[Lemma~D.6]{jin2020reward} now ensures that there exist vectors $v^*( \,\cdot\mid  s )\in\{-1,+1\}^{S/2}$ such that for all $s,s'$ such that $s\neq s'$,
	      \begin{equation}
		      \label{eqn:uncorrelated_packing}
		      \bigl\langle
		      v^*( \,\cdot\mid  s ),v^*(\,\cdot\mid  s')
		      \bigr\rangle
		      \leq
		      \frac{\gamma S}{2}
			  =
		      \sqrt{3S \ln S}
		      .
	      \end{equation}
	      Then observe that because $v^*( \,\cdot\mid  s )\in\{-1,+1\}^{S/2}$ for each $s$,
	      \begin{equation}
		      \sum_{s''\in\mcS_0}
		      \bigl|v^*(s'' \mid s)-v^*(s'' \mid s')\bigr|
		      =
		      \frac{S}{2}
		      -
		      \langle v^*( \,\cdot\mid  s ), v^*(\,\cdot\mid  s')\rangle
		      \overset{\eqref{eqn:uncorrelated_packing}}{\geq}
		      \frac{S}{2}
		      (1-\gamma).
	      \end{equation}
	      Together with \eqref{eqn:pinsker_applied} we conclude that
	      \begin{equation}
		      \label{eqn:lower_bound_on_KL_divergence}
		      \delta_{s,s'}
		      \geq
		      \frac{1}{2}
		      (1-\gamma)^2\kappa^2
		      \geq
		      \mf{d}_1
		      \kappa^2
		      ,
	      \end{equation}
	      where the final inequality holds for some $\mf{d}_1$ that is independent of $n$ and $\eta$ since $\gamma<1$ for $S/4\geq 12$.

	\item[Case $2\leq\frac{S}{4}< 12$.]
	      We need a different argument for when $S/4<12$ because $\gamma$ then exceeds 1, in which case \cite[Lemma D.6]{jin2020reward} no longer applies.
	      This occurs because \cite[Lemma D.6]{jin2020reward} uses a probabilistic argument to show that a packing satisfying \eqref{eqn:uncorrelated_packing} exists, and the concentration inequalities used therein are not sharp enough to prove this for small $S$.

	      We construct instead an explicit packing of vectors $v^*( \,\cdot\mid  s )\in[-1,+1]^{S/2}$ for $s\in[S]$.
	      This construction fails for large $S$, however, because the resulting lower bound for $\delta_{s,s'}$ is $O(1/S^2)$, whereas that in \eqref{eqn:lower_bound_on_KL_divergence} is $\Omega(1)$.

	      Specifically, for $s\in\mcS_0$, let $v(s'' \mid s)=1$ when $s''=s$ and let $v(s'' \mid s)=-1/(S/2-1)$ when $s''\in \mcS_0\setminus\{s\}$.
	      Meanwhile, for $s\in\mcS_1$ let $v^*(s'' \mid s)=-1$ when $s''=s-S/2$ and let $v^*(s'' \mid s)=1/(S/2-1)$ when $s''\in\mcS_0\setminus\{s-S/2\}$.
	      One can then explicitly verify that for $s,s'\in [S]$ such that $s\neq s'$,
	      \begin{equation}
		      \sum_{s''\in\mcS_0}
		      |v^*(s'' \mid s)-v^*(s'' \mid s')|
		      \geq
		      2\biggl(1-\frac{1}{S/2-1}\biggr)
		      .
	      \end{equation}
	      Together with \eqref{eqn:pinsker_applied} it follows that
	      \begin{equation}
		      \delta_{s,s'}
		      \geq
		      \frac{4\kappa^2}{S^2}
		      \biggl(1-\frac{1}{S/2-1}\biggr)^2
		      \geq
		      \mf{d}_2\kappa^2,
	      \end{equation}
	      where the final inequality holds for some $\mf{d}_2>0$ that is independent of $n$ and $\eta$ since $S/4<68$.
\end{itemize}
Letting $\mf{d}=\mf{d}_1\wedge \mf{d}_2$ proves \eqref{eqn:bound_on_KL_div} for all $S/4\in\mbbN_{\geq 2}$.

\subsection{Proof of \texorpdfstring{\Cref{lem:regret_decomposition_lower_bound}}{the regret decomposition for the lower bound}}
\label{sec:proof_of_regret_decomposition_lower_bound}

Define for $x\in[n]$, $a\in[A]$, $h\in[H]$,
\begin{equation}
	\gap_h(x,a)
	\eqdef
	V^{\pi^*}_{h}(x) - Q^{\pi^*}_{h}(x,a)
	.
\end{equation}

We claim first that
for $\varepsilon \in [0, \tfrac{1}{2}]^2$,
$\Phi \in \Lambda^*(\varepsilon)$,
\begin{equation}
	\reg_K(\mcL;\Phi)
	=
	\sum_{k = 1}^K
	\sum_{h = 1}^H
	\sum_x
	\sum_a
	\mbbE_{\Phi,\mcL}[\mathbbm{1}\{x_{k,h}=x,a_{k,h}=a\}]\gap_h(x,a)
	.
	\label{eqn:standard_mdp_regret_decomposition}
\end{equation}
Indeed: \eqref{eqn:standard_mdp_regret_decomposition} follows from \cite[Proposition 4]{tirinzoni2021fully} when viewing $\Phi$ as an \gls{MDP} with state space $[n]$ and identifying
$\mf{A} \equiv \mcL$,
$\mcM \equiv \Phi$,
$\mcS \equiv [n]$,
$s \equiv x$,
$\mcA \equiv [A]$,
$N_{K,h}(s,a) \equiv \sum_{k=1}^K\mathbbm{1}\{x_{k,h}=x,a_{k,h}=a\}$,
and
$\Delta_{\mcM,h}(s,a)\equiv \gap_h(x,a)$.

We claim second that
if $\varepsilon_1 \geq \varepsilon_0$,
then
for any $\Phi \in \Lambda^*(\varepsilon)$ and $h\in[H]$,
\begin{equation}
	\gap_h(x,a)
	\geq
	\varepsilon_{\iota(f(x))} \mathbbm{1}\{a\neq a^*_{f(x)}, h\leq H-1\}
	.
	\label{eqn:suboptimality_gap_bound}
\end{equation}
Recall that $\iota$ is defined in \eqref{def:iota}.

If \eqref{eqn:suboptimality_gap_bound} is indeed true, then \Cref{lem:regret_decomposition_lower_bound} follows from
(i) \Cref{eqn:standard_mdp_regret_decomposition,eqn:suboptimality_gap_bound};
(ii) the definitions of states, contexts, and $f$;
and
(iii) the fact that $\mathcal{S}_0, \mathcal{S}_1$ partitions the set $[S]$:
\begin{equation}
	\begin{split}
		 &
		\reg_K(\mcL;\Phi)
		\\
		 &
		\overset{(i)}{\geq}
		\sum_k
		\sum_{h=1}^H
		\sum_x
		\sum_a
		\varepsilon_{\iota(f(x))}
		\mbbE_{\Phi,\mcL}
		\bigl[
		\mathbbm{1}\{x_{k,h}=x,a_{k,h}=a\}
		\bigr]
		\mathbbm{1}\{a\neq a^*_{f(x)}, h\leq H-1\}
		\\
		 &
		=
		\sum_k
		\sum_{h=1}^{H-1}
		\sum_x
		\varepsilon_{\iota(f(x))}
		\mbbE_{\Phi,\mcL}
		\bigl[
			\mathbbm{1}\{x_{k,h}=x\}
			(
			1-\mathbbm{1}\{a_{k,h} = a_{f(x)}^*\}
			)
			\bigr]
		\\
		 &
		\overset{(ii)}{=}
		\sum_k
		\sum_{h=1}^{H-1}
		\sum_{s}
		\varepsilon_{\iota(s)}
		\mbbE_{\Phi,\mcL}
		\bigl[
			\mathbbm{1}\{x_{k,h}\in f^{-1}(s)\}(1-\mathbbm{1}\{a_{k,h} = a_{s}^*\})
			\bigr]
		\\
		 &
		\overset{(iii)}{=}
		\sum_k
		\sum_{h=1}^{H-1}
		\sum_{i=0,1}
		\sum_{s\in\mcS_i}
		\varepsilon_i
		\mbbE_{\Phi,\mcL}
		\bigl[
			\mathbbm{1}\{s_{k,h}=s\}(1-\mathbbm{1}\{a_{k,h} = a_s^*\})
			\bigr]
		.
	\end{split}
\end{equation}
We will therefore pursue a proof of \eqref{eqn:suboptimality_gap_bound}.

\subsubsection{Proof of \texorpdfstring{\eqref{eqn:suboptimality_gap_bound}}{suboptimality gap bound}}

Note that $\gap_H(x,a) \geq 0$ by \eqref{eqn:backward_value_iteration}.
This proves \eqref{eqn:suboptimality_gap_bound} for $h = H$, and for $a = a_{f(x)}^*$.
It remains to prove \eqref{eqn:suboptimality_gap_bound} for $h\leq H-1$ and $a\neq a_{f(x)}^*$.

Recall \Cref{def:hard-BMDP--latent-state-transition-probabilities,def:hard-BMDP--unit-rewards} and \eqref{def:BMDP-transition-kernel}.
Notice that since $r_h(x,a)$ and $P(\,\cdot\mid x,a)$ depend only on the latent state $f(x)$, \eqref{eqn:bellman_equation} also implies that $V^{\pi^*}_{h+1}(x)$ depends only on the latent state of $x$.
In fact, we claim that for all $h\in[H-1]$
\begin{equation}
	\label{eqn:induction_claim}
	\exists v_{h+1}\in\mbbR_{\geq0}^2:
	V^{\pi^*}_{h+1}(x)
	=
	v_{h+1}(\iota(f(x)))
	\quad \mathrm{and}\quad
	v_{h+1}(1)-v_{h+1}(0)\geq 1.
\end{equation}
This is clear when $h=H-1$, since $V^{\pi^*}_{H}(x)=\max_{a}r_H(x,a)=\mathbbm{1}\{\iota(f(x)) = 1\}$.
We can then proceed inductively.

Assume that \eqref{eqn:induction_claim} holds for some $h\in[H-1]$.
We then find using \eqref{def:BMDP-transition-kernel}, $\sum_{y\in f^{-1}(s)}q(y \mid s)=1$, and \eqref{eqn:latent_state_transition_kernel_macro_def} that
\begin{equation}
	\label{eqn:expression_next_context_value}
	\begin{split}
		 &
		\langle P(\,\cdot\mid x,a), V^{\pi^*}_{h+1}\rangle
		\\
		 &
		=
		\sum_{i=0,1}
		v_{h+1}(i)
		\sum_{s\in\mcS_i}
		p(s \mid f(x),a)
		\sum_{y\in f^{-1}(s)}
		q(y \mid s)
		=
		\sum_{i=0,1}
		v_{h+1}(i)
		\sum_{s\in\mcS_i}
		p(s \mid f(x),a)
		\\
		 &
		=
		\begin{cases}
			\frac{v_{h+1}(1)+v_{h+1}(0)}{2}
			 &
			\textnormal{if } a\neq a_{f(x)}^*,
			\\
			\frac{v_{h+1}(1)+v_{h+1}(0)}{2}
			+
			\varepsilon_{\iota(f(x))}(v_{h+1}(1)-v_{h+1}(0))
			 &
			\textnormal{if } a=a_{f(x)}^*
			.
		\end{cases}
	\end{split}
\end{equation}
Consequently, using \Cref{def:hard-BMDP--unit-rewards}, \eqref{eqn:expression_next_context_value}, and that $v_{h+1}(1)-v_{h+1}(0)\geq 1$ by the induction hypothesis, we also have that
\begin{align}
	V^{\pi^*}_{h}(x)
	 &
	=
	\max_{a}
	\{
	r_{h}(x,a)
	+
	\langle P(\,\cdot\mid  x,a), V^{\pi^*}_{h+1} \rangle
	\}
	\nonumber
	\\
	 &
	=
	\mathbbm{1}\{\iota(f(x)) = 1\}
	+
	\max_{a}
	\{
	\langle P(\,\cdot\mid  x,a), V^{\pi^*}_{h+1} \rangle
	\}
	\\
	 &
	=
	\mathbbm{1}\{\iota(f(x)) = 1\}
	+
	\frac{v_{h+1}(1)+v_{h+1}(0)}{2}
	+
	\varepsilon_{\iota(f(x))}(v_{h+1}(1)-v_{h+1}(0))
	\nonumber
	\\
	 &
	\defeq
	v_{h}(\iota(f(x)))
	.
	\nonumber
\end{align}
Because $\varepsilon_{1}\geq \varepsilon_{0}$ by assumption it follows that
\begin{equation}
	v_{h}(1)- v_{h}(0)
	=
	1
	+
	(\varepsilon_{1}-\varepsilon_{0})(v_{h+1}(1)-v_{h+1}(0))
	\geq
	1
	.
\end{equation}
We conclude by induction from the base case $h=H-1$ that \eqref{eqn:induction_claim} is true for all $h\in[H-1]$.

Now, by using \eqref{eqn:bellman_equation} and that $r_h(x,a)$ is the same for all $a$, we find that
\begin{equation}
	\label{eqn:bellman_applied_to_gaps}
	\begin{split}
		\gap_h(x,a)
		 &
		=
		V^{\pi^*}_{h}(x)-Q^{\pi^*}_h(x,a)
		\\
		 &
		=
		\max_{a'}
		\{
		r_h(x,a')-r_h(x,a)
		+
		\langle P(\,\cdot\mid  x,a')-P(x,a),V^{\pi^*}_{h+1}\rangle
		\}
		\\
		 &
		=
		\max_{a'}
		\langle P(\,\cdot\mid  x,a')-P(\,\cdot\mid  x,a),V^{\pi^*}_{h+1}\rangle
		.
	\end{split}
\end{equation}
Then using \eqref{eqn:induction_claim} and \eqref{eqn:expression_next_context_value}, we find that if $a\neq a_{f(x)}^*$ and $h\in [H-1]$
\begin{align}
	\gap_h(x,a)
	 &
	\overset{\eqref{eqn:bellman_applied_to_gaps}}{=}
	\max_{a'}
	\langle P(\,\cdot\mid x,a')-P(\,\cdot\mid x,a),V^{\pi^*}_{h+1}\rangle
	\nonumber
	\\
	 &
	=
	\max_{a'}
	(\langle P(\,\cdot\mid x,a'), V^{\pi^*}_{h+1}\rangle - \langle P(\,\cdot\mid x,a),V^{\pi^*}_{h+1}\rangle)
	\nonumber
	\\
	 &
	\overset{\eqref{eqn:expression_next_context_value}}{=}
	\max
	\{ 0, \varepsilon_{\iota(f(x))}(v_{h+1}(1)-v_{h+1}(0))\}
	\nonumber
	\\
	 &
	=
	\varepsilon_{\iota(f(x))}(v_{h+1}(1)-v_{h+1}(0))
	\overset{\eqref{eqn:induction_claim}}{\geq}
	\varepsilon_{\iota(f(x))}
	.
\end{align}
Since we already proved \eqref{eqn:suboptimality_gap_bound} for $a=a_{f(x)}^*$ or $h=H$, we are done.
\qed

\subsection{Proof of \texorpdfstring{\Cref{lem:change_of_measure_argument}}{change of measure argument}}
\label{sec:proof_of_change_of_measure_argument}

Denote by $p$ and $p^{(0)}$ the latent state transition kernels of $\Phi$ and $\Phi^{(0)}$, respectively.
Let $P$ and $P^{(0)}$ be the corresponding transition kernels on the contexts.

We set up to apply \cite[Lemma H.1]{NEURIPS2019_10a5ab2d}.
View $\Phi^{(0)}$ and $\Phi$ as ordinary \glspl{MDP} with state space $[n]$ and identify $\mcM \equiv \Phi^{(0)}$ and $\mcM' \equiv \Phi$.
Moreover, let $\delta_z$ denote the Dirac measure at $z\in[0,1]$ and identify $\nu^{\mcM}(x,a)\equiv P^{(0)}(\,\cdot\mid x,a)\otimes \delta_{r_h(x,a)}$ and $\nu^{\mcM'}(x,a)\equiv P(\,\cdot\mid x,a)\otimes \delta_{r_h(x,a)}$, where $\mu\otimes\nu$ denotes the product measure.
This follows because, conditional on $x_{k,h}=x$ and $a_{k,h}=a$, our rewards are deterministic.
Finally note that although learning algorithms in \cite{NEURIPS2019_10a5ab2d} do not depend on a reward function, this poses no issue for us because $r$ is the same for $\Phi$ and $\Phi^{(0)}$.
We may therefore regard $\tilde{\mcL}=\{\tilde{\mcL}_k\}_{k\in\mbbN_+}$ with $\tilde{\mcL}_k\eqdef \mcL_k(\cdot,r)$ as a valid learning algorithm in the sense of \cite{NEURIPS2019_10a5ab2d}.

Given these identifications, we apply \cite[Lemma H.1]{NEURIPS2019_10a5ab2d} to
\begin{equation}
	Z
	\eqdef
	\frac{1}{T_K-K}\sum_{s\in\mcS_0}\sum_{k=1}^K\sum_{h=1}^{H-1}\mathbbm{1}\{s_{k,h}=s,a_{k,h}=a_s^*\}.
\end{equation}
Note that $Z\in [0,1]$ because $T_K-K = K(H-1)$, and that $Z$ is a measurable function of the observations from the first $K$ episodes.
The random variable $Z$ therefore satisfies the assumptions of \cite[Lemma H.1]{NEURIPS2019_10a5ab2d}.
This implies the following:
\begin{align}
	 &
	\textnormal{kl}(
	\mbbE_{\Phi^{(0)},\mcL}[Z],
	\mbbE_{\Phi,\mcL}[Z]
	)
	\label{eqn:data_processing_inequality}
	\\
	 &
	\leq
	\sum_{x,a}
	\KL\bigl(P^{(0)}(\,\cdot\mid x,a)\otimes \delta_{r_h(x,a)},P(\,\cdot\mid x,a)\otimes \delta_{r_h(x,a)}\bigr)
	\sum_{k=1}^K
	\sum_{h=1}^H
	\mbbE_{\Phi^{(0)},\mcL}[\mathbbm{1}\{x_{k,h}=x,a_{k,h}=a\}]
	\nonumber
	\\
	 &
	=
	\sum_{x,a}
	\KL\bigl(P^{(0)}(\,\cdot\mid x,a),P(\,\cdot\mid x,a)\bigr)
	\sum_{k=1}^K
	\sum_{h=1}^H
	\mbbE_{\Phi^{(0)},\mcL}[\mathbbm{1}\{x_{k,h}=x,a_{k,h}=a\}]
	,
	\nonumber
\end{align}
where $\kl(x,y)=x\ln(x/y)+(1-x)\ln((1-x)/(1-y))$ for $x,y\in [0,1]$.

Recall from \Cref{def:hard_to_learn_class} and \eqref{eqn:hard_to_learn_class_def} that both $p$ and $p^{(0)}$ are given by \eqref{eqn:latent_state_transition_kernel_macro_def} for the same pair $(\tilde{p}^*_0,\tilde{p}^*_1)$ but with different values for $\varepsilon$.
The KL-divergence between $P(\,\cdot\mid x,a)$ and $P^{(0)}(\,\cdot\mid x,a)$ is therefore nonzero only for $x\in f^{-1}(\mcS_0)$ and $a=a^*_{f(x)}$.
For these contexts and actions, we can evaluate the KL-divergence explicitly using \eqref{eqn:latent_state_transition_kernel_macro_def} and $\sum_{y\in f^{-1}(s')}q(y\mid s')=1$ to find that
\begin{align}
		&
	\KL\bigl(
	P^{(0)}(\,\cdot\mid x,a_{f(x)}^*)
	,
	P(\,\cdot\mid x,a_{f(x)}^*)
	\bigr)
	\nonumber
	\\
		&
	=
	\sum_{y}
	P^{(0)}(y\mid x,a_{f(x)}^*)\ln\frac{P^{(0)}(y\mid x,a_{f(x)}^*)}{P(y\mid x,a_{f(x)}^*)}
	=
	\sum_{s'}
	p^{(0)}(s'\mid f(x),a_{f(x)}^*)\ln\frac{p^{(0)}(s'\mid f(x),a_{f(x)}^*)}{p(s'\mid f(x),a_{f(x)}^*)}
	\nonumber
	\\
		&
	=
	\sum_{s'\in\mcS_0}
	\frac{1}{2}
	\tilde{p}^*_0(s' \mid f(x))\ln \frac{\tilde{p}^*_0(s' \mid s)}{(1-2\varepsilon_0)\tilde{p}^*_0(s' \mid s)}
	+
	\sum_{s'\in\mcS_1}
	\frac{1}{2}
	\tilde{p}^*_1(s' \mid s)\ln \frac{\tilde{p}^*_1(s' \mid s)}{(1+2\varepsilon_0)\tilde{p}^*_1(s' \mid s)}
	\nonumber
	\\
		&
	=
	\frac{1}{2}
	\ln \frac{1}{1+2\varepsilon_0}
	+
	\frac{1}{2}
	\ln\frac{1}{1-2\varepsilon_0}
	.
	\label{eqn:KL_divergence_evaluated}
\end{align}
We bound the right-hand side of \eqref{eqn:KL_divergence_evaluated} using the inequality $-\ln (1+x)\leq 1/(1+x)-1$ for $x\in(-1,1)$, followed by the assumption $\varepsilon_0\leq 1/8$:
\begin{equation}
	\label{eqn:KL_divergence_between_kernels_bound}
	\begin{split}
		\KL(P^{(0)}(\,\cdot\mid x,a_s^*),P(\,\cdot\mid x,a_s^*))
		\leq
		\frac{1}{2}
		\biggl(
		\frac{1}{1+2\varepsilon_0}
		+
		\frac{1}{1-2\varepsilon_0}
		\biggr)
		-
		1
		=
		\frac{4\varepsilon_0^2}{1-4\varepsilon_0^2}
		\leq
		8\varepsilon_0^2.
	\end{split}
\end{equation}
Combining \eqref{eqn:data_processing_inequality} with \eqref{eqn:KL_divergence_between_kernels_bound} and recalling that $\KL(P^{(0)}(\,\cdot\mid x,a),P(\,\cdot\mid x,a))=0$ unless $x\in f^{-1}(\mcS_0)$ and $a=a^*_{f(x)}$, we conclude that
\begin{equation}
	\label{eqn:data_processing_inequality_with_bounded_KL_divergence}
	\begin{split}
		8\varepsilon_0^2
		\sum_{s\in\mcS_0}
		\sum_{k=1}^K
		\sum_{h=1}^H
		\mbbE_{\Phi^{(0)},\mcL}[\mathbbm{1}\{s_{k,h}=s,a_{k,h}=a_s^*\}]
		\geq
		\textnormal{kl}(
		\mbbE_{\Phi^{(0)},\mcL}[Z],
		\mbbE_{\Phi,\mcL}[Z]
		)
		.
	\end{split}
\end{equation}
Note that $\kl(u,v)$ for $u,v\in[0,1]$ equals the KL-divergence between two Bernoulli random variables with parameters $u$ and $v$.
Hence, by Pinsker's inequality $\sqrt{\kl(u,v)}\geq \sqrt{2}|u-v|$.
Setting $u=\mbbE_{\Phi^{(0)},\mcL}[Z]$ and $v=\mbbE_{\Phi,\mcL}[Z]$ and taking square roots of both sides of \eqref{eqn:data_processing_inequality_with_bounded_KL_divergence} it follows that
\begin{align}
	\label{eqn:change_of_measure_argument_last_step}
	 &
	\sqrt{8}\varepsilon_0
	\sqrt{\sum_{s\in\mcS_0}\sum_{k=1}^K\sum_{h=1}^H\mbbE_{\Phi^{(0)},\mcL}[\mathbbm{1}\{s_{k,h}=s,a_{k,h}=a_s^*\}]}
	\geq
	\sqrt{2}|\mbbE_{\Phi^{(0)},\mcL}[Z]-\mbbE_{\Phi,\mcL}[Z]|
	\\
	 &
	=
	\frac{\sqrt{2}}{T_K-K}
	\Bigl|
	\sum_{s\in\mcS_0}
	\sum_{k=1}^K
	\sum_{h=1}^{H-1}
	\bigl(
	\mbbE_{\Phi,\mcL}
	[\mathbbm{1}\{s_{k,h}=s,
		a_{k,h}=a_s^*\}]
	-
	\mbbE_{\Phi^{(0)},\mcL}[\mathbbm{1}\{s_{k,h}=s,a_{k,h}=a_s^*\}]
	\bigr)
	\Bigr|
	,
	\nonumber
\end{align}
Dividing both sides of \eqref{eqn:change_of_measure_argument_last_step} by $\sqrt{2}$ gives \eqref{eqn:change_of_measure_argument}.
\qed

\subsection{Proof of \texorpdfstring{\Cref{lem:sum_over_optimal_actions}}{sum over optimal actions bound}}
\label{sec:proof_of_sum_over_optimal_actions}

First note from \Cref{def:hard-to-learn-BMDP-instance,def:hard_to_learn_class} that if $\varepsilon_0,\varepsilon_1>0$, then $\Phi(\varepsilon,\tilde{p},f,a^*)= \Phi(\varepsilon,\tilde{p},f',{a'}^*)\in \Lambda(\varepsilon,\tilde{p})$ if and only if $f=f'$ and $a^*={a'}^*$.
It then follows from \Cref{def:identifiable_hard_to_learn_class} that we may write
\begin{equation}
	\label{eqn:decompose_sum_over_BMDPs}
	\begin{split}
		 &
		\frac{1}{|\Lambda^*(\varepsilon)|}
		\sum_{\Phi\in\Lambda^*(\varepsilon)}
		\sum_{s\in\mcS_0}
		\sum_{k=1}^K
		\sum_{h=1}^{H-1}
		\mbbE_{\Phi,\mcL}[\mathbbm{1}\{s_{k,h}=s,a_{k,h}=a_s^*\}]
		\\
		 &
		=
		\frac{1}{|\mcF|}
		\sum_{f\in\mcF}
		\frac{1}{|\mcA|}
		\sum_{a^*\in\mcA}
		\sum_{s\in\mcS_0}
		\sum_{k=1}^K
		\sum_{h=1}^{H-1}
		\mbbE_{\Phi(\varepsilon,\tilde{p}^*,f,a^*),\mcL}[\mathbbm{1}\{s_{k,h}=s,a_{k,h}=a_s^*\}]
		.
	\end{split}
\end{equation}
We will prove \Cref{lem:sum_over_optimal_actions} by showing that for $f\in\mcF$,
\begin{equation}
	\label{eqn:sum_over_optimal_actions_proof_objective}
	\frac{1}{|\mcA|}
	\sum_{a^*\in\mcA}
	\sum_{s\in\mcS_0}
	\sum_{k=1}^K
	\sum_{h=1}^{H-1}
	\mbbE_{\Phi(\varepsilon,\tilde{p}^*,f,a^*),\mcL}[\mathbbm{1}\{s_{k,h}=s,a_{k,h}=a_s^*\}]
	\leq
	\frac{T_K-K}{2A}
	+
	2\varepsilon_0(T_K-K)\sqrt{\frac{T_K}{2A}}
	.
\end{equation}
Results \eqref{eqn:decompose_sum_over_BMDPs} and \eqref{eqn:sum_over_optimal_actions_proof_objective} would then immediately imply \eqref{eqn:sum_over_optimal_actions}.

Hence, fix $f\in\mcF$.
Recall that $\Phi^{(0)}\in \Lambda^*((0,\varepsilon_1))$ denotes the \gls{BMDP} obtained from $\Phi\in\Lambda^*((\varepsilon_0,\varepsilon_1))$ by setting the parameter $\varepsilon_0$ in \eqref{eqn:latent_state_transition_kernel_macro_def} to zero.
Let $\Psi:\Lambda^*((\varepsilon_0,\varepsilon_1))\rightarrow \Lambda^*((0,\varepsilon_1))$ map $\Psi(\Phi)=\Phi^{(0)}$.
We then use \Cref{lem:change_of_measure_argument} to bound the left-hand side of \eqref{eqn:sum_over_optimal_actions_proof_objective} as follows:

\begin{align}
	 &
	\frac{1}{|\mcA|}
	\sum_{a^*\in\mcA}
	\sum_{s\in\mcS_0}
	\sum_{k=1}^K
	\sum_{h=1}^{H-1}
	\mbbE_{\Phi(\varepsilon,\tilde{p}^*,f,a^*),\mcL}[\mathbbm{1}\{s_{k,h}=s,a_{k,h}=a_s^*\}]
	\nonumber
	\\
	 &
	\leq
	\frac{1}{|\mcA|}
	\sum_{a^*\in\mcA}
	\sum_{s\in\mcS_0}
	\sum_{k=1}^K
	\sum_{h=1}^{H-1}
	\mbbE_{\Psi(\Phi(\varepsilon,\tilde{p}^*,f,a^*)),\mcL}[\mathbbm{1}\{s_{k,h}=s,a_{k,h}=a_s^*\}]
	\label{eqn:optimal_actions_under_true_model_bound}
	\\
	 &
	+
	2\varepsilon_0(T_K-K)
	\frac{1}{|\mcA|}
	\sum_{a^*\in\mcA}
	\sqrt{
	\sum_{s\in\mcS_0}
	\sum_{k=1}^K
	\sum_{h=1}^H
	\mbbE_{\Psi(\Phi(\varepsilon,\tilde{p}^*,f,a^*)),\mcL}[\mathbbm{1}\{s_{k,h}=s,a_{k,h}=a_s^*\}]
	}
	.\nonumber
\end{align}
Our goal is to bound the two terms on the right-hand side of \eqref{eqn:optimal_actions_under_true_model_bound}.

We first perform a preliminary computation.
Let $\Phi\in \Lambda^*(\varepsilon)$ and let $p^{(0)}$ denote the latent state transition kernel of $\Phi^{(0)}=\Psi(\Phi)$.
Observe that because $q(\,\cdot \mid s)$ and $\tilde{p}^*_{\iota(s)}(\,\cdot\mid s)$ are probability distributions on $f^{-1}(s)$ and $\mcS_0$, respectively, \eqref{eqn:latent_state_transition_kernel_macro_def} implies that
\begin{equation}
	\label{eqn:upper_bound_on_transitions_to_S0}
	\max_{s,a}
	\sum_{y\in f^{-1}(\mcS_0)}p^{(0)}(f(y) \mid s,a)q(y\mid f(y))
	=
	\max_{s,a}
	\sum_{s'\in\mcS_0}p^{(0)}(s' \mid s,a)
	=
	\frac{1}{2}\max_{s}\sum_{s'\in\mcS_0}\tilde{p}^*_{0}(s'\mid s)
	=
	\frac{1}{2}.
\end{equation}
Applying \Cref{lem:visitation_rate_bound} for $\mcX=f^{-1}(\mcS_0)=\{1,\ldots,\floor{n/2}\}$ while using \eqref{eqn:upper_bound_on_transitions_to_S0}, and then identifying $K(H-1)=T_K-H$ gives
\begin{equation}
	\label{eqn:maximum_visitation_rate_of_S0}
	\sum_{k=1}^K
	\sum_{h=1}^{H-1}
	\sum_{s\in\mcS_0}
	\mbbE_{\Phi^{(0)},\mcL}[\mathbbm{1}\{s_{k,h}=s\}]
	\leq
	\sum_{k=1}^K
	\sum_{h=1}^{H-1}
	\max
	\biggl\{\frac{\floor{n/2}}{n},\frac{1}{2}\biggr\}
	\leq
	\frac{K(H-1)}{2}
	=
	\frac{T_K-K}{2}
	.
\end{equation}

Next, observe from \eqref{eqn:latent_state_transition_kernel_macro_def} that if $\varepsilon_0 = 0$, then the transition probabilities from $s\in\mcS_0$ are the same for all $a$.
In this case, $p$ and hence $\Phi$ are independent of $a_s^*$ for $s\in\mcS_0$.
We make this explicit by defining
\begin{equation}
	\bar{\Phi}^{(0)}(\varepsilon,\tilde{p}^*,f,\{a_s^*\}_{s\in \mcS_1})
	\eqdef
	\Psi(\Phi(\varepsilon,\tilde{p}^*,f,\{a^*\}_{s\in\mcS_0\cup\mcS_1}))
	,
\end{equation}
where this definition does not depend on the value of $\{a_s^*\}_{s\in \mcS_0}$ on the right-hand side.

Now fix $\{a_s^*\}_{s \in \mcS_1}$.
Since $\mcS_0 = \{1,\ldots,S/2\}$, it follows that
\begin{align}
	 &
	\frac{1}{A^{S/2}}
	\sum_{\{a_s^*\}_{s \in \mcS_0}}
	\sum_{s\in\mcS_0}
	\sum_{k=1}^K
	\sum_{h=1}^{H-1}
	\mbbE_{\Psi(\Phi(\varepsilon,\tilde{p}^*,f,\{a^*\}_{s\in\mcS_0\cup\mcS_1})),\mcL}[\mathbbm{1}\{s_{k,h}=s,a_{k,h}=a_s^*\}]
	\nonumber
	\\
	 &
	=
	\frac{1}{A^{S/2}}
	\sum_{\{a_s^*\}_{s \in \mcS_0}}
	\sum_{s\in\mcS_0}
	\sum_{k=1}^K
	\sum_{h=1}^{H-1}
	\mbbE_{\bar{\Phi}^{(0)}(\varepsilon,\tilde{p}^*,f,\{a^*_s\}_{s \in \mcS_1}),\mcL}[\mathbbm{1}\{s_{k,h}=s,a_{k,h}=a_s^*\}]
	\nonumber
	\\
	 &
	=
	\frac{1}{A^{S/2}}
	\sum_{s\in\mcS_0}
	\sum_{k=1}^K
	\sum_{h=1}^{H-1}
	\mbbE_{\bar{\Phi}^{(0)}(\varepsilon,\tilde{p}^*,f,\{a^*_s\}_{s \in \mcS_1}),\mcL}\Biggl[
	\sum_{\{a_s^*\}_{s \in \mcS_0}}
	\mathbbm{1}\{s_{k,h}=s,a_{k,h}=a_s^*\}
	\Biggr]
	\nonumber
	\\
	 &
	\overset{(i)}{=}
	\frac{A^{S/2-1}}{A^{S/2}}
	\sum_{s\in\mcS_0}
	\sum_{k=1}^K
	\sum_{h=1}^{H-1}
	\mbbE_{\bar{\Phi}^{(0)}(\varepsilon,\tilde{p}^*,f,\{a^*_s\}_{s \in \mcS_1}),\mcL}[\mathbbm{1}\{s_{k,h}=s\}]
	\overset{\eqref{eqn:maximum_visitation_rate_of_S0}}{\leq}
	\frac{T_K-K}{2A}
	,
	\label{eqn:optimal_actions_under_true_model_bound_term_1}
\end{align}
where in $(i)$ we have used that for $s\in\mcS_0$, since $\mcS_0 = \{1,\ldots,S/2\}$,
\begin{equation}
	\sum_{\{a_s^*\}_{s \in \mcS_0}}
	\mathbbm{1}\{a_{k,h}=a_s^*\}
	=
	\sum_{a^*_1,\ldots, a^*_{S/2}}
	\mathbbm{1}\{a_{k,h}=a_s^*\}
	=
	A^{S/2-1}
	.
\end{equation}
The same argument, together with the Cauchy--Schwarz inequality $\sum_{i}u_iv_i \leq \sqrt{\sum_i u_i^2}\sqrt{\sum_i v_i^2}$ for $u_i = A^{-S/2}\sqrt{\sum_{s\in\mcS_0}\sum_{k=1}^K\sum_{h=1}^{H}\mbbE_{\Psi(\Phi(\varepsilon,\tilde{p}^*,f,\{a^*\}_{s\in\mcS_0\cup\mcS_1})),\mcL}[\mathbbm{1}\{s_{k,h}=s,a_{k,h}=a_s^*\}]}$, $v_i = 1$ and $i=\{a_s^*\}_{s\in\mcS_0}$ gives
\begin{align}
	\sum_{i}u_i v_i
	 &
	\leq
	\sqrt{
		\frac{1}{A^{S/2}}
		\sum_{\{a_s^*\}_{s\in\mcS_0}}
		\sum_{s\in\mcS_0}
		\sum_{k=1}^K
		\sum_{h=1}^{H}
		\mbbE_{\Psi(\Phi(\varepsilon,\tilde{p}^*,f,\{a^*\}_{s\in\mcS_0\cup\mcS_1})),\mcL}[
		\mathbbm{1}\{s_{k,h}=s,a_{k,h}=a_s^*\}
		]
	}
	\nonumber
	\\
	 &
	\overset{\eqref{eqn:optimal_actions_under_true_model_bound_term_1}}{\leq}
	\sqrt{\frac{1}{2A^{S/2}}A^{S/2-1}T_K}
	=
	\sqrt{\frac{T_K}{2A}}.
	\label{eqn:optimal_actions_under_true_model_bound_term_2}
\end{align}

Finally, note from \Cref{def:hard_to_learn_class} that $\mcA$ has a product structure since, for each $\{a_s^*\}_{s\in\mcS_1}$, $\mcA$ includes an element $a^*=\{a_s^*\}_{s\in\mcS_0\cup\mcS_1}$ for every $\{a_s^*\}_{s\in\mcS_0}$.
This, together with \eqref{eqn:optimal_actions_under_true_model_bound_term_1} and \eqref{eqn:optimal_actions_under_true_model_bound_term_2} gives the following bound for the right-hand side of \eqref{eqn:optimal_actions_under_true_model_bound}:
\begin{equation}
	\frac{1}{|\mcA|}
	\sum_{a^*\in\mcA}
	\sum_{s\in\mcS_0}
	\sum_{k=1}^K
	\sum_{h=1}^{H-1}
	\mbbE_{\Phi(\varepsilon,\tilde{p}^*,f,a^*),\mcL}[\mathbbm{1}\{s_{k,h}=s,a_{k,h}=a_s^*\}]
	\leq
	\frac{T_K-K}{2A}
	+
	2\varepsilon_0(T_K-K)\sqrt{\frac{T_K}{2A}}
	.
\end{equation}
This proves \eqref{eqn:sum_over_optimal_actions_proof_objective} and therefore \Cref{lem:sum_over_optimal_actions}.
\qed

\subsection{Proof of \texorpdfstring{\Cref{prop:transition_regret_lower_bound}}{transition regret lower bound}}
\label{sec:proof_of_transition_regret_lower_bound}

Using that $q(\,\cdot \mid s)$ is a probability distribution on $f^{-1}(s)$, followed by \eqref{eqn:latent_state_transition_kernel_macro_def} and $\varepsilon_0\leq \varepsilon_1$, and finally that $\tilde{p}^*_{\iota(s)}(\,\cdot\mid s)$ is a probability distribution on $\mcS_0$, it follows that
\begin{align}
	\min_{s,a}
	\sum_{y\in f^{-1}(\mcS_0)}
	p(f(y)\mid s,a)
	q(y\mid f(y))
	&
	=
	\min_{s,a}
	\sum_{s'\in\mcS_0}
	p(s'\mid s,a)
	\nonumber
	\\
	&
	=
	\frac{1-2\varepsilon_1}{2}
	\min_{s\in\mcS_1}
	\sum_{s'\in\mcS_0}
	\tilde{p}_0(s'\mid s)
	=
	\frac{1-2\varepsilon_1}{2}
	.
	\label{eqn:lower_bound_on_transitions_to_S0}
\end{align}
Using \Cref{lem:visitation_rate_bound} for $\mcX=f^{-1}(\mcS_0)=[\floor{n/2}]$ with \eqref{eqn:lower_bound_on_transitions_to_S0} and $K(H-1)=T_K-K$, it follows that
\begin{align}
	\nonumber
	 &
	\reg_{K,0}(\mcL;\Phi)
	=
	\varepsilon_{0}
	\sum_{s\in\mcS_0}
	\sum_{k=1}^K
	\sum_{h=1}^{H-1}
	\biggl(
	\mbbE_{\Phi,\mcL}[\mathbbm{1}\{s_{k,h}=s\}]
	-
	\mbbE_{\Phi,\mcL}[\mathbbm{1}\{s_{k,h}=s,a_{k,h}=a_s^*\}]\biggr)
	\\
	 &
	\geq
	\varepsilon_{0}
	\biggl(
	\frac{1}{2}(1-2\varepsilon_1)(T_K-K)
	-
	\sum_{s\in\mcS_0}
	\sum_{k=1}^K
	\sum_{h=1}^{H-1}
	\mbbE_{\Phi,\mcL}[\mathbbm{1}\{s_{k,h}=s,a_{k,h}=a_s^*\}]
	\biggr)
	.
	\label{eqn:visitation_rate_bound_applied_to_S0}
\end{align} 
Since $\varepsilon_1\leq 1/8$, taking the arithmetic means of both sides of \eqref{eqn:visitation_rate_bound_applied_to_S0} over $\Phi\in \Lambda^*(\varepsilon)$ and applying \Cref{lem:sum_over_optimal_actions} proves \eqref{eqn:transition_regret_lower_bound}.

Now specify $\varepsilon_0 = \varepsilon_1\wedge \sqrt{A/T_K}(1-2\varepsilon_1-1/A)/\sqrt{32}$.
Then $\varepsilon_0\leq \varepsilon_1\leq \epsilon \wedge 1/8$.
Moreover, $\varepsilon_0 > 0$ since $\varepsilon_1\leq 1/8$, ensuring $R_0$ is well-defined and that \eqref{eqn:transition_regret_lower_bound} applies.

It remains to show that $R_0 = \Omega(\sqrt{AT_K})$ whenever $T_K=\Omega(A)$ and $\varepsilon_1=\Omega(1)$.
Let
\begin{equation}
	Z(x)
	\eqdef
	T_K\biggl(1-\frac{1}{H}\biggr)
	x
	\biggl(
	\frac{1}{2}(1-2\varepsilon_1)
	-
	\frac{1}{2A}
	-
	2x\sqrt{ \frac{T_K}{2A}}
	\biggr)
	,
\end{equation}
and observe that because $T_K=KH$ by definition, \eqref{eqn:transition_regret_lower_bound} implies that $R_0\geq Z(\varepsilon_0)$.
Since $\varepsilon_0 = \varepsilon_1 \wedge \sqrt{A/T_K}(1-2\varepsilon_1-1/A)/\sqrt{32}$ we now bound $Z(\varepsilon_0)$ separately for the two $\varepsilon_0$.

If $\varepsilon_0 = \sqrt{A/T_K}(1-2\varepsilon_1-1/A)/\sqrt{32}$, then
\begin{equation}
	\label{eqn:bound_reduction}
	Z\biggl(\frac{1-2\varepsilon_1-1/A}{\sqrt{32}}\sqrt{\frac{A}{T_K}}\biggr)
	=
	\sqrt{AT_K}\biggl(1-\frac{1}{H}\biggr)
	\frac{((1-2\varepsilon_1)-1/A)^2}{16\sqrt{2}}
	\geq
	\frac{\sqrt{AT_K}}{512\sqrt{2}}
	,
\end{equation}
Here, the last step follows because $H\geq 2$ and $A\geq 2$ by \eqref{eqn:parameter_assumptions} and because $\varepsilon_1\leq 1/8$.

Meanwhile, if $\varepsilon_0 = \varepsilon_1$, then $\sqrt{A/T_K}(1-2\varepsilon_1-1/A)/\sqrt{32}\geq \varepsilon_1$.
Therefore
\begin{equation}
	2\varepsilon_1\sqrt{\frac{T_K}{2A}}
	\leq
	\frac{1}{2}\biggl(\frac{1}{2}(1-2\varepsilon_1)-\frac{1}{2A}\biggr)
	,
\end{equation}
and thus also
\begin{equation}
	\label{eqn:case_ii_subcase_a}
	Z(\varepsilon_1)
	\geq
	T_K
	\biggl(1-\frac{1}{H}\biggr)
	\varepsilon_1
	\biggl(
	\frac{1}{4}(1-2\varepsilon_1)
	-
	\frac{1}{4A}
	\biggr)
	\geq
	\varepsilon_1\frac{T_K}{32}
	,
\end{equation}
where in the last inequality we again used $H\geq 2$ and $A\geq 2$ from \eqref{eqn:parameter_assumptions} and $\varepsilon_1\leq 1/8$.

Combining \eqref{eqn:bound_reduction} and \eqref{eqn:case_ii_subcase_a} we conclude that
\begin{equation}
	R_0
	\geq
	\varepsilon_1\frac{T_K}{32}\wedge \frac{\sqrt{AT_K}}{512\sqrt{2}}
	.
\end{equation}
Since $\varepsilon_1 \not\rightarrow 0$ we conclude that $R_0 = \Omega(T_K\wedge \sqrt{AT_K})$.
Moreover, when $T_K = \Omega(A)$ this implies that $R_0 = \Omega(\sqrt{AT_K})$ as claimed.
\qed

\subsection{Proof of \texorpdfstring{\Cref{lem:relation_between_regret_and_clustering_error}}{relation regret and clustering error}}
\label{sec:proof_of_relation_between_regret_and_clustering_error}

Fix $f\in\mcF$ and $a^*\in\mcA$.
Let $\Phi=\Phi(\varepsilon_0,\tilde{p}^*,f,a^*)\in\Lambda^*(\varepsilon)$.
Fix also $x\in f^{-1}(\mcS_1)$.

Observe that $q(y \mid s)=1/|f^{-1}(s)|$ by \Cref{def:hard-BMDP--uniform_emissions} and $\max_{s}|f^{-1}(s)|\leq \floor{\ceil{n/2}/(S/2)}+1$ by \Cref{def:hard_to_learn_class:size_distribution_of_latent_states}.
Moreover, $\min_{s,a}p(s'\mid s,a)=(1/2)\min_s\tilde{p}^*_1(s' \mid s)$ for $s'\in \mcS_1$ by \Cref{def:hard-BMDP--latent-state-transition-probabilities}.
Therefore, since $x\in f^{-1}(\mcS_1)$,
\begin{equation}
	\label{eqn:lower_bound_transition_probability_first_step}
	\min_{s,a}p(f(x)\mid s,a)q(x\mid f(x))
	\geq
	\frac{1}{2(\floor{\ceil{n/2}/(S/2)}+1)}
	\min_s\tilde{p}^*_1(f(x) \mid s)
	.
\end{equation}
It further follows from \Cref{prop:kernel_allows_identifiability} that $\tilde{p}^*_1(s' \mid s)\geq (1-\kappa)/(S/2)$.
With \eqref{eqn:lower_bound_transition_probability_first_step}, this implies
\begin{equation}
	\label{eqn:lower_bound_transition_probability}
	\begin{split}
		\min_{s,a}p(f(x)\mid s,a)q(x\mid f(x))
		\geq
		\frac{1}{\floor{\ceil{n/2}/(S/2)}+1}
		\frac{1-\kappa}{S}
		\geq
		\frac{1-\kappa}{n+S+2}
		.
	\end{split}
\end{equation}
Because $S>0$ and $\kappa\in(0,1)$, we also have that $1/n \geq (1-\kappa)/(n+S+2)$.
It then follows from \eqref{eqn:lower_bound_transition_probability} and \Cref{lem:visitation_rate_bound} for $\mcX=\{x\}$ that for all $h$,
\begin{equation}
	\label{eqn:visitation_rate_bound_context_specialized}
	\mbbP_{\Phi,\mcL}[x_{k,h}=x]
	\geq
	\min
	\biggl\{\frac{1}{n},\frac{1-\kappa}{n+S+2}\biggr\}
	=
	\frac{1-\kappa}{n+S+2}
	.
\end{equation}

Using the definition of conditional probability and \eqref{eqn:visitation_rate_bound_context_specialized} we obtain that
\begin{align}
	 &
	\reg_{K,1}(\mcL;\Phi)
	=
	\varepsilon_1
	\sum_{k=1}^K
	\sum_{h=1}^{H-1}
	\sum_{x\in f^{-1}(\mcS_1)}
	\sum_{a\neq a^*_{f(x)}}
	\mbbE_{\Phi,\mcL}[\mathbbm{1}\{x_{k,h}=x,a_{k,h}=a\}]
	\nonumber
	\\
	 &
	=
	\varepsilon_1
	\sum_{k=1}^K
	\sum_{h=1}^{H-1}
	\sum_{x\in f^{-1}(\mcS_1)}
	\sum_{a\neq a^*_{f(x)}}
	\mbbP_{\Phi,\mcL}[a_{k,h}=a\mid x_{k,h}=x]\mbbP_{\Phi,\mcL}[x_{k,h}=x ]
	\nonumber
	\\
	 &
	\overset{\eqref{eqn:visitation_rate_bound_context_specialized}}{\geq}
	\frac{\varepsilon_1(1-\kappa)}{n+S+2}
	\sum_{k=1}^K
	\sum_{h=1}^{H-1}
	\sum_{x\in f^{-1}(\mcS_1)}
	\sum_{a\neq a^*_{f(x)}}
	\mbbP_{\Phi,\mcL}[a_{k,h}=a\mid x_{k,h}=x]
	.
	\label{eqn:regret_to_misclassification_rate_visitation_rate_bound_applied}
\end{align}
Next, use the law of total expectation and the definition of $\mbbP_{\Phi,\mcL}$ in \Cref{sec:preliminaries-episodes} to write
\begin{equation}
	\mbbP_{\Phi,\mcL}[a_{k,h}=a\mid x_{k,h}=x]
	=
	\mbbE_{\Phi,\mcL}\bigl[\mbbP_{\Phi,\mcL}[a_{k,h}=a\mid x_{k,h}=x,\mcD_k]\bigr]
	=
	\mbbE_{\Phi,\mcL}[\pi_{k,h}(a\mid x)]
	.
\end{equation}
Consequently,
\begin{align}
	 &
	\reg_{K,1}(\mcL;\Phi)
	\overset{\eqref{eqn:regret_to_misclassification_rate_visitation_rate_bound_applied}}{\geq}
	\frac{\varepsilon_1(1-\kappa)}{n+S+2}
	\sum_{k=1}^K
	\sum_{h=1}^{H-1}
	\sum_{x\in f^{-1}(\mcS_1)}
	\mbbE_{\Phi,\mcL}[ 1 - \pi_{k,h}(a_{f(x)}^*\mid x)]
	\nonumber
	\\
	 &
	=
	\frac{\varepsilon_1(1-\kappa)(H-1)}{n+S+2}
	\sum_{k=1}^K
	\sum_{x\in f^{-1}(\mcS_1)}
	\mbbE_{\Phi,\mcL}\biggl[1-\frac{1}{H-1}\sum_{h=1}^{H-1}\pi_{k,h}(a_{f(x)}^*\mid x)\biggr]
	.
\end{align}
Finally, recall from \Cref{def:hard_to_learn_class} that each $s\in\mcS_1$ has a distinct $a_s^*$.
By \eqref{eqn:learning_alg_to_clustering_alg}, it follows that if $(H-1)^{-1}\sum_{h=1}^{H-1} \allowbreak \pi_{k,h}(a_{s}^*\mid x)\geq 1/2$ then $\hat{f}_{\pi_k}(x) = s$.
Conversely, if $\hat{f}_{\pi_k}(x) \neq s$ then $(H-1)^{-1}\sum_{h=1}^{H-1} \allowbreak \pi_{k,h}(a_{s}^*\mid x)< 1/2$ and thus
\begin{equation}
	\label{eqn:bound_on_how_often_wrong_action_is_selected}
	1
	-
	\frac{1}{H-1}\sum_{h=1}^{H-1}\allowbreak \pi_{k,h}(a_{f(x)}^*\mid x)
	\geq
	\frac{1}{2}\mathbbm{1}\{\hat{f}_{\pi_k}(x)\neq f(x)\}
	.
\end{equation}
Finally, because $\sum_{x\in f^{-1}(\mcS_1)}\mathbbm{1}\{\hat{f}_{\pi_k}(x)\neq f(x)\} \geq |\mcE(\hat{f}_{\pi_k};f,\mcS_1)|$ by \eqref{eqn:misclassdef} it follows that
\begin{align}
	\reg_{K,1}(\mcL;\Phi)
	 &
	\overset{\eqref{eqn:bound_on_how_often_wrong_action_is_selected}}{\geq}
	\frac{\varepsilon_1(1-\kappa)(H-1)}{2(n+S+2)}
	\sum_{k=1}^K
	\sum_{x\in f^{-1}(\mcS_1)}
	\mbbE_{\Phi,\mcL}[\mathbbm{1}\{\hat{f}_{\pi_k}(x)\neq f(x)\}]
	\nonumber
	\\
	 &
	\geq
	\frac{\varepsilon_1(1-\kappa)(H-1)}{2(n+S+2)}
	\sum_{k=1}^K
	\mbbE_{\Phi,\mcL}[|\mcE(\hat{f}_{\pi_k};f,\mcS_1)|]
	.
\end{align}
That is it.
\qed

\subsection{Proof of \texorpdfstring{\Cref{lem:local_clustering_error_lower_bound}}{local clustering error lower bound}}
\label{sec:proof_of_local_clustering_error_lower_bound}

Fix $k\in\mbbN_+$ and an estimation algorithm $\mcC$.
Recall that $\hat{f}_{\mcC,k}=\mcC_{k}(\mcD_{k},r)$ denotes the output of $\mcC$ after $k-1$ episodes.
Let $\Phi^{(1)},\Phi^{(2)}$ be as in \Cref{lem:local_clustering_error_lower_bound} and denote by $P^{(1)}$ and $P^{(2)}$ their transition kernels, respectively.
Similarly, let $q^{(1)}$ and $q^{(2)}$ denote their emission probabilities, respectively, and recall from \Cref{def:hard-BMDP--uniform_emissions} that $q^{(j)}(y \mid s)=1/|(f^{(j)})^{-1}(s)|$ for $j\in\{1,2\}$.

Observe that \eqref{eqn:local_clustering_error_lower_bound} is immediate when $\mbbP_{\Phi^{(2)},\mcL}[\hat{f}_{\mcC,k}(x)\neq f^{(2)}(x)]=1$.
Hence, assume that $\mbbP_{\Phi^{(2)},\mcL}[\hat{f}_{\mcC,k}(x)\neq f^{(2)}(x)]\in [0,1)$.

As in \Cref{sec:proof_of_change_of_measure_argument} we set up to apply \cite[Lemma H.1]{NEURIPS2019_10a5ab2d}, this time with $\mcM \equiv \Phi^{(2)}$, $\mcM' \equiv \Phi^{(1)}$, and $Z \equiv \mathbbm{1}\{\hat{f}_{\mcC,k}(x)=f^{(1)}(x)\}$.
One first verifies that $Z$ is indeed $[0,1]$-valued, and that it is a function of the observations from the first $k-1$ episodes because $\hat{f}_{\mcC,k}=\mcC_{k}(\mcD_{k},r)$.
Given the identifications in \Cref{sec:proof_of_change_of_measure_argument}, \cite[Lemma H.1]{NEURIPS2019_10a5ab2d} implies that
\begin{align}
	 &
	\sum_x
	\sum_a
	\KL(P^{(2)}(\,\cdot\mid x,a),P^{(1)}(\,\cdot\mid x,a))
	\sum_{k'=1}^{k-1}
	\sum_{h=1}^H
	\mbbE_{\Phi^{(2)},\mcL}[\mathbbm{1}\{x_{k',h}=x,a_{k',h}=a\}]
	\label{eqn:lemma_H1_applied_for_clustering}
	\\
	 &
	\geq
	\kl\bigl(\mbbP_{\Phi^{(2)},\mcL}[\hat{f}_{\mcC,k}(x)=f^{(1)}(x)],\mbbP_{\Phi^{(1)},\mcL}[\hat{f}_{\mcC,k}(x)=f^{(1)}(x)]\bigr)
	.
	\nonumber
\end{align}

We next bound the right-hand side using \cite[(11)]{doi:10.1287/moor.2017.0928}:
\begin{equation}
	\begin{split}
		 &
		\kl\bigl(
			\mbbP_{\Phi^{(2)},\mcL}
			[\hat{f}_{\mcC,k}(x)=f(x)]
			,
			\mbbP_{\Phi^{(1)},\mcL}
			[\hat{f}_{\mcC,k}((x))=f(x)]
		\bigr)
		\\
		 &
		\geq
		\bigl(
			1-\mbbP_{\Phi^{(2)},\mcL}[\hat{f}_{\mcC,k}(x)=f^{(1)}(x)]
		\bigr)
		\ln \frac{1}{1-\mbbP_{\Phi^{(1)},\mcL}[\hat{f}_{\mcC,k}(x)=f^{(1)}(x)]}-\ln 2
		\\
		 &
		\geq
		\bigl(
			1-\mbbP_{\Phi^{(2)},\mcL}[\hat{f}_{\mcC,k}(x)\neq f^{(2)}(x)]
		\bigr)
		\ln \frac{1}{\mbbP_{\Phi,\mcL}[\hat{f}_{\mcC,k}\neq f^{(1)}(x)]}
		-
		\ln 2
		,
	\end{split}
\end{equation}
where we used in the second inequality that $\{\hat{f}_{\mcC,k}(x)=f^{(1)}(x)\}\subset \{\hat{f}_{\mcC,k}(x)\neq f^{(2)}(x)\}$ because $f^{(1)}(x)\neq f^{(2)}(x)$, along with nonnegativity of $\ln (1/\mbbP_{\Phi,\mcL}[\hat{f}_{\mcC,k}\neq f^{(1)}(x)])$.

We claim that there exists a constant $C>0$ depending only on $\eta$ such that for all sufficiently large $n$,
\begin{align}
	\label{eqn:upper_bound_log_likelihood_ratio}
	 &
	\sum_x
	\sum_a
	\KL(P^{(2)}(\,\cdot\mid x,a),P^{(1)}(\,\cdot\mid x,a))
	\sum_{k'=1}^{k-1}
	\sum_{h=1}^H
	\mbbE_{\Phi^{(2)},\mcL}[\mathbbm{1}\{x_{k',h}=x,a_{k',h}=a\}]
	\leq
	C\frac{T_k}{n}
	.
\end{align}
The result \eqref{eqn:local_clustering_error_lower_bound} would then follow because combining \eqref{eqn:lemma_H1_applied_for_clustering}--\eqref{eqn:upper_bound_log_likelihood_ratio} gives
\begin{equation}
	(1-\mbbP_{\Phi^{(2)},\mcL}[\hat{f}_{\mcC,k}(x)\neq f^{(2)}(x)])\ln \frac{1}{\mbbP_{\Phi^{(1)},\mcL}[\hat{f}_{\mcC,k}\neq f^{(1)}(x)]}
	\leq
	C\frac{T_k}{n}+\ln 2.
\end{equation}
Since $\mbbP_{\Phi^{(2)},\mcL}[\hat{f}_{\mcC,k}(x)\neq f^{(2)}(x)]\in [0,1)$ by assumption and $\exp(-\ln 2/(1-p))\geq 1/2$ for $p\in [0,1)$, this implies that
\begin{equation}
	\label{eqn:relation_between_misclassification_prob_in_two_BMDPs}
	\mbbP_{\Phi^{(1)},\mcL}[\hat{f}_{\mcC,k}\neq f^{(1)}(x)]
	\geq
	\frac{1}{2}
	\exp\biggl(
	-
	\frac{C(T_k/n)}{1-\mbbP_{\Phi^{(2)},\mcL}[\hat{f}_{\mcC,k}(x)\neq f^{(2)}(x)]}
	\biggr)
	.
\end{equation}
The result \eqref{eqn:local_clustering_error_lower_bound} follows from \eqref{eqn:relation_between_misclassification_prob_in_two_BMDPs} by noting that $p+(1/2)\exp(-x/(1-p))\geq (1/2)\exp(-x)$ for $p\in[0,1)$ and $x>0$.

\subsubsection{Proof of \texorpdfstring{\eqref{eqn:upper_bound_log_likelihood_ratio}}{bound on loglikelihood ratio}}

Recall \eqref{eqn:historydef}.
It suffices to show that there exists $C>0$ depending only on $\eta$ such that for sufficiently large $n$,
\begin{equation}
	\label{eqn:upper_bound_log_likelihood_ratio_extended_equivalent}
	\mbbE_{\Phi^{(2)},\mcL}
	\biggl[\ln\frac{\mbbP_{\Phi^{(2)},\mcL}[\mcD_{k+1}]}{\mbbP_{\Phi^{(1)},\mcL}[\mcD_{k+1}]}\biggr]
	\leq
	C\frac{T_k}{n}
	,
\end{equation}
where by a slight abuse of notation, $\mbbP_{\Phi^{(j)},\mcL}[\mcD_{k+1}]$ denotes the probability of observing the particular trajectory $\mcD_{k+1}$ under $\mbbP_{\Phi^{(j)},\mcL}$.

That \eqref{eqn:upper_bound_log_likelihood_ratio_extended_equivalent} implies \eqref{eqn:upper_bound_log_likelihood_ratio} is because, by \Cref{def:hard-BMDP--uniform_initial_distribution} and the transition structure of a \gls{BMDP}, for $j\in\{1,2\}$,
\begin{equation}
	\label{eqn:expansion_of_probability_of_a_trajectory}
	\mbbP_{\Phi^{(j)},\mcL}[\mcD_{k+1}]
	=
	\prod_{k'=1}^{k}
	\mbbP_{\Phi^{(i)},\mcL}[\mcD_{(k')}\mid \mcD_{k'}]
	=
	\prod_{k'=1}^{k}
	\frac{1}{n}
	\prod_{h=1}^H
	\pi_{k',h}(a_{k',h}\mid x_{k',h}) P^{(j)}(x_{k',h+1}\mid x_{k',h},a_{k',h})
	.
\end{equation}
Since the policy $\pi_{k'} = \mcL_{k'}(\mcD_{k'},r)$ selected by $\mcL$ does not depend on $j$, it follows that
\begin{align}
	 &
	\mbbE_{\Phi^{(2)},\mcL}
	\biggl[\ln\frac{\mbbP_{\Phi^{(2)},\mcL}[\mcD_{k+1}]}{\mbbP_{\Phi^{(1)},\mcL}[\mcD_{k+1}]}\biggr]
	=
	\sum_{k'=1}^k
	\sum_{h=1}^H
	\mbbE_{\Phi^{(2)},\mcL}\biggl[\ln\frac{P^{(2)}(x_{k',h+1}\mid x_{k',h},a_{k',h})}{P^{(1)}(x_{k',h+1}\mid x_{k',h},a_{k',h})}\biggr]
	\nonumber
	\\
	 &
	=
	\sum_x
	\sum_a
	\sum_y
	P^{(2)}(y\mid x,a)\ln\frac{P^{(2)}(y\mid x,a)}{P^{(1)}(y\mid x,a)}
	\sum_{k'=1}^k\sum_{h=1}^H\mbbE_{\Phi^{(2)},\mcL}[\mathbbm{1}\{x_{k',h}=x,a_{k',h}=a\}]
	\label{eqn:log_likelihood_expression}
	\\
	&
	=
	\sum_x
	\sum_a
	\KL\bigl(P^{(2)}(\,\cdot\mid x,a),P^{(1)}(\,\cdot\mid x,a)\bigr)\sum_{k'=1}^k\sum_{h=1}^H\mbbE_{\Phi^{(2)},\mcL}[\mathbbm{1}\{x_{k',h}=x,a_{k',h}=a\}]
	.
	\nonumber
\end{align}
By nonnegativity of the KL-divergence, extending the sum over $k'$ in \eqref{eqn:upper_bound_log_likelihood_ratio} to $k$ shows that the left-hand side of \eqref{eqn:expansion_of_probability_of_a_trajectory} bounds that of \eqref{eqn:upper_bound_log_likelihood_ratio} from above.
We therefore pursue a proof of \eqref{eqn:upper_bound_log_likelihood_ratio_extended_equivalent}.

Observe from \eqref{eqn:expansion_of_probability_of_a_trajectory} that $\mbbP_{\Phi^{(j)},\mcL}[\mcD_{(k')}\mid \mcD_{k'}]$ depends on $\mcD_{k'}$ only through $\pi_{k'}$.
Consequently,
\begin{equation}
	\label{eqn:decomposition_of_log_likelihood_ratio_per_episode}
	\mbbE_{\Phi^{(2)},\mcL}\biggl[
		\ln\frac{\mbbP_{\Phi^{(2)},\mcL}[\mcD_{k+1}]}{\mbbP_{\Phi^{(1)},\mcL}[\mcD_{k+1}]}
	\biggr]
	=
	\sum_{k'=1}^k
	\mbbE_{\Phi^{(2)},\mcL}\biggl[
		\ln\frac{\mbbP_{\Phi^{(2)},\mcL}[\mcD_{(k')}\mid \pi_{k' }]}{\mbbP_{\Phi^{(1)},\mcL}[\mcD_{(k')}\mid \pi_{k' }]}
	\biggr]
\end{equation}
We next bound a single term on the right-hand side of \eqref{eqn:decomposition_of_log_likelihood_ratio_per_episode}.

Observe that the \gls{BMDP} $\Phi^{(2)}$ equals the confusing \glspl{BMDP} $\Psi^{(x,j)}$ defined in \cite[(24)--(26)]{Jedra:2022} by identifying
\begin{equation}
	\label{eqn:identifications_with_confusing_BMDP}
	\Phi \equiv \Phi^{(1)},
	\quad
	f \equiv f^{(1)},
	\quad
	q \equiv q^{(1)},
	\quad
	\tilde{f} \equiv f^{(2)},
	\quad
	\tilde{q} \equiv q^{(2)},
	\quad
	j \equiv f^{(2)}(x),
	\quad
	c
	\equiv
	c^*
	\eqdef
	\frac{q^{(2)}(x\mid s)}{q^{(1)}(x\mid s)}
	.
\end{equation}
Given these identifications, \cite[Proposition 8]{Jedra:2022} bounds the expected log-likelihood ratio for a single episode in terms of a quantity $I_j(x;c,\Phi)$.
Using the representation in \cite[Eqn.~(35)]{Jedra:2022} and identifying 
\begin{equation}
	\label{eqn:policy_and_visitation_rate_identification}
	\rho \equiv \pi_U
	,
	\quad
	m^{\Psi}_{\rho}(s,a)
	\equiv
	\omega_{\Phi^{(2)},\pi_U}(s,a)
	,
\end{equation}
with $\omega_{\Phi^{(2)},\pi_U}(s,a)$ as in \eqref{eqn:definition_of_visitation_probabilities}, the quantity $I_j(x;c,\Phi)$ corresponds to the information quantity in \eqref{eqn:Ijxcdef}:
\begin{equation}
	I_j(x;c,\Phi)
	\equiv
	I_{f^{(2)}(x)}(x;c^*,\Phi^{(1)},\pi_U)
	.
\end{equation}
Hence, by \cite[Proposition 8]{Jedra:2022},
\begin{equation}
	\label{eqn:bound_on_log_likelihood_ratio_in_terms_of_information_quantity}
	\mbbE_{\Phi^{(2)},\mcL}\biggl[
	\ln\frac{\mbbP_{\Phi^{(2)},\mcL}[\mcD_{(k')}
	\mid \pi_{k'}]}{\mbbP_{\Phi^{(1)},\mcL}[\mcD_{(k')}
	\mid \pi_{k'}]}
	\,\bigg|\, \pi_{k'} = \pi_U
	\biggr]
	=
	\frac{H}{n}
	\biggl(
		I_{f^{(2)}(x)}(x;c^*,\Phi^{(1)},\pi_{U})+O\biggl(\frac{1}{n}\biggr)
	\biggr)
	.
\end{equation}
Here, we have set their number of episodes $T$ to 1 and identified their episode horizon with $H+1$.
Moreover, \cite{Jedra:2022} actually proves \eqref{eqn:bound_on_log_likelihood_ratio_in_terms_of_information_quantity} for arbitrary policies $\pi_{k'} = \pi \neq \pi_U$, in which case $I_{f^{(2)}(x)}(x;c^*,\Phi^{(1)},\pi_{U})$ is replaced by $I_{f^{(2)}(x)}(x;c^*,\Phi^{(1)},\pi)$.
It follows, using \eqref{eqn:decomposition_of_log_likelihood_ratio_per_episode} and the law of total expectation, that
\begin{align}
	\mbbE_{\Phi^{(2)},\mcL}\biggl[
		\ln\frac{\mbbP_{\Phi^{(2)},\mcL}[\mcD_{k+1}]}{\mbbP_{\Phi^{(1)},\mcL}[\mcD_{k+1}]}
		\biggr]
	 &
	=
	\sum_{k'=1}^k
	\mbbE_{\Phi^{(2)},\mcL}\biggl[
	\mbbE_{\Phi^{(2)},\mcL}\biggl[
	\ln\frac{\mbbP_{\Phi^{(2)},\mcL}[\mcD_{(k')}\mid \pi_{k' }]}{\mbbP_{\Phi^{(1)},\mcL}[\mcD_{(k')}\mid \pi_{k' }]}
	\bigg| \pi_{k'}
	\biggr]
	\biggr]
	\label{eqn:log_likelihood_ratio_partial_bound}
	\\
	 &
	=
	\frac{H}{n}
	\sum_{k'=1}^k
	\biggl(\mbbE_{\Phi^{(2)},\mcL}[I_{f^{(2)}(x)}(x;c^*,\Phi^{(1)},\pi_{k'})]+O\biggl(\frac{1}{n}\biggr)\biggr)
	.
	\nonumber
\end{align}
We finalize the proof by bounding $I_{f^{(2)}(x)}(x;c^*,\Phi^{(1)},\pi_{k'})$.

Recall \eqref{eqn:identifications_with_confusing_BMDP} and \eqref{eqn:policy_and_visitation_rate_identification}.
\cite[Proposition 11]{Jedra:2022} states that for all $x$, $\tilde{s}\neq f^{(1)}(x)$, and $c>0$,
\begin{equation}
	\label{eqn:upper_bound_in_terms_of_I_tilde_for_pi_U}
	I_{\tilde{s}}(x;c,\Phi^{(1)},\pi_U)
	\leq
	\max\{1,c,1/c,\eta\}\tilde{I}_{\tilde{s}}(x;c,\Phi^{(1)})
	,
\end{equation}
where $\tilde{I}_{\tilde{s}}(x;c,\Phi^{(1)})$ is defined in \cite[(37)--(39)]{Jedra:2022}.
\cite[Proposition 12]{Jedra:2022} bounds the latter by
\begin{equation}
	\label{eqn:upper_bound_for_I_tilde_for_pi_U}
	\tilde{I}_{\tilde{s}}(x;c,\Phi^{(1)})
	\leq
	\frac{c\eta^7}{SA}\sum_{s,a}\Bigl(
		\frac{p(f(x)\mid s,a)}{cp(\tilde{s}\mid s,a)}-1
	\Bigr)^2
	+
	\frac{c\eta^7}{SA}\sum_{s,a}\Bigl(
		\frac{p(s\mid f(x),a)}{p(s\mid \tilde{s},a)}-1
	\Bigr)
	.
\end{equation}
We require analogues of \eqref{eqn:upper_bound_in_terms_of_I_tilde_for_pi_U} and \eqref{eqn:upper_bound_for_I_tilde_for_pi_U} for $\pi\neq \pi_U$.

For \eqref{eqn:upper_bound_in_terms_of_I_tilde_for_pi_U}, this is straightforward as the proof of \cite[Proposition 11]{Jedra:2022} goes through \emph{mutatis mutandis} upon replacing $\pi_U$ by $\pi\neq \pi_U$.
The proof of \cite[Proposition 12]{Jedra:2022}, meanwhile, uses the bound $\omega_{\Phi^{(2)},\pi_U}(s,a)\leq \eta^4/(SA)$ from \cite[Proposition 14]{Jedra:2022}.
The only modification required to extend \cite[Proposition 12]{Jedra:2022} to $\pi\neq\pi_U$ is to instead use the inequality $\sum_{s,a}\omega_{\Phi^{(2)},\pi}(s,a)w(s,a)\leq \max_{s,a}w(s,a)$ for any nonnegative $w\in \mbbR^{S A}_{\geq 0}$, which holds because $\omega_{\Phi^{(2)},\pi}(s,a)$ is a probability distribution on $[S]\times[A]$ by \eqref{eqn:definition_of_visitation_probabilities}.
After implementing these changes, we obtain that for all $x$, $\tilde{s}\neq f^{(1)}(x)$, $c>0$ and $\pi$,
\begin{equation}
	\label{eqn:modified_bound_for_I_tilde}
	I_{\tilde{s}}(x;c,\Phi^{(1)},\pi)
	\leq
	\max\{1,c,1/c,\eta\}
	\Bigl(
		c\eta^3\max_{s,a}\Bigl(
			\frac{p(f(x)\mid s,a)}{cp(\tilde{s}\mid s,a)}-1
		\Bigr)^2
		+
		c\eta^3\max_{s,a}\Bigl(
			\frac{p(s\mid f(x),a)}{p(s\mid \tilde{s},a)}-1
		\Bigr)
	\Bigr)
	.
\end{equation}
Finally, note from \Cref{def:identifiable_hard_to_learn_class} that $f^{(1)}$ and $f^{(2)}$ both satisfy \Cref{def:reachability:f} by \Cref{prop:kernel_allows_identifiability--reachability}.
It follows from \eqref{eqn:bounds_on_elements_of_kernels} that for all sufficiently large $n$,
\begin{equation}
	\label{eqn:c_star_bound}
	c^*
	=
	\frac{q^{(2)}(x\mid s)}{q^{(1)}(x\mid s)}
	=
	\frac{|(f^{(1)})^{-1}(s)|}{|(f^{(2)})^{-1}(s)|}
	\in
	[\eta^{-2},\eta^2]
	.
\end{equation}
Together with \Cref{def:reachability:p}, it follows from \eqref{eqn:modified_bound_for_I_tilde} and \eqref{eqn:c_star_bound} that there exists a constant $C>0$ depending only on $\eta$ such that for all sufficiently large $n$, $\max_x I_{f^{(2)}(x)}(x;c^*,\Phi^{(1)},\pi_{k'})\leq C/2$.
Taking $n$ large enough that the $O(1/n)$ term in \eqref{eqn:log_likelihood_ratio_partial_bound} is bounded by $C/2$ proves \eqref{eqn:upper_bound_log_likelihood_ratio_extended_equivalent}.
\qed

\subsection{Proof of \texorpdfstring{\Cref{lem:Px_is_a_permutation}}{properties of the permutation}}
\label{sec:proof_of_Px_is_a_permutation}

\emph{Proof that $|\mcF_x|/|\mcF|\geq 1/18$.}
This follows from the argument at the start of \cite[Section 6.2]{zhang2016minimax}.

Specifically, recall \Cref{def:hard_to_learn_class} and identify $K\equiv |\mcS_1|=S/2$ and $n \equiv |f^{-1}(\mcS_1)|$.
Moreover, identify $K_1 \equiv \mf{s}_1^{0}$, $K_2 \equiv \mf{s}_1^{+1}$ and $K_3 \equiv \mf{s}_1^{-1}$, where the constants $K_1, K_2$ and $K_3$ are defined in \cite[Eqn. (5.1)]{zhang2016minimax}.
Then, we can also identify $\sigma \equiv f$, $1 \equiv x$, $\Theta^L \equiv \mcF$, and $\Theta_1^L\equiv \mcF_x$, with $\Theta^L$ defined below \cite[Eqn. (5.1)]{zhang2016minimax}.

Applying the inequality for $|\Theta_1^L|/|\Theta^L|$ from \cite[Section 6.2]{zhang2016minimax} under these identifications gives\footnote{\cite{zhang2016minimax} only states this inequalty for $K\geq 3$, as a bound on $K_2$ is only established in that case.
Their reasoning, however, is also valid for $K=2$ as well.
Since \Cref{lem:latent_state_subset_sizes} provides the required bound on $|\mcS_1^{+1}|$ in this case, we apply the inequality for all $K$, thereby avoiding a case distinction.}
\begin{equation}
	\frac{|\mcF_x|}{|\mcF|}
	\geq
	\frac{x_2}{|f^{-1}(\mcS_1)|}
	\quad
	\mathrm{with}
	\quad
	x_2
	=
	(\floor{|f^{-1}(\mcS_1)|/|\mcS_1|}+1)\mf{s}_1^{+1}
	.
\end{equation}
By \Cref{lem:latent_state_subset_sizes}, $\mf{s}_1^{+1} \geq S/36$.
Since $|\mcS_1| = S/2$, it follows that
\begin{equation}
	\frac{|\mcF_x|}{|\mcF|}
	\geq
	\frac{(\floor{|f^{-1}(\mcS_1)|/(S/2)}+1)S/36}{|f^{-1}(\mcS_1)|}
	\geq
	\frac{(|f^{-1}(\mcS_1)|/(S/2))S/36}{|f^{-1}(\mcS_1)|}
	=
	\frac{1}{18}
	.
\end{equation}
This proves the first claim.

\emph{Proof that $\mcP_x|_{\mcF_x}$ is a permutation without fixed points.}
We next prove that $\mcP_x|_{\mcF_x}$ is a permutation of $\mcF_x$ for which $\mcP_xf\neq f$ for all $f\in \mcF_x$.
Given the identifications above, the map $f|_{x\in[n]\setminus[\floor{n/2}]}\mapsto \mcP_xf|_{x\in[n]\setminus[\floor{n/2}]}$ can be identified with the map $\sigma_0\mapsto \sigma[\sigma_0]$ from \cite[Section 6.2]{zhang2016minimax}.
There, it is asserted but not explicitly proven that this map is a permutation without fixed points.
For completeness, we include a proof of this fact here.

Given some nonempty subset $\mcV\subset \mbbN_+$ and an element $v\in\mcV$, let $\mcV_v^+\eqdef \{v'\in\mcV:v'>v\}$ denote the set of elements larger than $v$.
Then define the map
\begin{equation}
	\mcT_{\mcV}:\mcV\rightarrow \mcV,
	\quad
	\mcT_{\mcV}(v)
	\eqdef
	\begin{cases}
		\inf
		\{v'\in\mcV:v'>v\}
		 &
		\textnormal{if }\{v'\in\mcV:v'>v\}\neq \emptyset, \\
		\inf
		\mcV
		 &
		\textnormal{otherwise}.
	\end{cases}
\end{equation}
In other words, $\mcT_{\mcV}(v)$ selects the next largest element of $\mcV$ if it exists, and otherwise returns the smallest element of $\mcV$.
It is clear that $\mcT_{\mcV}(v)\neq v$ unless $\mcV$ is a singleton, and that the map $\mcT_{\mcV}$ is injective.

The map $\mcP_x$ in \eqref{eqn:def_of_Px} can now be expressed as:
\begin{equation}
	\label{eqn:premutation_map_reexpressed}
	\mcP_xf(y)
	=
	\begin{cases}
		f(y)
		 &
		\textnormal{if }y\neq x, \\
		\mcT_{\mcS_1^0(f)\cup \{f(x)\}}(f(x))
		 &
		\textnormal{if }y = x.
	\end{cases}
\end{equation}
Next fix $x\in[n]\setminus[\floor{n/2}]$ and $f\in\mcF_x$.
We first verify that $\mcP_xf\in \mcF_x$ and $\mcP_xf\neq f$.
\begin{enumerate}[label=(\roman*)]
	\item
	$\mcP_xf\neq f$ holds because $\mcS_1^{0}(f)$ is nonempty by \Cref{lem:latent_state_subset_sizes}, and therefore \eqref{eqn:premutation_map_reexpressed} and the properties of $\mcT$ imply that $\mcP_xf(x)\neq f(x)$, and hence $\mcP_xf\neq f$.
	\item
	\Cref{def:hard_to_learn_class:context_split} holds because by the previous point $x$ gets moved to a latent state in $\mcS_1^0(f)$ and is therefore still in $\mcS_1$ under $\mcP_xf$; hence, $(\mcP_xf)^{-1}(\mcS_1)=\{\floor{n/2}+1,\ldots,n\}$ as required.
	\item
	\Cref{def:hard_to_learn_class:size_distribution_of_latent_states,def:hard_to_learn_class:sizes_of_subsets} hold because after moving $x$ from $f(x)$ to $\mcP_xf(x)$, the former latent state contains one fewer context while the latter contains one more.
	Since $f(x)\in \mcS_1^{+1}(f)$ and $\mcP_xf(x)\in \mcS_1^{0}(f)$, and because $f(y)=\mcP_xf(y)$ for $y\neq x$, the resulting sets $\mcS_i^{\sigma}(\mcP_xf)$ have the same cardinalities as $\mcS_i^{\sigma}(f)$ for $i\in\{0,1\}$ and $\sigma\in\{-1,0,+1\}$.
	\item
	Finally, $x\in \mcS_1^{+1}(\mcP_xf)$ follows immediately from point (iii).
\end{enumerate}
Together, points (i)--(iv) prove that if $f\in\mcF_x$, then $\mcP_xf\in \mcF_x$ and $\mcP_xf\neq f$.

We conclude the proof by showing that $\mcP_x$ restricted to $\mcF_x$ is a permutation.
Since $\mcF_x$ is a finite set, it suffices to show that $\mcP_x$ is injective on $\mcF_x$.
Hence, let $f,g\in\mcF_x$ and assume that $f\neq g$.
We show that $\mcP_xf\neq \mcP_xg$.
Since $\mcP_xf(y)=f(y)$ for $y\neq x$ it immediately follows that $\mcP_xf\neq \mcP_xg$ whenever there exists $y\neq x$ such that $f(y)\neq g(y)$.

Assume therefore that no such $y$ exists, in which case $f\neq g$ implies $f(x)\neq g(x)$.
It suffices to show that $\mcP_xf(x)\neq \mcP_xg(x)$.
Since $f(x)\in \mcS_1^{+1}(f)$ and $g(x)\in \mcS_1^{+1}(g)$ by \eqref{eqn:def_of_mcFx}, $f(x)\neq g(x)$, and $f(y)=g(y)$ for all $y\neq x$, it follows that $\mcS^{0}_1(f)\cup \{f(x)\}=\mcS^{0}_1(g)\cup \{g(x)\}\defeq \bar{\mcS}$.
The result now follows from \eqref{eqn:premutation_map_reexpressed} and injectivity of the map $\mcT_{\bar{\mcS}}$.
\qed

\subsection{Proof of \texorpdfstring{\Cref{prop:lower_bound_misclassification_rate}}{reduction to local misclassification probabilities}}
\label{sec:proof_of_reduction_to_local_misclassification_probability}

Take $n$ large enough for the implication of \Cref{lem:local_clustering_error_lower_bound} to hold, and for $\min_{s}|f^{-1}(s)| \geq 1$ for all $f\in\mcF$ to hold.

Fix a set of optimal actions $a^*\in\mcA$, a learning algorithm $\mcL$, and parameters $\varepsilon$ and $\tilde{p}^*$ as in \Cref{def:identifiable_hard_to_learn_class}, and adopt the notation in \eqref{eqn:short-hand_notation_for_measures}.
Also fix a number of episodes $k\in\mbbN_+$, and recall that $\hat{f}_{\mcC,k}=\mcC_{k}(\mcD_{k},r)$ denotes the output of the estimation algorithm $\mcC$ after $k-1$ episodes.

The proof follows the outline described above \Cref{prop:lower_bound_misclassification_rate}, but involves additional steps to account for the permutation $\gamma^*_f$ appearing in \eqref{eqn:decomposition_of_ex_nr_of_misclass} and \eqref{eqn:application_of_permutation}.
We first define an auxiliary quantity which we modify the decomposition in \eqref{eqn:decomposition_of_ex_nr_of_misclass} with.

Recall \eqref{eqn:misclassdef}.
For any two decoding functions $f$ and $\hat{f}$ and any $\mcS\subset[S]$, we define the set
\begin{equation}
	\Gamma^*(\hat{f};f,\mcS)
	\eqdef
	\argmin_{\gamma\in \mathrm{Perm}([S])}
	\bigl|\cup_{s\in \mcS}\hat{f}^{-1}(\gamma(s))\setminus f^{-1}(s)\bigr|
	.
\end{equation}
Observe that the number of misclassifications of $\hat{f}$ at $\mcS$ can be expressed as
\begin{equation}
	|\mcE(\hat{f};f,\mcS)|
	=
	\sum_{\gamma\in\Gamma^*(\hat{f},f;\mcS)}
	\frac{\sum_{x\in f^{-1}(\mcS)}\mathbbm{1}\{\gamma^{-1}\circ \hat{f}(x)\neq f(x)\}}{|\Gamma^*(\hat{f};f,\mcS)|}
	.
\end{equation}
We also define the local misclassification rate at $x\in f^{-1}(\mcS)$ as
\begin{equation}
	\label{eqn:local_misclassification_rate_definition}
	\varepsilon^x(\hat{f};f,\mcS)
	\eqdef
	\sum_{\gamma \in\Gamma^*(\hat{f},f;\mcS)}
	\frac{\mathbbm{1}\{\gamma^{-1}\circ \hat{f}(x)\neq f(x)\}}{|\Gamma^*(\hat{f};f,\mcS)|}
	.
\end{equation}

The expected number of misclassifications can now be decomposed in terms of the $\varepsilon^x$ as follows.
Note that because $f^{-1}(\mcS_1)=\{\floor{n/2}+1,\ldots,n\}$ for all $f\in\mcF$, we have that
\begin{equation}
	\label{eqn:global_to_sum_of_local_misclassification_rates}
	\begin{split}
		\frac{1}{|\mcF|}
		\sum_{f\in\mcF}
		\mbbE_{f}[
			|\mcE_k(\hat{f}_{\mcC,k};f,\mcS_1)|
		]
		=
		\sum_{x\in[n]\setminus[\floor{n/2}]}
		\frac{1}{|\mcF|}
		\sum_{f\in\mcF}
		\mbbE_{f}[
			\varepsilon^x(\hat{f}_{\mcC,k};f,\mcS_1)
		]
		.
	\end{split}
\end{equation}
Then apply \Cref{lem:Px_is_a_permutation} to the right-hand side of \eqref{eqn:global_to_sum_of_local_misclassification_rates} in a manner analogous to \eqref{eqn:application_of_permutation} to obtain
\begin{equation}
	\label{eqn:permutation_applied}
	\begin{split}
		 &
		\frac{1}{|\mcF|}
		\sum_{f\in\mcF}
		\mbbE_{f}[
			|\mcE_k(\hat{f}_{\mcC,k};f,\mcS_1)|
		]
		\\
		 &
		\geq
		\sum_{x\in[n]\setminus[\floor{n/2}]}
		\frac{1}{36|\mcF_x|}
		\sum_{f\in\mcF_x}
		\bigl(
		\mbbE_{f}[
				\varepsilon^x(\hat{f}_{\mcC,k};f,\mcS_1)
			]
		+
		\mbbE_{\mcP_xf}[
				\varepsilon^x(\hat{f}_{\mcC,k};\mcP_x f,\mcS_1)
			]
		\bigr)
		.
	\end{split}
\end{equation}
To bound \eqref{eqn:permutation_applied} for all estimation algorithms $\mc{C}$, we take the infimum of both sides over all such $\mc{C}$ and use that the infimum of a sum is bounded by the sum of the infima:
\begin{align}
	\label{eqn:combine_first_three_steps_to_apply_permutation}
	 &
	\inf_{\mcC}
	\frac{1}{|\mcF|}
	\sum_{f\in\mcF}
	\mbbE_{f}[
		|\mcE_k(\hat{f}_{\mcC,k};f,\mcS_1)|
	]
	\\
	 &
	\geq
	\sum_{x\in[n]\setminus[\floor{n/2}]}
	\frac{1}{36|\mcF_x|}
	\sum_{f\in\mcF_x}
	\inf_{\mcC}
	\bigl(
	\mbbE_{f}[
			\varepsilon^x(\hat{f}_{\mcC,k};f,\mcS_1)
		]
	+
	\mbbE_{\mcP_xf}[
			\varepsilon^x(\hat{f}_{\mcC,k};\mcP_x f,\mcS_1)
		]
	\bigr)
	.
	\nonumber
\end{align}
We can now analyze the infimum over $\mcC$ on the right-hand side of \eqref{eqn:combine_first_three_steps_to_apply_permutation} separately for each $x\in[n]\setminus[\floor{n/2}]$ and $f\in\mcF_x$.

Fix $x\in[n]\setminus[\floor{n/2}]$ and $f\in\mcF_x$.
Given an estimation algorithm $\mcC$, define $\mcC^x$ as follows:
\begin{equation}
	\label{eqn:localized_algorithm_def}
	\hat{f}_{\mcC^x,k}(x)
	\eqdef
	\begin{cases}
		\hat{f}_{\mcC,k}(x) & \textnormal{if }y=x,     \\
		f(y)                & \textnormal{if }y\neq x. \\
	\end{cases}
\end{equation}
Now observe that because $f$ and $\mcP_xf$ differ only at $x\in [n]\setminus[\floor{n/2}]$, it must be that
\begin{align}
	\nonumber
	 &
	\inf_{\mcC}
	\bigl(
	\mbbE_{f}[
			\varepsilon^x(\hat{f}_{\mcC,k};f,\mcS_1)
		]
	+
	\mbbE_{\mcP_xf}[
			\varepsilon^x(\hat{f}_{\mcC,k};\mcP_x f,\mcS_1)
		]
	\bigr)
	\\
	 &
	\geq
	\inf_{\mcC}
	\bigl(
	\mbbE_{f}[
			\varepsilon^x(\hat{f}_{\mcC^x,k};f,\mcS_1)
		]
	+
	\mbbE_{\mcP_xf}[
			\varepsilon^x(\hat{f}_{\mcC^x,k};\mcP_x f,\mcS_1)
		]
	\bigr)
	.
	\label{eqn:revealing_everything_but_x}
\end{align}

Note that for all sufficiently large $n$, the set $\Gamma^*(\hat{f}_{\mcC^x,k};f,\mcS_1)$ only contains the identity permutation.
Indeed, for this permutation, the set of misclassified contexts contains at most one context, whereas any other permutation misclassifies at least $\min_{s}|f^{-1}(s)|$ contexts, which is greater than $1$ for all sufficiently large $n$ by \eqref{eqn:parameter_assumptions} and \eqref{eqn:size_distribution_of_latent_states}.
It therefore follows from \eqref{eqn:local_misclassification_rate_definition} that for all sufficiently large $n$,
\begin{equation}
	\mbbE_{f}[
		\varepsilon^x(\hat{f}_{\mcC^x,k};f,\mcS_1)
	]
	=
	\mbbP_{f}[
		\hat{f}_{\mcC^x,k}(x)\neq f(x)
	]
	.
\end{equation}
The same is also true upon replacing $f$ with $\mcP_xf$.
Together with \eqref{eqn:revealing_everything_but_x}, we conclude that
\begin{align}
	\nonumber
	 &
	\inf_{\mcC}
	\bigl(
	\mbbE_{f}[
			\varepsilon^x(\hat{f}_{\mcC,k};f,\mcS_1)
		]
	+
	\mbbE_{\mcP_xf}[
			\varepsilon^x(\hat{f}_{\mcC,k};\mcP_x f,\mcS_1)
		]
	\bigr)
	\\
	 &
	\geq
	\inf_{\mcC}
	\bigl(
	\mbbP_{f}[
			\hat{f}_{\mcC^x,k}(x)\neq f(x)
		]
	+
	\mbbP_{\mcP_xf}[
			\hat{f}_{\mcC^x,k}(x)\neq \mcP_xf(x)
		]
	\bigr)
	\nonumber
	\\
	 &
	\geq
	\inf_{\mcC}
	\bigl(
	\mbbP_{f}[
			\hat{f}_{\mcC,k}(x)\neq f(x)
		]
	+
	\mbbP_{\mcP_xf}[
			\hat{f}_{\mcC,k}(x)\neq \mcP_xf(x)
		]
	\bigr)
	,
	\label{eqn:from_local_error_to_probabilities}
\end{align}
where in passing to the last line we used that the infimum over $\mcC$ includes all algorithms of the form in \eqref{eqn:localized_algorithm_def}.

The result \eqref{eqn:from_local_error_to_probabilities}, together with \eqref{eqn:combine_first_three_steps_to_apply_permutation} now imply that for all sufficiently large $n$,
\begin{align}
	 &
	\inf_{\mcC}
	\frac{1}{|\mcF|}
	\sum_{f\in\mcF}
	\mbbE_{f}[
		|\mcE_k(\hat{f}_{\mcC,k};f,\mcS_1)|
	]
	\\
	 &
	\geq
	\sum_{x\in[n]\setminus[\floor{n/2}]}
	\frac{1}{36|\mcF_x|}
	\sum_{f\in\mcF_x}
	\inf_{\mcC}
	\bigl(
	\mbbP_{f}[
			\hat{f}_{\mcC,k}(x)\neq f(x)
		]
	+
	\mbbP_{\mcP_xf}[
			\hat{f}_{\mcC,k}(x) \neq \mcP_xf(x)
		]
	\bigr)
	.
	\nonumber
\end{align}
Finally note that $\Phi(\varepsilon,\tilde{p}^*,f,a^*)$ and $\Phi(\varepsilon,\tilde{p}^*,\mcP_xf,a^*)$ satisfy the assumptions of \Cref{lem:local_clustering_error_lower_bound}, such that for all sufficiently large $n$,
\begin{equation}
	\inf_{\mcC}
	\frac{1}{|\mcF|}
	\sum_{f\in\mcF}
	\mbbE_{f}[
		|\mcE_k(\hat{f}_{\mcC,k};f,\mcS_1)|
	]
	\geq
	\frac{n}{144}
	\exp
	\Bigl(
	-
	C\frac{T_{k}}{n}
	\Bigr)
	.
\end{equation}
That is it.
\qed

\subsection{Proof of \texorpdfstring{\Cref{prop:decoding_function_regret_lower_bound}}{decoding function regret lower bound}}
\label{sec:proof_of_decoding_function_regret_lower_bound}

We first prove \eqref{eqn:decoding_function_regret_lower_bound}.

Recall \eqref{eqn:bound_supremum_by_average}.
Analogous to \eqref{eqn:decompose_sum_over_BMDPs}, we have for $\varepsilon_0\wedge\varepsilon_1>0$ that
\begin{equation}
	\label{eqn:decomposition_of_sum_over_BMDPs}
	R_1
	=
	\frac{1}{|\mcA|}\sum_{a^*\in\mcA}
	\frac{1}{|\mcF|}\sum_{f\in\mcF}
	\textnormal{Reg}_{K,1}(\mcL;\Phi(\varepsilon_0,\tilde{p}^*,f,a^*))
	.
\end{equation}
Applying \Cref{lem:relation_between_regret_and_clustering_error} to each term on the right-hand side of \eqref{eqn:decomposition_of_sum_over_BMDPs} gives
\begin{equation}
	R_1
	\geq
	\frac{1}{|\mcA|}\sum_{a^*\in\mcA}
	\frac{1}{|\mcF|}\sum_{f\in\mcF}
	\frac{\varepsilon_1(1-\kappa)(H-1)}{2(n+S+2)}
	\sum_{k=1}^K\mbbE_{\Phi(\varepsilon_0,\tilde{p}^*,f,a^*),\mcL}[|\mcE(\hat{f}_{\pi_k}; f, \mcS_1)|]
	.
	\label{eqn:application_of_regret_clustering_error_relation}
\end{equation}
Then observe that we may also apply \Cref{prop:lower_bound_misclassification_rate} separately to each term on the right-hand side of \eqref{eqn:application_of_regret_clustering_error_relation}.
This follows because for each fixed $a^*$, it is possible to construct an estimation algorithm $\mcC$ such that $\hat{f}_{\pi_k}=\hat{f}_{\mcC,k}$.
Consequently, for all sufficiently large $n$,
\begin{equation}
	\label{eqn:combination_of_clustering_error_with_lower_bound}
	R_1
	\geq
	\frac{\varepsilon_1(1-\kappa)(H-1)}{288}\frac{1}{1+\frac{S+2}{n}}
	\sum_{k=1}^K
	\exp\biggl(-C\frac{T_k}{n}\biggr)
	.
\end{equation}
Recalling that $T_k=kH$, we evaluate the sum of exponentials, which is a geometric series, and use the elementary identity $(1-e^{-x})^{-1}\geq x^{-1}$ to write
\begin{equation}
	\label{eqn:bound_on_sum_of_exponentials}
	\begin{split}
		\sum_{k=1}^K
		\exp\biggl(
		-C\frac{T_k}{n}
		\biggr)
		=
		\frac{1-e^{-C(T_K/n)}}{1-e^{-C(H/n)}}
		\geq
		\bigl(1-e^{-C(T_K/n)}\bigr)\frac{n}{CH}
		.
	\end{split}
\end{equation}
Combining \eqref{eqn:combination_of_clustering_error_with_lower_bound} and \eqref{eqn:bound_on_sum_of_exponentials} yields \eqref{eqn:decoding_function_regret_lower_bound} for all sufficiently large $n$.

It remains to prove that \eqref{eqn:decoding_function_regret_lower_bound} implies $R_1=\Omega(n)$ whenever $T_K=\Omega(n)$ and $\varepsilon_1=\Omega(1)$.

Let $\epsilon<(1/2)(\eta-1)/(\eta+1)$ be the constant from \Cref{prop:kernel_allows_identifiability} and recall from \Cref{def:identifiable_hard_to_learn_class} that $0< \epsilon_{\max}=\varepsilon_0\vee\varepsilon_1< \epsilon$.
It then follows from \eqref{eqn:tilde_p_distance_from_uniform} and $\eta>1$ that $1-\kappa = 2(1+2\epsilon_{\max})/(\eta(1-2\epsilon_{\max})+(1+2\epsilon_{\max})) \geq 2/(1+\eta)$.
Together with \eqref{eqn:parameter_assumptions} and \eqref{eqn:decoding_function_regret_lower_bound} it follows that for all sufficiently large $n$,
\begin{equation}
	R_1
	\geq
	\frac{\varepsilon_1}{288(1+\eta) C}\frac{1-e^{-C(T_K/n)}}{1+\frac{S+2}{n}}
	n
	.
\end{equation}

Note finally that $\eta>1$ is is independent of $n$ and thus \Cref{lem:local_clustering_error_lower_bound} ensures $C$ is likewise independent of $n$.
Together with the assumptions $n=\omega(S)$, $T_K=\Omega(n)$ and $\varepsilon_1\not\rightarrow 0$ this implies that $(\varepsilon_1/(288(1+\eta) C))(1-e^{-C(T_K/n)})/(1+(S+2)/n)=\Omega(1)$, from which it follows that $R_1 = \Omega(n)$.
\qed

\section{Proofs for \texorpdfstring{\cref{sec:regret_analysis}}{the regret analysis}}

\subsection{Proof of \texorpdfstring{\Cref{lem:equivalence-of-assumptions}}{equivalence of assumptions}}
\label{sec:equivalence_of_identifiability_assumptions}

For $c>0$ and $s_1,s_2$, define
\begin{align}
	\psi_1(c;s_1,s_2)
	&
	\eqdef
	\psi_1^{\mathrm{in}}(c;s_1,s_2)
	+
	\psi_1^{\mathrm{out}}(c;s_1,s_2)
	\label{eqn:definition_lower_bound_function}
	\\
	&
	\eqdef
	\frac{1}{2\eta^7}\frac{1}{c\vee \eta}\frac{1}{SA}\sum_{s,a}\biggl(\frac{p(s_1\mid s,a)}{p(s_2\mid s,a)}-c\biggr)^2
	+
	\frac{c}{2\eta^8}\frac{1}{SA}\sum_{s,a}\biggl(
	\frac{p(s\mid s_1,a)}{p(s\mid s_2,a)}-1
	\biggr)^2
	\nonumber
\end{align}
and
\begin{align}
	\label{eqn:definition_upper_bound_function}
	\psi_2(c;s_1,s_2)
	&
	\eqdef
	\psi_2^{\mathrm{in}}(c;s_1,s_2)
	+
	\psi_2^{\mathrm{out}}(c;s_1,s_2)
	\\
	&
	\eqdef
	\frac{(1\vee \frac{1}{c^2})\eta^8}{SA}\sum_{s,a}\biggl(\frac{p(s_1\mid s,a)}{p(s_2\mid s,a)}-c\biggr)^2
	+
	\frac{(1\vee c^2)\eta^8}{SA}\sum_{s,a}\biggl(
	\frac{p(s\mid s_1,a)}{p(s\mid s_2,a)}-1
	\biggr)^2
	.
	\nonumber
\end{align}
Observe that $\psi_i^{\mathrm{in}},\psi_i^{\mathrm{out}}\geq 0$ for $i\in\{1,2\}$.

Recall \eqref{eqn:Ijxcdef}.
The proof of \Cref{lem:equivalence-of-assumptions} relies on the following lemma, proven in \Cref{sec:proof_of_information_quantity_bounds}.
\begin{lemma}
	\label{lem:information_quantity_bounds}
	Assume $\Phi$ is $\eta$-reachable and that $\mu=\mathrm{Unif}([n])$.
	Then there exists a constant $\mf{d}_1>0$ that is independent of $n$ such that for all sufficiently large $n$, and any $c>0$, $x$ and $f(x)\neq \tilde{s}\in[S]$,
	\begin{equation}
		\label{eqn:information_quantity_bounds}
		\mf{d}_1\psi_1(c;f(x),\tilde{s})\leq I_{\tilde{s}}(x;c,\Phi,\pi_U)\leq \psi_2(c;f(x),\tilde{s}).
	\end{equation}
\end{lemma}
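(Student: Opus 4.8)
The plan is to establish the two inequalities in \eqref{eqn:information_quantity_bounds} separately, in each case by turning the two summands of \eqref{eqn:Ijxcdef} into weighted sums of elementary divergences and then controlling those through the bounds \eqref{eqn:bounds_on_elements_of_kernels} implied by $\eta$-reachability. Write $q\eqdef q(x\mid f(x))$, $\rho_{s,a}\eqdef p(f(x)\mid s,a)/p(\tilde s\mid s,a)$, and $\omega\eqdef\omega_{\Psi,\pi_U}$, where $\Psi$ is the confusing \gls{BMDP} appearing in \eqref{eqn:Ijxcdef}. Since $\Psi$ agrees with $\Phi$ except at the single context $x$, \eqref{eqn:bounds_on_elements_of_kernels} together with \Cref{lem:visitation_rate_bound} yields, for all sufficiently large $n$, two-sided estimates $q\asymp S/n$, $p(\,\cdot\mid\,\cdot\,,\,\cdot\,)\asymp 1/S$, and $\omega(s,a)\asymp 1/(SA)$, all implicit constants being powers of $\eta$; this is the only place the hypotheses are used.

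For the ``in'' term, observe that the two measures $p^{\mathrm{in}}_{x,\tilde s,c,\omega}$ and $p^{\mathrm{in}}_{x,f(x),1,\omega}$ in \eqref{eqn:p_in_def} share the same marginal $\omega$ on $[S]\times[A]$, so the chain rule for the KL divergence gives
\[
	n\KL\bigl(p^{\mathrm{in}}_{x,\tilde s,c,\omega},\,p^{\mathrm{in}}_{x,f(x),1,\omega}\bigr)
	=
	n\sum_{s,a}\omega(s,a)\,\kl\bigl(cq\,p(\tilde s\mid s,a),\ q\,p(f(x)\mid s,a)\bigr),
\]
a weighted sum of Bernoulli KL divergences with success probabilities $\alpha_{s,a}=cq\,p(\tilde s\mid s,a)$ and $\beta_{s,a}=q\,p(f(x)\mid s,a)$, whose ratio is $c/\rho_{s,a}$. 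I would combine the elementary sandwich $\beta\,\phi(\alpha/\beta)\le\kl(\alpha,\beta)\le\beta\,\phi(\alpha/\beta)+\beta^2(\alpha/\beta-1)^2/(1-\beta)$, where $\phi(t)\eqdef t\ln t-t+1$, with the global two-sided bound $\tfrac12(t-1)^2/(1\vee t)\le\phi(t)\le C(1\vee t^{-2})(t-1)^2$ for an absolute constant $C$. Because $n\beta_{s,a}\asymp 1$ and $n\beta_{s,a}^2=O(1/n)$, the quadratic correction is dominated by the main term once $n$ is large; substituting $\alpha/\beta=c/\rho_{s,a}$, using $\rho_{s,a}\in[1/\eta,\eta]$ and $\omega(s,a)\asymp 1/(SA)$, the display is then sandwiched between $\mf{d}_1\,\psi_1^{\mathrm{in}}(c;f(x),\tilde s)$ and $\psi_2^{\mathrm{in}}(c;f(x),\tilde s)$, the $c\vee\eta$ and $1\vee c^{-2}$ factors in $\psi_1,\psi_2$ being exactly the budget needed to cover the regimes $c/\rho\to 0$, $c/\rho\asymp 1$, and $c/\rho\to\infty$.

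For the ``out'' term, write $\KL(p(\,\cdot\mid\tilde s,a),p(\,\cdot\mid f(x),a))=\sum_{s'}p(s'\mid f(x),a)\,\phi\bigl(p(s'\mid\tilde s,a)/p(s'\mid f(x),a)\bigr)$; all ratios lie in $[1/\eta,\eta]$ by \Cref{def:reachability:p}, so $\phi(t)$ is comparable there to $(t-1)^2$ up to $\eta$-dependent constants, and using $p(\,\cdot\mid\,\cdot\,,\,\cdot\,)\asymp 1/S$ once more this divergence is comparable to $\tfrac1S\sum_{s'}(p(s'\mid f(x),a)/p(s'\mid\tilde s,a)-1)^2$. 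Multiplying by the prefactor $n\,c\,q\sum_a\omega(f(x),a)\asymp c$ and summing over $a$ sandwiches the ``out'' term of \eqref{eqn:Ijxcdef} between $\mf{d}_1\,\psi_1^{\mathrm{out}}(c;f(x),\tilde s)$ and $\psi_2^{\mathrm{out}}(c;f(x),\tilde s)$, the elementary inequality $c\le 1\vee c^2$ handling the upper side. Adding the ``in'' and ``out'' estimates yields \eqref{eqn:information_quantity_bounds} for a constant $\mf{d}_1>0$ depending only on $\eta$. (The upper bound alone is in fact immediate from \cite[Propositions 11 and 12]{Jedra:2022} after the simplifications $\max\{1,c,1/c,\eta\}\eta^7/c\le\eta^8(1\vee c^{-2})$ and $\max\{1,c,1/c,\eta\}\,c\,\eta^7\le\eta^8(1\vee c^2)$.)

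The main obstacle I anticipate is uniformity in $c$: since $\phi(t)=t\ln t-t+1$ is comparable to $(t-1)^2$ only up to a factor $1\vee t$ from above and $1/(1\vee t)$ from below, one must track carefully how $t=c/\rho$ moves through the three regimes above and match each against the $c$-dependent weights in $\psi_1$ and $\psi_2$; one must also verify that the $O(1/n)$ corrections in the Bernoulli KL expansion are genuinely negligible over the relevant range of $c$ once $n$ is taken large.
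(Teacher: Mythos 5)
Your proposal is correct in outline but takes a genuinely different route from the paper's. The paper extracts both inequalities from \cite[Propositions 11 and 12]{Jedra:2022}: the upper bound is their bound on $\tilde{I}_j$ multiplied by $\max\{1,c,1/c,\eta\}$ and simplified, while the lower bound needs a detour --- an auxiliary quantity $I^*_{\tilde{s}}$ in which $\omega_{\Psi,\pi_U}$ is replaced by $\omega_{\Phi,\pi_U}$, an appeal to the symmetry of the KL lower bound in \cite[Lemma 6]{Jedra:2022} so that the Proposition~12 argument applies to the reversed divergences, and finally the equivalence $\omega_{\Phi,\pi_U}\sim\omega_{\Psi,\pi_U}$, which is precisely where the paper's $\mf{d}_1$ comes from. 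You instead compute directly from \eqref{eqn:Ijxcdef}--\eqref{eqn:p_in_def}: since the two measures in the ``in'' term share the marginal $\omega_{\Psi,\pi_U}$, the chain rule collapses that KL to a weighted sum of Bernoulli divergences, which you sandwich via your $\phi$; the ``out'' term is a $\phi$-mixture with ratios in $[1/\eta,\eta]$; and $\eta$-reachability, through \eqref{eqn:bounds_on_elements_of_kernels} and \Cref{lem:visitation_rate_bound} applied to $\Psi$, gives $q\asymp S/n$, $p\asymp 1/S$, $\omega_{\Psi,\pi_U}\asymp 1/(SA)$ up to powers of $\eta$. The elementary ingredients check out: $\kl(\alpha,\beta)\ge\beta\phi(\alpha/\beta)$ and $\kl(\alpha,\beta)\le\beta\phi(\alpha/\beta)+(\alpha-\beta)^2/(1-\beta)$ do hold for all $\alpha,\beta\in(0,1)$, as does $\tfrac12(t-1)^2/(1\vee t)\le\phi(t)\le(1\vee t^{-2})(t-1)^2$, and your uniformity-in-$c$ worry is benign, since the quadratic correction is of order $n\,\omega\,q^2p^2(\rho_{s,a}-c)^2\lesssim n^{-1}(1\vee c^{-2})(\rho_{s,a}-c)^2/(SA)$, i.e.\ an $O(1/n)$ fraction of $\psi_2^{\mathrm{in}}$ uniformly in $c$. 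What your route buys is self-containedness: no need to open up the proofs in \cite{Jedra:2022} or to compare $\omega_{\Phi,\pi_U}$ with $\omega_{\Psi,\pi_U}$, and for the lower bound this fully suffices because any $\eta$-dependent constant is absorbed into the free $\mf{d}_1$. The one point that still needs real care is the upper bound: $\psi_2$ in \eqref{eqn:definition_upper_bound_function} carries the fixed factor $\eta^8$, so the direct computation must track the $\eta$-powers coming from $q$, $p$, $\omega_{\Psi,\pi_U}$, and the two substitutions $\rho_{s,a}\in[1/\eta,\eta]$ sharply enough to land at or below $\eta^8$ --- this is the bookkeeping you defer, and your parenthetical fallback through \cite[Propositions 11 and 12]{Jedra:2022} (which is exactly the paper's argument) is the safe way to close it.
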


In what follows, take $n$ large enough for the implication of \Cref{lem:information_quantity_bounds} to hold.
We first introduce some further notation that will be used during the proof.

Recall \eqref{eqn:Ijxcdef}, \eqref{eqn:I_eta_def} and \eqref{eqn:def_minimum_information_quantity}.
Let $c>0$ and define
\begin{equation}
	g_n(c)
	\eqdef
	\min_{x}\min_{\tilde{s}\neq f(x)}I_{\tilde{s}}(x;c,\Phi,\pi_U)
	.
\end{equation}
It follows by interchanging infima and minima that
\begin{equation}
	\min_{x}
	I(x;\Phi)
	=
	\inf_{c>0}
	g_n(c)
	\quad
	\mathrm{and}
	\quad
	\min_{x}
	I_{\eta}(x;\Phi)
	=
	\inf_{c\in[1/\eta,\eta]}
	g_n(c)
	.
\end{equation}
Observe also that $g_n(c)\geq 0$ since $I_{\tilde{s}}(x;c,\Phi,\pi_U)$ is a weighted sum of KL-divergences.

In addition, let
\begin{equation}
	\label{eqn:bar_psi_definition}
	\bar{\psi}_1(c)
	\eqdef
	\min_{s_1\neq s_2}
	\psi_1(c;s_1,s_2)
	\geq
	0
	\quad
	\mathrm{and}
	\quad
	\bar{\psi}_2(c)
	\eqdef
	\min_{s_1\neq s_2}
	\psi_2(c;s_1,s_2)
	\geq
	0
	.
\end{equation}
Observe that by \Cref{def:reachability}, for all sufficiently large $n$, $\cup_{x}\{f(x)\}=[S]$.
Take $n$ large enough that this holds.
Then \Cref{lem:information_quantity_bounds} implies that for any $c>0$
\begin{equation}
	\label{eqn:implication_of_information_quantity_bounds}
	0
	\leq
	\mf{d}_1\bar{\psi}_1(c)
	\leq
	g_n(c)
	\leq
	\bar{\psi}_2(c)
	.
\end{equation}

Finally, observe that \Cref{def:identifiability} implies that $\liminf_{n\rightarrow\infty} \inf_{c\in[1/\eta,\eta]}g_n(c)>0$, and that \Cref{lem:equivalence-of-assumptions} then claims $\liminf_{n\rightarrow\infty} \inf_{c>0}g_n(c)>0$.
To prove \Cref{lem:equivalence-of-assumptions} it therefore suffices to show that
\begin{equation}
	\liminf_{n\rightarrow\infty}\allowbreak \inf_{c>0}g_n(c)=0
	\quad
	\mathrm{implies}
	\quad
	\liminf_{n\rightarrow\infty}\allowbreak\inf_{c\in[1/\eta,\eta]}\allowbreak g_n(c)=0
	.
\end{equation}
Thus assume that $\liminf_{n\rightarrow\infty}\allowbreak \inf_{c>0}\allowbreak g_n(c)=0$.
The proof then proceed in three steps:
\begin{enumerate}
	\item[$(i)$] We first show that $\liminf_{n\rightarrow\infty} \inf_{c\in (\alpha/\eta,\eta/\alpha)}g_n(c)=0$ for any fixed $\alpha \in (0,1)$.
	\item[$(ii)$] Next, we show that $(i)$ implies that $\liminf_{n\rightarrow \infty}\allowbreak\inf_{c\in (\alpha/\eta,\eta/\alpha)}\bar{\psi}_2(c)=0$ for any fixed $\alpha\in(0,1)$.
	\item[$(iii)$] We finally analyze the function $\bar{\psi}_2(c)$ and show that $(ii)$ actually implies that $\liminf_{n\rightarrow\infty}\allowbreak\inf_{c\in [1/\eta,\eta]}\allowbreak\bar{\psi}_2(c)=0$.
\end{enumerate}
The statement of \Cref{lem:equivalence-of-assumptions} would then follow because $g_n(c)\geq 0$ such that
\begin{equation}
	0
	\overset{(iii)}{=}
	\liminf_{n\rightarrow \infty}\inf_{c\in [1/\eta,\eta]}\bar{\psi}_2(c)
	\overset{\eqref{eqn:implication_of_information_quantity_bounds}}{\geq}
	\liminf_{n\rightarrow\infty}\inf_{c\in [1/\eta,\eta]}g_n(c)
	\geq
	0
	.
\end{equation}
We proceed by proving claims $(i)$ to $(iii)$ in order.

\paragraph*{Proof of $(i)$.}
Fix $\alpha\in (0,1)$.
To prove $(i)$, we show that there exists $\mf{c}_1>0$ that is independent of $n$ such that $g_n(c)\geq \mf{c}_1$ for all $c\in(0,\alpha/\eta]$ and $c\in[\eta/\alpha,\infty)$.
Observe, namely, that this would imply
\begin{equation}
	\inf_{c>0}g_n(c)
	=
	\min\biggl\{\inf_{c\in (0,\alpha/\eta]}g_n(c), \inf_{c\in(\alpha/\eta,\eta/\alpha)}g_n(c),  \inf_{c\in [\eta/\alpha,\infty)}g_n(c)\biggr\}
	\geq
	\min\biggl\{\mf{c}_1,  \inf_{c\in(\alpha/\eta,\eta/\alpha)}g_n(c)\biggr\}
	.
\end{equation}
Since $\liminf_{n\rightarrow \infty}\inf_{c>0}g_n(c)=0$ by assumption, and $g_n(c)\geq0$ by \eqref{eqn:implication_of_information_quantity_bounds}, this would then imply $(i)$.

To bound $g_n(c)$, we use \eqref{eqn:implication_of_information_quantity_bounds} and bound $\psi_1(c;s_1,s_2)$ instead.
First let $c\in(0,\alpha/\eta]$ so that $(c\vee \eta)=\eta$ (recall that $\eta>1$ by \Cref{def:reachability}).
It follows from \eqref{eqn:definition_lower_bound_function} and nonnegativity of $\psi^{\mathrm{out}}_1$ that
\begin{equation}
	\label{eqn:bound_psi_for_small_c}
	\psi_1(c;s_1,s_2)
	\geq
	\psi_1^{\mathrm{in}}(c;s_1,s_2)
	=
	\frac{1}{2\eta^8}\frac{1}{SA}\sum_{s,a}\biggl(\frac{p(s_1\mid s,a)}{p(s_2\mid s,a)}-c\biggr)^2
	\geq
	\frac{1}{2\eta^8}\biggl(\frac{1}{\eta}-\frac{\alpha}{\eta}\biggr)^2
	=
	\frac{(1-\alpha)^2}{2\eta^{10}}
	.
\end{equation}
Here, we have used that $p(s_1\mid s,a)/p(s_2\mid s,a)\geq 1/\eta$ by \Cref{def:reachability:p} and the elementary inequality $(t-c)^2\geq (1/\eta-\alpha/\eta)$ for all $t\geq 1/\eta$ and $c\leq\alpha/\eta$.

For $c\geq \eta/\alpha$ we similarly have that
\begin{equation}
	\label{eqn:bound_psi_for_large_c}
	\psi_1(c;s_1,s_2)
	\geq
	\psi_1^{\mathrm{in}}(c;s_1,s_2)
	\geq
	\frac{1}{2c\eta^7}\frac{1}{SA}\sum_{s,a}\biggl(\frac{p(s_1\mid s,a)}{p(s_2\mid s,a)}-c\biggr)^2
	\geq
	\frac{(1-\alpha)^2}{2\alpha\eta^6}
	.
\end{equation}
Here, we have instead used that $p(s_1\mid s,a)/p(s_2\mid s,a)\leq \eta$ also by \Cref{def:reachability:p} and the elementary inequality $(1/c)(t-c)^2\geq (1/c)(\eta-c)^2\geq \eta(1-\alpha)^2/\alpha$ for all $t\leq \eta$ and $c\geq\eta/\alpha$.

Combine \eqref{eqn:implication_of_information_quantity_bounds} with \eqref{eqn:bound_psi_for_small_c} and \eqref{eqn:bound_psi_for_large_c} to conclude that for all $c\in(0,\alpha/\eta]$ and $c\in[\eta/\alpha,\infty)$
\begin{equation}
	g_n(c)
	\geq
	\mf{d}_1\min_{s_1\neq s_2}\psi_1(c;s_1,s_2)
	\geq
	\mf{d}_1\frac{(1-\alpha)^2}{2(\eta^{10}\vee\alpha\eta^6)}
	\defeq
	\mf{c}_1
	>0
	,
\end{equation}
after which claim $(i)$ follows.

\paragraph*{Proof of $(ii)$.}
Fix $\alpha\in(0,1)$.
Note from \eqref{eqn:definition_lower_bound_function} and \eqref{eqn:definition_upper_bound_function} that there exists a constant $\mf{c}_2$ that is independent of $n$ such that for $c\in (\alpha/\eta,\eta/\alpha)$,
\begin{equation}
	\frac{\psi_2^{\mathrm{in}}(c;s_1,s_2)}{\psi_1^{\mathrm{in}}(c;s_1,s_2)}
	\vee
	\frac{\psi_2^{\mathrm{out}}(c;s_1,s_2)}{\psi_1^{\mathrm{out}}(c;s_1,s_2)}
	\leq
	\mf{c}_2
	.
\end{equation}
This follows from \Cref{def:reachability:p} and because for fixed $\alpha\in(0,1)$ and $\eta> 1$, $c\in (\alpha/\eta,\eta/\alpha)$ is bounded away from zero.
Consequently, $\bar{\psi}_1(c)\geq (1/\mf{c}_2)\bar{\psi}_2(c)$ for $c\in(\alpha/\eta,\eta/\alpha)$.
Moreover, claim $(i)$ and \eqref{eqn:implication_of_information_quantity_bounds} imply that $\liminf_{n\rightarrow\infty}\inf_{c\in (\alpha/\eta,\eta/\alpha)}\bar{\psi}_1(c)=0$ so that
\begin{equation}
	0
	=
	\liminf_{n\rightarrow\infty}\inf_{c\in (\alpha/\eta,\eta/\alpha)}\bar{\psi}_1(c)
	\geq
	(1/\mf{c}_2)\liminf_{n\rightarrow \infty}\inf_{c\in (\alpha/\eta,\eta/\alpha)} \bar{\psi}_2(c)
	\geq
	0
	.
\end{equation}
This implies claim $(ii)$.

\paragraph*{Proof of $(iii)$.}
Fix $\alpha\in(0,1)$.
Note firstly from \eqref{eqn:definition_upper_bound_function} that $\min_{s_1\neq s_2}\psi_2^{\mathrm{out}}(c;s_1,s_2)\propto (1\vee c^2)$.
Consequently,
\begin{equation}
	\label{eqn:scaling_property_applied}
	\inf_{c\in(\alpha/\eta,\eta/\alpha)}\min_{s_1\neq s_2}\psi_2^{\mathrm{out}}(c;s_1,s_2)
	\geq
	\alpha^2\inf_{c\in[1/\eta,\eta]}\min_{s_1\neq s_2}\psi_2^{\mathrm{out}}(c;s_1,s_2)
	.
\end{equation}
Claim $(ii)$ then implies that for every fixed $\alpha\in (0,1)$,
\begin{equation}
	\begin{split}
		0
		\overset{(ii)}{=}
		\liminf_{n\rightarrow \infty}\inf_{c\in (\alpha/\eta,\eta/\alpha)} \bar{\psi}_2(c)
		 &
		\geq
		\liminf_{n\rightarrow\infty}
		\inf_{c\in(\alpha/\eta,\eta/\alpha)}\min_{s_1\neq s_2}\psi_2^{\mathrm{out}}(c;s_1,s_2)
		\\
		 &
		\overset{\eqref{eqn:scaling_property_applied}}{\geq}
		\liminf_{n\rightarrow\infty}
		\alpha^2\inf_{c\in[1/\eta,\eta]}\min_{s_1\neq s_2}\psi_2^{\mathrm{out}}(c;s_1,s_2)
		\geq
		0
		.
	\end{split}
\end{equation}
Thus, $\liminf_{n\rightarrow\infty}\inf_{c\in[1/\eta,\eta]}\min_{s_1\neq s_2}\psi_2^{\mathrm{out}}(c;s_1,s_2) = 0$.
Consequently, to prove claim $(iii)$ it suffices to show that $\inf_{c\in[1/\eta,\eta]}\min_{s_1\neq s_2}\psi_2^{\mathrm{in}}(c;s_1,s_2)=0$ by \eqref{eqn:definition_upper_bound_function} and \eqref{eqn:bar_psi_definition}.

To this aim, let $\rho_{s_1,s_2}=\{\rho_{s_1,s_2}(s,a)\}_{s\in[S],a\in[A]}$ for each $s_1\neq s_2$ be defined as $\rho_{s_1,s_2}(s,a)\eqdef p(s_1\mid s,a)/p(s_2\mid s,a)$.
Then
\begin{equation}
	\label{eqn:relation_to_distance_minimization}
	\begin{split}
		\inf_{c\in(\alpha/\eta,\eta/\alpha)}\psi_2^{\mathrm{in}}(c;s_1,s_2)
		 &
		=
		\inf_{c\in(\alpha/\eta,\eta/\alpha)}(1\vee 1/c^2)\frac{\eta^7}{SA}\sum_{s,a}(\rho_{s_1,s_2}(s,a)-c)^2
		\\
		 &
		\geq
		\frac{\eta^7}{SA}\inf_{c\in(\alpha/\eta,\eta/\alpha)}\sum_{s,a}(\rho_{s_1,s_2}(s,a)-c)^2.
	\end{split}
\end{equation}
The infimum on the right-hand side is attained at $c^* = (1/(SA))\sum_{s,a}\rho_{s_1,s_2}(s,a)$, which because $\rho_{s_1,s_2}(s,a)\in [1/\eta,\eta]$ by \Cref{def:reachability:p} satisfies $c^*\in [1/\eta,\eta]$.
It follows that
\begin{equation}
	\label{eqn:distance_minimization}
	\inf_{c\in(\alpha/\eta,\eta/\alpha)}\sum_{s,a}(\rho_{s_1,s_2}(s,a)-c)^2
	=
	\inf_{c\in[1/\eta,\eta]}\sum_{s,a}(\rho_{s_1,s_2}(s,a)-c)^2
\end{equation}
Together, \eqref{eqn:relation_to_distance_minimization} and \eqref{eqn:distance_minimization} imply that
\begin{equation}
	\begin{split}
		\inf_{c\in(\alpha/\eta,\eta/\alpha)}\psi_2^{\mathrm{in}}(c;s_1,s_2)
		 &
		\geq
		\frac{\eta^7}{SA}\inf_{c\in[1/\eta,\eta]}\sum_{s,a}(\rho_{s_1,s_2}(s,a)-c)^2
		\\
		 &
		\geq
		\frac{1}{\eta^2}\inf_{c\in[1/\eta,\eta]}(1\vee 1/c^2)\frac{\eta^7}{SA}\sum_{s,a}(\rho_{s_1,s_2}(s,a)-c)^2
		\\
		 &
		=
		\frac{1}{\eta^2}\inf_{c\in[1/\eta,\eta]}\psi_2^{\mathrm{in}}(c;s_1,s_2)
		,
	\end{split}
\end{equation}
where we have also used that $1\geq (1/\eta^2)(1\vee 1/c^2)$ for $c\in[1/\eta,\eta]$.
Because $\eta$ is independent of $n$ by \Cref{def:reachability}, it finally follows that
\begin{equation}
	0
	=
	\liminf_{n\rightarrow\infty}\inf_{c\in(\alpha/\eta,\eta/\alpha)}\bar{\psi}_2^{\mathrm{in}}(c)
	\geq
	\liminf_{n\rightarrow\infty}\frac{1}{\eta^2}\inf_{c\in[1/\eta,\eta]}\bar{\psi}_2^{\mathrm{in}}(c)
	\geq
	0
	.
\end{equation}
This proves claim $(iii)$, and concludes the proof.
\qed

\subsubsection{Proof of \texorpdfstring{\Cref{lem:information_quantity_bounds}}{lemma}}
\label{sec:proof_of_information_quantity_bounds}

\Cref{lem:information_quantity_bounds} follows as a corollary from the proof of \cite[Proposition 12]{Jedra:2022}.
Namely, the quantity $I_j(x;c,\Phi)$ appearing in \cite[Eqn.~(35)]{Jedra:2022} corresponds in our notation to $I_{j}(x;c,\Phi,\pi_U)$ for $j\in[S]$, after identifying $\rho \equiv \pi_U$ and $m^{\Psi}_{\rho}(s,a) \equiv \omega_{\Psi,\pi_U}(s,a)$ with $\omega_{\Psi,\pi_U}(s,a)$ as in \eqref{eqn:definition_of_visitation_probabilities}.
It then follows from \cite[Proposition 11]{Jedra:2022} that
\begin{equation}
	\label{eqn:upper_bound_in_terms_of_I_tilde}
	I_{\tilde{s}}(x;c,\Phi,\pi_U)
	\leq
	\max\{1,c,1/c,\eta\}\tilde{I}_j(x;c,\Phi)
	,
\end{equation}
where the quantity $\tilde{I}_j(x;c,\Phi)$ is defined in \cite[Eqn. (37)-(39)]{Jedra:2022}.

Now, consider the proof of \cite[Proposition 12]{Jedra:2022}.
There, the following upper bound for $\tilde{I}_j(x;c,\Phi)$ is proven:
\begin{equation}
	\label{eqn:upper_bound_on_I_tilde}
	\tilde{I}_j(x;c,\Phi)
	\leq
	\eta^7\frac{1}{SA}\sum_{s,a}\biggl(\frac{p(f(x)\mid s,a)}{cp(j\mid s,a)}-1\biggr)^2
	+
	c\eta^7\frac{1}{SA}\sum_{s,a}\biggl(\frac{p(s\mid f(x),a)}{p(s\mid j,a)}-1\biggr)^2,
\end{equation}
where we have truncated the proof after inequality (b) and have used their definitions $v_{s,a} = \allowbreak p(f(x)\mid s,a) / \allowbreak p(j\mid s,a)$ and $u_{s,a} = \allowbreak p(s\mid f(x),a) / \allowbreak p(s\mid j,a)$.
Together, \eqref{eqn:upper_bound_in_terms_of_I_tilde} and \eqref{eqn:upper_bound_on_I_tilde} then imply that
\begin{equation}
	\begin{split}
		 &
		I_{\tilde{s}}(x;c,\Phi,\pi_U)
		\\
		 &
		\leq
		(1\vee 1/c^2)\eta^8\frac{1}{SA}\sum_{s,a}\biggl(\frac{p(f(x)\mid s,a)}{p(\tilde{s}\mid s,a)}-c\biggr)^2
		+
		(1\vee c^2)\eta^8\frac{1}{SA}\sum_{s,a}\biggl(\frac{p(s\mid f(x),a)}{p(s\mid \tilde{s},a)}-1\biggr)^2
		\\
		 &
		=
		\psi_2(c;f(x),\tilde{s}).
	\end{split}
\end{equation}
Here, we have used that $\max\{1,c,1/c,\eta\}\leq (c\vee (1/c))\eta$ for $c>0$ because $\eta>1$ by \Cref{def:reachability}.
This concludes the proof of the upper bound in \eqref{eqn:information_quantity_bounds}.

The lower bound on $I_{\tilde{s}}(x;c,\Phi,\pi_U)$ also follows from \cite[Proposition 12]{Jedra:2022}, but requires an extra step to achieve a sufficiently tight bound.
To make this transparent, we first introduce some additional notation, which we will only need during this proof.
Specifically, define the following quantity:
\begin{equation}
	\label{eqn:Ijxcdef_with_different_omega}
	\begin{split}
		I^*_{\tilde{s}}(x;c,\Phi,\pi_U)
		\eqdef
		 &
		n\KL(p^{*,\mathrm{in}}_{x,\tilde{s},c},p^{*,\mathrm{in}}_{x,f(x),1})
		\\
		 &
		+
		ncq(x\mid f(x))\sum_{a}\omega_{\Phi,\pi_U}(f(x),a)\KL(p(\,\cdot \mid \tilde{s},a), p(\,\cdot \mid f(x),a)),
	\end{split}
\end{equation}
with
\begin{align}
	 &
	p^{*,\mathrm{in}}_{x,\tilde{s},c}(i,s,a)
	\eqdef
	\begin{cases}
		\omega_{\Phi,\pi}(s,a)p(\tilde{s}\mid s,a)cq(x\mid f(x))     & \textnormal{if } i = 1,         \\
		\omega_{\Phi,\pi}(s,a)(1-p(\tilde{s}\mid s,a)cq(x\mid f(x))) & \textnormal{otherwise; } i = 2. \\
	\end{cases}
\end{align}
Compared to \eqref{eqn:Ijxcdef} and \eqref{eqn:p_in_def}, the only difference is that we have replaced $\omega_{\Psi,\pi_U}$ with $\omega_{\Phi,\pi_U}$.

The quantity $I^*_{j}(x;c,\Phi,\pi_U)$ for $j\in[S]$ is also closely related to the quantity $\tilde{I}_j(x;c,\Phi)$ from \cite[Eqn. (37)-(39)]{Jedra:2022}.
Specifically, the two are related by a reversal of the arguments of the KL-divergences in \eqref{eqn:Ijxcdef_with_different_omega}.
Note here that their $m_{\rho}(s,a)$ corresponds to our $\omega_{\Phi,\pi}(s,a)$ in \eqref{eqn:definition_of_visitation_probabilities}.

We can now obtain a lower bound for $I^*_{\tilde{s}}(x;c,\Phi,\pi_U)$ from the proof of the lower bound for $\tilde{I}_j(x;c,\Phi)$ in \cite[Proposition 12]{Jedra:2022} as follows.
Notice firstly that the lower bound in \cite[Lemma 6]{Jedra:2022} is symmetric with respect to the arguments of the KL-divergence appearing therein.
This result is used in the first inequality, labeled (a), of the proof of the lower bound in \cite[Proposition 12]{Jedra:2022}.
This inequality is therefore also true when we reverse the arguments of the KL-divergence, in which case we find that the exact same lower bound also applies to $I^*_{\tilde{s}}(x;c,\Phi,\pi_U)$.

The remainder of the proof then goes through unchanged, and we find upon identifying their $\psi_1(p,\eta,c)$ with our $\psi_1(s_1;s_2;c)$ that
\begin{equation}
	I^*_{\tilde{s}}(x;c,\Phi,\pi_U)
	\geq
	\psi_1(c;f(x),\tilde{s})
	.
\end{equation}

It remains to relate $I^*_{\tilde{s}}(x;c,\Phi,\pi_U)$ and $I_{\tilde{s}}(x;c,\Phi,\pi_U)$.
Recall that the quantities $m_{\rho}(s,a)$ and $m_{\rho}^{\Psi}(s,a)$ appearing in \cite{Jedra:2022} correspond in our notation to $\omega_{\Phi,\pi_U}(s,a)$ and $\omega_{\Psi,\pi_U}(s,a)$, respectively.
As noted at the start of \cite[Appendix D.3.3]{Jedra:2022}, $\omega_{\Phi,\pi_U}(s,a)\sim \omega_{\Psi,\pi_U}(s,a)$.
It follows that $(i)$ there exists some constant $\mf{d}_1>0$ that is independent of $n$ such that for all sufficiently large $n$, $\omega_{\Phi,\pi_U}(s,a)/\allowbreak\omega_{\Psi,\pi_U}(s,a)\leq (1/\mf{d}_1)$ for all $s$ and $a$.

Moreover, upon expanding the KL-divergence appearing in \eqref{eqn:Ijxcdef} one finds that $I_{\tilde{s}}(x;c,\Phi,\pi_U)$ is linear in $\omega_{\Psi,\pi_U}$.
Likewise, $I^*_{\tilde{s}}(x;c,\Phi,\pi_U)$ is linear in $\omega_{\Phi,\pi_U}$.
Together with point $(i)$ this implies that $I_{\tilde{s}}(x;c,\Phi,\pi_U) \geq \mf{d}_1I^*_{\tilde{s}}(x;c,\Phi,\pi_U)$ for all sufficiently large $n$, from which the result now follows.
\qed

\subsection{Proof of \texorpdfstring{\Cref{prop:clustering_bound_short}}{clustering performance guarantee}}
\label{sec:proof_of_clustering_bound_short}

Recall that $T_k = kH$.
Let
\begin{equation}
	\label{eqn:k-star_def}
	k^*
	\eqdef
	\max\{k'>0:T_{k'}\leq \Theta^{\clust}\}
\end{equation}
be the number of episodes belonging to the first phase of \Cref{alg:BUCBVIouter}.
Because we assumed $\Theta^{\clust}=\omega(nA(S^3+\ln n))$ and $n=\Omega(H)$, it follows that $T_{k^*} \geq \Theta^{\clust}-H = \omega(nA(S^3+\ln n))$.
Recall also that $\hat{f}_0$ denotes the output of \cite[Algorithm 1]{Jedra:2022}, and that $\hat{f}_{\ell}$ for $\ell\geq 0$ are the intermediate iterates of \cite[Algorithm 2]{Jedra:2022}.
The final estimator is $\hat{f}^{\alg}=\hat{f}_{\ceil{\ln n}}$.

Following \cite{Jedra:2022}, as well as \cite{yun2014community,Sanders:2020}, the proof consists of two steps.
First, we bound the number of misclassifications made by $\hat{f}_0$.
Then, we show that the successive iterates $\hat{f}_{\ell}$ eliminate these errors so that $\hat{f}_{\ceil{\ln n}}$ achieves exact recovery.
These steps correspond to \cite[Theorems 2 and 3]{Jedra:2022}, respectively.

Two preliminary reductions will allow us to adapt and streamline the analysis for our setting:
\begin{enumerate}
	\item
	      Throughout \cite{Jedra:2022}, it is assumed that $I(\Phi)=\Omega(1)$.
	      We do not directly assume this, but it remains true under the assumptions of \Cref{lem:equivalence-of-assumptions}; see \Cref{sec:on-the-structural-properties}.
	\item
	      The trimming step in \cite[Algorithm 1]{Jedra:2022} is unnecessary in our regime.
		  Because $T_{k^*}=\omega(nA(S^3+\ln n))$, for sufficiently large $n$, $\Gamma_a=[n]$ for $a\in[A]$.
	      We assume throughout the proof that $n$ is large enough for this to be the case.
\end{enumerate}

To avoid unnecessary repetition, we refer to \cite{Jedra:2022} wherever arguments are unchanged.
We thus highlight only the adaptations necessary for our setting.
We therefore ask you to keep \cite{Jedra:2022}, and in particular \cite[Appendix F and G]{Jedra:2022}, on the side while reading onwards.

\subsubsection*{Step 1: Bounding $|\mcE(\hat{f}_{0},f)|$}

Recall \eqref{eqn:misclassdef}.
The number of misclassifications $|\mcE(\hat{f}_{0},f)|$ of $\hat{f}_{0}$ is bounded in \cite[Theorem 2]{Jedra:2022}, which is a consequence of the refined statement in \cite[Theorem 11]{Jedra:2022}.
We inspect the proof of \cite[Theorem 11]{Jedra:2022} to identify the event on which the implication of \cite[Theorem 2]{Jedra:2022} is true.

Recall \eqref{eqn:countsdef} and \eqref{eqn:countsdef_overload}, and recall from \Cref{sec:clusteringalgspectral} that $\hat{\mathbf{N}}_a=(\hat{N}_{k^*+1}(x,a,y))_{x,y\in[n]}\in\mathbb{R}^{n\times n}$ for $a\in[A]$.
Let $\mathbf{N}_a \eqdef (\sum_{k'\in[k]}\sum_{h=1}^H\allowbreak\tilde{N}_{a,k',h}(x,y))_{x,y\in[n]}$, where the $\tilde{N}_{a,k',h}(x,y)$ are defined at the start of \cite[Appendix F.1]{Jedra:2022} as $\tilde{N}_{a,k',h}(x,y)\eqdef  \mbbP[x_{k',h}=x,a_{k',h}=a,x_{k',h+1}=y\mid x_{k',h-2}]$ if $h>2$ and $\tilde{N}_{a,k',h}(x,y) \eqdef \mbbP[x_{k',h}=x,a_{k',h}=a,x_{k',h+1}=y]$ otherwise.
Finally, let $\lVert \,\cdot  \rVert$ denote the spectral norm for matrices.

Observe that because $\Gamma_a=[n]$ by reduction (2) above, the matrix $\hat{\mbf{N}}_{\Gamma,a}$, obtained from $\hat{\mbf{N}}$ by setting to zero all rows and columns corresponding to contexts not in $\Gamma_a$, equals $\hat{\mbf{N}}_a$.

Identifying
\begin{equation}
	\label{eqn:identifications_with_jedra}
	\hat{N}_a(x,y)\equiv \hat{N}_{k^*+1}(x,a,y)
	,
	\quad
	TH \equiv T_{k^*}
	,
\end{equation}
it now follows from the final step in the proof of \cite[Theorem 11]{Jedra:2022} that there exists a constant $\mf{c}_2>0$ such that for any $\mf{c}_1>0$
\begin{equation}
	\label{eqn:refined_spectral_norm_bound}
	\Omega_1
	\eqdef \biggl\{
	\max_{a}
	\lVert \hat{\mbf{N}}_a-\mbf{N}_a\rVert \leq \mf{c}_1\sqrt{\frac{T_{k^*}}{nA}}
	\biggr\}
	\subset
	\biggl\{
	\frac{|\mcE(\hat{f}_{0},f)|}{n}
	\leq
	\mf{c}_1\mf{c}_2\sqrt{\frac{nSA}{T_{k^*}}}
	\biggr\}
	.
\end{equation}
We note here that the right-hand side is the event whose probability is bounded by \cite[Theorem 2]{Jedra:2022}, and that $\Omega_1$ is the event whose probability is bounded by \cite[Proposition 19]{Jedra:2022}.
The latter states that there exists $\mf{c}_1>0$ such that \begin{equation}
	\label{eqn:original_omega_1_bound}
	\mbbP[\Omega_1]
	\geq
	1-\frac{2}{n}-2e^{-n}-2e^{-\frac{T_{k^*}}{nA}}
	=
	1-O\biggl(\frac{1}{n}\biggr),
\end{equation}
where equality follows from $T_{k^*}=\omega(nA(S^3\vee \ln n))$ as is assumed in \Cref{prop:clustering_bound_short}.
We will now explain how \eqref{eqn:original_omega_1_bound} can be refined to $1-O(1/n^c)$ for arbitrary $c>0$ by choosing constant $\mf{c}_1$ sufficiently large.

The proof of \cite[Proposition 19]{Jedra:2022} bounds $\max_{a}\lVert \hat{\mbf{N}}_a-\mbf{N}_a\rVert$ by three terms labeled $T_1$, $T_2$, and $T_3$.
These are subsequently bounded in \cite[(75)--(77)]{Jedra:2022}.
To show that $\mbbP[\Omega_1]=1-O(1/n^c)$ it suffices to improve the probability bound in \cite[(77)]{Jedra:2022}, which is a consequence of \cite[Lemma 18]{Jedra:2022}.
\cite[Lemma 18]{Jedra:2022} is proven by showing that a particular random matrix $Q$, defined therein, satisfies a discrepancy property\footnote{
	A matrix $Q$ is said to be discrepant if there exist constants $\xi_1,\xi_2>0$ such that for every $\mathcal{I},\mathcal{J}\subset[n]$, $e(\mcI,\mcJ)\eqdef \sum_{(x,y)\in \mathcal{I}\times\mathcal{J}} Q(x,y)$ satisfies either $e(\mcI,\mcJ)n^2A/(|\mcI||\mcJ|T_{k^*})\leq \xi_1$ or $e(\mcI,\mcJ)\ln(e(\mcI,\mcJ)n^2A/(|\mcI||\mcJ|T_{k^*}))\leq \xi_2(|\mcI|\vee|\mcJ|)\ln(n/(|\mcI|\vee|\mcJ|))$.
}
with high probability as $n\rightarrow \infty$.

To prove that $Q$ is discrepant, \cite{Jedra:2022} separately considers the cases $|\mathcal{J}|> n/5$ and $|\mathcal{J}|\leq n/5$.
The case $|\mathcal{J}|> n/5$ does not require modification.
When $|\mathcal{J}|\leq n/5$, it is shown that $Q$ is discrepant whenever a certain event $\mcE$, whose definition we do not reproduce here, holds.
The probability bound for $\mcE^{\mathrm{c}}$ contributes the term $2/n$ in \eqref{eqn:original_omega_1_bound}, which we now show can be improved to $2/n^c$ for any $c>0$ by allowing $Q$ to be discrepant for a larger constant $\xi_2$.
The latter ultimately leads to a larger constant $\mf{c}_1$ in \eqref{eqn:refined_spectral_norm_bound}.
This tradeoff was also exploited in, e.g., \cite{hoffman2021spectral,ariu2023instance} in the context of random graphs and in \cite{van2024estimating} for \glspl{BMC}.

Specifically, it is shown in the proof of \cite[Lemma 18]{Jedra:2022} that

\begin{equation}
	\label{eqn:bound_event_E}
	\mbbP[\mcE]
	\geq
	1-\sum_{1\leq i\leq j\leq n/5}2\exp((4-\xi_2)j\ln (n/j)).
\end{equation}
The constant $\xi_2$ is then chosen as $\xi_2=7+\ln A/\ln n$, which leads to $\mbbP[\mcE]\geq 1-2/(nA)$.

To show that $\mbbP[\Omega_1]=1-O(1/n^c)$ instead, it suffices to choose $\xi_2$ large enough that $\mbbP[\mcE]=1-O(1/(n^cA))$.
This can be done for any fixed $c>0$ by choosing $\xi_2 = 6 + c + \ln A/\ln n$.
Note that \Cref{prop:clustering_bound_short}'s hypothesis that $A = n^{O(1)}$ ensures that $\xi_2$ is indeed upper bounded by a constant for sufficiently large $n$.
Substitution of this choice into \eqref{eqn:bound_event_E} yields
\begin{equation}
	\mbbP[\mcE]
	\geq
	1-\sum_{1\leq i\leq j\leq n/5}2\exp(-(2+c+\ln A/\ln n)j\ln (n/j))
	\overset{(i)}{\geq}
	1-2n^2\biggl(\frac{1}{n}\biggr)^{2+c+\ln A/\ln n}
	=
	1-\frac{2}{n^cA},
\end{equation}
where in $(i)$ we used that $j\ln (n/j)\geq \ln n$ for $n>5$ and $1\leq j\leq n/5$.
The proof of \cite[Lemma 18]{Jedra:2022} then goes through otherwise unchanged, and we conclude that for any $c>0$, and for sufficiently large $n$ and $\mf{c}_1>0$,
\begin{equation}
	\mbbP[\Omega_1]
	\geq
	1-\frac{2}{n^c}-2e^{-n}-2e^{-\frac{T_{k^*}}{nA}}
	.
\end{equation}
This concludes the analysis of $\hat{f}_{0}$.

\subsubsection*{Step 2: Bounding $|\mcE(\hat{f}_{\ell},f)|$}

Recall \eqref{eqn:misclassdef} and let $\mcE^{(\ell)}\eqdef \mcE(\hat{f}_{\ell},f)$ for $\ell\in\mbbN$ (recall \eqref{eqn:misclassdef}).
We proceed to inspect the proof of \cite[Theorem 3]{Jedra:2022} in \cite[Appendix G]{Jedra:2022} to derive a bound on the probability that $|\mcE(\hat{f}_{\ell},f)|=0$.
To this aim, we start by recalling definitions that are used in the proof of \cite[Theorem 3]{Jedra:2022}.

Following \cite{yun2016optimal,Sanders:2020}, \cite{Jedra:2022} defines a set of well-behaved contexts $\mcH$ in \cite[Definition 8]{Jedra:2022}.

Recall now reduction (2) above, such that also $\Gamma =\cup_{a}\Gamma_a=[n]$.
In this case, and upon making the identifications in \eqref{eqn:identifications_with_jedra}, the set $\mcH$ then reduces to largest subset $\mcH\subset [n]$ satisfying for $x\in\mcH$,
\begin{align}
	 &
	\hat{I}_{\tilde{s}}(x;\Phi)\geq \frac{1}{4\eta^2}\frac{T_{k^*}}{n}I(x;\Phi),
	\qquad
	\forall \tilde{s}\neq f(x),
	\qquad
	\mathrm{and},
	\label{eqn:H1_def}
	\\
	 &
	\sum_{a}
	(\hat{N}_{k^*+1}(x,a,[n]\setminus\mcH)+\hat{N}_{k^*+1}([n]\setminus\mcH,a,x))\leq 2\biggl(\ln\frac{T_{k^*}}{n}\biggr)^2
	.
	\label{eqn:H2_def}
\end{align}
Here, $I(x;\Phi)$ is defined in \eqref{eqn:def_minimum_information_quantity}, and $\hat{I}_{\tilde{s}}(x;\Phi)$ is defined in \cite[(78)]{Jedra:2022}.

Properties \eqref{eqn:H1_def} and \eqref{eqn:H2_def} correspond to properties (H1) and (H2) in \cite[(79)--(80)]{Jedra:2022}, respectively.
Define also
\begin{equation}
	\mcE^{(\ell)}_{\mcH}\eqdef \mcE^{(\ell)}\cap\mcH
	\quad
	\mathrm{and}
	\quad
	\mcE^{(\ell)}_{\mcH^{\mathrm{c}}}\eqdef \mcE^{(\ell)}\setminus \mcE^{{(\ell)}}\cap\mcH^{\mathrm{c}}=\mcE^{(\ell)}_{\mcH}.
\end{equation}

\cite[Proposition 21]{Jedra:2022} implies that the well-behaved contexts in $\mcH$ are eventually classified correctly by \Cref{alg:CIA}.
Specifically, $|\mcE^{(\ceil{\ln nA})}_{\mcH}|=0$ with high probability as $n\rightarrow \infty$.
Meanwhile, \cite[Proposition 20]{Jedra:2022} bounds the number of contexts that are \textit{not} in $\mcH$, therefore bounding $|\mcE^{(\ell)}_{\mcH^{\mathrm{c}}}|$.
We separately inspect these two results to obtain a bound on the probability with which their implications hold.

\subsubsection*{Step 2(a): Bounding $|\mcE^{(\ell)}_{\mcH^{\mathrm{c}}}|$}

We first inspect the proof of \cite[Proposition 20]{Jedra:2022} to derive a bound for $|\mcE^{(\ell)}_{\mcH^{\mathrm{c}}}|$.
In the process, we explain how the results of \cite{Jedra:2022} can be refined to hold for $\eta$-reachable and $\mcI$-identifiable \glspl{BMDP}.

Note that because $|\mcE^{(\ell)}_{\mcH^{\mathrm{c}}}|\leq |\mcH^{\mathrm{c}}|$, it suffices to bound $|\mcH^{\mathrm{c}}|$.
In \cite{Jedra:2022}, the authors separately bound $|\mcH^{\mathrm{c}}\cap\Gamma^{\mathrm{c}}|$ and $|\mcH^{\mathrm{c}}\cap\Gamma|$.
Since $\Gamma=[n]$ by reduction (2), it suffices to consider the bound for $\mcH^{\mathrm{c}}\cap\Gamma=\mcH^{\mathrm{c}}$.

The second part of the proof of \cite[Proposition 20]{Jedra:2022} (specifically during the ``Final construction'') bounds the cardinality of $\mcH^{\mathrm{c}}\cap\Gamma=\mcH^{\mathrm{c}}$ by
\begin{equation}
	\label{eqn:bound_on_Hc_Gamma}
	|\mcH^{\mathrm{c}}|
	\leq
	|\mcH_1^{\mathrm{c}}|+t^*
	,
\end{equation}
where $\mcH_1 \eqdef \{x\in[n]:\textnormal{\eqref{eqn:H1_def} holds}\}$ and the term $t^*$ is defined iteratively.

We proceed to bound the two terms on the right-hand side of \eqref{eqn:bound_on_Hc_Gamma}.

The term $|\mcH_1^{\mathrm{c}}|$ is bounded immediately following \cite[Proposition 22]{Jedra:2022}.
There, it is shown that there exists a constant $C>0$ such that if $T_{k^*}=\omega(n)$, then for all sufficiently large $n$,
\begin{equation}
	\label{eqn:H_1_c_bound_prop_22}
	\mbbP\biggl[|\mcH_1^{\mathrm{c}}|\geq \sum_x\exp\biggl(-C\frac{T_{k^*}}{n}I(x;\Phi)\biggr)\biggr]
	\leq
	(S-1)\sum_x\exp\biggl(-C\frac{T_{k^*}}{n}I(x;\Phi)\biggr).
\end{equation}
Now, let
\begin{equation}
	\label{eqn:constant_s_def}
	s\eqdef \lfloor 2\sum_x\exp(-C(T_{k^*}/n) I(x;\Phi))\rfloor
\end{equation}
and define
\begin{equation}
	\label{eqn:bound_on_mcH1_event}
	\Omega_2\eqdef \{|\mcH_1^{\mathrm{c}}|\leq s/2\}.
\end{equation}
Since $|\mcH_1^{\mathrm{c}}|\in\mbbN$, \eqref{eqn:H_1_c_bound_prop_22} implies that
\begin{equation}
	\label{eqn:bound_on_H1_c}
	\mbbP[\Omega_2]\geq 1-(S-1)\sum_x\exp\biggl(-C\frac{T_{k^*}}{n}I(x;\Phi)\biggr).
\end{equation}

We next bound the term $t^*$ in \eqref{eqn:bound_on_Hc_Gamma}.
First define
\begin{equation}
	\Omega_3\eqdef \biggl\{\forall \mcX\subset[n]: |\mcX|=s, \hat{N}_{k^*+1}(\mcX,\mcX)\geq s\biggl(\ln\frac{T_{k^*}}{n}\biggr)^2\biggr\}.
\end{equation}
Note from \eqref{eqn:identifications_with_jedra} that the probability of $\Omega_3$ can be bounded from below using \cite[Lemma 20]{Jedra:2022}; that is, for all sufficiently large $n$,
\begin{equation}
	\label{eqn:bound_from_lemma_20}
	\mbbP[\Omega_3]
	\geq
	1-2\exp\biggl(-\frac{1}{8}\frac{T_{k^*}}{n}\ln\frac{T_{k^*}}{n}\biggr)
	.
\end{equation}
The proof of \cite[Proposition 20]{Jedra:2022} demonstrates that on $\Omega_3$, $|\mcH_1^{\mathrm{c}}|\leq s/2$ implies that $t^*\leq s/2$.
Recalling \eqref{eqn:bound_on_mcH1_event}, it finally follows that
\begin{equation}
	\label{eqn:implication_of_prop_20}
	\Omega_2\cap\Omega_3
	\subset
	\{|\mcH_1^{\mathrm{c}}|\leq s/2, t^*\leq s/2\}
	.
\end{equation}
Combining \eqref{eqn:bound_on_Hc_Gamma} and \eqref{eqn:implication_of_prop_20}, and recalling \eqref{eqn:constant_s_def}, we conclude that
\begin{equation}
	\label{eqn:Hc_bound_event_inclusion}
	\Omega_2\cap\Omega_3
	\subset
	\biggl\{|\mcH^{\mathrm{c}}|\leq 2\sum_x\exp\biggl(-C\frac{T_{k^*}}{n}I(x;\Phi)\biggr)\biggr\}
	.
\end{equation}

We conclude by showing that \eqref{eqn:Hc_bound_event_inclusion} implies $\mcE_{\mcH^{\mathrm{c}}}^{(\ell)}=0$ with high probability.

Since $\Phi$ is $\eta$-reachable and $\mcI$-identifiable, \Cref{lem:equivalence-of-assumptions} implies there exists a constant $\mc{I}'>0$ such that $I(x;\Phi)\geq \mcI'$.
Together with \eqref{eqn:Hc_bound_event_inclusion} this implies that
\begin{equation}
	\label{eqn:intermediate_inclusion_ill_behaved_contexts}
	\Omega_2\cap\Omega_3
	\subset
	\biggl\{
	|\mcH^{\mathrm{c}}|
	\leq
	2n\exp\biggl(-C\frac{T_{k^*}}{n}\mcI'\biggr)
	\biggr\}
	\subset
	\bigcap_{\ell\geq 0}
	\biggl\{
	\frac{|\mcE^{(\ell)}_{\mcH^{\mathrm{c}}}|}{n}
	\leq
	2\exp\biggl(-C\frac{T_{k^*}}{n}\mcI'\biggr)
	\biggr\}
	,
\end{equation}
where we have also used that $|\mcE_{\mcH^{\mathrm{c}}}^{(\ell)}|\leq|\mcH^{\mathrm{c}}|$ for $\ell\geq 0$.
Since $|\mcE^{(\ell)}_{\mcH^{\mathrm{c}}}|\in\mbbN$ and $T_{k^*}=\omega(nA(S^3\vee \ln n))$, it follows from \eqref{eqn:intermediate_inclusion_ill_behaved_contexts} that for all sufficiently large $n$,
\begin{equation}
	\label{eqn:final_Hc_bound_event_inclusion}
	\Omega_2\cap\Omega_3
	\subset
	\Omega_4
	,
	\quad
	\mathrm{with}
	\quad
	\Omega_4
	\eqdef
	\bigcap_{\ell\geq 0}\{|\mcE^{(\ell)}_{\mcH^{\mathrm{c}}}| = 0\}
	.
\end{equation}
In addition, \eqref{eqn:bound_on_H1_c}, \eqref{eqn:bound_from_lemma_20}, and \Cref{lem:equivalence-of-assumptions} imply that for all sufficiently large $n$,
\begin{equation}
	\label{eqn:bound_illbehaved_contexts}
	\begin{split}
		\mbbP\bigl[
			\Omega_4
			\bigr]
		\overset{\eqref{eqn:final_Hc_bound_event_inclusion}}{\geq}
		\mbbP\bigl[
			\Omega_2\cap\Omega_3
			\bigr]
		\geq
		1
		-
		n(S-1)\exp\biggl(-C\frac{T_{k^*}}{n}\mcI'\biggr)
		-
		2\exp\biggl(-\frac{1}{8}\frac{T_{k^*}}{n}\ln \frac{T_{k^*}}{n}\biggr)
		.
	\end{split}
\end{equation}

This concludes the bound for $|\mcE_{\mcH^{\mathrm{c}}}^{(\ell)}|$.

\subsubsection*{Step 2(b): Bounding $|\mcE_{\mcH}^{(\ell)}|$}

We next obtain a bound for $|\mcE_{\mcH}^{(\ell)}|$ by inspecting the proof of \cite[Proposition 21]{Jedra:2022}.
Recall \eqref{eqn:identifications_with_jedra}, and that $\mcE_{\mcH}^{(\ell)}=\mcE^{(\ell)}\cap\mcH$ where $\mcE^{(\ell)}=\mcE(\hat{f}_{\ell},f)$ and $\mcH$ is as in \eqref{eqn:H1_def} and \eqref{eqn:H2_def}.
Define
\begin{align}
	 &
	\Omega_5^{(\ell)}\eqdef \biggl\{\frac{|\mcE^{(\ell)}|}{n}\leq \mf{c}_3\sqrt{\frac{nSA}{T_{k^*}}}\biggr\}
	,
	\nonumber
	\\
	 &
	\Omega_6\eqdef\bigcap_{\mcX,\mcY\subset[n]}\biggl\{|\hat{N}_{k^*+1}(\mcX,a,\mcY)-\mbbE[\hat{N}_{k^*+1}(\mcX,a,\mcY)]|\leq \mf{c}_4\sqrt{\frac{nT_{k^*}}{A}}\biggr\}
	,
	\quad
	\textnormal{and}
	\label{eqn:definition_of_events_E_H}
	\\
	 &
	\Omega_7
	\eqdef
	\bigcap_{x}
	\biggl\{
	\max
	\biggl\{\sum_{a}\hat{N}_{k^*+1}(x,a,[n]),\sum_{a}\hat{N}_{k^*+1}([n],a,x)\biggr\}\leq \mf{c}_5\frac{T_{k^*}}{n}
	\biggr\}
	,
	\nonumber
\end{align}
where the constants $\mf{c}_3,\mf{c}_4,\mf{c}_5>0$ are chosen as follows.

Firstly, take $\mf{c}_3$ large enough that $\Omega_1\subset\Omega_5^{(0)}$, where we recall that $\mcE^{(0)}=\mcE(\hat{f}_{0},f)$.

Secondly, note from \eqref{eqn:identifications_with_jedra} that \cite[Proposition 24]{Jedra:2022} bounds $\mbbP[\Omega_6^{\mathrm{c}}]$ for sufficiently large $\mf{c}_4$.
Take $\mf{c}_4$ large enough that \cite[Proposition 24]{Jedra:2022} applies.

Lastly, we claim that $\mbbP[\Omega^7]$ can be bounded for sufficiently large $\mf{c}_5$ using \cite[Lemma 17]{Jedra:2022}.
Given this claim, take $\mf{c}_5$ large enough so that \cite[Lemma 17]{Jedra:2022} applies.

To justify the claim, observe from \eqref{eqn:countsdef} that $\sum_{a}\hat{N}_{k^*+1}([n],a,x)$ and $\sum_{a}\hat{N}_{k^*+1}(x,a,[n])$ count the number of transitions to and from a context $x$ in $k^*$ episodes, respectively.
Since these differ by at most one visit per episode (when $x$ is one of the end points of an episode) it follows that
\begin{equation}
	\sum_{a}\hat{N}_{k^*+1}([n],a,x)\leq \sum_{a}\hat{N}_{k^*+1}(x,a,[n])+k^*
	\quad
	\textnormal{almost surely}.
\end{equation}
Finally, recall that $\Gamma_a=[n]$ for $a\in[A]$ by reduction (2).
Together with \eqref{eqn:identifications_with_jedra}, these observations prove that \cite[Lemma 17]{Jedra:2022} bounds $\mbbP[\Omega_7^{\mathrm{c}}]$ for sufficiently large $\mf{c}_5$.

The following lemma summarizes the intermediate result that we require from the proof of \cite[Proposition 21]{Jedra:2022} to bound $|\mcE_{\mcH}^{(\ell)}|$.
It is proven in \Cref{sec:proof_of_bound_on_one_step_improvement_error_rate}.
\begin{lemma}
	\label{lem:bound_on_one_step_improvement_error_rate}
	Assume $\Phi$ is $\eta$-reachable and $\mcI$-identifiable, and that $T_{k^*}=\omega(nS^3A)$.
	Then there exist constants $\mf{n}_1,\mf{c}_5>0$ such that for $n>\mf{n}_1$, $\ell\geq 0$, on $\Omega_1\cap\Omega_5^{(\ell)}\cap\Omega_6\cap\Omega_7$,
	\begin{equation}
		\label{eqn:bound_on_one_step_improvement_error_rate}
		\begin{split}
			\frac{T_{k^*}}{n}
			|\mcE_{\mcH}^{(\ell+1)}|
			\leq
			\mf{c}_5
			\biggl(
			\frac{T_{k^*}}{n}
			\frac{|\mcE_{\mcH}^{(\ell)}|}{n}
			|\mcE_{\mcH}^{(\ell+1)}|
			+
			 &
			\sqrt{|\mcE_{\mcH}^{(\ell+1)}||\mcE_{\mcH}^{(\ell)}|\frac{T_{k^*}A}{n}}
			+
			|\mcE_{\mcH}^{(\ell+1)}|\bigl(\ln\frac{T_{k^*}}{n}\bigr)^2
			\\
			 &
			+
			|\mcE_{\mcH}^{(\ell+1)}|SA\Bigl(
			\frac{|\mcE^{(\ell)}|}{n}
			\frac{T_{k^*}}{nA}
			+
			S\sqrt{\frac{T_{k^*}}{nA}}
			\Bigr)
			\biggr)
			.
		\end{split}
	\end{equation}

\end{lemma}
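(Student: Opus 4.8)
The plan is to adapt the ``bad-event'' analysis behind \cite[Proposition~21]{Jedra:2022}, keeping the dependence on $S$, $A$ and $T_{k^*}$ explicit and replacing their hypothesis $I(\Phi)=\Omega(1)$ by the pointwise lower bound $I(x;\Phi)\geq \mcI'$ provided by \Cref{lem:equivalence-of-assumptions}. Fix $\ell\geq 0$ and $x\in\mcE_{\mcH}^{(\ell+1)}$, and write $s=f(x)$, $\tilde{s}=\hat{f}_{\ell+1}(x)\neq s$. Since $\hat{f}_{\ell+1}(x)$ is an $\argmax$ of $\mcL_{\ell}(x,\cdot)$ in \Cref{alg:CIA}, we have $\mcL_{\ell}(x,s)-\mcL_{\ell}(x,\tilde{s})\leq 0$. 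First I would compare $\mcL_{\ell}(x,s)-\mcL_{\ell}(x,\tilde{s})$ with the idealized separation $\hat{I}_{\tilde{s}}(x;\Phi)$ of \cite[(78)]{Jedra:2022}, which uses the same empirical counts $\hat{N}_{k^*+1}$ but the true decoding $f$ and true latent kernels $p,p^{\mathrm{bwd}}$; since $x\in\mcH$, property (H1) in \eqref{eqn:H1_def} yields $\hat{I}_{\tilde{s}}(x;\Phi)\geq \tfrac{1}{4\eta^2}\tfrac{T_{k^*}}{n}I(x;\Phi)$, hence
\begin{equation}
\frac{1}{4\eta^2}\frac{T_{k^*}}{n}I(x;\Phi)
\leq
\hat{I}_{\tilde{s}}(x;\Phi)-\bigl(\mcL_{\ell}(x,s)-\mcL_{\ell}(x,\tilde{s})\bigr)
\defeq \mathrm{Err}_{\ell}(x).
\end{equation}
It then remains to bound $\mathrm{Err}_{\ell}(x)$ and sum over $x\in\mcE_{\mcH}^{(\ell+1)}$.

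Next I would decompose $\mathrm{Err}_{\ell}(x)$ into the four contributions already present in \cite[Appendix~G]{Jedra:2022}. Contribution (i): replacing $p,p^{\mathrm{bwd}}$ by the maximum-likelihood estimates $\hat{p}_{\ell},\hat{p}_{\ell}^{\mathrm{bwd}}$ computed from $\hat{f}_{\ell}$, whose statistical error is $O\!\bigl(S\sqrt{nA/T_{k^*}}\bigr)$ on $\Omega_6$ and whose bias from the contexts misclassified by $\hat{f}_{\ell}$ is $O(|\mcE^{(\ell)}|/n)$; multiplying by the $O(T_{k^*}/n)$ transitions incident to $x$ on $\Omega_7$ and by the factor $O(S)$ coming from $p\geq 1/(\eta S)$ in \eqref{eqn:bounds_on_elements_of_kernels}, and then summing over $x$, this yields $|\mcE_{\mcH}^{(\ell+1)}|SA\bigl(\tfrac{|\mcE^{(\ell)}|}{n}\tfrac{T_{k^*}}{nA}+S\sqrt{\tfrac{T_{k^*}}{nA}}\bigr)$. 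Contribution (ii): replacing the counts $\hat{N}_{k^*+1}(x,a,f^{-1}(s'))$ by $\hat{N}_{k^*+1}(x,a,\hat{f}_{\ell}^{-1}(s'))$, the discrepancy being carried by transitions between $x$ and the currently misclassified contexts; the part involving $\mcE_{\mcH}^{(\ell)}$ contributes, in the mean and after converting visitation counts into cardinalities via \eqref{eqn:bounds_on_elements_of_kernels}, the total $O\!\bigl(\tfrac{T_{k^*}}{n}\tfrac{|\mcE_{\mcH}^{(\ell)}|}{n}|\mcE_{\mcH}^{(\ell+1)}|\bigr)$. Contribution (iii): transitions between $x$ and $\mcH^{\mathrm{c}}$, bounded for $x\in\mcH$ by property (H2) in \eqref{eqn:H2_def} by $O\!\bigl((\ln(T_{k^*}/n))^2\bigr)$ per context, hence $O\!\bigl(|\mcE_{\mcH}^{(\ell+1)}|(\ln(T_{k^*}/n))^2\bigr)$ in total. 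Contribution (iv): the fluctuation of the counts entering (ii) around their mean, which, aggregated over $x\in\mcE_{\mcH}^{(\ell+1)}$ and estimated with Cauchy--Schwarz together with the uniform bound defining $\Omega_6$, yields $\sqrt{|\mcE_{\mcH}^{(\ell+1)}||\mcE_{\mcH}^{(\ell)}|\,\tfrac{T_{k^*}A}{n}}$. The hypothesis $T_{k^*}=\omega(nS^3A)$ is what makes the counts large enough for the events $\Omega_1,\Omega_6,\Omega_7$ to give informative estimates throughout.

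Finally I would sum $\tfrac{1}{4\eta^2}\tfrac{T_{k^*}}{n}I(x;\Phi)\leq \mathrm{Err}_{\ell}(x)$ over $x\in\mcE_{\mcH}^{(\ell+1)}$, use $I(x;\Phi)\geq \mcI'$ from \Cref{lem:equivalence-of-assumptions} on the left, and recognize the four contributions above as precisely the four terms on the right-hand side of \eqref{eqn:bound_on_one_step_improvement_error_rate}; absorbing $\eta$, $\mcI'$ and the numerical constants into a single $\mf{c}_5$, and taking $n$ beyond a threshold $\mf{n}_1$ so that each concentration and asymptotic estimate is valid, gives the claim. The hard part will be steps (i)--(ii): carefully propagating the kernel-estimation error --- both its statistical part and its $|\mcE^{(\ell)}|$-bias part --- through the log-likelihood functional $\mcL_{\ell}$, and re-extracting from \cite{Jedra:2022}'s estimates the explicit $S$- and $A$-scaling that they suppress, since it is exactly the interaction between the bias term and the empirical counts that forces the Cauchy--Schwarz step in (iv) and thus the precise shape of the cross term.
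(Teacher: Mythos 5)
Your proposal is correct and follows essentially the same route as the paper: the paper likewise exploits the greedy optimality of $\hat{f}_{\ell+1}$ in \Cref{alg:CIA}, writes the resulting nonnegative aggregate $E^{(\ell)}=\sum_{x\in\mcE_{\mcH}^{(\ell+1)}}[\mcL_{\ell}(x,\hat{f}_{\ell+1}(x))-\mcL_{\ell}(x,f(x))]$ as $E_1^{(\ell)}+E_2^{(\ell)}+U^{(\ell)}$ in the style of \cite[Lemma 21]{Jedra:2022}, lower-bounds $-E_1^{(\ell)}$ via (H1) and \Cref{lem:equivalence-of-assumptions}, bounds $E_2^{(\ell)}$ on $\Omega_1$ (your contributions (ii)--(iv), including the (H2) and Cauchy--Schwarz terms), and bounds $|U^{(\ell)}|$ on $\Omega_5^{(\ell)}\cap\Omega_6\cap\Omega_7$ via the log-ratio estimate of \Cref{lem:logtransprobratiosbd} (your contribution (i)). Your per-context bookkeeping of $\mathrm{Err}_{\ell}(x)$ versus the paper's aggregate decomposition is only a cosmetic difference, since summing your inequality over $x\in\mcE_{\mcH}^{(\ell+1)}$ recovers exactly the same four terms.
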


Recall that $\Omega_1$ and $\Omega_4$ were defined in \eqref{eqn:refined_spectral_norm_bound} and \eqref{eqn:final_Hc_bound_event_inclusion}, respectively.
In the remainder of this section we will use \Cref{lem:bound_on_one_step_improvement_error_rate} to prove that if $T_{k^*}=\omega(nA(S^3\vee \ln n))$, then for sufficiently large $n$,
\begin{equation}
	\label{eqn:inclusion_vanishing_E_H}
	\Omega_1\cap\Omega_4\cap\Omega_6\cap\Omega_7
	\subset
	\{|\mcE_{\mcH}^{(\ceil{\ln n})}|=0\}
	.
\end{equation}
\cite[Appendix G.2]{Jedra:2022} and \cite{yun2016optimal,Sanders:2020} perform similar analyses, proving that $|\mcE_{\mcH}^{(\ceil{\ln n})}|=0$ with high probability as $n\rightarrow\infty$ in their settings.
However, we must carefully keep track of the events and conditions for which $|\mcE_{\mcH}^{(\floor{\ln n})}|=0$, so that we can ultimately obtain an explicit bound on the probability of this event.
Our analysis therefore deviates from that of \cite{Jedra:2022}, such that the remainder of this section can be read independently of \cite{Jedra:2022}.

Assume that $T_{k^*}=\omega(nA(S^3\vee \ln n))$.
Our first step is to show that there exists $\mf{n}_2>0$ such that for $n>\mf{n}_2$ and $\ell\geq 0$, $\Omega_1\cap\Omega_4\cap\Omega_5^{(\ell)}\cap\Omega_6\cap\Omega_7\subset \Omega_1\cap\Omega_4\cap\Omega_5^{(\ell+1)}\cap\Omega_6\cap\Omega_7$.

Since $T_{k^*} = \omega(nA(S^3\vee \ln n))$ there exist $\mf{n}_3,\mf{c}_6>0$ such that for $n>\mf{n}_3$ and $\ell\geq 0$, on $\Omega_4$,
\begin{equation}
	|\mcE^{(\ell)}_{\mcH^{\mathrm{c}}}|/n\leq \mf{c}_6\sqrt{nSA/T_{k^*}}
	.
\end{equation}
It therefore suffices to prove that there exists $\mf{n}_2\geq \mf{n}_3$ such that for $n>\mf{n}_2$ and $\ell\geq 0$, on $\Omega_1\cap\Omega_4\cap\Omega_5^{(\ell)}\cap\Omega_6\cap\Omega_7$,
\begin{equation}
	|\mcE^{(\ell+1)}_{\mcH}|\leq |\mcE^{(\ell)}_{\mcH}|
	.
\end{equation}

Let $\ell\geq 0$.
Assume $\Omega_1\cap\Omega_4\cap\Omega_5^{(\ell)}\cap\Omega_6\cap\Omega_7$ holds and that $n>\mf{n}_1$, where $\mf{n}_1$ is as in \Cref{lem:bound_on_one_step_improvement_error_rate}.
If $\mcE_{\mcH}^{(\ell+1)}=0$ there is nothing left to prove.
Hence, also assume that $\mcE_{\mcH}^{(\ell+1)}>0$.

\Cref{lem:bound_on_one_step_improvement_error_rate} now implies \eqref{eqn:bound_on_one_step_improvement_error_rate}.
Dividing both sides of \eqref{eqn:bound_on_one_step_improvement_error_rate} by $(T_{k^*}/n)\allowbreak |\mcE_{\mcH}^{(\ell+1)}|$, we conclude that
\begin{equation}
	\label{eqn:misclassification_ratio_bound_after_division_before_bounding}
	1 \leq
	\mf{c}_5
	\biggl(
	\frac{|\mcE_{\mcH}^{(\ell)}|}{n}
	+
	\sqrt{\frac{|\mcE_{\mcH}^{(\ell)}|}{|\mcE_{\mcH}^{(\ell+1)}|}\frac{nA}{T_{k^*}}}
	+
	\frac{n}{T_{k^*}}
	\biggl(\ln\frac{T_{k^*}}{n}\biggr)^2
	+
	\frac{nSA}{T_{k^*}}
	\biggl(
	\frac{|\mcE^{(\ell)}|}{n}
	\frac{T_{k^*}}{nA}
	+
	S\sqrt{\frac{T_{k^*}}{nA}}
	\biggr)
	\biggr)
	.
\end{equation}
Moreover, on $\Omega_5^{(\ell)}$, \eqref{eqn:misclassification_ratio_bound_after_division_before_bounding} implies that
\begin{equation}
	\label{eqn:misclassification_ratio_bound_after_division}
	\begin{split}
		1
		\leq
		\mf{c}_5
		\biggl(
		\mf{c}_3\sqrt{\frac{nSA}{T_{k^*}}}
		+
		 &
		\sqrt{\frac{|\mcE_{\mcH}^{(\ell)}|}{|\mcE_{\mcH}^{(\ell+1)}|}\frac{nA}{T_{k^*}}}
		+
		\frac{n}{T_{k^*}}
		\biggl(\ln\frac{T_{k^*}}{n}\biggr)^2
		+
		\frac{nSA}{T_{k^*}}
		\biggl(
		\mf{c}_3\sqrt{\frac{nSA}{T_{k^*}}}\frac{T_{k^*}}{nA}
		+
		S\sqrt{\frac{T_{k^*}}{nA}}
		\biggr)
		\biggr)
		.
	\end{split}
\end{equation}
Given that $T_{k^*}=\omega(nA(S^3\vee \ln n))$, all terms on the right-hand side of \eqref{eqn:misclassification_ratio_bound_after_division} are $o(1)$ except possibly $\sqrt{\smash[b]{(nA/T_{k^*})|\mcE_{\mcH}^{(\ell)}|/|\mcE_{\mcH}^{(\ell+1)}|}}$.
We therefore conclude that there exist constants $\mf{c}_7>0$ and $\mf{n}_4>\mf{n}_3$, both independent of $\ell$, such that for $n>\mf{n}_4$, $\sqrt{\smash[b]{(nA/T_{k^*})|\mcE_{\mcH}^{(\ell)}|/|\mcE_{\mcH}^{(\ell+1)}|}}\geq \mf{c}_7$.
Equivalently,
\begin{equation}
	\label{eqn:misclassification_ratio_bound}
	\frac{|\mcE^{(\ell+1)}_{\mcH}|}{|\mcE^{(\ell)}_{\mcH}|}
	\leq
	\mf{c}_7^2\frac{nA}{T_{k^*}}
	.
\end{equation}
Since $T_{k^*}=\omega(nA(S^3\vee \ln n))$ it follows that there exists a constant $\mf{n}_5>\mf{n}_4$ independent of $\ell$ such that $|\mcE^{(\ell+1)}_{\mcH}|\leq |\mcE^{(\ell)}_{\mcH}|$ almost surely.
We have therefore established that there exists $\mf{n}_2=\mf{n}_3\vee \mf{n}_5$ such that for $n>\mf{n}_2$ and $\ell\geq 0$,
\begin{equation}
	\label{eqn:intermediate_event_inclusion}
	\Omega_1\cap\Omega_4\cap\Omega_5^{(\ell)}\cap\Omega_6\cap\Omega_7
	\subset
	\Omega_1\cap\Omega_4\cap\Omega_5^{(\ell+1)}\cap\Omega_6\cap\Omega_7
	.
\end{equation}

We can now iterate \eqref{eqn:intermediate_event_inclusion} to conclude that for $n>\mf{n}_2$,
\begin{equation}
	\Omega_1\cap\Omega_4\cap \Omega_5^{(0)}\cap\Omega_6\cap\Omega_7
	\subset
	\Omega_1\cap\Omega_4\cap(\cap_{\ell\geq 0}\Omega_5^{(\ell)})\cap\Omega_6\cap\Omega_7
	.
\end{equation}
Recall also that we assumed $\mf{c}_3$ in \eqref{eqn:refined_spectral_norm_bound} is large enough that $\Omega_1\subset \Omega_5^{(0)}$.
Together, these observations imply that
\begin{equation}
	\label{eqn:inclusion_into_intersection}
	\Omega_1\cap\Omega_4\cap\Omega_6\cap\Omega_7
	\subset
	\Omega_1\cap\Omega_4\cap \Omega_5^{(0)}\cap\Omega_6\cap\Omega_7
	\subset
	\Omega_1\cap\Omega_4\cap(\cap_{\ell\geq 0}\Omega_5^{(\ell)})\cap\Omega_6\cap\Omega_7
	.
\end{equation}
Finally, recall that for $n>\mf{n}_2$ and $\ell\geq 0$, \eqref{eqn:misclassification_ratio_bound} holds on $\Omega_1\cap\Omega_4\cap\Omega_5^{(\ell)}\cap\Omega_6\cap\Omega_7$.
Since $|\mcE^{(\ell)}_{\mcH}|\leq n$ for $\ell\geq 0$, it follows from \eqref{eqn:inclusion_into_intersection} that for $n>\mf{n}_2$, on $\Omega_1\cap\Omega_4\cap\Omega_6\cap\Omega_7$,
\begin{equation}
	\label{eqn:iterated_bound_well_behaved_contexts}
	|\mcE^{(\ceil{\ln n})}_{\mcH}|
	\leq
	\biggl(\mf{c}_7^2\frac{nA}{T_{k^*}}\biggr)^{\ceil{\ln n}}|\mcE^{(0)}_{\mcH}|
	\leq
	n\biggl(\mf{c}_7^2\frac{nA}{T_{k^*}}\biggr)^{\ln n}
	=
	n\biggl(\frac{1}{n}\biggr)^{\ln\bigl(\frac{T_{k^*}}{\mf{c}_7^2nA}\bigr)}
	.
\end{equation}
Because $T_{k^*}=\omega(nA(S^3\vee \ln n))$ there exists $\mf{n}_6>\mf{n}_2$ such that for $n>\mf{n}_6$ the right-hand side of \eqref{eqn:iterated_bound_well_behaved_contexts} is strictly less than 1.
\eqref{eqn:inclusion_vanishing_E_H} finally follows for $n>\mf{n}_6$ because $\mcE^{(\ceil{\ln n})}_{\mcH}\in\mbbN$.

\subsubsection*{Combining the bounds}

Using $|\mcE^{(\ell)}|=|\mcE^{(\ell)}_{\mcH}|+|\mcE^{(\ell)}_{\mcH^{\mathrm{c}}}|$ and $\hat{f}^{\alg}=\hat{f}_{\ceil{\ln n}}$, together with \eqref{eqn:final_Hc_bound_event_inclusion} and \eqref{eqn:inclusion_vanishing_E_H}, we find that for all sufficiently large $n$,
\begin{equation}
	\label{eqn:inclusion_perfect_clustering}
	\Omega_1\cap\Omega_4\cap\Omega_6\cap\Omega_7
	\subset
	\{
	|\mcE^{(\ceil{\ln n})}|
	=
	0
	\}
	=
	\{
	\hat{f}^{\alg}
	=
	f
	\}
	.
\end{equation}
Recall also from the discussion following \eqref{eqn:definition_of_events_E_H} that $\mf{c}_4$ in \eqref{eqn:definition_of_events_E_H} is assumed large enough that \cite[Proposition 24]{Jedra:2022} bounds $\mbbP[\Omega_6^{\mathrm{c}}]$ and that $\mf{c}_5$ is assumed large enough that \cite[Lemma 17]{Jedra:2022} bounds $\mbbP[\Omega_7^{\mathrm{c}}]$.
Together with \eqref{eqn:refined_spectral_norm_bound} and \eqref{eqn:bound_illbehaved_contexts}, we find that for all sufficiently large $n$,
\begin{equation}
	\begin{split}
		 &
		\mbbP[\hat{f}^{\alg}=f]
		\overset{\eqref{eqn:inclusion_perfect_clustering}}{\geq}
		\mbbP[\Omega_1\cap\Omega_3\cap\Omega_6\cap\Omega_7]
		\\
		 &
		\geq
		1
		-
		\frac{2}{n^c}
		-
		2e^{-n}-2e^{-\frac{T_{k^*}}{nA}}
		-
		n(S-1)e^{-C\frac{T_{k^*}}{n}\mcI'}
		-
		2e^{-\frac{1}{8}\frac{T_{k^*}}{n}\ln \frac{T_{k^*}}{n}}
		-
		4e^{-2n(1-\ln 2)}
		-
		2e^{-\frac{T_{k^*}}{nA}}
		.
	\end{split}
\end{equation}
The right-hand side is $1-O(1/n^c)$ when $T_{k^*} = \omega(nA(S^3\vee\ln n))$.
\Cref{prop:clustering_bound_short} follows from this.
\qed

\subsubsection{Proof of \texorpdfstring{\Cref{lem:bound_on_one_step_improvement_error_rate}}{lemma}}
\label{sec:proof_of_bound_on_one_step_improvement_error_rate}

The following proof refers to results in \cite[Appendices G.2 and G.4]{Jedra:2022}.
Note that the quantity $e^{(\ell)}$ appearing therein equals $|\mcE^{(\ell)}_{\mcH}|$.
Recall also that $\mcE_{\mcH}^{(\ell)}=\mcE^{(\ell)}\cap\mcH$ where $\mcE^{(\ell)}=\mcE(\hat{f}_{\ell},f)$ and $\mcH$ is as in \eqref{eqn:H1_def} and \eqref{eqn:H2_def}.
Finally, recall the quantity $\mcL_{\ell}(x,s)$ calculated by the improvement algorithm; see \Cref{sec:clusteringalgcia}.

With the identifications in \eqref{eqn:identifications_with_jedra}, recall for $\ell\geq 0$ the following quantity from \cite[(86)]{Jedra:2022}:
\begin{equation}
	\label{eqn:E_term_correction}
	E
	\equiv
	E^{(\ell)}
	\eqdef
	\sum_{x\in \mcE_{\mcH}^{(\ell+1)}}
	[\mcL_{\ell}(x,\hat{f}_{\ell+1}(x))-\mcL_{\ell}(x,f(x))]
	.
\end{equation}
Here, we have corrected a typographical error appearing in \cite[(86)]{Jedra:2022} by replacing $\mcE_{\mcH}^{(\ell)}$ with $\mcE_{\mcH}^{(\ell+1)}$.
This rectification ensures the claims (i) $E^{(\ell)}\geq 0$ and (ii) $E^{(\ell)}=E_1^{(\ell)}+E_2^{(\ell)}+U^{(\ell)}$ from \cite{Jedra:2022} are true, with $E^{(\ell)}_1$ and $E^{(\ell)}_2$ defined in \cite[(87)--(89)]{Jedra:2022} and $U^{(\ell)}\eqdef E^{(\ell)}-E^{(\ell)}_1-E^{(\ell)}_2$; see also the explaining paragraph preceding \cite[(35)]{Sanders:2020}.
Together, claims (i) and (ii) imply that $-E^{(\ell)}_1\leq E^{(\ell)}_2 + |U^{(\ell)}|$.

To prove \Cref{lem:bound_on_one_step_improvement_error_rate} it now suffices to show that if $T_{k^*}=\omega(nS^3A)$, then there exist constants $\mf{n}_{1}, \mf{c}_8, \mf{c}_9, \mf{c}_{10}>0$ such that for $n>\mf{n}_{1}$ and $\ell\geq 0$, the following holds on $\Omega_1\cap\Omega_5^{(\ell)}\cap\Omega_6$:
\begin{align}
	 &
	-
	E^{(\ell)}_1
	\geq
	\mf{c}_8\frac{T_{k^*}}{n}|\mcE_{\mcH}^{(\ell+1)}|
	,
	\label{eqn:E1_bound}
	\\
	 &
	E^{(\ell)}_2
	\leq
	\mf{c}_9
	\biggl(
	\frac{T_{k^*}}{n}
	\frac{|\mcE_{\mcH}^{(\ell)}|}{n}
	|\mcE_{\mcH}^{(\ell+1)}|
	+
	\sqrt{|\mcE_{\mcH}^{(\ell+1)}|\mcE_{\mcH}^{(\ell)}|\frac{T_{k^*}A}{n}}
	+
	|\mcE^{(\ell+1)}_{\mcH}|\biggl(\ln \frac{T_{k^*}}{n}\biggr)^2
	\biggr)
	,
	\label{eqn:E2_bound}
	\\
	 &
	|U^{(\ell)}|
	\leq
	\mf{c}_{10}
	|\mcE_{\mcH}^{(\ell+1)}|
	SA
	\biggl(
	\frac{|\mcE^{(\ell)}|}{n}
	\frac{T_{k^*}}{nA}
	+
	S\sqrt{\frac{T_{k^*}}{nA}}
	\biggr)
	.
	\label{eqn:U_bound}
\end{align}
This would then imply \eqref{eqn:bound_on_one_step_improvement_error_rate} for sufficiently large $\mf{c}_5>0$.

We first consider the term $E_1^{(\ell)}$.
Let $\ell\geq 0$.
It follows from the proof of \cite[Lemma 21]{Jedra:2022} that
\begin{equation}
	-
	E^{(\ell)}_1
	\geq
	\frac{1}{4\eta^2}
	\frac{T_{k^*}}{n}
	\sum_{x\in\mcE_{\mcH}^{(\ell+1)}}
	I(x;\Phi)
	\geq
	\frac{\min_{x}I(x;\Phi)}{4\eta^2}
	\frac{T_{k^*}}{n}
	|\mcE_{\mcH}^{(\ell+1)}|
	.
\end{equation}
It then follows from \Cref{lem:equivalence-of-assumptions} that for $\eta$-reachable and $\mcI$-identifiable \glspl{BMDP}, $\min_{x}I(x;\Phi)\allowbreak /4\eta^2\geq \mf{c}_8$ for some constant $\mf{c}_8>0$ that is independent of $\ell$.
This gives \eqref{eqn:E1_bound}.

Next, the proof of \cite[Lemma 21]{Jedra:2022} also demonstrates that there exists a constant $\mf{c}_9>0$ such that on $\Omega_1$,
\begin{equation}
	E_2^{(\ell)}
	\leq
	\mf{c}_9
	\biggl(
	\frac{T_{k^*}}{n}
	\frac{|\mcE_{\mcH}^{(\ell)}|}{n}
	|\mcE_{\mcH}^{(\ell+1)}|
	+
	\sqrt{|\mcE_{\mcH}^{(\ell+1)}|\mcE_{\mcH}^{(\ell)}|\frac{T_{k^*}A}{n}}
	+
	|\mcE^{(\ell+1)}_{\mcH}|\biggl(\ln \frac{T_{k^*}}{n}\biggr)^2
	\biggr)
	.
\end{equation}
The event $\Omega_1$ is used in the proof of \cite[Lemma 21]{Jedra:2022} when \cite[Proposition 19]{Jedra:2022} is invoked, which implies that $\Omega_1$ holds with high probability as $n\rightarrow \infty$.
This observation also demonstrates that $\mf{c}_9$ does not depend on $\ell$, therefore implying \eqref{eqn:E2_bound}.

We conclude by proving \eqref{eqn:U_bound}.
The corresponding bound in \cite[Lemma 21]{Jedra:2022} and its proof contain two typographical errors.
Though these ultimately do not affect the conclusions of \cite{Jedra:2022}, correcting them is important for us to identify the event on which \eqref{eqn:U_bound} holds.
We therefore carry out the proof of \eqref{eqn:U_bound} in more detail, while nonetheless leveraging intermediate results from the proof of \cite[Lemma 21]{Jedra:2022} when convenient.

Recall that $\hat{p}_{\ell}$ and $\hat{p}_{\ell}^{\mathrm{bwd}}$ were defined in \Cref{sec:clusteringalgcia}.
Let $p_{\ell}$ and $p_{\ell}^{\mathrm{bwd}}$ be defined by replacing $\hat{N}_k$ with $\mbbE[\hat{N}_k]$ in the definitions for $\hat{p}_{\ell}$ and $\hat{p}_{\ell}^{\mathrm{bwd}}$, respectively.
The following expression for $U^{(\ell)}$ can be deduced from \eqref{eqn:identifications_with_jedra}, \eqref{eqn:E_term_correction} and \cite[(87)--(89)]{Jedra:2022}, and corrects a typographical error appearing in the proof of \cite[Lemma 21]{Jedra:2022}:
\begin{align}
	 &
	U^{(\ell)}= U^{(\ell)}_{\mathrm{in}} + U^{(\ell)}_{\mathrm{out}}, \quad \mathrm{where},
	\nonumber
	\\
	 &
	U^{(\ell)}_{\mathrm{in}}
	\eqdef
	\sum_{x\in\mcE_{\mcH}^{(\ell+1)}}
	\sum_{s,a}
	\bigg[
	\hat{N}_{k^*+1}(x,a,\hat{f}_{\ell}^{-1}(s))
	\lb
	\ln\frac{\hat{p}_{\ell}(s\mid \hat{f}_{\ell+1}(x),a)}{p(s\mid \hat{f}_{\ell+1}(x),a)}
	-
	\ln\frac{\hat{p}_{\ell}(s\mid f(x),a)}{p(s\mid f(x),a)}
	\rb
	\bigg],
	\label{eqn:U_bound_2}
	\\
	 &
	U^{(\ell)}_{\mathrm{out}}
	\eqdef
	\sum_{x\in\mcE_{\mcH}^{(\ell+1)}}
	\sum_{s,a}
	\bigg[
	\hat{N}_{k^*+1}(\hat{f}_{\ell}^{-1}(s),a,x)
	\lb
	\ln \frac{\hat{p}_{\ell}^{\mathrm{bwd}}(s,a\mid \hat{f}_{\ell+1}(x))}{p^{\mathrm{bwd}}(s,a\mid \hat{f}_{\ell+1}(x))}
	-
	\ln \frac{\hat{p}_{\ell}^{\mathrm{bwd}}(s,a\mid f(x))}{p^{\mathrm{bwd}}(s,a\mid f(x))}
	\rb
	\bigg].
	\nonumber
\end{align}
Note that the expression for $U^{(\ell)}_{\mathrm{out}}$ in \eqref{eqn:U_bound_2} contains $\hat{N}_{k^*+1}(\hat{f}_{\ell}^{-1}(s),a,x)$ correctly, whereas the corresponding expression in \cite{Jedra:2022} contains $\hat{N}_{k^*+1}\allowbreak (x,a,\hat{f}_{\ell}^{-1}(s))$ accidentally.

The following replaces \cite[Lemma 25]{Jedra:2022}:
\begin{lemma}
	\label{lem:logtransprobratiosbd}
	Assume $\Phi$ is $\eta$-reachable and $\mcI$-identifiable, and that $T_{k^*}=\omega(nS^3A)$.
	Then there exist constants $\mf{n}_7,\mf{c}_{11},\mf{c}_{12}>0$ such that for $n>\mf{n}_7$ and $\ell\geq 0$, the following holds on $\Omega_5^{(\ell)}\cap\Omega_6$:
	\begin{equation}
		\label{eqn:log_p_pbwd_bd}
		\max
		\biggl\{ \left|\ln \frac{\hat{p}_{\ell}(s'\mid s,a)}{p(s'\mid s,a)}\right|, \left|\ln \frac{\hat{p}^{\mathrm{bwd}}_{\ell}(s,a\mid s')}{p^{\mathrm{bwd}}(s,a\mid s')}\right|\biggr\}
		\leq
		\mf{c}_{11}S\frac{|\mcE^{(\ell)}|}{n}
		+
		\mf{c}_{12}S\sqrt{\frac{nA}{T_{k^*}}}
		.
	\end{equation}
\end{lemma}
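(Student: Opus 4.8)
The plan is to bound the log-ratios $\ln(\hat{p}_\ell(s'\mid s,a)/p(s'\mid s,a))$ and $\ln(\hat{p}_\ell^{\mathrm{bwd}}(s,a\mid s')/p^{\mathrm{bwd}}(s,a\mid s'))$ by controlling both numerator and denominator separately, relating $\hat{p}_\ell$ to $p$ by a chain of comparisons: first $\hat{p}_\ell$ (built from $\hat N_{k^*+1}$ and the estimated decoding $\hat f_\ell$) is compared with $p_\ell$ (the same expression but with $\mathbb{E}[\hat N_{k^*+1}]$ in place of $\hat N_{k^*+1}$), and then $p_\ell$ is compared with the true $p$. I would follow the structure of the proof of \cite[Lemma 25]{Jedra:2022}, adapting it to the weaker $\eta$-reachability and $\mcI$-identifiability assumptions and to $S$, $A$, $p$ depending on $n$.

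First I would record the elementary inequality $|\ln(1+x)|\leq 2|x|$ for $|x|\leq 1/2$, so that it suffices to show the relative errors $|\hat p_\ell(s'\mid s,a)/p(s'\mid s,a)-1|$ and the backward analogue are at most $(1/2)\wedge(\text{RHS}/2)$ for all sufficiently large $n$. Next, using \Cref{def:reachability:p} and \eqref{eqn:bounds_on_elements_of_kernels}, I would note $p(s'\mid s,a)\in[1/(\eta S),\eta/S]$, and similarly obtain two-sided bounds on $p^{\mathrm{bwd}}(s,a\mid s')$ and on the expected counts $\mathbb{E}[\hat N_{k^*+1}(f^{-1}(s),a)]$ and $\mathbb{E}[\hat N^{\mathrm{to}}_{k^*+1}(f^{-1}(s))]$ (each of order $T_{k^*}/(SA)$ and $T_{k^*}/S$ respectively, up to $\eta$-factors), invoking \Cref{lem:visitation_rate_bound} for the lower bound on visitation rates. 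On the event $\Omega_6$, the deviation $|\hat N_{k^*+1}(\mcX,a,\mcY)-\mathbb{E}[\hat N_{k^*+1}(\mcX,a,\mcY)]|\leq \mf{c}_4\sqrt{nT_{k^*}/A}$ holds uniformly over $\mcX,\mcY$; dividing by the expected count of order $T_{k^*}/(SA)$ yields a relative sampling error of order $S\sqrt{nA/T_{k^*}}$, which under $T_{k^*}=\omega(nS^3A)$ is $o(1/S)$ — hence in particular below $1/2$ for large $n$ — and accounts for the second term $\mf{c}_{12}S\sqrt{nA/T_{k^*}}$. On the event $\Omega_5^{(\ell)}$, the number of misclassified contexts satisfies $|\mcE^{(\ell)}|\leq \mf{c}_3 n\sqrt{nSA/T_{k^*}}$; the discrepancy between counts aggregated over $\hat f_\ell^{-1}(s)$ and over $f^{-1}(s)$ is controlled by the number of misplaced contexts times the per-context count bound from $\Omega_7$ (or directly from $\Omega_6$ applied to the symmetric difference), giving a relative error of order $S|\mcE^{(\ell)}|/n$, which is the first term $\mf{c}_{11}S|\mcE^{(\ell)}|/n$. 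Summing the two contributions and combining with $|\ln(1+x)|\leq 2|x|$ yields \eqref{eqn:log_p_pbwd_bd}; the constants $\mf{c}_{11},\mf{c}_{12}$ depend only on $\eta$, and $\mf{n}_7$ is chosen large enough that both relative errors are below $1/2$, which is possible since $T_{k^*}=\omega(nS^3A)$ forces both $S\sqrt{nA/T_{k^*}}\to 0$ and (via $\Omega_5^{(\ell)}$) $S|\mcE^{(\ell)}|/n = O(S\sqrt{nSA/T_{k^*}})\to 0$.

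The main obstacle I anticipate is the bookkeeping around the mismatch between $\hat f_\ell$-aggregated and $f$-aggregated counts: one must carefully decompose $\hat N_{k^*+1}(\hat f_\ell^{-1}(s),a,\hat f_\ell^{-1}(s'))$ into the "correctly placed" part (agreeing with $f^{-1}(s),f^{-1}(s')$) plus error terms supported on $\mcE^{(\ell)}$, bound each error term on $\Omega_6$ and $\Omega_7$, and then propagate this through the ratio $\hat p_\ell/p$ without the denominator $\hat N_{k^*+1}(\hat f_\ell^{-1}(s),a)$ becoming too small — which is where the lower bound $T_{k^*}/(SA)$ up to constants, valid once $|\mcE^{(\ell)}|/n$ is small, is essential. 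The backward kernel $\hat p^{\mathrm{bwd}}_\ell$ requires the same argument with the roles of incoming and outgoing transitions swapped, using $\hat N^{\mathrm{to}}$ in the denominator; since $\Omega_7$ and $\Omega_6$ are stated symmetrically in incoming/outgoing counts, this is handled \emph{mutatis mutandis}. A secondary subtlety is ensuring all constants are genuinely independent of $\ell$: this follows because the bounds on $\Omega_5^{(\ell)}$, $\Omega_6$, $\Omega_7$ are uniform in $\ell$ by construction, and the only $\ell$-dependence enters through $|\mcE^{(\ell)}|$, which already appears explicitly on the right-hand side of \eqref{eqn:log_p_pbwd_bd}.
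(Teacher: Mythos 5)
Your proposal follows essentially the same route as the paper's proof: split the error into a sampling part (empirical vs.\ expected counts on the estimated clusters, controlled by $\Omega_6$) and a misclassification part (expected counts on $\hat{f}_{\ell}$-clusters vs.\ $f$-clusters, controlled by $|\mcE^{(\ell)}|$ on $\Omega_5^{(\ell)}$), lower-bound the denominators using $\eta$-reachability and the uniform phase-1 policy, and then linearize the logarithm; this is exactly the $L_1,L_2,R_1,R_2$ decomposition used in the paper. Two small points of bookkeeping: the paper handles the misclassification terms purely through deterministic bounds on \emph{expected} counts (reachability plus uniform exploration give $N_{k^*+1}(\mcV_1,a,\mcV_2)\asymp T_{k^*}|\mcV_1||\mcV_2|/(n^2A)$), so no appeal to $\Omega_7$ is needed and the statement indeed holds on $\Omega_5^{(\ell)}\cap\Omega_6$ — you should take your "expected counts on the symmetric difference" alternative rather than the $\Omega_7$ route; and the pairwise count $\hat{N}_{k^*+1}(\hat{f}_{\ell}^{-1}(s),a,\hat{f}_{\ell}^{-1}(s'))$ has expectation of order $T_{k^*}/(S^2A)$, not $T_{k^*}/(SA)$, so the sampling term comes out as $O(S^2\sqrt{nA/T_{k^*}})$ (as in the paper's own bound on $|R_1-1|$), which is still $o(1)$ under $T_{k^*}=\omega(nS^3A)$ but means the powers of $S$ must be tracked consistently with how the constant $\mf{c}_{12}$ is used downstream.
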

Note that besides identifying the event on which \eqref{eqn:log_p_pbwd_bd} holds, \Cref{lem:logtransprobratiosbd} also differs from \cite[Lemma 25]{Jedra:2022} because \eqref{eqn:log_p_pbwd_bd} contains $|\mcE^{(\ell)}|$ instead of $e^{(\ell)}=|\mcE^{(\ell)}_{\mcH}|$.
It moreover establishes that the constants $\mf{n}_7,\mf{c}_{11}$ and $\mf{c}_{12}$ are independent of $\ell$.

Using \eqref{eqn:log_p_pbwd_bd} along with the fact that $\cup_{s}\hat{f}_{\ell}^{-1}(s)=[n]$ as well as the bound from $\Omega_7$ in \eqref{eqn:definition_of_events_E_H}, we have that for $n>\mf{n}_7$ and $\ell\geq 0$, the following holds on $\Omega_5^{(\ell)}\cap\Omega_6\cap \Omega_7$:
\begin{align}
	 &
	|U^{(\ell)}|\leq |U^{(\ell)}_{\mathrm{in}}|+|U^{(\ell)}_{\mathrm{out}}|
	\nonumber
	\\
	 &
	\overset{\eqref{eqn:log_p_pbwd_bd}}{\leq}
	\sum_{x\in\mcE_{\mcH}^{(\ell+1)}}
	\sum_{s,a}
	\lb \hat{N}_{k^*+1}(x,a,\hat{f}_{\ell}^{-1}(s))+\hat{N}_{k^*+1}(\hat{f}_{\ell}^{-1}(s),a,x)\rb
	\lb
	\mf{c}_{11}S\frac{|\mcE^{(\ell)}|}{n}
	+
	\mf{c}_{12}S\sqrt{\frac{nA}{T_{k^*}}}
	\rb
	\nonumber
	\\
	 &
	\leq
	2 |\mcE_{\mcH}^{(\ell+1)}|\max_{x\in\mcE_{\mcH}^{(\ell+1)}}
	\left\{
	\sum_{a}\hat{N}_{k^*+1}(x,a,[n]),\sum_{a}\hat{N}_{k^*+1}([n],a,x)
	\right\}
	\lb
	\mf{c}_{11}S\frac{|\mcE^{(\ell)}|}{n}
	+
	\mf{c}_{12}S\sqrt{\frac{nA}{T_{k^*}}}
	\rb
	\nonumber
	\\
	 &
	\leq
	\mf{c}_{10} |\mcE_{\mcH}^{(\ell+1)}|SA
	\lb
	\frac{|\mcE^{(\ell)}|}{n}
	\frac{T_{k^*}}{nA}
	+
	\sqrt{\frac{T_{k^*}}{nA}}
	\rb
	.
\end{align}
Here, $\mf{c}_{10}=2c_3(\mf{c}_{11}\vee \mf{c}_{12})$.
\Cref{lem:bound_on_one_step_improvement_error_rate} finally follows for $\mf{n}_1 \geq \mf{n}_7$.
\qed

\begin{proof}
	[Proof of \Cref{lem:logtransprobratiosbd}]

	Note that \Cref{def:reachability:f} implies that there exists $\mf{n}_8>0$ such that for $n>\mf{n}_8$, $|f^{-1}(s)|/\allowbreak |f^{-1}(s')|\in[1/\eta,\eta]$ for $s,s'\in[S]$.
	Since $\sum_{s}|f^{-1}(s)|=n$, it follows that for $n>\mf{n}_8$ and $s\in[S]$, $S\eta |f^{-1}(s)| \geq \sum_{s'}|f^{-1}(s')|=n$ and $|f^{-1}(s)|/n = |f^{-1}(s)|/\sum_{s'}|f^{-1}(s')|\leq \eta/S$.
	Similar arguments leveraging \Cref{def:reachability:p,def:reachability:q} along with $\sum_{s'}p(s'\mid s,a)=1$ and $\sum_{y\in f^{-1}(s)}q(y\mid s)=1$ can be used to show that there exist constants $0<\mf{d}_1\leq \mf{d}_2$ such that for $n> \mf{n}_8$ and $s,s'\in[S]$, $a\in[A]$ and $x\in f^{-1}(s)$,
	\begin{equation}
		\label{eqn:bounds_implied_by_reachability}
		\mf{d}_1\leq \frac{S|f^{-1}(s)|}{n}\leq\mf{d}_2
		,
		\quad
		\mf{d}_1\leq Sp(s'\mid s,a)\leq\mf{d}_2
		,
		\quad
		\mathrm{and}
		\quad
		\mf{d}_1\leq \frac{nq(x\mid s)}{S}\leq\mf{d}_2
		.
	\end{equation}

	Next, from \eqref{eqn:countsdef}, the definition of a \gls{BMDP} in \Cref{sec:Block-MDPs},
	and \eqref{def:BMDP-transition-kernel}, we have that
	\begin{align}
		 &
		N_{k^*+1}(x,a,y)
		\eqdef
		\mbbE[\hat{N}_{k^*+1}(x,a,y)]
		=
		\sum_{l=1}^{k^*}
		\sum_{h=1}^H
		\mbbP[x_{l,h}=x,a_{l,h}=a,x_{l,h+1}=y]
		\nonumber
		\\
		 &
		=
		\sum_{l=1}^{k^*}
		\sum_{h=1}^H
		\mbbP[f(x_{l,h}) = f(x), a_{l,h}=a]
		\mbbP[x_{l,h} = x \mid f(x_{l,h}) = f(x)]
		\mbbP[x_{l,h+1}=y \mid x_{l,h}=x, a_{l,h}=a]
		\nonumber
		\\
		 &
		=
		\sum_{l=1}^{k^*}
		\sum_{h=1}^H
		\mbbP[f(x_{l,h}) = f(x), a_{l,h}=a]
		q(x\mid f(x))
		p(f(y)\mid f(x),a)
		q(y\mid f(y))
		.
		\label{eqn:expansion_of_expected_visitation_count}
	\end{align}
	Note also that $\pi_k = \pi_U$ for $k\leq k^*$ by \eqref{eqn:k-star_def} and \Cref{alg:BUCBVIouter}, where $\pi_{U,h}(a\mid x)=1/A$ for all $x$, $a$ and $h$.
	Together with the episode structure described in \Cref{sec:preliminaries-episodes}, this implies that $\mbbP[f(x_{l,h}) = f(x), a_{l,h}=a]$ is independent of $l\in[k^*]$.
	Consequently, since $T_{k^*}=k^*H$,
	\begin{equation}
		\label{eqn:expansion_of_expected_visitation_count_simplified}
		N_{k^*+1}(x,a,y)
		=
		T_{k^*}
		q(x\mid f(x))
		p(f(y)\mid f(x),a)
		q(y\mid f(y))
		\omega_{\Phi,\pi_U}(f(x),a)
		,
	\end{equation}
	where $\omega_{\Phi,\pi_U}(s,a)$ was defined in \eqref{eqn:definition_of_visitation_probabilities}.
	Recall from \Cref{sec:proof_of_information_quantity_bounds} that the latter can be identified with $m_{\rho}(s,a)$ defined in the opening paragraph of \cite[Appendix D.3.3]{Jedra:2022}.
	This, according to \cite[Proposition 14]{Jedra:2022}, satisfies $m_{\rho}(s,a)\in [1/(SA\eta^4),\eta^4/(SA)]$.
	Together with \eqref{eqn:bounds_implied_by_reachability} and \eqref{eqn:expansion_of_expected_visitation_count} we conclude that there exist constants $0<\mf{d}_3\leq \mf{d}_4$ such that for $n> \mf{n}_8$, $s\in[S]$, $a\in[A]$, and $\mcV_1,\mcV_2\subset[n]$,
	\begin{equation}
		\label{eqn:cluster_size_bound}
		\mf{d}_3\leq \frac{S|f^{-1}(s)|}{n}\leq \mf{d}_4
		\quad\textnormal{and}\quad
		\mf{d}_3
		\leq
		\frac{n^2AN_{k^*+1}(\mcV_1,a,\mcV_2)}{T_{k^*}|\mcV_1||\mcV_2|}
		\leq
		\mf{d}_4
		.
	\end{equation}
	Note that we may assume without loss of generality that $\mf{d}_3<1<\mf{d}_4$.

	Given $\mf{d}_3,\mf{d}_4$, let
	\begin{equation}
		\label{eqn:c11_c12_def}
		\mf{c}_{11} \eqdef 32(\mf{d}_4/\mf{d}_3^3)
		\quad\mathrm{and}\quad
		\mf{c}_{12} \eqdef 32(\mf{c}_4/\mf{d}_3^2).
	\end{equation}
	It follows from \eqref{eqn:definition_of_events_E_H} and the fact that $T_{k^*}=\omega(nS^3A)$ that there exists a constant $\mf{n}_9>0$ such that for $n> \mf{n}_9$ and $\ell\geq 0$, on $\Omega_5^{(\ell)}$,
	\begin{equation}
		\label{eqn:mcL_mcR_bds}
		\mcL^{(\ell)}
		\eqdef
		\mf{c}_{11}S\frac{|\mcE^{(\ell)}|}{n}
		\leq
		8
		\quad \mathrm{and}\quad
		\mcR
		\eqdef
		\mf{c}_{12}S\sqrt{\frac{nA}{T_{k^*}}}
		\leq
		8
		.
	\end{equation}
	It now suffices to show that \eqref{eqn:cluster_size_bound} and \eqref{eqn:mcL_mcR_bds} implies \eqref{eqn:log_p_pbwd_bd} on $\Omega_6$.
	This would then imply \Cref{lem:logtransprobratiosbd} for $\mf{n}_7=\mf{n}_8\vee \mf{n}_9$.

	We first use \eqref{eqn:mcL_mcR_bds} to derive a bound for $|\hat{f}_{\ell}^{-1}(s)|$.
	Since $\hat{f}^{({\ell})}$ misclassifies exactly $|\mcE^{(\ell)}|$ contexts it follows that for $s\in[S]$ and $\ell\geq 0$,
	\begin{equation}
		\bigl||\hat{f}_{\ell}^{-1}(s)|-|f^{-1}(s)|\bigr|\leq |\mcE^{(\ell)}|
		.
	\end{equation}
	Since $\mf{d}_3<1<\mf{d}_3$, \eqref{eqn:c11_c12_def} and \eqref{eqn:mcL_mcR_bds} imply that $S|\mcE^{(\ell)}|/n\leq 8/\mf{c}_{11}\leq (\mf{d}_3/2) \wedge \mf{d}_4$, where the last inequality is loose.
	Together with \eqref{eqn:cluster_size_bound} it follows that
	\begin{equation}
		\label{eqn:bound_est_cluster_size}
		(1/2)\mf{d}_3n/S \leq |f^{-1}(s)|-|\mcE^{(\ell)}|\leq |\hat{f}_{\ell}^{-1}(s)|\leq |f^{-1}(s)|+|\mcE^{(\ell)}| \leq 2\mf{d}_4 n/S.
	\end{equation}

	Next, fix $s,s'\in[S]$, $a\in[A]$ and $\ell \geq 0$.
	To prove \eqref{eqn:log_p_pbwd_bd} we first bound $|\ln(\hat{p}_{\ell}(s'\mid s,a)/\allowbreak p(s'\mid s,a))|$.
	The proof of \cite[Lemma 25]{Jedra:2022} implies that
	\begin{align}
		\biggl|\ln \frac{\hat{p}_{\ell}(s'\mid s,a)}{p(s'\mid s,a)}\biggr|
			&
		\leq
		\biggl|
		\underbrace{
		\frac{N_{k^*+1}(\hat{f}_{\ell}^{-1}(s),a,\hat{f}_{\ell}^{-1}(s'))}{N_{k^*+1}(f^{-1}(s),a,f^{-1}(s'))}
		}_{\defeq L_1}
		\underbrace{
			\frac{N_{k^*+1}(f^{-1}(s),a,[n])}{N_{k^*+1}(\hat{f}_{\ell}^{-1}(s),a,[n])}
		}_{\defeq L_2}
		\nonumber
		\\
			&
		\phantom{\leq}
		\times
		\underbrace{
		\frac{\hat{N}_{k^*+1}(\hat{f}_{\ell}^{-1}(s),a,\hat{f}_{\ell}^{-1}(s'))}{N_{k^*+1}(\hat{f}_{\ell}^{-1}(s),a,\hat{f}_{\ell}^{-1}(s'))}
		}_{\defeq R_1}
		\underbrace{
		\frac{N_{k^*+1}(\hat{f}_{\ell}^{-1}(s),a,[n])}{\hat{N}_{k^*+1}(\hat{f}_{\ell}^{-1}(s),a,[n])}
		}_{\defeq R_2}
		-
		1
		\biggr|
		.
		\label{eqn:log_ratio_bound}
	\end{align}
	We next separately bound the deviations of $L_1$, $L_2$, $R_1$ and $R_2$ from 1.

	\paragraph{Bounding $L_1$.}

	The following intermediate result appears in the proof of \cite[Lemma 25]{Jedra:2022} upon making the identifications in \eqref{eqn:identifications_with_jedra} as well as the identifications $V\equiv f^{-1}(s)$, $\hat{V}\equiv \hat{f}_{\ell}^{-1}(s)$, $W\equiv f^{-1}(s')$ and $\hat{W}\equiv \hat{f}_{\ell}^{-1}(s')$:
	\begin{equation}
		\label{eqn:Ndiffestclust}
		\begin{split}
			 &
			\left|N_{k^*+1}(\hat{f}_{\ell}^{-1}(s),a,\hat{f}_{\ell}^{-1}(s'))-N_{k^*+1}(f^{-1}(s),a,f^{-1}(s'))\right| \\
			 &
			\qquad\leq N_{k^*+1}(f^{-1}(s),a,\mcE^{(\ell)})+N_{k^*+1}(\mcE^{(\ell)},a,f^{-1}(s'))+N_{k^*+1}(\mcE^{(\ell)},a,\hat{f}_{\ell}^{-1}(s')).
		\end{split}
	\end{equation}
	Using \eqref{eqn:cluster_size_bound}, followed by \eqref{eqn:bound_est_cluster_size} and \eqref{eqn:cluster_size_bound} conclude that
	\begin{equation}
		\label{eqn:Ndiffestclust_simplebd}
		\begin{split}
			\textnormal{RHS of }\eqref{eqn:Ndiffestclust}
			 &
			\leq
			\mf{d}_4\frac{T_{k^*}}{n^2A}|\mcE^{(\ell)}|
			\lb |f^{-1}(s)|+|f^{-1}(s')|+|\hat{f}^{-1}_{\ell}(s')| \rb
			\leq
			4\mf{d}_4\frac{T_{k^*}}{SA}\frac{|\mcE^{(\ell)}|}{n}.
		\end{split}
	\end{equation}
	It also follows from \eqref{eqn:cluster_size_bound} that
	\begin{equation}
		\label{eqn:Nfsfslowerbd}
		N_{k^*+1}(f^{-1}(s),a,f^{-1}(s'))
		\geq
		\mf{d}_3
		\frac{T_{k^*}}{n^2A}
		|f^{-1}(s)||f^{-1}(s')|
		\geq
		\mf{d}_3^3\frac{T_{k^*}}{S^2A}.
	\end{equation}
	We conclude from \eqref{eqn:Ndiffestclust_simplebd} and \eqref{eqn:Nfsfslowerbd} that
	\begin{equation}
		\begin{split}
			\left|L_1-1\right|
			=
			\frac{|N_{k^*+1}(\hat{f}_{\ell}^{-1}(s),a,\hat{f}_{\ell}^{-1}(s'))-N_{k^*+1}(f^{-1}(s),a,f^{-1}(s'))|}{N_{k^*+1}(f^{-1}(s),a,f^{-1}(s'))}
			\leq
			4(\mf{d}_4/\mf{d}_3^3) S\frac{|\mcE^{(\ell)}|}{n}
			=
			\frac{1}{8}
			\mcL^{(\ell)}
			.
		\end{split}
	\end{equation}
	\paragraph{Bounding $L_2$.}
	Next, let $\Delta$ denote the symmetric difference operator.
	It follows from \eqref{eqn:cluster_size_bound} that all $s$ and $a$,
	\begin{equation}
		\label{eqn:Ndiffestclusttrueclust_simplebd}
		\begin{split}
			|N_{k^*+1}(\hat{f}_{\ell}^{-1}(s),a,[n])-N_{k^*+1}(f^{-1}(s),a,[n])|
			 &
			=
			N_{k^*+1}(\hat{f}_{\ell}^{-1}(s)\Delta f^{-1}(s),a,[n])
			\\
			 &
			\leq
			N_{k^*+1}(\mcE^{(\ell)},a,[n])
			\leq
			\mf{d}_4\frac{T_{k^*}}{A}\frac{|\mcE^{(\ell)}|}{n}.
		\end{split}
	\end{equation}
	Moreover, it follows from \eqref{eqn:cluster_size_bound} and \eqref{eqn:bound_est_cluster_size} that
	\begin{equation}
		N_{k^*+1}(\hat{f}_{\ell}^{-1}(s),a,[n])
		\geq
		\mf{d}_3|\hat{f}_{\ell}^{-1}(s)|\frac{T_{k^*}}{nA}
		\geq
		\mf{d}_3^2 \frac{T_{k^*}}{2SA}
		.
	\end{equation}
	Together with \eqref{eqn:Ndiffestclusttrueclust_simplebd} and $\mf{d}_3<1$ we conclude that
	\begin{equation}
		\begin{split}
			\left|L_2-1\right|
			 &
			=
			\frac{|N_{k^*+1}(f^{-1}(s),a,[n])-N_{k^*+1}(\hat{f}_{\ell}^{-1}(s),a,[n])|}{N_{k^*+1}(\hat{f}_{\ell}^{-1}(s),a,[n])}
			\leq
			2(\mf{d}_4/\mf{d}_3^2)S\frac{|\mcE^{(\ell)}|}{n}
			\leq
			\frac{1}{8}
			\mcL^{(\ell)}
			.
		\end{split}
	\end{equation}
	\paragraph{Bounding $R_1$.}
	From \eqref{eqn:definition_of_events_E_H}, on $\Omega_6$,
	\begin{equation}
		\label{eqn:jumpcountconcentration_estimated_clusters}
		|
		\hat{N}_{k^*+1}(\hat{f}_{\ell}^{-1}(s),a,\hat{f}_{\ell}^{-1}(s'))
		-
		N_{k^*+1}(\hat{f}_{\ell}^{-1}(s),a,\hat{f}_{\ell}^{-1}(s'))
		|
		\leq
		\mf{c}_4
		\sqrt{\frac{nT_{k^*}}{A}}.
	\end{equation}
	Together with \eqref{eqn:cluster_size_bound} and \eqref{eqn:bound_est_cluster_size}, it follows from \eqref{eqn:jumpcountconcentration_estimated_clusters} that for sufficiently large $\mf{c}_{12}$,
	\begin{equation}
		\begin{split}
			\left|R_1-1\right|
			 &
			=
			\frac{|\hat{N}_{k^*+1}(\hat{f}_{\ell}^{-1}(s),a,\hat{f}_{\ell}^{-1}(s'))- N_{k^*+1}(\hat{f}_{\ell}^{-1}(s),a,\hat{f}_{\ell}^{-1}(s'))|}{N_{k^*+1}(\hat{f}_{\ell}^{-1}(s),a,\hat{f}_{\ell}^{-1}(s'))}
			\\
			 &
			\leq
			\mf{c}_4\sqrt{\frac{nT_{k^*}}{A}}
			\frac{1}{N_{k^*+1}(\hat{f}_{\ell}^{-1}(s),a,\hat{f}_{\ell}^{-1}(s'))}
			\\
			 &
			\leq
			(\mf{c}_4/\mf{d}_3)\sqrt{\frac{nA}{T_{k^*}}}\frac{n^2}{|\hat{f}_{\ell}^{-1}(s)||\hat{f}_{\ell}^{-1}(s')|}
			\leq
			4(\mf{c}_4/\mf{d}_3^3)S^2\sqrt{\frac{nA}{T_{k^*}}}
			=
			\frac{1}{8}
			\mcR
			.
		\end{split}
	\end{equation}
	\paragraph{Bounding $R_2$.} Observe that
	\begin{equation}
		\begin{split}
			|R_2-1|
			\leq
			\frac{|N_{k^*+1}(\hat{f}_{\ell}^{-1}(s),a,[n])-\hat{N}_{k^*+1}(\hat{f}_{\ell}^{-1}(s),a,[n])|}{N_{k^*+1}(\hat{f}_{\ell}^{-1}(s),a,[n])-|N_{k^*+1}(\hat{f}_{\ell}^{-1}(s),a,[n])-\hat{N}_{k^*+1}(\hat{f}_{\ell}^{-1}(s),a,[n])|}
			.
		\end{split}
	\end{equation}
	Recall from the bound for $L_2$ that $N_{k^*+1}(\hat{f}_{\ell}^{-1}(s),a,[n])\geq \mf{d}_3^2 T_{k^*}/(2SA)$.
	Together with \eqref{eqn:definition_of_events_E_H} and \eqref{eqn:mcL_mcR_bds} it follows that on $\Omega_6$,
	\begin{equation}
		\begin{split}
			|R_2-1|
			 &
			\leq
			\frac{\mf{c}_4\sqrt{\frac{nT_{k^*}}{A}}}{N_{k^*+1}(\hat{f}_{\ell}^{-1}(s),a,[n])-\mf{c}_4\sqrt{\frac{nT_{k^*}}{A}}}
			\leq
			\frac{\mf{c}_4\sqrt{\frac{nT_{k^*}}{A}}}{\mf{d}_3^2\frac{T_{k^*}}{2SA}-\mf{c}_4\sqrt{\frac{nT_{k^*}}{A}}}
			=
			\frac{\mcR/16}{1-\mcR/16}
			\leq
			\frac{1}{8}
			\mcR
			.
		\end{split}
	\end{equation}

	\paragraph{Combining the bounds.}
	We have shown that $L_1,L_2\in[1-\mcL^{(\ell)}/8,1+\mcL^{(\ell)}/8]$ and $R_1,R_2\in[1-\mcR/8,1+\mcR/8]$.
	Along with \eqref{eqn:mcL_mcR_bds}, it follows that
	\begin{align}
		|L_1L_2R_1R_2-1|
			&
		\leq
		\max
		\biggl\{\Bigl|\Bigl(1-\frac{\mcL^{(\ell)}}{8}\Bigr)^2\Bigl(1-\frac{\mcR}{8}\Bigr)^2-1\Bigr|,\Bigl|\Bigl(1+\frac{\mcL^{(\ell)}}{8}\Bigr)^2\Bigl(1+\frac{\mcR}{8}\Bigr)^2-1\Bigr|\biggr\}
		\nonumber
		\\
			&
		\leq
		\mcL^{(\ell)}+\mcR
		.
	\end{align}
	By \eqref{eqn:log_ratio_bound}, this implies
	\begin{equation}
		\biggl|\ln\frac{\hat{p}_{\ell}(s'\mid s,a)}{p(s'\mid s,a)}\biggr|
		\leq
		\mcL^{(\ell)}+\mcR
		.
	\end{equation}

	It remains to show that $|\ln(\hat{p}_{\ell}^{\mathrm{bwd}}(s,a\mid s')/p^{\mathrm{bwd}}(s,a\mid s'))|\leq \mcL^{(\ell)}+\mcR$.

	Let $N_{k^*+1}^{\mathrm{to}}(\mcV)\eqdef \mbbE[\hat{N}_{k^*+1}^{\mathrm{to}}(\mcV)]$ for $\mcV\subset[n]$, where $\hat{N}_{k^*+1}^{\mathrm{to}}$ is defined in \eqref{eqn:countsdef_overload}.
	From the proof of \cite[Lemma 25]{Jedra:2022}, it follows that
	\begin{align}
		 &
		\left|\ln\frac{\hat{p}_{\ell}^{\mathrm{bwd}}(s,a\mid s')}{p^{\mathrm{bwd}}(s,a\mid s')}\right|
		\\
		 &
		\leq
		\biggl|
		\underbrace{\frac{N_{k^*+1}(\hat{f}_{\ell}^{-1}(s),a,\hat{f}_{\ell}^{-1}(s'))}{N_{k^*+1}(f^{-1}(s),a,f^{-1}(s'))}}_{=L_1}
		\underbrace{\frac{N_{k^*+1}^{\mathrm{to}}(f^{-1}(s'))}{N_{k^*+1}^{\mathrm{to}}(\hat{f}_{\ell}^{-1}(s'))}}_{\defeq L_3}
		\underbrace{\frac{\hat{N}_{k^*+1}(\hat{f}_{\ell}^{-1}(s),a,\hat{f}_{\ell}^{-1}(s'))}{N_{k^*+1}(\hat{f}_{\ell}^{-1}(s),a,\hat{f}_{\ell}^{-1}(s'))}}_{=R_1}
		\underbrace{\frac{N_{k^*+1}^{\mathrm{to}}(\hat{f}_{\ell}^{-1}(s))}{\hat{N}_{k^*+1}^{\mathrm{to}}(\hat{f}_{\ell}^{-1}(s))}}_{\defeq R_3}-1
		\biggr|.
		\nonumber
	\end{align}
	One can use arguments analogous to those used for $L_2$ and $R_2$ to bound the ratios $L_3$ and $R_3$, respectively.
	The resulting bounds for $|L_3-1|$ and $|R_3-1|$ then admit the loose upper bounds $\mcL^{(\ell)}/8$ and $\mcR/8$, respectively.
	Consequently, the same bound holds for $|\ln(\hat{p}_{\ell}^{\mathrm{bwd}}(s,a\mid s')/\allowbreak p^{\mathrm{bwd}}(s,a\mid s'))|$ as for $|\ln(\hat{p}_{\ell}(s'\mid s,a)\allowbreak /\allowbreak p(s'\mid s,a))|$.
	The conclusion of \cref{lem:logtransprobratiosbd} follows from this.
\end{proof}

\subsection{Proof of \texorpdfstring{\cref{prop:highprobevent}}{the probability bound for the good event}}
\label{sec:proofstep1}

In \Cref{sec:proof_of_transition_probability_event_bd}, we prove the following proposition:
\begin{proposition}
	\label{prop:transition_probability_event_bd}
	For $i\in\{1,2,3,4\}$ and $k\in\mbbN_+$, $\mbbP[(\Omega_k^i)^{\mathrm{c}}\cap \Omega^f] \leq \frac{1}{T_k}$.
\end{proposition}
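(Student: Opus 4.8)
The plan is to first eliminate the estimated decoding function from the analysis, and then apply standard martingale concentration. On $\Omega^f = \{\hat f^{\alg} = f\}$ the estimators in \eqref{eqn:estimators_p_q_def} coincide with the \emph{oracle} maximum likelihood estimators $\tilde p_k,\tilde q_k$ obtained by substituting the true decoding function $f$ for $\hat f^{\alg}$ there. Hence, defining events $\tilde\Omega^1_k,\ldots,\tilde\Omega^4_k$ exactly as in \eqref{eqn:highprobeventcomponentdef} but with $\tilde p_k,\tilde q_k$ in place of $\hat p_k,\hat q_k$, one has $\Omega^i_k\cap\Omega^f=\tilde\Omega^i_k\cap\Omega^f$, so that $\mbbP[(\Omega^i_k)^{\mathrm c}\cap\Omega^f]=\mbbP[(\tilde\Omega^i_k)^{\mathrm c}\cap\Omega^f]\le\mbbP[(\tilde\Omega^i_k)^{\mathrm c}]$. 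It therefore suffices to bound $\mbbP[(\tilde\Omega^i_k)^{\mathrm c}]$, a quantity no longer involving the random $\hat f^{\alg}$.

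For the oracle estimators the relevant randomness has a clean martingale structure that moreover separates the emission and transition contributions, mirroring the decomposition used elsewhere in the paper. Fix a latent state $s$ and process the transitions recorded in $\mcD_k$ chronologically; let $Y^{(s)}_1,Y^{(s)}_2,\ldots$ be the destination contexts of the successive transitions landing in $f^{-1}(s)$. By \eqref{def:BMDP-transition-kernel}, conditionally on the trajectory up to and including the originating state--action pair of the $i$-th such transition, $Y^{(s)}_i\sim q(\,\cdot\mid s)$; hence for any fixed vector $w$ the centered sequence $\bigl(w(Y^{(s)}_i)-\langle q(\,\cdot\mid s),w\rangle\bigr)_i$ is a martingale difference sequence. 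Taking $w=V^{\pi^*}_{h+1}$ (a deterministic vector with entries in $[0,H]$) treats $\tilde\Omega^1_k$, and taking $w$ to be the indicator of $y$ treats $\tilde\Omega^3_k$. Symmetrically, fix $(s,a)$ and let $s'^{(s,a)}_1,s'^{(s,a)}_2,\ldots$ be the latent states reached by the successive transitions out of a context in $f^{-1}(s)$ under action $a$; conditionally on the past, $s'^{(s,a)}_i\sim p(\,\cdot\mid s,a)$, and the corresponding centered sequences --- paired with $\langle q(\,\cdot\mid s'),V^{\pi^*}_{h+1}\rangle$ for $\tilde\Omega^2_k$ and with the indicator of $s'$ for $\tilde\Omega^4_k$ --- are again bounded martingale difference sequences. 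Each oracle estimator is the sample mean of the first $N$ increments of such a sequence, $N$ being the corresponding visitation count.

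For $\tilde\Omega^1_k$ and $\tilde\Omega^2_k$ the increments have range at most $H$, so the martingale Hoeffding inequality, applied to the martingale stopped at the (stopping) time of its $n$-th relevant visit, bounds by $2\exp(-2nr^2/H^2)$ the probability that the mean of the first $n$ increments deviates by more than the confidence radius $r$ of \eqref{eqn:highprobeventcomponentdef}; this equals $2(2HST_k^2)^{-2}$ (resp.\ $2(2HSAT_k^2)^{-2}$), and a union bound over $n\in\{1,\ldots,T_k\}$, over $s\in[S]$ (and $a\in[A]$) and over $h\in[H]$ yields a total of order $1/T_k$ --- which is exactly why the radii carry the factors $\ln(2HST_k^2)$ and $\ln(2HSAT_k^2)$. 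For $\tilde\Omega^3_k$ and $\tilde\Omega^4_k$ one argues identically but invokes Freedman's (Bernstein's) martingale inequality: when indexed by visit number, the first $n$ increments have \emph{deterministic} cumulative predictable variance at most $n\,q(y\mid s)$ (resp.\ $n\,p(s'\mid s,a)$), so Bernstein produces precisely the two--term confidence radii appearing in $\tilde\Omega^3_k$ and $\tilde\Omega^4_k$, and the union bound over $n\le T_k$, over $y\in[n]$ (resp.\ $s'\in[S]$), $s\in[S]$ and $a\in[A]$ again closes to order $1/T_k$, matching the factors $\ln(2nST_k^2)$ and $\ln(2S^2AT_k^2)$. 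The zero--count case is immediate: the estimator is then $0$ while the radius, whose denominator is $1\vee 0=1$, dominates the trivial bounds $q(y\mid s)\le 1$ and $\|V^{\pi^*}_{h+1}\|_\infty\le H$.

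The one point requiring care is the interplay between the random normalization $N$ and the martingale: one cannot naively condition on $\{N=n\}$ and pretend the $n$ increments are i.i.d., since this conditioning distorts their joint law. The clean device --- which I would use --- is to stop each martingale at the stopping time of its $n$-th relevant visit and then union bound over $n\in\{1,\ldots,T_k\}$; this is affordable only because the logarithms in \eqref{eqn:highprobeventcomponentdef} are logarithms of a constant times $T_k^2$, so the extra union over the $\le T_k$ possible counts costs a harmless factor $T_k$. A secondary, minor point, specific to $\tilde\Omega^3_k$ and $\tilde\Omega^4_k$, is that the predictable variance must be accounted for along the \emph{visit--indexed} increments, where it is deterministic, rather than along chronological steps carrying indicator variables, where it would be random; with that choice Bernstein applies verbatim, and the remainder is routine bookkeeping.
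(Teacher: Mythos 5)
Your proposal is correct, and its overall scaffold is the paper's: reduce to oracle estimators built from the true $f$ (which coincide with $\hat p_k,\hat q_k$ on $\Omega^f$), prove four concentration inequalities whose confidence radii carry $\ln(\cdot\,T_k^2)$ precisely so that a union bound over the at most $T_k$ possible values of the visitation count, and over $s,a,h,y,s'$ as appropriate, closes to $1/T_k$, with the zero-count case handled by the trivial bounds $\lVert V^{\pi^*}_{h+1}\rVert_\infty\leq H$ and $q(y\mid s)\leq 1$. Where you genuinely diverge is the core concentration step: the paper conditions on $\{\hat N=N\}$ and argues that, given the count, the emissions (resp.\ next latent states) along the visits are i.i.d.\ $q(\cdot\mid s)$ (resp.\ $p(\cdot\mid s,a)$), then applies plain Hoeffding and Bernstein conditionally and sums $\mbbP[\,\cdot\mid \hat N=N]$ over $N\leq T_k$; you instead keep everything unconditional, build visit-indexed martingale difference sequences, stop at the $n$-th visit, and invoke Azuma--Hoeffding and Freedman with the deterministic per-visit variance proxies $q(y\mid s)$ and $p(s'\mid s,a)$. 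Your route buys robustness: it sidesteps the delicate claim that conditioning on the realized count leaves the visit-wise observations i.i.d.\ (the count is a functional of the whole trajectory, and in a BMDP the emitted contexts feed back into the actions and hence into future counts), which is exactly the point you flag; the paper's route is shorter but leans on that conditional-i.i.d.\ assertion. One small point to tighten in your write-up: for the emission martingales, conditioning only on the trajectory ``up to and including the originating state--action pair'' gives the next context law $P(\cdot\mid x,a)$, not $q(\cdot\mid s)$; to get the stated conditional law you must enlarge the conditioning to include the latent state of the arrival, i.e.\ work with the filtration $\sigma(\text{past},f(x_{l,h+1}))$, on which the increment $w(Y^{(s)}_i)-\langle q(\cdot\mid s),w\rangle$ is indeed a bounded martingale difference; with that filtration fixed, the rest of your argument, including the deterministic cumulative variance along visit-indexed increments, goes through as claimed.
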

Given \Cref{prop:transition_probability_event_bd}, the proof of \Cref{prop:highprobevent} is almost immediate.
In particular, using $(i)$ De Morgan's law, $(ii)$ a union bound, and $(iii)$ \Cref{prop:transition_probability_event_bd}, it follows that
\begin{equation}
	\label{eqn:probdecomp}
	\begin{split}
		 &
		\mbbP\bkt{\Omega_k}
		=
		\mbbP[
			( \Omega_k^{1}\cap\Omega_k^{2}\cap\Omega_k^3\cap\Omega_k^4)
			\cap\Omega^f
		]
		\\
		&
		=
		\mbbP[\Omega^f]
		-
		\mbbP[
			(\Omega_k^{1}
			\cap
			\Omega_k^{2}
			\cap
			\Omega_k^3
			\cap
			\Omega_k^4)^{\mathrm{c}}
			\cap\Omega^f
		]
		\\
		 &
		\overset{(i)}{=}
		\mbbP[\Omega^f]
		-
		\mbbP[
		(
		(\Omega_k^{1})^{\mathrm{c}}
		\cup
		(\Omega_k^{2})^{\mathrm{c}}
		\cup
		(\Omega_k^3)^{\mathrm{c}}
		\cup
		(\Omega_k^4)^{\mathrm{c}}
		)
		\cap
		\Omega^f
		]
		\\
		 &
		\overset{(ii)}{\geq}
		\mbbP[\Omega^f]
		-
		\mbbP[
			( \Omega_k^{1})^{\mathrm{c}} \cap \Omega^f
		]
		-
		\mbbP[(\Omega_k^{2})^{\mathrm{c}}\cap \Omega^f]
		-
		\mbbP[( \Omega_k^{3})^{\mathrm{c}}\cap \Omega^f]
		-
		\mbbP[( \Omega_k^{4})^{\mathrm{c}}\cap \Omega^f]
		\\
		 &
		\overset{(iii)}{\geq}
		\mbbP[\Omega^f]
		-
		\frac{4}{T_k}
		=
		1
		-
		\mbbP[(\Omega^f)^{\mathrm{c}}]
		-
		\frac{4}{T_k}
		.
	\end{split}
\end{equation}
Finally, using that $(iv)$ $T_k = kH$ and $(v)$ $\sum_{k=1}^Kk^{-1}\leq \ln K +1$, as well as \Cref{prop:clustering_bound_short}, we find after summing over all episodes in phase two of \Cref{alg:BUCBVIouter} that for $c>0$,
\begin{equation}
	\begin{split}
		H\sum_{k\in[K]:T_k>\Theta^{\clust}}\mbbP\bkt{\Omega_k^c}
		 &
		\overset{\eqref{eqn:probdecomp}}{\leq}
		H\sum_{k\in[K]:T_k>\Theta^{\clust}}\biggl(
		\mbbP[(\Omega^f)^{\mathrm{c}}]
		+
		\frac{4}{T_k}
		\biggr)
		\\
		 &
		\overset{(iv)}{\leq}
		T_K\mbbP[(\Omega^f)^{\mathrm{c}}]
		+
		\sum_{k=1}^K
		\frac{4}{k}
		\overset{(v)}{=}
		O\biggl(\frac{T_K}{n^c} + \ln K \biggr)
		.
	\end{split}
\end{equation}
This concludes the proof of \Cref{prop:highprobevent}.
\qed

\subsubsection{Proof of \texorpdfstring{\Cref{prop:transition_probability_event_bd}}{proposition}}
\label{sec:proof_of_transition_probability_event_bd}

In the following, let $\hat{p}_k^f:[S]\times [A]\times[S]\rightarrow [0,1]$ and $\hat{q}_k^f:[S]\times[n]\rightarrow [0,1]$ denote the maximum likelihood estimators for the latent state transition kernel and emission probabilities when the true decoding function is known, i.e., for $s,s'\in[S]$, $a\in[A]$ and $x\in[n]$
\begin{equation}
	\label{eqn:pfqfdef}
	\hat{p}_k^f(s'\mid s,a)
	\eqdef
	\frac{\hat{N}_k(f^{-1}(s),a,f^{-1}(s'))}{1\vee \hat{N}_k(f^{-1}(s),a)}
	,
	\!\quad \mathrm{and},\! \quad
	\hat{q}_k^f(x\mid s)
	\eqdef
	\frac{\mbb{1}\{x\in f^{-1}(s)\}\hat{N}^{\mathrm{to}}_k(x)}{1\vee \hat{N}^{\mathrm{to}}_k(f^{-1}(s))}
	.
\end{equation}
Note from \eqref{eqn:estimators_p_q_def} that on $\Omega^f=\{\hat{f}^{\alg}=f\}$, $\hat{p}_k^f=\hat{p}_k$ and $\hat{q}_k^f=\hat{q}_k$.
Below, we will prove that $\hat{p}_k^f$ and $\hat{q}_k^f$ satisfy the following concentration inequalities:
\begin{proposition}
	\label{prop:concentration_inequalities}
	Let $U\in\mbbR^n_{\geq0}$ and $u\in\mbbR^S_{\geq0}$ be two vectors with nonnegative entries, and let $\delta\in(0,1/2]$.
	Then the following inequalities hold for $s,s'\in[S]$, $a\in[A]$, $y\in[n]$:
	\begin{align}
		 &
		\mbbP\biggl[
		|\langle \hat{q}_k^f(\,\cdot \mid s)-q(\,\cdot \mid s), U\rangle|
		>
		\sqrt{\frac{\lVert U\rVert_{\infty}^2 \ln(2 T_k/\delta)}{1\vee \hat{N}^{\mathrm{to}}_k(f^{-1}(s))}}
		\biggr]
		\leq
		\delta
		\label{eqn:hoeffding_q}
		\\
		 &
		\mbbP\biggl[
			|\langle \hat{p}_k^f(\,\cdot \mid s,a)-p(\,\cdot \mid s,a), u\rangle|
			>
			\sqrt{\frac{\lVert u\rVert_{\infty}^2\ln(2 T_k/\delta)}{1\vee \hat{N}_k(f^{-1}(s),a)}}
			\biggr]
		\leq
		\delta
		,
		\label{eqn:hoeffding_p}
		\\
		 &
		\mbbP\biggl[
		|\hat{q}_k^f(y\mid s)-q(y\mid s)|
		>
		\sqrt{\frac{q(y\mid s) \ln(2T_k/\delta)}{1\vee \hat{N}^{\mathrm{to}}_k(f^{-1}(s))}}
		+
		\frac{2\ln(2T_k/\delta)}{3(1\vee\hat{N}^{\mathrm{to}}_k(f^{-1}(s)))}
		\biggr]
		\leq
		\delta
		,
		\label{eqn:bernstein_q}
		\\
		 &
		\mbbP\biggl[
			| \hat{p}_k^f(s'\mid s,a)-p(s'\mid s,a)|
			>
			\sqrt{\frac{p(s'\mid s,a)\ln(2T_k/\delta)}{1\vee \hat{N}_k(f^{-1}(s),a)}}
			+
			\frac{2\ln(2T_k/\delta)}{3(1\vee\hat{N}_k(f^{-1}(s),a))}
			\biggr]
		\leq
		\delta
		.
		\label{eqn:bernstein_p}
	\end{align}
\end{proposition}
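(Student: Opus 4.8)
The plan is to establish each of the four concentration inequalities in \Cref{prop:concentration_inequalities} by reducing them to standard martingale concentration results applied to carefully chosen filtrations, exploiting the fact that the estimators $\hat p_k^f$ and $\hat q_k^f$ in \eqref{eqn:pfqfdef} are ratios of sums of indicator increments. The two Hoeffding-type bounds \eqref{eqn:hoeffding_q} and \eqref{eqn:hoeffding_p} and the two Bernstein-type bounds \eqref{eqn:bernstein_q} and \eqref{eqn:bernstein_p} all have the same structure: a normalized deviation of an empirical average of conditionally i.i.d.\ (or at least martingale-difference) increments, where the number of terms is itself a random variable $\hat N^{\mathrm{to}}_k(f^{-1}(s))$ or $\hat N_k(f^{-1}(s),a)$. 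The key technical device I would use is the standard peeling/union-bound-over-the-count trick: condition on the number of visits being exactly $m$, apply a fixed-$m$ concentration bound with failure probability $\delta/T_k$ (valid since $0\le m\le T_k$), and then union over $m\in\{0,1,\dots,T_k\}$, absorbing the factor $T_k$ into the logarithm. This is why $\ln(2T_k/\delta)$ appears rather than $\ln(2/\delta)$.

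Concretely, I would first set up the relevant martingale for \eqref{eqn:hoeffding_q}. Index the transitions into latent state $s$ (under the true $f$) across all time steps in the first $k-1$ episodes; writing $\hat q^f_k(\cdot\mid s)$ via \eqref{eqn:pfqfdef} and \eqref{eqn:countsdef_overload}, the quantity $\langle \hat q_k^f(\cdot\mid s) - q(\cdot\mid s), U\rangle$ is $(1\vee \hat N^{\mathrm{to}}_k(f^{-1}(s)))^{-1}$ times a sum over those transitions of $(U(y_t) - \langle q(\cdot\mid s),U\rangle)$, where $y_t$ is the observed context — each term being a conditionally centered increment bounded in absolute value by $\lVert U\rVert_\infty$ given the event "this transition lands in $s$." Here I would use the crucial structural fact from \eqref{def:BMDP-transition-kernel} that, conditional on the previous context and action and on $f(x_{t+1})=s$, the context $x_{t+1}$ is drawn exactly from $q(\cdot\mid s)$, independent of everything else. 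Azuma–Hoeffding on this stopped martingale, followed by the peeling argument over the value of $\hat N^{\mathrm{to}}_k(f^{-1}(s))$, yields \eqref{eqn:hoeffding_q}. The bound \eqref{eqn:hoeffding_p} is entirely analogous, using instead the increments $(\mathbbm 1\{f(x_{t+1})=s'\} - p(s'\mid s,a))$-weighted by $u$, over the transitions leaving $(s,a)$, which are conditionally distributed according to $p(\cdot\mid s,a)$ by the definition of a \gls{BMDP}.

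For the Bernstein-type inequalities \eqref{eqn:bernstein_q} and \eqref{eqn:bernstein_p}, I would run the same reduction but invoke the Bernstein/Freedman inequality for martingales instead of Azuma–Hoeffding, taking $U = e_y$ (resp.\ $u = e_{s'}$) so that the increments are centered Bernoulli$(q(y\mid s))$ (resp.\ Bernoulli$(p(s'\mid s,a))$). The conditional variance of each increment is $q(y\mid s)(1-q(y\mid s)) \le q(y\mid s)$ (resp.\ $\le p(s'\mid s,a)$) and the increments are bounded by $1$, so Freedman's inequality, after the same peeling over the random count and with failure budget $\delta/T_k$ per level, produces exactly the two-term bound $\sqrt{q(y\mid s)\ln(2T_k/\delta)/(1\vee\hat N^{\mathrm{to}}_k(f^{-1}(s)))} + \tfrac{2}{3}\ln(2T_k/\delta)/(1\vee\hat N^{\mathrm{to}}_k(f^{-1}(s)))$, and similarly for $p$. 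A small amount of care is needed to handle the $1\vee(\cdot)$ truncation when the count is zero (in which case the estimator is $0$ and the bound holds trivially since $q(y\mid s),p(s'\mid s,a)\le 1$), and to make sure the stopped-martingale/optional-stopping bookkeeping is clean.

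The main obstacle I anticipate is not any single inequality but the care required in the martingale construction: the "number of visits" $\hat N$ is a stopping-time-like random variable that is measurable with respect to the same filtration governing the increments, so one must either (i) define the martingale over all time steps with increments that are nonzero only on visit steps and then stop at the appropriate index, or (ii) use the union-over-$m$ device with a fixed-$m$ bound that is valid on the event $\{\hat N = m\}$ — and one must check that the fixed-$m$ concentration statement genuinely applies on that event despite $\{\hat N=m\}$ not being independent of the increments. The cleanest route is (i): build a single martingale $M_t = \sum_{t'\le t}\mathbbm 1\{\text{step }t'\text{ relevant}\}\cdot(\text{centered increment})$ with predictable quadratic variation controlled by the visit count, apply the anytime (uniform-in-$t$) version of Azuma/Freedman, and then evaluate at $t = $ end of episode $k-1$. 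Since the paper only needs these bounds with $\delta$ of order $1/T_k$ (to feed into \Cref{prop:transition_probability_event_bd} via a further union over the $O(\mathrm{poly})$ many $(s,s',a,y,h)$ tuples), any of these standard approaches suffices, and I would simply cite the relevant martingale concentration lemma (e.g.\ Freedman's inequality) rather than reprove it.
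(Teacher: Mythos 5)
Your proposal is correct and follows essentially the same route as the paper's proof: there, the random count is handled by conditioning on $\hat N^{\mathrm{to}}_k(f^{-1}(s))=N$ (resp.\ $\hat N_k(f^{-1}(s),a)=N$), observing that the resulting increments are i.i.d.\ centered variables bounded by $\lVert U\rVert_\infty$ (Bernoulli-type with variance at most $q(y\mid s)$ or $p(s'\mid s,a)$ for the element-wise bounds), applying Hoeffding resp.\ Bernstein with per-level budget $\delta/T_k$, treating the $N=0$ case trivially, and summing over $N\in\{1,\dots,T_k\}$ --- exactly your peeling-over-the-count argument, with the $\ln(2T_k/\delta)$ arising the same way. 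Whether this is phrased via conditioning or via your stopped-martingale/Freedman formulation is a cosmetic difference.
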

We now show how \Cref{prop:transition_probability_event_bd} follows from \Cref{prop:concentration_inequalities}, starting with the case $i=1$.
If $H=1$ the event $\Omega_k^1$ is trivial because $V_{H+1}^{\pi^*}=0$ by convention; recall \eqref{eqn:valuefuncdef}.
Assume therefore that $H>1$.
Note that $V_{h+1}^{\pi^*}\in\mbbR^n$ has nonnegative entries for all $h$ and that $\delta=1/(HST_k)$ satisfies $\delta\leq 1/2$ since $H>1$.
We may therefore apply \Cref{prop:concentration_inequalities} and in particular \eqref{eqn:hoeffding_q} with $U=V_{h+1}^{\pi^*}$ and $\delta=1/(HST_k)$.
Then, using $(i)$ $\hat{f}^{\alg} = f$ and $\hat{q}_k=\hat{q}_k^f$ on $\Omega^f$, $(ii)$ $\lVert V_{h+1}^{\pi^*}\rVert_{\infty}\leq H$ for any $h\in[H]$, and $(iii)$ applying \eqref{eqn:hoeffding_q} with $U=V_{h+1}^{\pi^*}$ and $\delta = 1/(HST_k)$ it follows that
\begin{align}
	 &
	\mathbb{P}[(\Omega_k^1)^{\mathrm{c}}\cap\Omega^f]
	\overset{(i)}{=}
	\mathbb{P}\Bigl[
	\Bigl\{
	\exists s,h: |\langle \hat{q}^f_k(\,\cdot \mid s)-q(\,\cdot \mid s), V^{\pi^*}_{h+1}\rangle|>\sqrt{\frac{H^2\ln(2SHT_k^2)}{1\vee\hat{N}^{\mathrm{to}}_k(f^{-1}(s))}}
	\Bigr\}
	\cap \Omega^f
	\Bigr]
	\nonumber
	\\
	 &
	\overset{(ii)}{\leq}
	\mathbb{P}\Bigl[
	\exists s,h: |\langle \hat{q}^f_k(\,\cdot \mid s)-q(\,\cdot \mid s), V^{\pi^*}_{h+1}\rangle|>\sqrt{\frac{\norm{V^{\pi^*}_{h+1}}_{\infty}^2\ln(2SHT_k^2)}{1\vee\hat{N}^{\mathrm{to}}_k(f^{-1}(s))}}
	\Bigr]
	\\
	 &
	\leq
	\sum_{s}
	\sum_{h=1}^H
	\mathbb{P}\Bigl[
	|\langle \hat{q}^f_k(\,\cdot \mid s)-q(\,\cdot \mid s), V^{\pi^*}_{h+1}\rangle|>\sqrt{\frac{\norm{V^{\pi^*}_{h+1}}_{\infty}^2\ln(2SHT_k^2)}{1\vee\hat{N}^{\mathrm{to}}_k(f^{-1}(s))}}
	\Bigr]
	\nonumber
	\\
	 &
	\overset{(iii)}{\leq}
	\sum_{s}
	\sum_{h=1}^H
	\frac{1}{SHT_k}
	=
	\frac{1}{T_k}
	.
	\nonumber
\end{align}
The case $i=2$ follows analogously from \eqref{eqn:hoeffding_p} with $u(s)=\langle q(\,\cdot \mid s), V_{h+1}^{\pi^*}\rangle$ and $\delta=1/(SAT_k)$.
Lastly, the cases $i=3,4$ follow from \eqref{eqn:bernstein_q} and \eqref{eqn:bernstein_p} with $\delta=1/(nS)$ and $\delta=1/(S^2A)$, respectively.
It therefore only remains to prove \Cref{prop:concentration_inequalities}.

\begin{proof}[Proof of \Cref{prop:concentration_inequalities}]
	We first prove \eqref{eqn:hoeffding_q}.
	Fix $s$ and $U\in\mbbR^n_{\geq0}$, and let $\delta\in(0,1/2]$.

	Let $\mcT'_k(s) \eqdef \{(l,h)\in [k-1]\times [H]: x_{l,h+1}\in f^{-1}(s)\}$ denote the (random) set of times for which $f(x_{l,h+1})=s$.
	Note that $(i)$ $|\mcT'_k(s)|=\hat{N}_k^{\mathrm{to}}(f^{-1}(s))$.
	Conditional on $\{\hat{N}_k^{\mathrm{to}}(f^{-1}(s))=N\}$ for $N>0$ it follows from \eqref{eqn:pfqfdef} and \eqref{eqn:countsdef_overload} that
	\begin{align}
		 &
		N\langle \hat{q}^f_k(\,\cdot \mid s)-q(\,\cdot \mid s), U\rangle
		=
		\sum_{y\in [n]}
		\biggl(
		\mathbbm{1}\{y\in f^{-1}(s)\}\sum_{l\in[k-1]}\sum_{h=1}^H\mathbbm{1}\{x_{l,h+1}=y\}-Nq(y\mid s)
		\biggr)
		U(y)
		\nonumber
		\\
		 &
		=
		\sum_{y\in [n]}
		\biggl(\sum_{l\in[k-1]}\sum_{h=1}^H
		\mathbbm{1}\{x_{l,h+1}\in f^{-1}(s)\}\mathbbm{1}\{x_{l,h+1}=y\}-Nq(y\mid s)
		\biggr)
		U(y)
		\nonumber
		\\
		 &
		\overset{(i)}{=}
		\sum_{l\in[k-1]}
		\sum_{h=1}^H
		\mathbbm{1}\{x_{l,h+1} \in f^{-1}(s)\}\sum_{y\in [n]}(
		\mathbbm{1}\{x_{l,h+1}=y\}-q(y\mid s)
		)
		U(y)
		\nonumber
		\\
		 &
		=
		\sum_{(l,h)\in \mcT'_k(s)}
		\bigl(U(x_{l,h+1})-\sum_{y\in f^{-1}(s)}q(y\mid s) U(y)\bigr),
		\label{eqn:q_esterror_distributed_as}
	\end{align}
	where in passing to the final line we used that $q(y\mid s)=0$ unless $y\in f^{-1}(s)$.

	By definition of $\mcT'_k(s)$ and the Markov property, each $x_{l,h+1}$ for $(l,h)\in \mcT'_k(s)$ is independent and identically distributed according to $q(\,\cdot \mid s)$.
	It then follows from \eqref{eqn:q_esterror_distributed_as} that conditional on $\{\hat{N}^{\mathrm{to}}_k(f^{-1}(s))=N\}$ for $N>0$, $N\langle \hat{q}^f_k(\,\cdot \mid s)-q(\,\cdot \mid s), U\rangle$ is the sum of $N$ i.i.d. random variables $X_t$ for $t=1,\ldots,N$ with distribution $\mbbP[X_t = U(x)-\sum_{y\in f^{-1}(s)}q(y\mid s) U(y)]=q(x\mid s)$ for $x\in[n]$, from which one also checks that $\mbbE[X_t] = 0$.
	Because $U$ has nonnegative entries, the $X_t$ are almost surely bounded in $[0,\lVert U\rVert_{\infty}]$.

	It then follows from Hoeffding's inequality that for any $N>0$ and $\epsilon>0$
	\begin{equation}
		\label{eqn:basic_hoeffding}
		\mbbP\bkt{
		|\langle \hat{q}^f_k(\,\cdot \mid s)-q(\,\cdot \mid s), U\rangle| > \epsilon
		\mid
		\hat{N}^{\mathrm{to}}_k(f^{-1}(s))=N
		}
		\leq
		2\exp\biggl(
		-
		\frac{2N\epsilon^2}{\lVert U\rVert_{\infty}^2}
		\biggr)
		.
	\end{equation}
	Moreover, when $\hat{N}^{\mathrm{to}}_k(f^{-1}(s))=0$, the definition of $\hat{q}_k^f$ in \eqref{eqn:pfqfdef} implies that
	\begin{equation}
		\label{eqn:case_N_equals_zero_hoeffding}
		|\langle \hat{q}_k^f(\,\cdot \mid s) - q(\,\cdot \mid s), U\rangle |
		=
		|\langle q(\,\cdot \mid s), U\rangle|
		\leq
		\lVert U\rVert_{\infty}
		\leq
		\sqrt{\lVert U\rVert_{\infty}^2\ln(2T_k/\delta)}
	\end{equation}
	almost surely for $\delta<1/2$, where we have used that $q(y\mid s)\geq 0$ and $\sum_{y}q(y\mid s)=1$.
	We then rewrite the left-hand side of \eqref{eqn:hoeffding_q} by conditioning on all possible values of $\hat{N}^{\mathrm{to}}_k(f^{-1}(s))$, which is bounded almost surely by $T_k$, and using \eqref{eqn:case_N_equals_zero_hoeffding} and $\mbbP[\hat{N}^{\mathrm{to}}_k(f^{-1}(s))=N]\leq 1$:
	\begin{equation}
		\label{eqn:sum_over_all_values_of_N}
		\begin{split}
			 &
			\mbbP\biggl[
			|\langle \hat{q}^f_k(\,\cdot \mid s)-q(\,\cdot \mid s), U\rangle |>\sqrt{\frac{\norm{U}_{\infty}^2\ln(2T_k/\delta)}{1\vee\hat{N}^{\mathrm{to}}_k(f^{-1}(s))}}
			\biggr]
			\\
			 &
			\leq
			\sum_{N=1}^{T_k}
			\mbbP\biggl[
			|\langle \hat{q}^f_k(\,\cdot \mid s)-q(\,\cdot \mid s), U\rangle|>\sqrt{\frac{\norm{U}_{\infty}^2\ln(2T_k/\delta)}{N}}
			\,\bigg|\,
			\hat{N}^{\mathrm{to}}_k(f^{-1}(s)) = N
			\biggr]
			.
		\end{split}
	\end{equation}
	Finally, applying \eqref{eqn:basic_hoeffding} with $\epsilon = \sqrt{\smash[b]{\norm{U}_{\infty}^2\ln(2T_k/\delta)/N}}$, we conclude that
	\begin{equation}
		\mbbP\biggl[
		|\langle \hat{q}^f_k(\,\cdot \mid s)-q(\,\cdot \mid s), U\rangle |>\sqrt{\frac{\norm{U}_{\infty}^2\ln(2T_k/\delta)}{1\vee\hat{N}^{\mathrm{to}}_k(f^{-1}(s))}}
		\biggr]
		\leq
		2\sum_{N=1}^{T_k}
		\exp(
		-
		\ln(2T_k/\delta)
		)
		=
		2\sum_{N=1}^{T_k}\frac{\delta}{2T_k}
		=
		\delta.
	\end{equation}
	This concludes the proof of \eqref{eqn:hoeffding_q}.

	We next prove \eqref{eqn:bernstein_q}.
	Fix $s$ and $\delta \in(0,1/2]$.
	Note that \eqref{eqn:bernstein_q} holds trivially for $y\notin f^{-1}(s)$ since then $\hat{q}_k^f(y\mid s) = q(y\mid s) = 0$ by definition.
	Hence, fix $y\in f^{-1}(s)$.

	Observe that $\hat{q}_k^f(y\mid s)-q(y\mid s)=\langle\hat{q}_k^f(\,\cdot \mid s)-q(\,\cdot \mid s),e_y\rangle$ with $e_y\in\mbbR^n$ the $y$'th unit vector whose entries are given by $e_y(x)=\mathbbm{1}\{x=y\}$ for $x\in[n]$.
	It then follows from the argument leading to \eqref{eqn:basic_hoeffding} that conditional on $\{\hat{N}^{\mathrm{to}}_k(f^{-1}(s))=N\}$ for $N>0$, $N(\hat{q}^f_k(y\mid s)-q(y\mid s))$ is the sum of $N$ i.i.d. random variables $Y_t$ for $t=1,\ldots,N$ taking values in $\{-q(y\mid s),1-q(y\mid s)\}$ and with distribution $\mbbP[Y_t=1-q(y\mid s)]=q(y\mid s)$ and $\mbbP[Y_t = -q(y\mid s)]=1-q(y\mid s)$.
	It follows that $|Y_t|\leq 1$ almost surely, $\mbbE[Y_t]=0$, and that $Y_t$ has variance bounded by $(1-q(y\mid s))q(y\mid s)\leq q(y\mid s)$.

	We then conclude using Bernstein's inequality that for any $N>0$ and $\epsilon>0$,
	\begin{equation}
		\label{eqn:basic_bernstein}
		\mbbP\bkt{
		| \hat{q}^f_k(y\mid s)-q(y\mid s)| > \epsilon
		\mid
		\hat{N}^{\mathrm{to}}_k(f^{-1}(s))=N
		}
		\leq
		2\exp\biggl(
		-
		\frac{\frac{1}{2}N\epsilon^2}{q(y\mid s)+\frac{1}{3}\epsilon}
		\biggr)
		.
	\end{equation}
	Moreover, let $\epsilon=\epsilon^*(N,L)$ for $N,L>0$ denote the positive solution to $(1/2)N\epsilon^2=L(q(y\mid s)+(1/3)\epsilon)$, and observe that from the elementary inequality $\sqrt{a+b}\leq \sqrt{a}+\sqrt{b}$ for $a,b>0$,
	\begin{equation}
		\label{eqn:optimal_bernstein_epsilon}
		\epsilon^*(N,L)
		=
		\frac{1}{3N}
		\bigl(L + L\sqrt{L+18Nq(y\mid s)}\bigr)
		\leq
		\sqrt{\frac{2q(y\mid s)L}{N}} + \frac{2L}{3N}
		.
	\end{equation}
	Finally, observe also that when $\hat{N}^{\mathrm{to}}_k(f^{-1}(s))=0$, the definition of $\hat{q}_k^f$ in \eqref{eqn:pfqfdef} implies that for $\delta\leq 1/2$ and $T_k\geq 1$,
	\begin{equation}
		\label{eqn:bernstein_case_N_0}
		|\hat{q}_k^f(y\mid s)-q(y\mid s)|
		=
		q(y\mid s)
		\leq
		\sqrt{q(y\mid s)}
		\leq
		\sqrt{q(y\mid s)\ln(2T_k/\delta)}+(2/3)\ln(2T_k/\delta)
		,
	\end{equation}
	where we have used that $q(y\mid s)\in[0,1]$.
	We then rewrite the left-hand side of \eqref{eqn:bernstein_q} in a manner analogous to that in \eqref{eqn:sum_over_all_values_of_N}, but using \eqref{eqn:bernstein_case_N_0} instead of \eqref{eqn:case_N_equals_zero_hoeffding}, and subsequently using \eqref{eqn:optimal_bernstein_epsilon}:
	\begin{equation}
		\begin{split}
			 &
			\mbbP\biggl[
			| \hat{q}^f_k(y\mid s)-q(y\mid s) | >
			\sqrt{\frac{q(y\mid s) \ln(2T_k/\delta)}{1\vee\hat{N}^{\mathrm{to}}_k(f^{-1}(s))}}
			+
			\frac{2\ln(2T_k/\delta)}{3(1\vee\hat{N}^{\mathrm{to}}_k(f^{-1}(s)))}
			\biggr]
			\\
			 &
			\leq
			\sum_{N=1}^{T_k}
			\mbbP\biggl[
			|\hat{q}^f_k(y\mid s)-q(y\mid s)|>
			\sqrt{\frac{q(y\mid s) \ln(2T_k/\delta)}{N}}
			+
			\frac{2\ln(2T_k/\delta)}{3N}
			\,\bigg|\,
			\hat{N}^{\mathrm{to}}_k(f^{-1}(s)) = N
			\biggr]
			\\
			 &
			\leq
			\sum_{N=1}^{T_k}
			\mbbP\bigl[
			|\hat{q}^f_k(y\mid s)-q(y\mid s)|>
			\epsilon^*(N,\ln(2T_k/\delta))
			\mid
			\hat{N}^{\mathrm{to}}_k(f^{-1}(s)) = N
			\bigr]
			.
		\end{split}
	\end{equation}
	It finally follows by applying \eqref{eqn:basic_bernstein} with $\epsilon=\epsilon^*(N,\ln(2T_k/\delta))$ that
	\begin{equation}
		\begin{split}
			\mbbP\biggl[
			| \hat{q}^f_k(y\mid s)-q(y\mid s) |
			>
			 &
			\sqrt{\frac{q(y\mid s) \ln(2T_k/\delta)}{1\vee\hat{N}^{\mathrm{to}}_k(f^{-1}(s))}}
			+
			\frac{2\ln(2T_k/\delta)}{3(1\vee\hat{N}^{\mathrm{to}}_k(f^{-1}(s)))}
			\biggr]
			\\
			 &
			\leq
			2\sum_{N=1}^{T_k}
			\exp(
			-
			\ln(2T_k/\delta)
			)
			=
			2\sum_{N=1}^{T_k}\frac{\delta}{2T_k}
			=
			\delta.
		\end{split}
	\end{equation}
	This concludes the proof of \eqref{eqn:bernstein_q}.

	The analogs \eqref{eqn:hoeffding_p} and \eqref{eqn:bernstein_p} for $\hat{p}_k^f$ of \eqref{eqn:hoeffding_q} and \eqref{eqn:bernstein_q} follow from exactly the same argument, where we replace $\mcT'_k(s)$ by $\mcT'_k(s,a)=\mcT'_k(s,a)\eqdef \{(l,h)\in[k-1]\times [H]:x_{l,h}\in f^{-1}(s),a_{l,h}=a\}$ and condition on $\{\hat{N}_k(f^{-1}(s),a)=N\}$ instead.
\end{proof}

\subsection{Proof of \texorpdfstring{\Cref{lem:optimism_simulation}}{decomposition for optimism}}
\label{sec:proof_of_optimism_simulation}

Because it will be used again later on, we state the following intermediate result as a lemma.
\begin{lemma}
	\label{lem:simulation_lemma}

	For $i\in\{0,1\}$, let $\Phi^{(i)}=(n,S,A,H,\mu, f^{(i)},p^{(i)},q^{(i)},r^{(i)})$ be two \glspl{BMDP} with transition kernels $P^{(i)}$.
	Let $\pi$ be a policy for these \glspl{BMDP}.
	Let $V^{\pi,(i)}$ and $\mbbE_{\Phi^{(i)},\pi}$ denote the value function and expectation of \gls{BMDP} $\Phi^{(i)}$ under policy $\pi$.
	Then, for $i\in \{0,1\}$, $h\in[H]$, $x\in[n]$,
	\begin{align}
		V_h^{\pi,(0)}(x)-V_h^{\pi,(1)}(x)
		=
		\sum_{h'=h}^{H}
		&
		\mbbE_{\Phi^{(i)},\pi}
		\bigl[
			r_{h'}^{(0)}(x_{h'},a_{h'})-r_{h'}^{(1)}(x_{h'},a_{h'})
			\\
			 &
			+
			\bigl\langle
			P^{(0)}(\,\cdot  \mid x_{h'},a_{h'})
			-
			P^{(1)}(\,\cdot  \mid x_{h'},a_{h'}), V_{h'+1}^{\pi,(1-i)}
			\bigr\rangle
			\mid
			x_h=x
		\bigr]
		.
		\nonumber
	\end{align}
\end{lemma}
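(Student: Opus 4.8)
The plan is to prove the identity by backward induction on $h$, handling both values of $i$ at once since the manipulation is symmetric under exchanging the labels $0\leftrightarrow1$; fix $i$ and write $j=1-i$ throughout. The base case is $h=H+1$: both value functions vanish by the convention in \eqref{eqn:bellman_equation}, and the right-hand side is an empty sum, so the claim holds trivially. For the inductive step, assume the identity at round $h+1$ and expand both sides using the Bellman recursion \eqref{eqn:bellman_equation}:
\begin{equation*}
V_h^{\pi,(0)}(x) - V_h^{\pi,(1)}(x) = \sum_a \pi_h(a\mid x)\Bigl[ r_h^{(0)}(x,a) - r_h^{(1)}(x,a) + \sum_y P^{(0)}(y\mid x,a) V_{h+1}^{\pi,(0)}(y) - \sum_y P^{(1)}(y\mid x,a) V_{h+1}^{\pi,(1)}(y) \Bigr].
\end{equation*}

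The key step is to add and subtract $\sum_y P^{(i)}(y\mid x,a) V_{h+1}^{\pi,(j)}(y)$ inside the bracket. This rewrites the difference of transition terms as $\langle P^{(0)}(\,\cdot\mid x,a) - P^{(1)}(\,\cdot\mid x,a), V_{h+1}^{\pi,(j)}\rangle$ plus $\sum_y P^{(i)}(y\mid x,a)\bigl(V_{h+1}^{\pi,(0)}(y) - V_{h+1}^{\pi,(1)}(y)\bigr)$. Summing the reward difference and the first inner product against $\pi_h(\,\cdot\mid x)$ produces exactly the $h'=h$ summand of the target sum, expressed as $\mbbE_{\Phi^{(i)},\pi}[\,\cdot\mid x_h=x]$ via $\mbbP_{\Phi^{(i)},\pi}[a_h=a\mid x_h=x]=\pi_h(a\mid x)$ (see \Cref{sec:preliminaries-episodes}).

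For the remaining term I would apply the inductive hypothesis to $V_{h+1}^{\pi,(0)}(y)-V_{h+1}^{\pi,(1)}(y)$, turning it into $\sum_{h'=h+1}^H \mbbE_{\Phi^{(i)},\pi}[\,\cdot\mid x_{h+1}=y]$. Then observe that the linear operation $v\mapsto\sum_a \pi_h(a\mid x)\sum_y P^{(i)}(y\mid x,a)v(y)$ is precisely $\mbbE_{\Phi^{(i)},\pi}[v(x_{h+1})\mid x_h=x]$, because by the definition of $\mbbP_{\Phi^{(i)},\pi}$ the one-step transition probability from $x$ to $y$ equals $\sum_a \pi_h(a\mid x)P^{(i)}(y\mid x,a)$. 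Since the inner conditional expectation depends on the trajectory only through $x_{h+1}$, the tower property together with the Markov property collapses the nested expectations into $\sum_{h'=h+1}^H \mbbE_{\Phi^{(i)},\pi}[\,\cdot\mid x_h=x]$. Combining this with the $h'=h$ term gives the full sum $\sum_{h'=h}^H$, closing the induction.

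The argument uses no estimates; the only thing requiring care is bookkeeping — keeping $i$ and $j=1-i$ straight in the add-and-subtract step, and invoking the Markov property correctly when flattening the nested conditional expectations — which I expect to be the one place where a reader could get confused but not where any genuine difficulty lies. Because every manipulation is invariant under swapping $\Phi^{(0)}$ and $\Phi^{(1)}$, the case $i=1$ needs no separate treatment beyond relabeling.
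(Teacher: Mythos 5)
Your proof is correct, but it takes a different route from the paper: the paper does not argue the identity from scratch at all. It proves the case $i=1$ by directly invoking \cite[Lemma E.15]{dann2017unifying} (identifying $M'\equiv \Phi^{(0)}$, $M''\equiv \Phi^{(1)}$), and then observes that the case $i=0$ follows \emph{mutatis mutandis} by swapping which value function and which kernel appear after the second equality of that external proof. You instead give a self-contained backward induction: base case $h=H+1$ with $V^{\pi,(\cdot)}_{H+1}=0$, Bellman expansion of both value functions, the add-and-subtract of $\sum_y P^{(i)}(y\mid x,a)V^{\pi,(1-i)}_{h+1}(y)$ (which correctly produces the inner product against $V^{(1-i)}$ while the residual difference propagates under $P^{(i)}$, for both choices of $i$), and the tower/Markov collapse of the nested conditional expectations, which is valid here because the policy is memoryless. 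The bookkeeping of $i$ versus $1-i$ checks out in both cases, and treating both cases symmetrically within one induction is exactly the content of the paper's ``mutatis mutandis'' remark. What your version buys is self-containedness and an explicit verification of the $i=0$ variant, which the cited lemma does not literally state; what the paper's version buys is brevity by outsourcing the standard simulation-lemma telescoping to the literature.
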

\begin{proof}
	The case $i = 1$ follows directly from \cite[Lemma E.15]{dann2017unifying} by identifying $M'\equiv \Phi^{(0)}$ and $M'' \equiv \Phi^{(1)}$.
	The case $i = 0$ follows \emph{mutatis mutandis} after replacing $V'_{i+1}(s')$ and $P''(s'\mid s_i,a_i,i)$ with $V''_{i+1}(s')$ and $P'(s'\mid s_i,a_i,i)$, respectively, after the second equality in their proof.
\end{proof}
Recall from the paragraph above \Cref{lem:optimism_simulation} that $\bar{V}^{\pi_k}_h(x)$ equals the optimal value function of a \gls{BMDP} $\bar{\Phi}_k$ with kernel $\hat{P}_k$ and reward function $1\wedge (r_h(x,a)+\mathbbm{1}\{h < H\}\hat{b}_k(\hat{f}^{\alg}(x),a))$.
Therefore, $\bar{V}^{\pi_k}_h(x)\geq \bar{V}^{\pi^*}_h(x)$ for all $x$, $h$, where $\bar{V}^{\pi^*}_h$ denotes the value function of $\bar{\Phi}_k$ under $\pi^*$, and hence
\begin{equation}
	\bar{V}^{\pi_k}_h(x)-V^{\pi^*}_h(x)
	\geq
	\bar{V}^{\pi^*}_h(x)-V^{\pi^*}_h(x)
	.
\end{equation}
Applying \Cref{lem:simulation_lemma} with $\Phi^{(0)}=\bar{\Phi}_k$, $\Phi^{(1)}=\Phi$, $\pi=\pi^*$ and $i=0$, and using that $V^{\pi^*}_{H+1}(x) = 0$ by convention gives \eqref{eqn:optimism_simulation}.
\qed

\subsection{Proof of \texorpdfstring{\cref{lem:esterrorconfbd}}{confidence bound on next context value}}
\label{sec:proof_of_estimation_error_conf_bd}

Because it will be of independent use later on in this appendix, we first state the following intermediate result as a lemma.
\begin{lemma}
	\label{lem:estimation_error_decomposition}
	Let $V\in\mbbR^n$ be any vector.
	Then for $x\in[n]$, $a\in[A]$, $k\in[K]$, on $\Omega^f$,
	\begin{align}
		\label{eqn:decomposition_with_estimate_q}
		 &
		\langle \hat{P}_k
		(\,\cdot \mid x,a)-P(\,\cdot \mid x,a), V\rangle
		\\
		 &
		=
		\sum_{s}
		(\hat{p}_k(s\mid f(x),a)-p(s\mid f(x),a))\langle \hat{q}_k(\,\cdot \mid s),V\rangle
		+
		\sum_{s}
		p(s\mid f(x),a)\langle \hat{q}_k(\,\cdot \mid s)-q(\,\cdot \mid s),V\rangle
		,
		\nonumber
	\end{align}
	and
	\begin{align}
		\label{eqn:decomposition_with_estimate_p}
		 &
		\langle \hat{P}_k
		(\,\cdot \mid x,a)-P(\,\cdot \mid x,a), V\rangle
		\\
		 &
		=
		\sum_{s}
		(\hat{p}_k(s\mid f(x),a)-p(s\mid f(x),a))\langle q(\,\cdot \mid s),V\rangle
		+
		\sum_{s}
		\hat{p}_k(s\mid f(x),a)\langle \hat{q}_k(\,\cdot \mid s)-q(\,\cdot \mid s),V\rangle
		.
		\nonumber
	\end{align}
\end{lemma}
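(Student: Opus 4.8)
The plan is to establish both identities by direct computation on the event $\Omega^f=\{\hat{f}^{\alg}=f\}$, after which the two displays differ only in the choice of a telescoping cross-term. First I would fix $V\in\mbbR^n$, $x\in[n]$, $a\in[A]$, $k\in[K]$ and restrict attention to $\Omega^f$. On this event \eqref{eqn:estimator_P_def} reads $\hat{P}_k(y\mid x,a)=\hat{p}_k(f(y)\mid f(x),a)\hat{q}_k(y\mid f(y))$, which has the same product form as $P(y\mid x,a)=p(f(y)\mid f(x),a)q(y\mid f(y))$ from \eqref{def:BMDP-transition-kernel}. Since $\{f^{-1}(s)\}_{s\in[S]}$ partitions $[n]$, I would rewrite
\[
\langle \hat{P}_k(\,\cdot\mid x,a)-P(\,\cdot\mid x,a),V\rangle
=
\sum_{s}\sum_{y\in f^{-1}(s)}
\bigl(\hat{p}_k(s\mid f(x),a)\hat{q}_k(y\mid s)-p(s\mid f(x),a)q(y\mid s)\bigr)V(y).
\]

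Next I would insert a cross-term inside each fiber. For \eqref{eqn:decomposition_with_estimate_q} I would add and subtract $p(s\mid f(x),a)\hat{q}_k(y\mid s)$, so that the summand becomes $(\hat{p}_k(s\mid f(x),a)-p(s\mid f(x),a))\hat{q}_k(y\mid s)+p(s\mid f(x),a)(\hat{q}_k(y\mid s)-q(y\mid s))$; summing over $y\in f^{-1}(s)$ and then over $s$ yields the claimed decomposition, provided the fiber-restricted sums can be replaced by the full inner products $\langle \hat{q}_k(\,\cdot\mid s),V\rangle$ and $\langle q(\,\cdot\mid s),V\rangle$. For \eqref{eqn:decomposition_with_estimate_p} I would instead add and subtract $\hat{p}_k(s\mid f(x),a)q(y\mid s)$, writing the summand as $\hat{p}_k(s\mid f(x),a)(\hat{q}_k(y\mid s)-q(y\mid s))+(\hat{p}_k(s\mid f(x),a)-p(s\mid f(x),a))q(y\mid s)$, and proceed identically.

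The only point needing care --- and the main obstacle, such as it is --- is justifying that replacement, i.e.\ that both emission kernels are supported on the correct fiber. For the true kernel, $q(y\mid s)=0$ unless $y\in f^{-1}(s)$, which follows from $\sum_{y\in f^{-1}(s)}q(y\mid s)=1$ as recalled in \Cref{sec:on-the-structural-properties}; for the estimator, \eqref{eqn:estimators_p_q_def} carries the indicator $\mathbbm{1}\{y\in(\hat{f}^{\alg})^{-1}(s)\}$, and on $\Omega^f$ we have $(\hat{f}^{\alg})^{-1}(s)=f^{-1}(s)$. Hence $\sum_{y\in f^{-1}(s)}\hat{q}_k(y\mid s)V(y)=\langle\hat{q}_k(\,\cdot\mid s),V\rangle$ and $\sum_{y\in f^{-1}(s)}q(y\mid s)V(y)=\langle q(\,\cdot\mid s),V\rangle$, which closes both computations. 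No probabilistic input is required beyond membership in $\Omega^f$.
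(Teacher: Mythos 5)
Your proposal is correct and follows essentially the same route as the paper: the paper's auxiliary kernels $\hat{P}'_k(y\mid x,a)=p(f(y)\mid f(x),a)\hat{q}_k(y\mid f(y))$ and $\hat{P}''_k(y\mid x,a)=\hat{p}_k(f(y)\mid f(x),a)q(y\mid f(y))$ are exactly your two add-and-subtract cross-terms, and the same support facts (that $q(\,\cdot\mid s)$ is supported on $f^{-1}(s)$ and that $\hat{q}_k(\,\cdot\mid s)$ is supported on $(\hat{f}^{\alg})^{-1}(s)=f^{-1}(s)$ on $\Omega^f$) justify replacing the fiber-restricted sums by full inner products.
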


Recall \eqref{eqn:highprobeventcomponentdef}.
Since $\Omega_k\subset\Omega^f$, \eqref{eqn:estimation_error_confidence_bound} now follows from \Cref{lem:estimation_error_decomposition} on $\Omega_k$.
Namely, using \eqref{eqn:decomposition_with_estimate_p} with $V=V_{h+1}^{\pi^*}$ and the triangle inequality, and bounding the result using $\Omega_k^1$ and $\Omega_k^2$, we obtain
\begin{align}
	 &
	|\langle \hat{P}_k(\,\cdot \mid x,a)-P(\,\cdot \mid x,a), V_{h+1}^{\pi^*}\rangle|
	\nonumber
	\\
	 &
	\leq
	\biggl|\sum_{s} (\hat{p}_k(s\mid f(x),a)-p(s\mid f(x),a)) \langle q(\,\cdot \mid s), V^{\pi^*}_{h+1}\rangle \biggr|
	\nonumber
	\\
	 &
	\phantom{\leq}
	+
	\sum_{s}
	\hat{p}_k(s\mid f(x),a)| \langle \hat{q}_k(\,\cdot \mid s)-q(\,\cdot \mid s),V^{\pi^*}_{h+1}\rangle|
	\nonumber
	\\
	 &
	\overset{\Omega_k}{\leq}
	\sqrt{\frac{H^2 \ln(2HSAT_k^2)}{1\vee \hat{N}_k(f^{-1}(f(x)),a)}}
	+
	\sum_{s}
	\hat{p}_k(s\mid f(x),a)
	\sqrt{\frac{H^2 \ln(2HST_k^2)}{1\vee \hat{N}^{\mathrm{to}}_k(f^{-1}(s))}}
	.
	\label{eqn:bound_on_estimation_error_with_optimal_value_function}
\end{align}
Upon comparing the right-hand side of \eqref{eqn:bound_on_estimation_error_with_optimal_value_function} with the expression for $\hat{b}_k(s,a)$ in \eqref{eqn:bonus_def}, we find that the two coincide on $\Omega_k$, where $\hat{f}^{\alg}=f$, therefore establishing the claim \eqref{eqn:estimation_error_confidence_bound}.
\qed

It remains to prove \Cref{lem:estimation_error_decomposition}, which we do next.
\begin{proof}[Proof of \Cref{lem:estimation_error_decomposition}]

	The proof of \Cref{lem:estimation_error_decomposition} proceeds similarly as that of \cite[Lemma 36]{Jedra:2022}.
	However, we avoid premature bounds, and exploit the further property that $\hat{f}^{\alg}=f$ on $\Omega^f$.
	We first prove \eqref{eqn:decomposition_with_estimate_q}.
	To this aim, note that from \eqref{eqn:estimator_P_def}, and on $\Omega^f=\{\hat{f}^{\alg}=f\}$, the estimate kernel $\hat{P}_k$ satisfies
	\begin{equation}
		\label{eqn:estimate_model_on_omega_f}
		\hat{P}_k(y\mid x,a)
		=
		\hat{p}_k(f(y)\mid f(x),a)
		\hat{q}_k(y\mid f(y))
		.
	\end{equation}
	We then introduce an auxiliary \gls{BMDP} transition kernel $\hat{P}'_k$, defined as
	\begin{equation}
		\label{eqn:first_intermediate_model_on_omega_f}
		\hat{P}'_k(y\mid x,a)\eqdef p(f(y)\mid f(x),a)\hat{q}_k(y\mid f(y)).
	\end{equation}
	The kernel $\hat{P}'_k$ is used to expand the left-hand side of \eqref{eqn:decomposition_with_estimate_q} as follows:
	\begin{equation}
		\label{eqn:introducing_first_intermediate_model}
		\langle \hat{P}_k(\,\cdot \mid x,a)-P(\,\cdot \mid x,a),V\rangle
		=
		\langle \hat{P}_k(\,\cdot \mid x,a)-\hat{P}'_k(\,\cdot \mid x,a),V\rangle
		+
		\langle \hat{P}'_k(\,\cdot \mid x,a)-P(\,\cdot \mid x,a),V\rangle
		.
	\end{equation}
	We now use \eqref{eqn:estimate_model_on_omega_f} and \eqref{eqn:first_intermediate_model_on_omega_f} to further expand each of the two terms on the right-hand side of \eqref{eqn:introducing_first_intermediate_model}.
	In particular, using that $(i)$ on $\Omega^f$, $\hat{q}_k(y\mid s)>0$ implies $y\in f^{-1}(s)$ by \eqref{eqn:estimators_p_q_def}, it follows that on $\Omega^f$,
	\begin{align}
		\langle \hat{P}_k(\,\cdot \mid x,a)-\hat{P}'_k(\,\cdot \mid x,a),V\rangle
			&
		=
		\sum_{y}
		(\hat{p}_k(f(y)\mid f(x),a)-p(f(y)\mid f(x),a))\hat{q}_k(y\mid f(y))V(y)
		\nonumber
		\\
			&
		=
		\sum_{s}
		(\hat{p}_k(s\mid f(x),a)-p(s\mid f(x),a))\sum_{y\in f^{-1}(s)}\hat{q}_k(y\mid s)V(y)
		\nonumber
		\\
			&
		\overset{(i)}{=}
		\sum_{s}
		(\hat{p}_k(s\mid f(x),a)-p(s\mid f(x),a))\sum_{y}\hat{q}_k(y\mid s)V(y)
		\nonumber
		\\
			&
		=
		\sum_{s}
		(\hat{p}_k(s\mid f(x),a)-p(s\mid f(x),a))\langle \hat{q}_k(\,\cdot \mid s),V\rangle
		.
	\end{align}
	Similarly, because $q(\,\cdot \mid s)>0$ implies $y\in f^{-1}(s)$, it follows that
	\begin{equation}
		\label{eqn:introducing_first_intermediate_model_term_2}
		\begin{split}
			\langle \hat{P}'_k(\,\cdot \mid x,a)-P(\,\cdot \mid x,a),V\rangle
			 &
			=
			\sum_{y}
			p(f(y)\mid f(x),a)(\hat{q}_k(y\mid f(y))-q(y\mid f(y)))V(y)
			\\
			 &
			=
			\sum_{s}
			p(s\mid f(x),a)\langle \hat{q}_k(\,\cdot \mid s)-q(\,\cdot \mid s),V\rangle.
		\end{split}
	\end{equation}
	Combining \eqref{eqn:introducing_first_intermediate_model}-\eqref{eqn:introducing_first_intermediate_model_term_2} yields \eqref{eqn:decomposition_with_estimate_q}.

	The result \eqref{eqn:decomposition_with_estimate_p} follows using the same steps, but we instead use the auxiliary kernel
	\begin{equation}
		\hat{P}''_k(y\mid x,a)\eqdef \hat{p}_k(f(y)\mid f(x),a)q(y\mid f(y)).
	\end{equation}
	Then, the same argument we used to prove \eqref{eqn:decomposition_with_estimate_q} now leads to
	\begin{equation}
		\begin{split}
			 &
			\langle \hat{P}_k(\,\cdot \mid x,a)-P(\,\cdot \mid x,a),V\rangle
			=
			\langle \hat{P}_k(\,\cdot \mid x,a)-\hat{P}''_k(\,\cdot \mid x,a),V\rangle
			+
			\langle \hat{P}''_k(\,\cdot \mid x,a)-P(\,\cdot \mid x,a),V\rangle
			\\
			 &
			=
			\sum_{s}
			\hat{p}_k(s\mid f(x),a)\langle \hat{q}_k(\,\cdot \mid s)-q(\,\cdot \mid s),V\rangle
			+
			\sum_{s}
			(\hat{p}_k(s\mid f(x),a)-p(s\mid f(x),a))\langle q(\,\cdot \mid s),V\rangle
			.
		\end{split}
	\end{equation}
	This proves \eqref{eqn:decomposition_with_estimate_p} and therefore \Cref{lem:estimation_error_decomposition}.
\end{proof}

\subsection{Proof of \texorpdfstring{\eqref{eqn:application_simulation_lemma_true_distribution}}{the regret decomposition for the upper bound}}
\label{sec:proof_of_application_simulation_lemma_true_distribution}

It suffices to show that for all $x$,
\begin{align}
	\bar{V}^{\pi_k}_1(x)
	-
	V^{\pi_k}_1(x)
	 &
	=
	\sum_{h=1}^{H-1}\mbbE\biggl[\hat{b}_{k}(\hat{f}^{\alg}(x_{k,h}),a_{k,h})
	+
	\langle \hat{P}_k(\,\cdot  \mid x_{k,h},a_{k,h})-P(\,\cdot \mid x_{k,h},a_{k,h}), V^{\pi^*}_{h+1}\rangle
	\nonumber
	\\
	 &
	+
	\langle \hat{P}_k(\,\cdot  \mid x_{k,h},a_{k,h})-P(\,\cdot  \mid x_{k,h},a_{k,h}), \bar{V}^{\pi_k}_{h+1}-V^{\pi^*}_{h+1}\rangle
	\mid \mcD_k, x_{k,1} = x \biggr]
	.
	\label{eqn:equivalent_to_application_simulation_lemma_true_distribution}
\end{align}
Recall from the paragraph directly above \Cref{lem:optimism_simulation} that $\bar{V}^{\pi_k}_h(x)$ is the optimal value function of the \gls{BMDP} $\bar{\Phi}_k$ with kernel $\hat{P}_k$ and reward function $1\wedge (r_h(x,a)+\mathbbm{1}\{h < H\}\hat{b}_k(\hat{f}^{\alg}(x),a))$.
Application of \Cref{lem:simulation_lemma} with $\Phi^{(0)}=\bar{\Phi}_k$, $\Phi^{(1)}=\Phi$, $\pi = \pi_k$ and $i = 1$ then gives
\begin{align}
	 &
	\bar{V}^{\pi_k}_h(x)-V_h^{\pi_k}(x)
	\nonumber
	\\
	&
	=
	\sum_{h'=h}^{H-1}\mbbE_{\Phi,\pi_k}\bigl[\,\hat{b}_k(\hat{f}^{\alg}(x_{h'}),a_{h'})+\bigl\langle \hat{P}_k(\,\cdot \mid x_{h'},a_{h'})-P(\,\cdot \mid x_{h'},a_{h'}), \bar{V}^{\pi_k}_{h'+1} \bigr\rangle\mid x_h=x\bigr]
	\label{eqn:application_simulation_lemma_2}
	\\
	 &
	=
	\sum_{h'=h}^{H-1}\mbbE\bigl[
	\,\hat{b}_k(\hat{f}^{\alg}(x_{k,h'}),a_{k,h'})
	+
	\bigl\langle \hat{P}_k(\,\cdot \mid x_{k,h'},a_{k,h'})-P(\,\cdot \mid x_{k,h'},a_{k,h'}), \bar{V}^{\pi_k}_{h'+1} \bigr\rangle
	\mid \mcD_k,x_{k,h}=x\bigr].
	\nonumber
\end{align}
The last line follows because $\pi_k$, $\hat{b}_k$, $\hat{f}^{\alg}$, $\hat{P}_k$, and $\bar{V}^{\pi_k}_{h+1}$ are all deterministic given $\mcD_k$ for $T_k>\Theta^{\clust}$.
The latter constraint is required because $\hat{f}^{\alg}$ depends on all observations from \Cref{alg:BUCBVIouter}'s first phase.
Thus, the conditional expectation is taken only with respect to the context--action sequence $(x_{k,1},a_{k,1},\ldots,a_{k,H},x_{k,H+1})$ for episode $k$.
This, given $\mcD_k$, has the same distribution as $(x_1,a_1,\ldots,a_H,x_{H+1})$ under $\mbbP_{\Phi,\pi_k}$.

Finally, decompose the right-most term of \eqref{eqn:application_simulation_lemma_2} as
\begin{align}
	\nonumber
	\bigl\langle 
		\hat{P}_k(\,\cdot \mid x_{k,h'},a_{k,h'})&\,-P(\,\cdot \mid x_{k,h'},a_{k,h'}),
		\bar{V}^{\pi_k}_{h'+1}
	\bigr\rangle
	=
	\bigl\langle \hat{P}_k(\,\cdot  \mid x_{k,h},a_{k,h})-P(\,\cdot \mid x_{k,h},a_{k,h}), V^{\pi^*}_{h+1}\bigr\rangle
	\\
	&
	+
	\bigl\langle \hat{P}_k(\,\cdot  \mid x_{k,h},a_{k,h})-P(\,\cdot  \mid x_{k,h},a_{k,h}), \bar{V}^{\pi_k}_{h+1}-V^{\pi^*}_{h+1}\bigr\rangle
	.
	\label{eqn:estimation_error_decomposition}
\end{align}
Together, \eqref{eqn:application_simulation_lemma_2} and \eqref{eqn:estimation_error_decomposition} prove \eqref{eqn:equivalent_to_application_simulation_lemma_true_distribution}.
\qed

\subsection{Proof of \texorpdfstring{\Cref{prop:concentration_bound_nonnegative_vector}}{the bound on the cross term}}
\label{sec:proof_of_concentration_bound_nonnegative_vector}

The proof will rely on the following intermediate result.
Its proof is a straightforward adaptation of the argument in \cite[Lemma 3]{UCRLVI}, and is presented in \Cref{sec:proof_of_concentration_bound_nonnegative_vector} below.
\begin{lemma}
	\label{lem:concentration_bound_nonnegative_vector}
	Let $U\in\mbbR^n$ and $u\in\mbbR^S$.
	Then, for $s\in[S]$, $a\in[A]$, and $k\in[K]$, on $\Omega_k$,
	\begin{align}
		 &
		\langle |\hat{q}_k(\,\cdot \mid s) - q(\,\cdot \mid s)|, U \rangle
		\leq
		\frac{1}{H}\langle q(\,\cdot \mid s), U\rangle
		+
		\frac{2n \lVert U\rVert_{\infty}HL}{1\vee \hat{N}^{\mathrm{to}}_k(f^{-1}(s))}
		\label{eqn:concentration_bound_nonnegative_vector_q}
		\\
		 &
		\mathrm{and}\quad\sum_{s'}|\hat{p}_k(s' \mid s,a) - p(s' \mid s,a)| u(s')
		\leq
		\frac{1}{H}\Bigl(\sum_{s'} p(s'\mid s,a) u(s')\Bigr)
		+
		\frac{2\lVert u\rVert_{\infty}HSL}{1\vee \hat{N}_k(f^{-1}(s),a)}.
		\label{eqn:concentration_bound_nonnegative_vector_p}
	\end{align}
\end{lemma}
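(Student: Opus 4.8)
The plan is to deduce both bounds from the pointwise Bernstein-type inequalities that are already encoded in the event $\Omega_k$, followed by a weighted AM--GM split in the style of \cite[Lemma 3]{UCRLVI}. Throughout, $U$ and $u$ are taken to have nonnegative entries, as the lemma's name indicates and as holds in every application (in \Cref{prop:concentration_bound_nonnegative_vector}, $U=\bar V^{\pi_k}_{h+1}-V^{\pi^*}_{h+1}\geq 0$ on $\Omega_k$ by \Cref{prop:optimism}). Recall from \eqref{eqn:highprobeventcomponentdef} that $\Omega_k\subset\Omega_k^3\cap\Omega_k^4$; hence on $\Omega_k$, writing $N_s^{\mathrm{to}}\eqdef 1\vee\hat{N}^{\mathrm{to}}_k(f^{-1}(s))$ and $N_{s,a}\eqdef 1\vee\hat{N}_k(f^{-1}(s),a)$, we have $|\hat{q}_k(y\mid s)-q(y\mid s)|\leq \sqrt{q(y\mid s)\ln(2nST_k^2)/N_s^{\mathrm{to}}}+2\ln(2nST_k^2)/(3N_s^{\mathrm{to}})$ for every $y\in[n]$, and $|\hat{p}_k(s'\mid s,a)-p(s'\mid s,a)|\leq \sqrt{p(s'\mid s,a)\ln(2S^2AT_k^2)/N_{s,a}}+2\ln(2S^2AT_k^2)/(3N_{s,a})$ for every $s'\in[S]$. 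Since $T_k\leq T_K$, $H\geq1$, $S\leq S^2$ and $2\leq e$, both logarithmic factors are at most $L=1+\ln(nHS^2AT_K^2)$, so I would replace them by $L$ from here on.

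For \eqref{eqn:concentration_bound_nonnegative_vector_q}, multiply the pointwise emission bound by $U(y)\geq 0$ and sum over $y\in[n]$. The contribution of the additive term is at most $\sum_{y}U(y)\cdot 2L/(3N_s^{\mathrm{to}})\leq 2nL\lVert U\rVert_\infty/(3N_s^{\mathrm{to}})$. For the square--root term, I would write $U(y)\sqrt{q(y\mid s)L/N_s^{\mathrm{to}}}=\sqrt{q(y\mid s)U(y)}\cdot\sqrt{U(y)L/N_s^{\mathrm{to}}}$ and apply the elementary inequality $\sqrt{ab}\leq a/H+Hb/4$, valid for all $a,b\geq0$ (it is AM--GM applied to $2a/H$ and $Hb/2$), with $a=q(y\mid s)U(y)$ and $b=U(y)L/N_s^{\mathrm{to}}$. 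Summing over $y$ then yields $\tfrac1H\langle q(\,\cdot\mid s),U\rangle+\tfrac{HL}{4N_s^{\mathrm{to}}}\sum_y U(y)\leq \tfrac1H\langle q(\,\cdot\mid s),U\rangle+\tfrac{HLn\lVert U\rVert_\infty}{4N_s^{\mathrm{to}}}$. Combining the two contributions and using $H/4+2/3\leq 2H$ for $H\geq1$ gives exactly \eqref{eqn:concentration_bound_nonnegative_vector_q}.

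The latent--kernel bound \eqref{eqn:concentration_bound_nonnegative_vector_p} follows from the identical argument, now multiplying the pointwise bound on $|\hat{p}_k(s'\mid s,a)-p(s'\mid s,a)|$ by $u(s')\geq0$ and summing over $s'\in[S]$ instead of $y\in[n]$; this replaces the factor $n$ by $S$ (the number of summands) and the count $N_s^{\mathrm{to}}$ by $N_{s,a}$, leaving the rest of the computation unchanged.

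There is no genuine obstacle here: the lemma is essentially the variance-reduced counterpart of \cite[Lemma 3]{UCRLVI} applied separately to $\hat{q}_k$ and $\hat{p}_k$, made possible because on $\Omega^f$ the estimators aggregate data within the exactly recovered latent states. The only point requiring a little care is the choice of constant in the AM--GM split, so that the leading term comes out with coefficient exactly $1/H$ and the residual terms collapse into the stated $2n\lVert U\rVert_\infty HL/N_s^{\mathrm{to}}$ (resp.\ $2\lVert u\rVert_\infty HSL/N_{s,a}$); the slack left by the crude constants above makes this routine.
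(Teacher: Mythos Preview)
Your proof is correct and reaches the same bound as the paper, but the mechanics differ in one place worth noting. Both arguments start from the pointwise Bernstein bounds encoded in $\Omega_k^3$ and $\Omega_k^4$, replace the logarithms by $L$, and handle the additive $2L/(3N)$ term identically. For the $\sqrt{q(y\mid s)L/N}\,U(y)$ term, however, the paper does not use AM--GM: it instead splits the sum over $y\in f^{-1}(s)$ according to the threshold set $\mcG_{k,s}\eqdef\{y:q(y\mid s)(1\vee\hat N_k^{\mathrm{to}}(f^{-1}(s)))\geq H^2L\}$. On $\mcG_{k,s}$ one has $\sqrt{L/(q(y\mid s)N)}\leq 1/H$, giving the $\tfrac1H\langle q(\cdot\mid s),U\rangle$ contribution; on $\mcG_{k,s}^{\mathrm c}$ one has $\sqrt{q(y\mid s)NL}<HL$, giving another $n\lVert U\rVert_\infty HL/N$. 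Your weighted AM--GM split $\sqrt{ab}\leq a/H+Hb/4$ accomplishes the same trade-off in one line and with slightly sharper intermediate constants, so it is a cleaner alternative; the threshold-split is the more literal transcription of \cite[Lemma~3]{UCRLVI}. Either way the residual terms combine to at most $2n\lVert U\rVert_\infty HL/N$, and the argument for $\hat p_k$ is symmetric in both proofs.
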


Fix $k\in[K]$ and recall that $L=1+\ln(nHS^2AT_K^2)$.
Define for each $h$ the vectors $\Delta_{k,h}\in\mbbR^n$ whose elements are given by $\Delta_{k,h}(x)\eqdef \bar{V}^{\pi_k}_h(x)-V^{\pi^*}_h(x)$ for $x\in[n]$.
Observe that $\Delta_{k,h}\geq 0$ on $\Omega_k$ by \Cref{prop:optimism}.
Observe, moreover, that $\lVert\Delta_{k,h}\rVert_{\infty}\leq H$ because the rewards are bounded in $[0,1]$.
By \eqref{eqn:decomposition_with_estimate_q} of \Cref{lem:estimation_error_decomposition} and the triangle inequality, it follows that $\Omega_k$,
\begin{align}
	|\langle \hat{P}_k(\,\cdot \mid x,a)-P(\,\cdot \mid x,a), \Delta_{k,h+1}\rangle|
	 &
	\leq
	\sum_{s}
	|\hat{p}_k(s\mid f(x),a)-p(s\mid f(x),a)|\langle \hat{q}_k(\,\cdot \mid s),\Delta_{k,h+1}\rangle
	\nonumber
	\\
	 &
	+
	\sum_{s}
	p(s\mid f(x),a)\langle |\hat{q}_k(\,\cdot \mid s)-q(\,\cdot \mid s)| ,\Delta_{k,h+1}\rangle
	,
	\label{eqn:expansion_estimation_error_nonnegative_vector}
\end{align}
where we have also used the nonnegativity of $\Delta_{k,h+1}$, as well as that of $p(s\mid f(x),a)$ and $\hat{q}_k(\,\cdot \mid s)$.
Applying \Cref{lem:concentration_bound_nonnegative_vector} with $U=\Delta_{k,h+1}$ and $u(s)=\langle \hat{q}_k(\,\cdot \mid s),\Delta_{k,h+1}\rangle$, we find that on $\Omega_k$,
\begin{align}
	|\langle \hat{P}_k(\,\cdot \mid x,a)-
	 &
	P(\,\cdot \mid x,a), \Delta_{k,h+1}\rangle|
	\nonumber
	\\
	 &
	\leq
	\frac{1}{H}
	\sum_{s}
	p(s\mid f(x),a)\langle \hat{q}_k(\,\cdot \mid s),\Delta_{k,h+1}\rangle
	+
	\frac{2H^2SL}{1\vee \hat{N}_k(f^{-1}(f(x)),a)}
	\label{eqn:first_bound_of_expansion_estimation_error_nonnegative_vector}
	\\
	 &
	+
	\frac{1}{H}
	\sum_{s}
	p(s\mid f(x),a)\langle q(\,\cdot \mid s),\Delta_{k,h+1}\rangle
	+
	\sum_{s}
	p(s\mid f(x),a)
	\frac{2n H^2L}{1\vee \hat{N}^{\mathrm{to}}_k(f^{-1}(s))}
	.
	\nonumber
\end{align}
Here, we have used our earlier observation that $\lVert \Delta_{k,h+1}\rVert_{\infty}\leq H$, which furthermore implies that $\lVert u\rVert_{\infty}=\max_{s}|\langle \hat{q}_k(\,\cdot \mid s),U|\leq \lVert \Delta_{k,h+1}\rVert_{\infty}\leq  H$ by Hölder's inequality and because $\sum_{y}\hat{q}_k(y\mid s)\leq 1$.

The first term on the right-hand side of \eqref{eqn:first_bound_of_expansion_estimation_error_nonnegative_vector} still involves the estimate emission kernel $\hat{q}_k$.
We again use \Cref{lem:concentration_bound_nonnegative_vector} to replace it, up to a correction term, by the true emission kernel $q$.
In particular, by first using the nonnegativity of $\Delta_{k,h+1}$, followed by \Cref{lem:concentration_bound_nonnegative_vector}, we find that on $\Omega_k$,
\begin{equation}
	\begin{split}
		 &
		\frac{1}{H}
		\sum_{s}
		p(s\mid f(x),a)
		\langle \hat{q}_k(\,\cdot \mid s),\Delta_{k,h+1}\rangle
		\\
		 &
		=
		\frac{1}{H}
		\sum_{s}
		p(s\mid f(x),a)\bigl(
		\langle q(\,\cdot \mid s),\Delta_{k,h+1}\rangle
		+
		\langle \hat{q}_k(\,\cdot \mid s)-q(\,\cdot \mid s),\Delta_{k,h+1}\rangle
		\bigr)
		\\
		 &
		\leq
		\frac{1}{H}
		\sum_{s}
		p(s\mid f(x),a)\bigl(
		\langle q(\,\cdot \mid s),\Delta_{k,h+1}\rangle
		+
		\langle |\hat{q}_k(\,\cdot \mid s)-q(\,\cdot \mid s)|,\Delta_{k,h+1}\rangle
		\bigr)
		\\
		 &
		\leq
		\frac{1}{H}
		\sum_{s}
		p(s\mid f(x),a)\biggl(
		\biggl(1+\frac{1}{H}\biggr)\langle q(\,\cdot \mid s),\Delta_{k,h+1}\rangle
		+
		\frac{2n H^2L}{1\vee \hat{N}^{\mathrm{to}}_k(f^{-1}(s))}
		\biggr)
		.
	\end{split}
\end{equation}
Together with \eqref{eqn:first_bound_of_expansion_estimation_error_nonnegative_vector}, we conclude that
\begin{align}
	 &
	|\langle \hat{P}_k(\,\cdot \mid x,a)-P(\,\cdot \mid x,a),
	\, \Delta_{k,h+1}\rangle|
	\leq
	\frac{1}{H}
	\biggl(2+\frac{1}{H}\biggr)\sum_{s} p(s\mid f(x),a)\langle q(\,\cdot \mid s),\Delta_{k,h+1}\rangle
	\nonumber
	\\
	 &
	+
	\frac{2H^2SL}{1\vee \hat{N}_k(f^{-1}(f(x)),a)}
	+
	\biggl(1+\frac{1}{H}\biggr)\sum_{s} p(s\mid f(x),a)
	\frac{2n H^2L}{1\vee \hat{N}^{\mathrm{to}}_k(f^{-1}(s))}
	.
\end{align}
The result \eqref{eqn:concentration_bound_nonnegative_vector_large_P} finally follows by recalling that $\sum_{s} p(s\mid f(x),a)\langle q(\,\cdot \mid s),\Delta_{k,h+1}\rangle=\langle P(\,\cdot \mid x,a), \allowbreak \bar{V}^{\pi_k}_{h+1} - V^{\pi^*}_{h+1}\rangle$ and bounding $(1+1/H)\leq 2$ since $H\geq 1$.
\qed

\subsubsection{Proof of \texorpdfstring{\Cref{lem:concentration_bound_nonnegative_vector}}{lemma}}
\label{sec:proof_of_lem_concentration_bound_nonnegative_vector}

We first prove \eqref{eqn:concentration_bound_nonnegative_vector_q}.
Recall \eqref{eqn:highprobeventcomponentdef}, and that $L=1+\ln(nHS^2AT_K^2)$ and $T_k = kH$.
Note that $\ln(2nST_k^2)\leq L$ for $k\in[K]$.
It follows that on $\Omega^f\cap \Omega^3_k$, for all $s$ and $y$,
\begin{equation}
	\label{eqn:element_wise_bound_on_q}
	|\hat{q}_k(y\mid s)-q(y\mid s)|
	\leq
	\sqrt{\frac{q(y\mid s) L}{1\vee\hat{N}^{\mathrm{to}}_k(f^{-1}(s))}}
	+
	\frac{2L}{3(1\vee\hat{N}^{\mathrm{to}}_k(f^{-1}(s)))}
	.
\end{equation}
It follows from \eqref{eqn:element_wise_bound_on_q} that
\begin{align}
	\nonumber
	 &
	\sum_{y\in f^{-1}(s)}
	| \hat{q}_k(y\mid s)-q(y\mid s)| U(y)
	\leq
	\sum_{y\in f^{-1}(s)}
	\biggl(
	\sqrt{\frac{q(y\mid s) L}{1\vee\hat{N}^{\mathrm{to}}_k(f^{-1}(s))}} + \frac{2L}{3(1\vee\hat{N}^{\mathrm{to}}_k(f^{-1}(s)))}
	\biggr) U(y)
	\\
	 &
	\leq
	\Biggl(\sum_{y\in f^{-1}(s)}\sqrt{\frac{q(y\mid s) L}{1\vee\hat{N}^{\mathrm{to}}_k(f^{-1}(s))}}U(y)\Biggr)
	+
	\frac{n\lVert U\rVert_{\infty}HL}{(1\vee\hat{N}^{\mathrm{to}}_k(f^{-1}(s)))}
	,
	\label{eqn:corrtermbdconfbdapplied}
\end{align}
where, in the second inequality, we have included a loose bound $(2/3)U(y)\leq H\lVert U\rVert_{\infty}$ to simplify later steps.
For $k\in[K]$, $s\in[S]$, define
\begin{equation}
	\mcG_{k,s}
	\eqdef
	\bigl\{ y\in f^{-1}(s): q(y\mid s)(1\vee\hat{N}^{\mathrm{to}}_k(f^{-1}(s)))\geq H^2L \bigr\}
\end{equation}
Focusing on the first term on the last line of \eqref{eqn:corrtermbdconfbdapplied}, we split the sum over $y\in f^{-1}(s)$ as follows:
\begin{equation}\label{eqn:sumsplit}
	\sum_{y\in f^{-1}(s)}
	\sqrt{\frac{q(y\mid s) L}{1\vee\hat{N}^{\mathrm{to}}_k(f^{-1}(s))}}U(y)
	=
	\Biggl(\sum_{y\in  \mcG^{\mathrm{c}}_{k,s}}+\sum_{y\in  \mcG_{k,s}} \Biggr) \sqrt{\frac{q(y\mid s) L}{1\vee\hat{N}^{\mathrm{to}}_k(f^{-1}(s))}}U(y).
\end{equation}
We first deal with the terms in \eqref{eqn:sumsplit} that are \textit{not} in $\mcG_{k,s}$.
These satisfy
\begin{equation}
	\sum_{y\in  \mcG^{\mathrm{c}}_{k,s}}\sqrt{\frac{q(y\mid s) L}{1\vee\hat{N}^{\mathrm{to}}_k(f^{-1}(s))}}U(y)
	=
	\sum_{y\in  \mcG^{\mathrm{c}}_{k,s}}\frac{\sqrt{q(y\mid s)(1\vee\hat{N}^{\mathrm{to}}_k(f^{-1}(s))) L}}{1\vee\hat{N}^{\mathrm{to}}_k(f^{-1}(s))}U(y)
	\leq
	\frac{n\lVert U\rVert_{\infty}HL}{1\vee\hat{N}^{\mathrm{to}}_k(f^{-1}(s))}
	.
\end{equation}
The other terms in \eqref{eqn:sumsplit} that \textit{are} in $\mcG_{k,s}$ satisfy
\begin{equation}
	\label{eqn:gksbd}
	\begin{split}
		\sum_{y\in  \mcG_{k,s}}
		\sqrt{\frac{q(y\mid s) L}{1\vee\hat{N}^{\mathrm{to}}_k(f^{-1}(s))}}U(y)
		 &
		=
		\sum_{y\in  \mcG_{k,s}}
		q(y\mid s)\sqrt{\frac{ L}{q(y\mid s)(1\vee\hat{N}^{\mathrm{to}}_k(f^{-1}(s)))}}U(y)
		\\
		 &
		\leq
		\frac{1}{H}
		\sum_{y\in f^{-1}(s)}
		q(y\mid s)U(y).
	\end{split}
\end{equation}
The result \eqref{eqn:concentration_bound_nonnegative_vector_q} now follows by combining \eqref{eqn:corrtermbdconfbdapplied} with \eqref{eqn:sumsplit}-\eqref{eqn:gksbd}.

The result \eqref{eqn:concentration_bound_nonnegative_vector_p} follows from an analogous argument upon replacing \eqref{eqn:element_wise_bound_on_q} by the analogous bound for $|\hat{p}_k(s'\mid s,a)-p(s'\mid s,a)|$ implied by $\Omega^f\cap\Omega_k^4$ and replacing the sets $\mcG_{k,s}$ by $\{s'\in[S]:p(s'\mid s,a)(1\vee \hat{N}_k(f^{-1}(s),a)\geq H^2L)\}$.
\qed

\subsection{Proof of \texorpdfstring{\Cref{prop:estimation_error_bound}}{the recursive bound}}
\label{sec:proof_of_estimation_error_bound}

Let $k\in[K]$ be such that $T_k=kH>\Theta^{\clust}$.
We first use \eqref{eqn:application_simulation_lemma_true_distribution} to rewrite the left-hand side of \eqref{eqn:estimation_error_bound}:
\begin{align}
	 &
	\mbbE[(\bar{V}_h^{\pi_k}(x_{k,h})-V_h^{\pi_k}(x_{k,h}))\mathbbm{1}\{\Omega_k\}]
	\nonumber
	\\
	 &
	=
	\sum_{h=1}^{H-1}
	\mbbE\bigl[
		\mbbE\bigl[
			\hat{b}_{k}(\hat{f}^{\alg}(x_{k,h}),a_{k,h})
			+
			\langle \hat{P}_k(\,\cdot  \mid x_{k,h},a_{k,h})-P(\,\cdot \mid x_{k,h},a_{k,h}), V^{\pi^*}_{h+1}\rangle
			\nonumber
			\\
			&
			\qquad
			+
			\langle \hat{P}_k(\,\cdot  \mid x_{k,h},a_{k,h})-P(\,\cdot  \mid x_{k,h},a_{k,h}), \bar{V}^{\pi_k}_{h+1}-V^{\pi^*}_{h+1}\rangle
			\mid \mcD_k, x_{k,1}
		\bigr]
		\mathbbm{1}\{\Omega_k\}
	\bigr]
	\nonumber
	\\
	&
	=
	\sum_{h=1}^{H-1}
	\mbbE\bigl[
		\hat{b}_{k}(\hat{f}^{\alg}(x_{k,h}),a_{k,h})
		+
		\langle \hat{P}_k(\,\cdot  \mid x_{k,h},a_{k,h})-P(\,\cdot \mid x_{k,h},a_{k,h}), V^{\pi^*}_{h+1}\rangle
		\nonumber
		\\
		&
		\qquad
		+
		\langle \hat{P}_k(\,\cdot  \mid x_{k,h},a_{k,h})-P(\,\cdot  \mid x_{k,h},a_{k,h}), \bar{V}^{\pi_k}_{h+1}-V^{\pi^*}_{h+1}\rangle
		\mathbbm{1}\{\Omega_k\}
	\bigr]
	,
	\label{eqn:expectation_of_simulation_lemma_result}
\end{align}
where in passing to the last line we have used the law of total expectation and the fact that $\Omega_k$ is $\mcD_k$-measurable for any $k\in[K]$ satisfying $T_k=kH>\Theta^{\clust}$.
The constraint on $k$ is required because $\hat{f}^{\alg}$ depends on all observations from \Cref{alg:BUCBVIouter}'s first phase, and $\Omega^f$ depends on $\hat{f}^{\alg}$.

Next, observe that by \Cref{lem:esterrorconfbd}, the following holds on $\Omega_k$:
\begin{equation}
	\langle \hat{P}_k(\,\cdot \mid x_{k,h'},a_{k,h'})-P(\,\cdot \mid x_{k,h'},a_{k,h'}), V^{\pi^*}_{h'+1} \rangle
	\leq
	\hat{b}_k(f(x),a)
	.
\end{equation}
Moreover, since $\hat{f}^{\alg}=f$ on $\Omega^f$, it follows that
\begin{align}
	 &
	\textnormal{RHS of }\eqref{eqn:expectation_of_simulation_lemma_result}
	\label{eqn:bounding_estimation_error_by_bonus}
	\\
	 &
	\leq
	\sum_{h'=h}^{H-1}
	\mbbE\Bigl[
	\bigl(
	\underbrace{2\hat{b}_k(s_{k,h'},a_{k,h'})}_{\defeq T_1}
	+
	\underbrace{\langle \hat{P}_k(\,\cdot \mid x_{k,h'},a_{k,h'})-P(\,\cdot \mid x_{k,h'},a_{k,h'}), \bar{V}^{\pi_k}_{h'+1}- V^{\pi^*}_{h'+1}\rangle}_{\defeq T_2}
	\bigr)
	\mathbbm{1}\{\Omega_k\}
	\Bigr]
	\nonumber
\end{align}
where we have used the notation $s_{k,h'}=f(x_{k,h'})$.
We now bound the terms $T_1$ and $T_2$.

Starting with $T_2$, applying \Cref{prop:concentration_bound_nonnegative_vector} gives that
\begin{align}
	\mbbE\bigl[
		T_2
		\mathbbm{1}\{\Omega_k\}
		\bigr]
	\nonumber
	\leq
	\mbbE\Bigl[
	\biggl(
	 &
	\frac{2SH^2L^2}{1\vee \hat{N}_k(f^{-1}(s_{k,h}),a_{k,h})}
	+
	\sum_{s}
	p(s\mid s_{k,h},a_{k,h})\frac{4n H^2L}{1\vee \hat{N}^{\mathrm{to}}_k(f^{-1}(s))}
	\nonumber
	\\
	 &
	+
	\frac{1}{H}
	\biggl(2+\frac{1}{H}\biggr)\langle P(\,\cdot \mid x_{k,h},a_{k,h}), \bar{V}^{\pi_k}_{h+1} - V^{\pi^*}_{h+1}\rangle
	\biggr)
	\mathbbm{1}\{\Omega_k\}
	\Bigr]
	.
	\label{eqn:bound_on_correction_term}
\end{align}
Here, we have also used that $L=1+\ln(nHS^2AT_k^2)\geq 1$ to include a loose bound $L\leq L^2$.
To simplify the right-hand side, note that for any $\mcD_k$-measurable vector $\hat{U}\in\mbbR^n$, using the law of total expectation and a one-step advancement, we have that
\begin{equation}
	\begin{split}
		\mbbE[\langle P(\,\cdot \mid x_{k,h},a_{k,h}), \hat{U}\rangle ]
		 &
		=
		\sum_{y}
		\mbbE[ \mbbE[P(y\mid x_{k,h},a_{k,h})\hat{U}(y)\mid \mcD_k] ]
		\\
		 &
		=
		\mbbE
		\bigl[
			\mbbE[\hat{U}(x_{k,h+1})\mid \mcD_k]
			\bigr]
		=
		\mbbE[\hat{U}(x_{k,h+1})].
	\end{split}
\end{equation}
A similar identity holds for any $\mcD_k$-measurable $\hat{u}\in\mbbR^S$:
\begin{equation}
	\label{eqn:next_latent_state_expectation}
	\sum_{s}
	\mbbE[ p(s\mid s_{k,h},a_{k,h}) \hat{u}(s) ] = \mbbE[\hat{u}(s_{k,h+1})].
\end{equation}
Then recall that $\Omega_k$ is $\mcD_k$-measurable for any $k\in[K]$ satisfying $T_k=kH>\Theta^{\clust}$.
It follows that for such $k$ also $(1\vee \hat{N}^{\mathrm{to}}_k(f^{-1}(\,\cdot )))^{-1}\mathbbm{1}\{\Omega_k\}$ and $(\bar{V}^{\pi_k}_{h+1}- \bar{V}^{\pi^*}_{h+1})\mathbbm{1}\{\Omega_k\}$ are $\mcD_k$-measurable, so that
\begin{align}
	\textnormal{RHS of }\eqref{eqn:bound_on_correction_term}
	 &
	=
	\mbbE\Bigl[
	\biggl(
	\frac{2SH^2L^2}{1\vee \hat{N}_k(f^{-1}(s_{k,h}),a_{k,h})}
	+
	\frac{4n H^2L}{1\vee \hat{N}^{\mathrm{to}}_k(f^{-1}(s_{k,h+1}))}
	\nonumber
	\\
	 &
	\phantom{=}
	+
	\frac{1}{H}
	\biggl(2+\frac{1}{H}\biggr)(\bar{V}^{\pi_k}_{h+1}(x_{k,h+1}) - V^{\pi^*}_{h+1}(x_{k,h+1}))
	\biggr)
	\mathbbm{1}\{\Omega_k\}
	\Bigr]
	.
	\label{eqn:bound_on_expected_correction_term}
\end{align}

We next derive a bound for $T_1$ in \eqref{eqn:bounding_estimation_error_by_bonus}.
Recall \eqref{eqn:bonus_def}, and use that $\hat{f}^{\alg}=f$ on $\Omega_k$, and $\ln(2HST_k^2)\vee \ln(2SAT_k^2)\allowbreak \leq 1+\ln(nHS^2AT_k)=L$ to write:
\begin{equation}
	\label{eqn:bonus_on_omega}
	\begin{split}
		\hat{b}_k(s,a)
		 &
		\overset{\Omega_k}{=}
		\sum_{s'}
		\hat{p}_k(s'\mid s,a)\sqrt{\frac{H^2\ln(2HST_k^2)}{1\vee \hat{N}^{\mathrm{to}}_k(f^{-1}(s'))}}
		+
		\sqrt{\frac{H^2 \ln(2SAT_k^2)}{1\vee\hat{N}_k(f^{-1}(s), a)}}
		\\
		 &
		\leq
		\sum_{s'}
		\hat{p}_k(s'\mid s,a)\sqrt{\frac{H^2L}{1\vee \hat{N}^{\mathrm{to}}_k(f^{-1}(s'))}}
		+
		\sqrt{\frac{H^2 L}{1\vee\hat{N}_k(f^{-1}(s), a)}}
		.
	\end{split}
\end{equation}
Observe in particular that $\hat{b}_k$ depends on the estimate transition kernel $\hat{p}_k$.
To replace this with the true kernel $p$, we set up to apply \Cref{lem:concentration_bound_nonnegative_vector} with $u(s) = \sqrt{\smash[b]{H^2L/(1\vee \hat{N}^{\mathrm{to}}_k(f^{-1}(s)))}}$.
Notice that $u(s)\geq 0$ and $\lVert u\rVert_{\infty} \leq H\sqrt{L}$.
It then follows, using the triangle inquality and nonnegativity of $u$, followed by \Cref{lem:concentration_bound_nonnegative_vector} that
\begin{align}
	\sum_{s'}
	 &
	\hat{p}_k(s'\mid s,a)\sqrt{\frac{H^2L}{1\vee \hat{N}^{\mathrm{to}}_k(f^{-1}(s'))}}
	\nonumber
	\\
	 &
	\leq
	\sum_{s'}
	\biggl(p(s'\mid s,a)\sqrt{\frac{H^2L}{1\vee \hat{N}^{\mathrm{to}}_k(f^{-1}(s'))}}\biggr)
	+
	\sum_{s'}
	|\hat{p}_k(s'\mid s,a)-p(s'\mid s,a)|\sqrt{\frac{H^2L}{1\vee \hat{N}^{\mathrm{to}}_k(f^{-1}(s'))}}
	\nonumber
	\\
	 &
	\leq
	\biggl(1+\frac{1}{H}\biggr)\sum_{s'}\biggl(p(s'\mid s,a)\sqrt{\frac{H^2L}{1\vee \hat{N}^{\mathrm{to}}_k(f^{-1}(s'))}}\biggr)
	+
	\frac{2H^2SL^{3/2}}{1\vee\hat{N}_k(f^{-1}(s),a)}
	.
	\label{eqn:bound_second_term_bonus}
\end{align}
Together, \eqref{eqn:bonus_on_omega} and \eqref{eqn:bound_second_term_bonus} imply that on $\Omega_k$
\begin{equation}
	\label{eqn:bound_bonus}
	\begin{split}
		\hat{b}_k(s,a)
		\leq
		 &
		\biggl(1+\frac{1}{H}\biggr)\sum_{s'}\biggl(p(s'\mid s,a)\sqrt{\frac{H^2L}{1\vee \hat{N}^{\mathrm{to}}_k(f^{-1}(s'))}}\biggr)
		\\
		 &
		+
		\frac{2H^2SL^{3/2}}{1\vee\hat{N}_k(f^{-1}(s),a)}
		+
		\sqrt{\frac{H^2 L}{1\vee\hat{N}_k(f^{-1}(s), a)}}
		.
	\end{split}
\end{equation}
Finally, noting once more that $\mathbbm{1}\{\Omega_k\}$ and $(1\vee \hat{N}^{\mathrm{to}}_k(f(s')))^{-1}$ are $\mcD_k$-measurable, it follows from \eqref{eqn:next_latent_state_expectation} and \eqref{eqn:bound_bonus} that
\begin{align}
	\label{eqn:final_bonus_bound}
		&
	\mbbE\bigl[T_1\mathbbm{1}\{\Omega_k\}\bigr]
	=
	\mbbE\bigl[2\hat{b}_k(s_{k,h},a_{k,h})\mathbbm{1}\{\Omega_k\}\bigr]
	\\
		&
	\leq
	\mbbE\biggl[
	\biggl(
	\sqrt{\frac{16H^2L}{1\vee \hat{N}^{\mathrm{to}}_k(f^{-1}(s_{k,h+1}))}}
	+
	\frac{4H^2SL^{2}}{1\vee\hat{N}_k(f^{-1}(s_{k,h}),a_{k,h})}
	+
	\sqrt{\frac{4H^2 L}{1\vee\hat{N}_k(f^{-1}(s_{k,h}), a_{k,h})}}
	\biggr)
	\mathbbm{1}\{\Omega_k\}
	\biggr].
	\nonumber
\end{align}
Here, we have also incorporated the loose bounds $1/H\leq 1$ and $L^{3/2}\leq L^2$ (recall that $L\geq 1$) to simplify the resulting expression.

The final result \eqref{eqn:estimation_error_bound} now follows from \eqref{eqn:bounding_estimation_error_by_bonus} by adding the bounds in \eqref{eqn:bound_on_expected_correction_term} and \eqref{eqn:final_bonus_bound}, and identifying the expression for $\hat{B}_k$ in the resulting bound.
\qed

\subsection{Proof of \texorpdfstring{\Cref{lem:recursive_sum}}{the recursive sum lemma}}
\label{sec:proof_of_recursive_sum}

Let $\{\Delta_h\}_{h\in[H]}$ satisfy the assumptions of \Cref{lem:recursive_sum}, and let $\varepsilon>0$.
We first show that for $h\in[H-2]$,
\begin{equation}
	\label{eqn:induction_assumption}
	\sum_{h'=h+1}^{H-1}
	(B_{h'}+\varepsilon \Delta_{h'+1})
	\leq
	\sum_{h'=h+1}^{H-1}
	(1+\varepsilon)^{h'-h-1}B_{h'}
	.
\end{equation}
From the assumption $\Delta_{H} = 0$ it follows that
\begin{equation}
	\sum_{h'=H-1}^{H-1}
	(B_{h'}+\varepsilon\Delta_{h'+1})= \sum_{h'=H-1}^{H-1}B_{h'}
	= \sum_{h'=H-1}^{H-1}(1+\varepsilon)^{h'-(H-2)-1}B_{h'},
\end{equation}
so that \eqref{eqn:induction_assumption} is true for $h = H-2$.
Hence, assume that \eqref{eqn:induction_assumption} is true for some $h\in[H-2]$.
We prove that it is also true for $h-1$.
From the assumption $(i)$ $\Delta_{h+1}\leq \sum_{h'=h+1}^{H-1}(B_{h'}+\varepsilon\Delta_{h'+1})$ of \Cref{lem:recursive_sum}, it follows that
\begin{equation}
	\begin{split}
		\sum_{h'=h}^{H-1}
		(B_{h'}+\varepsilon \Delta_{h'+1})
		 &
		=
		B_h + \varepsilon \Delta_{h+1} + \sum_{h'=h+1}^{H-1}(B_{h'}+\varepsilon \Delta_{h'+1}) \\
		 &
		\overset{(i)}{\leq}
		B_h + (1+\varepsilon) \sum_{h'=h+1}^{H-1}(B_{h'}+\varepsilon\Delta_{h'+1})
		\\
		 &
		\overset{\eqref{eqn:induction_assumption}}{\leq}
		B_h + \sum_{h'=h+1}^{H-1}(1+\varepsilon)^{h'-h}B_{h'}
		=
		\sum_{h'=h}^{H-1}
		(1+\varepsilon)^{h'-h}B_{h'}
		.
	\end{split}
\end{equation}
We conclude by induction that \eqref{eqn:induction_assumption} is true for $h\in[H-2]$.

The claim of \Cref{lem:recursive_sum} now follows from \eqref{eqn:induction_assumption}.
We have for $h\in[H-1]$ that
\begin{equation}
	\label{eqn:claimed_bound_on_Delta_h}
	\Delta_h \overset{(i)}{\leq} \sum_{h'=h}^{H-1}(B_{h'}+\varepsilon \Delta_{h'+1}) \overset{\eqref{eqn:induction_assumption}}{\leq} \sum_{h'=h}^{H-1}(1+\varepsilon)^{h'-h}B_{h'}.
\end{equation}
Set $\varepsilon = (1/H)(2+1/H)$ and note that $h'-h\leq H$ for $h'\in\{h,\ldots,H-1\}$ if $h\in[H-1]$.
It then follows from the inequality $(1-(1/H)(2+1/H))^H\leq e^2$ that
\begin{equation}
	\Delta_h
	\overset{\eqref{eqn:claimed_bound_on_Delta_h}}{\leq}
	\biggl(1+\frac{1}{H}\biggl(2+\frac{1}{H}\biggr)\biggr)^{H}\sum_{h'=h}^{H-1}B_{h'}
	\leq
	e^2\sum_{h'=h}^{H-1}B_{h'},
\end{equation}
for $h\in[H-1]$, which concludes the proof.
\qed

\subsection{Proof of \texorpdfstring{\Cref{prop:final_bound_R3}}{the final regret term bound}}
\label{sec:proof_of_final_bound_R3}

The proof of \Cref{prop:final_bound_R3} follows quickly from the following lemma:
\begin{lemma}
	\label{lem:pigeonhole}
	Let $\hat{N}_k$ and $\hat{N}_k^{\mathrm{to}}$ be as in \eqref{eqn:countsdef} and \eqref{eqn:countsdef_overload}, and let $s_{k,h}=f(x_{k,h})$.
	For any $K\in \mbbN_+$,
	\begin{equation}
		\label{eqn:pigeonhole_state_action}
		\sum_{k=1}^K
		\sum_{h=1}^H
		\frac{1}{\sqrt{\bigl( 1 \vee \hat{N}_k\bigl( f^{-1}(s_{k,h}),a_{k,h}\bigr) \bigr)^m}}
		\leq
		\begin{cases}
			2\sqrt{SAT_K}+SAH
			 &
			\text{ for } m=1, \\
			SA(H + \ln T_K + 1)
			 &
			\text{ for } m=2,
		\end{cases}
	\end{equation}
	and
	\begin{equation}
		\label{eqn:pigeonhole_next_state}
		\sum_{k=1}^K
		\sum_{h=1}^H
		\frac{1}{\sqrt{\bigl(1 \vee \hat{N}^{\mathrm{to}}_k(f^{-1}(s_{k,h+1})) \bigr)^m}}
		\leq
		\begin{cases}
			2\sqrt{ST_K}+SH
			 &
			\text{ for } m=1, \\
			S(H + \ln T_K + 1)
			 &
			\text{ for } m=2,
		\end{cases}
	\end{equation}
	almost surely.
\end{lemma}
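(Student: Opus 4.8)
The plan is to prove \Cref{lem:pigeonhole} via a standard pigeonhole / potential-function argument, treating the state--action counts and the next-state counts separately but with the same underlying idea. First I would set up the bookkeeping: for a fixed triple index, note that $\hat{N}_k(f^{-1}(s),a)$ is nondecreasing in $k$ and that $\hat{N}_{k+1}(f^{-1}(s),a) - \hat{N}_k(f^{-1}(s),a) = \sum_{h=1}^H \mbb{1}\{s_{k,h}=s, a_{k,h}=a\}$ is at most $H$. So if I write $N_{k,s,a} \eqdef \hat{N}_k(f^{-1}(s),a)$ and $n_{k,s,a} \eqdef N_{k+1,s,a} - N_{k,s,a}$, then $\sum_{k}\sum_{h}(\cdots) = \sum_{s,a}\sum_k n_{k,s,a} / \sqrt{(1\vee N_{k,s,a})^m}$ after grouping the inner sum over $h$ by $(s_{k,h},a_{k,h})$. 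The key elementary fact is: for a nondecreasing integer sequence $0 = N_1 \le N_2 \le \cdots$ with increments $n_k = N_{k+1}-N_k \le H$, one has $\sum_k n_k/\sqrt{1\vee N_k} \le H + \sum_{j=1}^{N_{K+1}} 1/\sqrt{j} \le H + 2\sqrt{N_{K+1}}$ (the $H$ absorbing the first block while $N_k=0$, and comparing the rest to an integral), and similarly $\sum_k n_k/(1\vee N_k) \le H + \sum_{j=1}^{N_{K+1}} 1/j \le H + \ln N_{K+1} + 1$.

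Next I would apply this per-triple and sum over the $SA$ triples. For $m=1$: $\sum_{s,a}(H + 2\sqrt{N_{K+1,s,a}}) = SAH + 2\sum_{s,a}\sqrt{N_{K+1,s,a}} \le SAH + 2\sqrt{SA}\sqrt{\sum_{s,a}N_{K+1,s,a}} = SAH + 2\sqrt{SA}\sqrt{T_K}$ by Cauchy--Schwarz, using $\sum_{s,a}N_{K+1,s,a} = \sum_{k=1}^K\sum_{h=1}^H 1 = KH = T_K$. For $m=2$: $\sum_{s,a}(H + \ln N_{K+1,s,a} + 1) \le SA(H+1) + \sum_{s,a}\ln N_{K+1,s,a} \le SA(H+1) + SA\ln(T_K)$ using $N_{K+1,s,a}\le T_K$ (or concavity of $\ln$ to get $SA\ln(T_K/(SA))$, which is even smaller, but the stated bound is fine). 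This gives \eqref{eqn:pigeonhole_state_action}. The argument for \eqref{eqn:pigeonhole_next_state} is identical with the index set $[S]$ in place of $[S]\times[A]$: one groups $\sum_k\sum_h(\cdots)$ by the value of $s_{k,h+1}=f(x_{k,h+1})$, uses that $\hat{N}_k^{\mathrm{to}}(f^{-1}(s))$ is nondecreasing with per-episode increment at most $H$ (since each episode contributes $H$ transitions total), and that $\sum_s \hat{N}_{K+1}^{\mathrm{to}}(f^{-1}(s)) = KH = T_K$.

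One mild subtlety to handle carefully is the off-by-one in the $h$-indexing: in \eqref{eqn:pigeonhole_next_state} the sum runs over $h=1,\ldots,H$ but $s_{k,h+1}$ involves $x_{k,H+1}$, the terminal context of the episode, which still transitions \emph{into} a latent state and hence is counted by $\hat{N}_k^{\mathrm{to}}$; so each episode contributes exactly $H$ to $\sum_s \hat{N}^{\mathrm{to}}$ and the increment bound $n_{k,s}\le H$ holds. Similarly in \eqref{eqn:pigeonhole_state_action} the sum over $h=1,\ldots,H$ matches the definition of $\hat{N}_k$ in \eqref{eqn:countsdef}, which counts transitions \emph{from} $x_{l,h}$ for $h\in[H]$, so the increment bound is again $\le H$ with total $KH=T_K$.

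I do not anticipate a genuine obstacle here; the only thing requiring a little care is stating the elementary lemma about $\sum n_k/\sqrt{(1\vee N_k)^m}$ cleanly and getting the additive $H$ (resp.\ $SAH$, $SH$) term right — it comes from the episodes during which the relevant count is still $0$ (at most $H$ such transitions per triple before the count first becomes positive), for which we bound each summand by $1$. Everything else is Cauchy--Schwarz and an integral comparison $\sum_{j=1}^N j^{-1/2}\le 2\sqrt N$, $\sum_{j=1}^N j^{-1}\le 1+\ln N$. Once \Cref{lem:pigeonhole} is in hand, \Cref{prop:final_bound_R3} follows by plugging \eqref{eqn:application_of_recursive_sum_lemma} and the definition \eqref{eqn:bounding_term_definition} of $\hat{B}_k$ into the sum over episodes, applying \Cref{lem:pigeonhole} term by term (with $m=1$ for the square-root terms and $m=2$ for the $1/\hat N$ terms, noting $L, L^2 = \tilde O(1)$ and $H^2,nH^2$ as prefactors), and checking that under $T_K=\Omega(H^4S^3A)$ the $H\sqrt{SAT_K}$ and $nH^3$ contributions dominate the lower-order $SAH\cdot\mathrm{poly}$ and $n S H^3 \ln T_K$ pieces up to polylog factors.
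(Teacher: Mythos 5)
Your proposal is correct and takes essentially the same route as the paper's proof: group the double sum by latent state--action pair (respectively latent state), pay an additive $H$ per pair for the within-episode staleness of the counts, compare the remainder to $\sum_j j^{-m/2}$ by an integral bound, and finish with Cauchy--Schwarz for $m=1$ and $\ln T_K$ for $m=2$; the paper merely bookkeeps this via running counts $\hat{N}_{k,h}\leq \hat{N}_k(\,\cdot\,)+H$ that take distinct values, while you use per-episode increments $n_{k,s,a}\leq H$. One caution on your elementary lemma: the additive $H$ is not only due to visits made before the count first becomes positive --- at \emph{every} visit the pre-episode count can lag the running total by up to $H$, so a block-by-block comparison of $n_k/\sqrt{N_k}$ with $\sum_{j=N_k+1}^{N_{k+1}}j^{-1/2}$ fails (e.g.\ $N_k=1$, $n_k=H$) and the clean proof is precisely the paper's shift, bounding the $j$-th visit's pre-episode count below by $j-H$; your stated inequality is nonetheless true, and even a second additive $SAH$ would not affect \Cref{prop:final_bound_R3}.
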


Assume that $T_K = \Omega(H^4S^3A)$.
To prove \eqref{eqn:final_bound_R3} we first use the result \eqref{eqn:application_of_recursive_sum_lemma} to bound the left-hand side of \eqref{eqn:final_bound_R3}, before subsequently recalling from \eqref{eqn:bounding_term_definition} that $\hat{B}_k\geq 0$, allowing us to extend the range of summation to all $k\in[K]$ and $h\in[H]$:
\begin{align}
	\sum_{k\in[K]:T_k>\Theta^{\clust}}
	\mbbE\bkt{(\bar{V}^{\pi_k}_{1}(x_{k,1})-V_1^{\pi_k}(x_{k,1}))\mbb{1}\{\Omega_k\}}
	 &
	\leq
	e^2\sum_{k\in[K]:T_k>\Theta^{\clust}}\sum_{h=1}^{H-1}\mbbE\bigl[
	\hat{B}_k(s_{k,h},a_{k,h},s_{k,h+1})
	\bigr]
	\nonumber
	\\
	 &
	\leq
	e^2\sum_{k=1}^K\sum_{h=1}^H\mbbE\bigl[
	\hat{B}_k(s_{k,h},a_{k,h},s_{k,h+1})
	\bigr]
	.
	\label{eqn:estimation_error_bound_1_simplified}
\end{align}
We can now bound the right-hand side of \eqref{eqn:estimation_error_bound_1_simplified} using \Cref{lem:pigeonhole}.
Together with the observation $\ln T_K\leq 1+\ln(nHS^2AT_K^2)=L$ and the assumption $T_K\geq H^4SA$ we conclude that
\begin{align}
	&
	\sum_{k\in[K]:T_k>\Theta^{\clust}}
	\mbbE\bigl[
		(\bar{V}^{\pi_k}_{1}(x_{k,1})-V_1^{\pi_k}(x_{k,1}))\mbb{1}\{\Omega_k\}
	\bigr]
	\nonumber
	\\
	&
	\leq
	e^2
	\sum_{k=1}^K\sum_{h=1}^H
	\mbbE\biggl[
		\sqrt{\frac{4H^2 L}{1\vee \hat{N}_k(f^{-1}(s_{k,h}),a_{k,h})}}
		+
		\frac{6H^2SL^2}{1\vee \hat{N}_k(f^{-1}(s_{k,h}),a_{k,h})}
		\nonumber
		\\
		&
		\phantom{
			\leq e^2\sum_{k=1}^K\sum_{h=1}^H\mbbE\biggl[
			\sqrt{\frac{4H^2 L}{1\vee \hat{N}_k(f^{-1}(s_{k,h}),a_{k,h})}}
		}
		+
		\sqrt{\frac{16H^2 L}{1\vee \hat{N}^{\mathrm{to}}_k(f^{-1}(s_{k,h+1}))}}
		+
		\frac{4nH^2L}{1\vee \hat{N}^{\mathrm{to}}_k(f^{-1}(s_{k,h+1}))}
	\biggr]
	\nonumber
	\\
	&
	\leq
	e^2\bigl(
		\sqrt{4H^2L^2}(2\sqrt{SAT_K}+SAH)
		+
		6H^2S^2AL^2(H+\ln T_K + 1)
		\nonumber
		\\
		&
		\phantom{
			\leq e^2\bigl(
			\sqrt{4H^2L^2}(2\sqrt{SAT_K}+SAH)
		}
		+
		\sqrt{16H^2L}(2\sqrt{ST_K}+SH)
		+
		4nH^2SL(H+\ln T_K+1)
	\bigr)
	\nonumber
	\\
	&
	=
	O(
	H\sqrt{SAT_KL^2}
	+
	H^2SL(H+L)(n+SAL)
	)
	\nonumber
	\\
	&
	=
	\tilde{O}(
	H\sqrt{SAT_K}
	+
	(n+SA)SH^3
	)
	.
\end{align}
Noting finally that $S^2AH^3=O(H\sqrt{SAT_K})$ for $T_K\geq H^4S^3A$ concludes the proof.
\qed

\begin{proof}[Proof of \Cref{lem:pigeonhole}]

	We only state the proof of \eqref{eqn:pigeonhole_state_action}; \eqref{eqn:pigeonhole_next_state} follows analogously upon replacing $\hat{N}_k(f^{-1}(s_{k,h}),a_{k,h})$ with $\hat{N}_k^{\mathrm{to}}(f^{-1}(s_{k,h+1}))$ and summing only over $s$ in \eqref{eqn:pigeonholesum2}.

	Recall \eqref{eqn:countsdef} and \eqref{eqn:countsdef_overload}.
	Then, for $\mcX\subseteq[n]$, $a\in[A]$, $k\in[K]$, and $h\in[H]$, define
	\begin{equation}
		\hat{N}_{k,h}(\mcX,a)\eqdef\hat{N}_k(\mcX,a) + \sum_{h'=1}^h\mbb{1}\{x_{k,h}\in\mcX,a_{k,h}= a\}.
	\end{equation}
	It holds that $\hat{N}_{k}(\mcX,a)\leq \hat{N}_{k,h}(\mcX,a)\leq \hat{N}_{k}(\mcX,a)+H$ for $h\in[H]$ and $\mcX\subset[n]$, from which it follows that for any $m=1,2$,
	\begin{equation}\label{eqn:pigeonholesum}
		\begin{split}
			 &
			\sum_{k=1}^K
			\sum_{h=1}^H
			\frac{1}{\sqrt{\bigl( 1 \vee \hat{N}_k\bigl( f^{-1}(s_{k,h}),a_{k,h}\bigr) \bigr)^m}}
			\leq \sum_{k=1}^K\sum_{h=1}^H\frac{1}{\sqrt{(1\vee (\hat{N}_{k,h}(f^{-1}(s_{k,h}),a_{k,h})-H))^m}}.
		\end{split}
	\end{equation}
	We decompose the sum on the right-hand side of \eqref{eqn:pigeonholesum} into a sum over latent state--action pairs:
	\begin{equation}
		\label{eqn:pigeonholesum2}
		\text{RHS of }\eqref{eqn:pigeonholesum}
		=
		\sum_{s,a}
		\sum_{k=1}^K\sum_{h=1}^H\frac{\mbb{1}\{s_{k,h}=s,a_{k,h}=a\}}{\sqrt{(1\vee(\hat{N}_{k,h}(f^{-1}(s),a)-H))^m}}
		.
	\end{equation}
	Now, consider a fixed pair $(s,a)\in[S]\times[A]$, and assume that $\hat{N}_{K,H}(f^{-1}(s),a)>0$.
	First, observe that $(i)$ if for some $(k,h)\in[K]\times[H]$ it holds that $\mbb{1}\{s_{k,h}=s,a_{k,h}=a\}=1$, then it follows by definition of $\hat{N}_{k,h}$ that $\hat{N}_{k,h}(f^{-1}(s),a)\geq\hat{N}_{k',h'}(f^{-1}(s),a)+1$ for all times $(k',h')\in[K]\times[H]$ ``before'' $(k,h)$.
	Moreover, since $\hat{N}_{K,H}(f^{-1}(s),a)$ counts the number of times that $\mbb{1}\{s_{k,h}=s,a_{k,h}=a\}=1$ is satisfied by definition, there are exactly $\hat{N}_{K,H}(f^{-1}(s),a)$ nonvanishing terms the sum over $k,h$ in \eqref{eqn:pigeonholesum2}.
	It then follows from our observation $(i)$ that $\hat{N}_{k,h}(f^{-1}(s),a)$ takes a different value in $\{1,\ldots,\hat{N}_{K,H}(f^{-1}(s),a)\}$ for each nonvanishing term in \eqref{eqn:pigeonholesum2}.
	Consequently,
	\begin{equation}\label{eqn:pigeonholesum3}
		\begin{split}
			\textnormal{RHS of } \eqref{eqn:pigeonholesum2}
			 &
			= \sum_{s,a:\hat{N}_{K,H}(f^{-1}(s),a)>0}\sum_{N=1}^{\hat{N}_{K,H}(f^{-1}(s),a)}\frac{1}{\sqrt{(1\vee(N-H))^m}}\\
			 &
			\leq
			\sum_{s,a:\hat{N}_{K,H}(f^{-1}(s),a)>0}\lb H + \sum_{M=1}^{1\vee(\hat{N}_{K,H}(f^{-1}(s),a)-H)}\frac{1}{\sqrt{M^m}} \rb\\
			 &
			\leq
			SAH + \sum_{s,a:\hat{N}_{K,H}(f^{-1}(s),a)>0}\sum_{M=1}^{\hat{N}_{K,H}(f^{-1}(s),a)}\frac{1}{\sqrt{M^m}}.
		\end{split}
	\end{equation}

	We will now use that for any nonincreasing $g(k)$, we have the following elementary inequality:
	\begin{equation}
		\label{eqn:elementary_inequality_function_g}
		\sum_{k=1}^n
		g(k)\leq \int_1^ng(k)dk+g(1).
	\end{equation}
	Suppose moreover that $\hat{N}_{K,H}(f^{-1}(s),a)>0$.
	We then apply \eqref{eqn:elementary_inequality_function_g} with $g(k)=1/\sqrt{k^m}$ to find that
	\begin{equation}
		\label{eqn:sum_integral_bound_casedistinction}
		\sum_{M=1}^{\hat{N}_{K,H}(f^{-1}(s),a)}
		\frac{1}{\sqrt{M^m}}
		\leq
		1+
		\int_1^{\hat{N}_{K,H}(f^{-1}(s),a)}
		\frac{dt}{\sqrt{t^m}}
		\leq
		\begin{cases}
			2\sqrt{\hat{N}_{K,H}(f^{-1}(s),a)}
			 &
			\textnormal{for }m=1, \\
			\ln \hat{N}_{K,H}(f^{-1}(s),a)+1
			 &
			\textnormal{for }m=2.
		\end{cases}
	\end{equation}

	We then evaluate the sum over latent state--action pairs for each case.
	Recall that by definition, $\sum_{s,a}\hat{N}_{K,H}(f^{-1}(s),a)=\sum_{a}\hat{N}_{K,H}([n],a)=T_K$.

	For $m=1$, we then have by Cauchy-Schwarz and nonnegativity of $\hat{N}_K(f{-1}(s),a)$ that
	\begin{equation}\label{eqn:casem1}
		\sum_{s,a:\hat{N}_{K,H}(f^{-1}(s),a)>0}\sqrt{\hat{N}_{K}(f^{-1}(s),a)}
		\leq
		\sqrt{\sum_{s,a} 1}\sqrt{\sum_{s,a}\hat{N}_{K}(f^{-1}(s),a)}
		=
		\sqrt{SAT_K},
	\end{equation}

	For $m=2$, it follows by nondecreasingness of the logarithm that
	\begin{equation}\label{eqn:casem2}
		\sum_{s,a:\hat{N}_{K,H}(f^{-1}(s),a)>0}\lb \ln \hat{N}_{K}(f^{-1}(s),a) +1\rb \leq SA\lb \ln T_K +1\rb
		.
	\end{equation}

	Combine \eqref{eqn:sum_integral_bound_casedistinction}, and \eqref{eqn:casem1}--\eqref{eqn:casem2} to bound \eqref{eqn:pigeonholesum3} and conclude the proof of \eqref{eqn:pigeonhole_state_action}.
\end{proof}

\newpage

\end{document}